\documentclass{article}

\usepackage{microtype}
\usepackage{graphicx}
\usepackage{subcaption}
\usepackage{booktabs} 
\usepackage{hyperref}

\usepackage[accepted]{icml2026}

\usepackage{amsthm,amsmath,amssymb,mathtools,comment}
\allowdisplaybreaks[1]
\mathtoolsset{showonlyrefs=true}

\theoremstyle{plain}
\newtheorem{theorem}{Theorem}[section]
\newtheorem{proposition}[theorem]{Proposition}
\newtheorem{lemma}[theorem]{Lemma}

\theoremstyle{definition}

\newtheorem{assumption}[theorem]{Assumption}
\theoremstyle{remark}

\usepackage{xargs}
\usepackage[colorinlistoftodos,prependcaption,textsize=tiny]{todonotes}
\newcommandx{\unsure}[2][1=]{\todo[linecolor=red,backgroundcolor=red!25,bordercolor=red,#1]{#2}}
\newcommandx{\change}[2][1=]{\todo[linecolor=blue,backgroundcolor=blue!25,bordercolor=blue,#1]{#2}}
\newcommandx{\info}[2][1=]{\todo[linecolor=OliveGreen,backgroundcolor=OliveGreen!25,bordercolor=OliveGreen,#1]{#2}}
\newcommandx{\improvement}[2][1=]{\todo[linecolor=Plum,backgroundcolor=Plum!25,bordercolor=Plum,#1]{#2}}

\icmltitlerunning{Inference of Online Newton Methods with Nesterov’s Accelerated Sketching}
\usepackage{math_commands}

\begin{document}

\twocolumn[
\icmltitle{Inference of Online Newton Methods with Nesterov’s Accelerated Sketching}
\icmlsetsymbol{equal}{*}

\begin{icmlauthorlist}
\icmlauthor{Haoxuan Wang}{equal,1}
\icmlauthor{Xinchen Du}{equal,1}
\icmlauthor{Sen Na}{1}
\end{icmlauthorlist}
\icmlaffiliation{1}{School of Industrial and Systems Engineering, Georgia Institute of Technology}
\icmlcorrespondingauthor{Sen Na}{senna@gatech.edu}
\icmlkeywords{Online Newton methods, Uncertainty quantification, Online covariance estimation, Randomized sketching, Nesterov's acceleration}
\vskip 0.3in
]

\printAffiliationsAndNotice{* Equal contribution.}

\begin{abstract}

Reliable decision-making with streaming data requires principled uncertainty quantification of online methods. While first-order methods enable efficient iterate updates, their inference procedures still require updating proper (covariance) matrices, incurring $O(d^2)$ time and memory complexity, and are sensitive to ill-conditioning and noise heterogeneity of the problem. This costly inference task offers an opportunity for more robust second-order methods, which are, however, bottlenecked by solving Newton systems with $O(d^3)$ complexity. 
In this paper, we address this gap by studying an online Newton method with Hessian averaging, where the Newton direction at each step is approximately computed using a \textit{sketch-and-project solver with Nesterov's acceleration}, matching $O(d^2)$ complexity of first-order methods. For the proposed method, we quantify its uncertainty arising from both random data and randomized computation. Under standard smoothness and moment conditions, we establish global almost-sure convergence, prove asymptotic normality of the last iterate with a limiting covariance characterized by a Lyapunov equation, and develop a fully online covariance estimator with non-asymptotic convergence guarantees. 
We also connect the resulting uncertainty quantification to that of exact and sketched Newton methods without Nesterov's acceleration. Extensive experiments on regression models demonstrate the superiority of the proposed method for online inference.

\end{abstract}

\section{Introduction}\label{sec:intro}

We consider the stochastic optimization problem:
\begin{equation}\label{problem}
\min_{\bx\in\mathbb{R}^d} f(\bx)=\mathbb{E}_{\xi \sim \mathcal{P}}[F(\bx;\xi)],
\end{equation}
where $f: \mathbb{R}^d \rightarrow \mathbb{R}$ is a stochastic objective and $F(\cdot;\xi): \mathbb{R}^d\\ \rightarrow \mathbb{R}$ is its realization with a random sample $\xi\hskip-1pt\sim\hskip-1pt\mathcal{P}$. Problem \eqref{problem} is simple but serves as a fundamental building block for many decision-making tasks in statistics and machine learning, such as online recommendation \citep{Li2010contextual}, precision medicine \citep{Kosorok2019Precision}, energy control \citep{Wallace2005Applications}, and portfolio allocation \citep{Fan2012Vast, Chen2022statistical}. In these applications, \eqref{problem} is often interpreted as a parameter estimation problem, where $\bx$ denotes the model parameter, $\xi$ corresponds to a data sample, and the population minimizer $\bx^{\star} = \argmin_{\bx\in \mathbb{R}^d} f(\bx)$ represents the underlying true model parameter.

The ubiquity of streaming data in modern applications has made online methods particularly attractive, where only one \textit{single} sample is used per step. One of the most fundamental online methods is the stochastic gradient descent (SGD) \citep{Robbins1951Stochastic, Kiefer1952Stochastic}:
\begin{equation}\label{equ:1}
\bx_{t+1} = \bx_t - \varphi_t \nabla F(\bx_t; \xi_t).
\end{equation}
Numerous studies have explored both non-asymptotic and asymptotic convergence properties (e.g., $\mE[\|\bx_t-\tx\|^2] \lesssim \varphi_t$ and $\bx_t\stackrel{a.s.}{\rightarrow}\tx$) of SGD and many accelerated variates \citep{Robbins1971convergence, Kushner1979Rates, Pelletier1998almost, Bertsekas2000Gradient, Moulines2011Non}. These results show the consistency of SGD as an estimate of $\bx^{\star}$. However, point estimation alone is insufficient for reliable decision-making; one must also quantify the uncertainty of the estimation procedure, e.g., via standard errors and confidence intervals. 
This need has inspired a recent series of studies on online uncertainty quantification and inference of SGD. In particular, \citet{Ruppert1988Efficient, Polyak1992Acceleration} proposed averaging SGD iterates, $\bar{\bx}_t = \sum_{i=1}^{t}\bx_i/t$, 
and established the asymptotic normality of $\bar{\bx}_t$. This result has since been extended to broader gradient-based methods \citep{Toulis2017Asymptotic, Liang2019Statistical, Duchi2021Asymptotic, Davis2024Asymptotic}.
Given the asymptotic normality, online inference largely reduces to estimating the associated limiting covariance matrix, for example via plug-in or batch-means procedures \citep{Chen2020Statistical, Zhu2021Online, Zhu2021constructing, Singh2023utility, Jiang2025Online}. There also exist alternative inference procedures that bypass explicit covariance estimation but are also less statistically efficient, such as bootstrapping \citep{Fang2018Online, Liu2023Statistical, Zhong2023Online, Lam2023Resampling} and random scaling \citep{Li2021Statistical, Lee2022Fast, Du2025Online}.

While first-order methods \eqref{equ:1} admit efficient iterate updates, their inference procedures still require updating proper (covariance) matrices, incurring $O(d^2)$ time and memory complexity. Moreover, SGD is delicate to ill-conditioning and noise heterogeneity of the problem. It has been largely observed that inference of SGD can suffer from clear undercoverage even for problems of small dimensions as $d=20$ \citep{Chen2020Statistical, Zhu2021Online, Kuang2025Online}. This costly and significantly challenging online inference task offers a nice opportunity to apply higher-order (quasi-)Newton methods, which exploit the objective curvature information via the update \citep{Bottou2018Optimization, Byrd2016stochastic}:
\begin{equation}\label{equ:second_order_method}
\bx_{t+1} = \bx_t + \varphi_t \Delta \bx_t, \;\text{with}\; B_t \Delta \bx_t = - \nabla F(\bx_t; \xi_t)
\end{equation}
where $B_t\approx \nabla^2f(\bx_t)$ is an estimate or approximation of the objective Hessian.
There has been a growing interest in quantifying the uncertainty of \eqref{equ:second_order_method}.
\citet{Leluc2023Asymptotic} treated $B_t^{-1}$ as a preconditioning matrix and established the asymptotic normality of the last iterate $\bx_t$ under the convergence of $B_t$. They further showed that $\bx_t$ attains \textit{optimal statistical efficiency} when $B_t \rightarrow \nabla^2 f(\bx^{\star})$, corresponding to online Newton methods. \citet{Cenac2020efficient, Boyer2023asymptotic, Cenac2025efficient, GodichonBaggioni2025Adaptive} specialized \eqref{equ:second_order_method} to regression problems and derived the asymptotic normality for the averaged Newton iterate $\bar{\bx}_t$, and \citet{Na2025Derivative, Gao2025Online} extended \eqref{equ:second_order_method} to constrained problems.

Suppose the Hessian estimate is accessible (otherwise, it can be approximated by first- or zeroth-order information~via quasi-Newton or finite-difference schemes \citep{Dennis1977Quasi, Spall2000Adaptive}). The major concern of \eqref{equ:second_order_method} lies in solving the Newton system, which is prohibitively expensive with an $O(d^3)$ time complexity.
As such, \citet{Na2025Statistical} introduced an online sketched Newton method that approximately solves the Newton system via a \textit{sketch-and-project} solver, thereby reducing the time complexity to $O(d^2)$. The authors quantified the uncertainty of sketched Newton method by establishing the last-iterate asymptotic normality. To bridge the gap between uncertainty quantification and practical inference, \citet{Kuang2025Online} further proposed an online, consistent limiting covariance matrix estimator that is constructed solely from sketched Newton iterates, without inverting any matrices. 
Together, these two works are the first attempts to make Newton methods practically viable for online inference tasks, achieving robust outperformance over SGD.

On the other hand, recent advances in randomized linear solvers have shown that the sketch-and-project solver used in aforementioned works, originating from \citet{Strohmer2008Randomized, Gower2015Randomized, Pilanci2016Iterative, Pilanci2017Newton, Derezinski2024Sharp}, can be further sped up via \textbf{Nesterov's acceleration}, without increasing the order of per-iteration cost \citep{Gower2018Accelerated, Derezinski2025Fine}.
Particularly, \citet{Derezinski2025Fine} showed that the expected squared solution approximation error decays exponentially at the rate $1-\mu_t$ for unaccelerated methods, while at $1 - \sqrt{\mu_t / \nu_t}$ for accelerated methods. Here, $\mu_t, \nu_t$ are problem parameters satisfying $1 \leq \nu_t \leq 1 / \mu_t$ (that is, $\mu_t \leq \sqrt{\mu_t / \nu_t} \leq \sqrt{\mu_t}$). See \eqref{equ:inner_rate} for details. 
Consequently, for the extreme case of $\nu_t=1$, accelerated sketching improves the rate of the sketch-and-project solver from $1-\mu_t$ to $1-\sqrt{\mu_t}$, mirroring the effect of Nesterov's acceleration in SGD. This improved computational efficiency motivates a natural but largely unexplored question:

\emph{\quad How does accelerated sketching affect statistical efficiency of inference in online sketched Newton methods?}

In this work, we answer this question by establishing global almost-sure convergence and local asymptotic normality of online Newton methods with Nesterov's accelerated sketching. In contrast to the above sketching literature that focuses on computational efficiency of sketching solver, we focus on how accelerated sketching affects uncertainty quantification and inference of online Newton methods.
Specifically, we quantify the uncertainty of \eqref{equ:second_order_method} from both random data and accelerated sketching. Under standard moment conditions on the estimation noise, we first show that
\begin{equation} \label{equ:normality}
1/\sqrt{\varphi_{t}}\cdot (\bx_t - \bx^{\star}) \xrightarrow{d} \mathcal{N}(\mathbf{0}, \Sigma^{\star}),
\end{equation}
where the limiting covariance $\Sigma^{\star}$ is characterized by the solution to a Lyapunov equation that depends explicitly on the sketching distribution (cf. \eqref{equ:lyp}). This characterization discloses a \textit{computational-statistical trade-off} induced by accelerated sketching solver, with two special scenarios:$\quad\;$
\begin{enumerate}[wide, labelindent=3pt, label=\textbf{(\alph*)},topsep=0pt]
\setlength\itemsep{-0pt}
\item When accelerated sketching is completely discarded, our results reduce to those of exact Newton method \citep{Leluc2023Asymptotic}, and $\tSigma$ recovers the covariance of averaged SGD estimator \citep{Polyak1992Acceleration}, which is known to be \textit{statistically minimax optimal} \citep{Davis2024Asymptotic}.
\item When accelerated sketching does not yield a provable accelerated rate (i.e., $\mu_t\nu_t=1$), $\tSigma$ recovers the covariance of the unaccelerated sketched Newton method \citep{Kuang2025Online}. Notably, no provable acceleration is much more general than specific unaccelerated sketching scheme used in that work; in particular, momentum may still be employed but does not offer an improved convergence rate.
\end{enumerate}

To carry out practical inference based on normality \eqref{equ:normality}, we further develop a fully online estimator of the covariance $\Sigma^{\star}$ and establish its consistency. The asymptotic normality together with consistent covariance estimation fully resolve the inference task for online sketched Newton methods with Nesterov's acceleration.
We demonstrate theoretical findings on broad regression problems, illustrating the practicality and advantages of the proposed inference procedure.

\vskip0.1cm
\noindent\textbf{Technical challenges.} 
Our inference of online Newton with Nesterov's accelerated sketching overcomes several technical challenges building on existing works. 
\textbf{First}, covariance estimation relies on the non-asymptotic convergence rate for the fourth moment of the Newton iterate error $\mE[\|\bx_t - \tx\|^4]$ (cf. Lemma \ref{sec4:lem1}), which is a higher-order guarantee than commonly studied second-moment bound $\mE[\|\bx_t - \tx\|^2]$ \citep{Leluc2023Asymptotic, Boyer2023asymptotic, Cenac2025efficient, GodichonBaggioni2025Adaptive}.
\textbf{Second}, our method requires quantifying uncertainty from both data sampling and randomized linear solver, unlike existing inference methods in which randomness comes solely from data sampling \citep{Chen2020Statistical, Zhu2021Online, Davis2024Asymptotic, Jiang2025Online}. The sketching solver is executed for a fixed number of steps, so its algorithmic randomness does not vanish asymptotically and thus inevitably affects the limiting distribution (see \eqref{equ:lyp}).
\textbf{Third}, prior work \citet{Kuang2025Online, Na2025Statistical} studied unaccelerated sketching, while we extend their analysis to Nesterov's accelerated sketching. The extension is highly technically involved:$\quad\;\;$
\begin{enumerate}[wide, labelindent=0pt, label=\textbf{(\alph*)},topsep=0pt]
\setlength\itemsep{0.0em}
\item Nesterov's acceleration introduces an additional sequence of momentum co-states; thus, the solver evolves~as~a $2d$-dimensional state-co-state recursion, rather than an~independent, projection-based state update as in standard~sketch-and-project methods \citep[(1.1)]{Derezinski2024Sharp}.

\item Accelerated sketching involves a random, time-varying, asymmetric $2\times2$ block matrix acting as a linear operator \\%
(cf. \eqref{equ:Cttilde}), while without acceleration, it reduces to a symmetric projection matrix in the $(1,1)$ block. 
Accordingly, many key properties no longer hold under acceleration. For example, while contraction of the $(1,1)$ block can be established in unaccelerated case due to favorable boundedness of projection matrices, this property fails for the full asymmetric operator. As such, we leverage the Cayley–Hamilton theorem and tools from the theory of similar matrices and Kronecker products to derive a recursion for the spectral radius and analyze the contraction of the $(1,1)+(1,2)$ block, corresponding to marginal state evolution (Lemmas \ref{prop:spectral_recursion}, \ref{lem:prop_sketch_solver}).$\;$

\item Acceleration introduces conditionally deterministic, but random parameters $(\alpha_t, \beta_t, \gamma_t)$ governing the stepsize and momentum, induced by the linear operator in \textbf{(b)}. In contrast, these parameters are deterministic in unaccelerated settings. To show asymptotic normality, we have to explore their \textit{non-asymptotic} convergence rates (e.g., $|\alpha_t  -  \alpha^\star| =  O_p(\sqrt{\varphi_t})$), ensuring that the randomness of these parameters leads to only higher-order terms relative to the randomness from data sampling and sketching (Lemma \ref{lem:local_rate}).
In other words, when $t$ is large, the sketching solver at iteration $t$ can be viewed as operating at $\tx$ with optimal parameters.
\end{enumerate}

To our knowledge, all above technical challenges and our resolutions do not appear in existing works on either sketched Newton methods or online inference.

\vskip0.1cm
\noindent\textbf{Theoretical roadmap.}
Section \ref{sec:assumptions} first develops contraction guarantees for the accelerated sketching solver (Lemmas \ref{prop:spectral_recursion}--\ref{lem:prop_sketch_solver}) and establishes global almost-sure convergence of the online Newton method (Theorem \ref{thm:as_convergence}). Section \ref{sec:normality} then proves convergence of the Hessian average, acceleration parameters, and sketching operator (Lemmas \ref{lem:converge_rate_and_Bt_converge}--\ref{lem:local_rate}), leading to last-iterate asymptotic normality (Theorem \ref{thm:asymptotic_normality}). Finally, Section \ref{sec:covariance} establishes fourth-moment bounds (Lemma \ref{sec4:lem1}) and consistency of the online covariance estimator (Theorem \ref{sec4:thm1}), which enables practical statistical inference.

\vskip0.1cm
\noindent \textbf{Notation.} 
We let $\|\cdot\|$ denote the $\ell_2$ norm for vectors and spectral norm for matrices.
For two sequences $\{a_t, b_t\}$, $a_t = O(b_t)$ implies that $|a_t| \leq c |b_t|$ for sufficient large $t$ and some positive constant $c$; $a_t = o(b_t)$ implies that $|a_t/b_t| \to 0$ as $t \to \infty$. We also denote $O_p, o_p$ the order in probability.
We let $I$ denote the identity matrix, $\boldsymbol{0}$ denote the zero vector or matrix, and $\boldsymbol{1}$ denote the all-one vector; their dimensions are clear from the context. For a series of compatible matrices $\{A_i\}$, $\prod_{k=i}^jA_k = A_j A_{j-1}\cdots A_i$ if $j\geq i$ and $I$ if $j<i$. For two symmetric matrices $A$ and $B$, $A \succeq B$ means $A - B$ is positive semidefinite; and $\lambda_{\min}(A)$ ($\lambda_{\max}(A)$) denotes the smallest (largest) eigenvalue $A$. We also denote $f_t = f(\bx_t)$ and $f^\star = f(\bx^\star)$ (similar for $\nabla f_t, \nabla^2 f_t$, etc.).

\section{Online Newton with Accelerated Sketching} \label{sec:algorithm}

We introduce online Newton method with Nesterov's accelerated sketching. The method simply adopts the update \eqref{equ:second_order_method} with $B_t$ defined as the averaged Hessian estimate. Instead of solving the Newton system of \eqref{equ:second_order_method} exactly, we employ an accelerated sketching solver to reduce time complexity.$\quad\quad$

\subsection{Online Newton with Hessian averaging} \label{subsec:newton}

At each outer iteration $t$, we draw a sample $\xi_t \sim \mathcal{P}$ and~compute the gradient and Hessian estimates $g_t = \nabla F(\bx_t;\xi_t)$ and $H_t = \nabla^2 F(\bx_t;\xi_t)$. As standard in studies of Newton methods, we suppose the Hessian estimate $H_t$ is accessible; otherwise, it can be approximated by quasi-Newton or finite-difference schemes, and our analysis extends to those settings with proper adjustments \citep{Leluc2023Asymptotic, Na2025Derivative}.
We define $B_t\coloneqq \frac{1}{t}\sum_{i=0}^{t-1} H_i$ as the Hessian average based on samples $\{\xi_i\}_{i=0}^{t-1}$, admitting an online update:
\begin{equation} \label{def:B_t}
B_t = \rbr{1-1/t}B_{t-1} + H_{t-1}/t.
\end{equation}
Inspired by the techniques of online variance reduction, Hessian averaging has been widely used in stochastic Newton methods to accelerate convergence rates \citep{Bercu2020Efficient, Na2022Hessian, Leluc2023Asymptotic, Boyer2023asymptotic, Na2025Statistical}.

With $B_t$ in \eqref{def:B_t}, we then aim to solve the Newton system:
\begin{equation} \label{equ:newton_system}
B_t \Delta \bx_t = - g_t.
\end{equation}
In general, solving \eqref{equ:newton_system} costs $O(d^3)$ complexity, making the method impractical.
We compute $\Delta \bx_t$ approximately in $O(d^2)$ via Nesterov's accelerated sketching.

\subsection{Sketch-and-Project with Nesterov's acceleration}\label{sec:2.2}

We now focus on \eqref{equ:newton_system}. To ease notation, we drop the outer iteration index $t$ in this subsection and introduce the accelerated sketching solver $\texttt{NASketch}(B, -g; \alpha, \beta, \gamma, \tau)$, as summarized in Algorithm \ref{alg:1}. Here, $(B, -g)$ denote the left-hand side matrix and right-hand side vector of linear system, respectively; $(\alpha, \beta, \gamma)$ are solver parameters related to the momentum stepsize; and $\tau$ is the number of sketching steps. The solver will output an approximation of $\Delta\bx$. We refer a detailed introduction of the solver to \citet{Gower2018Accelerated, Derezinski2025Fine}.

\begin{algorithm}[t]
\caption{Sketch-and-Project with Nesterov's Acceleration: $\texttt{NASketch}(B, -g; \alpha, \beta, \gamma, \tau)$}\label{alg:1}
\begin{algorithmic}[1]
\STATE \textbf{Target:} $B\Delta\bx = -g$;
\STATE {\bfseries Initialize:} set initial values $\bz_{0} = \boldsymbol{v}_{0} = \boldsymbol{0}$;
\FOR{$j = 0, 1, \ldots \tau-1$}
\STATE $\by_{j} = \alpha \boldsymbol{v}_{j} + (1-\alpha) \bz_{j}$;
\STATE Generate a sketching vector/matrix $S_{j}\sim S\in\mathbb{R}^{d\times s}$;
\STATE $\boldsymbol{\omega}_{j} = B S_{j} (S_{j}^{\top} B^2 S_{j})^{\dagger} S_{j}^{\top}(B \by_{j} + g)$;
\STATE $\bz_{j+1} = \by_{j} - \boldsymbol{\omega}_{j}$;
\STATE $\boldsymbol{v}_{j+1} = \beta \boldsymbol{v}_{j} + (1-\beta) \by_{j} - \gamma \boldsymbol{\omega}_{j}$;
\ENDFOR
\STATE \textbf{Output:} $\bz_\tau$.
\end{algorithmic}
\end{algorithm}

For each (inner) sketching iteration $j$ (at fixed outer iteration $t$), we draw a sketching vector/matrix $S_j\sim S\in\mR^{d\times s}$ with $s\ll d$. Without acceleration, the vanilla sketch-and-project method admits the update:
\begin{equation}\label{equ:unaccelerate}
\bz_{j+1} =  \bz_j - B S_{j} (S_{j}^{\top} B^2 S_{j})^{\dagger} S_{j}^{\top}(B \bz_{j} + g),
\end{equation} 
where $\dagger$ denotes the Moore-Penrose pseudoinverse; $\bz_j$ is the current state; and $B S_{j} (S_{j}^{\top} B^2 S_{j})^{\dagger} S_{j}^{\top}(B \bz_{j} + g)$ is the sketching direction, based on the observation that the above update is equivalent to solving a sketch-and-project problem:
\begin{equation*}
\bz_{j+1}=\arg\min\|\bz-\bz_j\|^2\quad \text{s.t.}\quad S_j^\top B\bz = -S_j^\top g.
\end{equation*}
With acceleration, the method involves an additional momentum co-state $\bv_j$. Given the state-co-state pair $(\bz_j, \bv_j)$,~we first compute the sketching direction at their midpoint $\by_{j} = \alpha \bv_{j} + (1-\alpha) \bz_{j}$ as 
\begin{equation*}
\bw_j = B S_{j} (S_{j}^{\top} B^2 S_{j})^{\dagger} S_{j}^{\top}(B \by_{j} + g).
\end{equation*}
Then, the pair $(\bz_j,\bv_j)$ is updated as 
\begin{equation*}
\begin{cases*}
\bz_{j+1} = \by_{j} - \boldsymbol{\omega}_{j},\\
\boldsymbol{v}_{j+1} = \beta \boldsymbol{v}_{j} + (1-\beta) \by_{j} - \gamma \boldsymbol{\omega}_{j}.
\end{cases*}
\end{equation*}
Clearly, by setting $\alpha=0.5$, $\beta=0$, $\gamma=1$, accelerated sketching reduces to unaccelerated sketching \eqref{equ:unaccelerate}.

\vskip0.1cm
\noindent \textbf{Parameter setup and accelerated rate.} \citet{Gower2018Accelerated, Derezinski2025Fine} proposed controlling $(\alpha,\beta,\gamma)$ by two problem parameters $(\mu, \nu)$ as
\begin{equation}\label{def:alpha_beta_gamma_t}
 \alpha = 1 / (1 + \gamma \nu), \quad \beta = 1 - \sqrt{\mu / \nu}, \quad \gamma = 1/ \sqrt{\mu \nu},
\end{equation}
where $(\mu, \nu)$ are defined as (or lower, upper bounds of) \begin{equation}\label{def:mu_t_nu_t} 
\mu = \inf_{\ba\neq\0} \frac{\ba^\top Z \ba}{ \ba^\top\ba}, \quad\quad \nu = \sup_{\ba \neq\0} \frac{\ba^\top \mathbb{E}[\Tilde{Z} Z^{-1} \Tilde{Z}]\ba}{\ba^\top Z\ba}, 
\end{equation}
with $\Tilde{Z} \coloneqq B S (S^{\top}B^2 S)^{\dagger}S^{\top} B$ and $Z \coloneqq \mathbb{E}[\Tilde{Z}]\in\mR^{d\times d}$ being a projection matrix and its expectation, taken over the randomness of sketching only. (Recall that we drop the outer iteration index $t$; thus, the expectation is conditional on $\bx_t$ and sample $\xi_t$). 
\citet[Lemma 2]{Gower2018Accelerated} shows:
\begin{equation} \label{prop_mu_nu}
1 \leq \nu \leq 1/\mu =  \|Z^{-1}\|,
\end{equation}
implying that $\alpha\in(0,1)$, $\beta\in[0,1)$, $\gamma\geq 1$.

In the online Newton method, these quantities are evaluated at the current Hessian estimate $B_t$ and therefore become $(\mu_t,\nu_t)$. Exact computation can be expensive, since it involves the sketching expectation defining $Z_t$ and spectral quantities of $Z_t$. Thus, \eqref{def:mu_t_nu_t} should be viewed as the theoretical parameterization under which the solver-level rate below is stated; in implementations, one may use computable approximations, periodically refreshed values, or fixed choices. The asymptotic normality and covariance estimation results in Section \ref{sec:inference} rely on the convergence of the induced acceleration parameters and the resulting sketching operator, rather than on exact computation of $(\mu_t,\nu_t)$ at every iteration.

Under this parameterization, \citet[Eq. (1)]{Derezinski2025Fine} showed that accelerated sketching enjoys an exponential rate:
\begin{equation} \label{equ:inner_rate}
\mathbb{E} \left\Vert \bz_j - \bz \right\Vert^2 \leq 2 (1 - \sqrt{\mu/\nu})^j \left\Vert \bz_0 - \bz \right\Vert^2,
\end{equation}
where $\bz=\Delta\bx=-B^{-1}g$ is the exact solution and the expectation is again taken over randomness of sketching only. The rate $1-\sqrt{\mu/\nu}$ in \eqref{equ:inner_rate} improves the rate $1-\mu$ of unaccelerated sketching \citep{Gower2015Randomized}. We note that the provable acceleration degrades if and only if $\gamma=1$, i.e., $\mu\nu=1$, which is substantially more general than the setting of unaccelerated sketching scheme \eqref{equ:unaccelerate} (momentum may be employed but does not offer an improved rate).

With the above sketching solver, online Newton performs:
\begin{equation} \label{equ:update}
\hskip-0.05cm \bx_{t+1} = \bx_t + \varphi_t\cdot \texttt{NASketch}(B_t,-g_t;\alpha_t,\beta_t,\gamma_t,\tau),
\end{equation}
where $\varphi_t = C_{\varphi}/{(t+1)^{\varphi}}$ is a vanishing stepsize sequence.

\section{Assumptions and Global Convergence} \label{sec:assumptions}

In this section, we introduce assumptions and prove global almost-sure convergence of online Newton with Nesterov's accelerated sketching. Unlike aforementioned sketching literature that focuses on the solver-level convergence rate in \eqref{equ:inner_rate}, we study the convergence of outer Newton iterates $\bx_t$, and more importantly, their uncertainty quantification and inference in Section \ref{sec:inference}.
This focus shift requires significantly different guarantees and analytical techniques even for Algorithm \ref{alg:1}: \citet{Derezinski2025Fine} uses Gaussian universality to derive a sharp bound on $\nu$, while we abstract Nesterov's acceleration via a random, asymmetric linear operator that is closely tied to the limiting covariance matrix $\tSigma$ in \eqref{equ:normality}. To analyze that operator, we employ the Cayley-Hamilton theorem along with tools from similar matrices and Kronecker products to derive a recursion for its spectral radius.

\subsection{Assumptions}

We define the filtration $\mathcal{F}_{t-0.5} = \sigma\left(\{\xi_i, \{S_{i, j}\}_{j}\}_{i=0}^{t-1}\cup \xi_t\right)$ that contains all the randomness of $\bx_{0:t}$ and $\xi_t$; and $\mathcal{F}_t = \sigma\left(\{\xi_i, \{S_{i, j}\}_{j} \}_{i=0}^{t}\right)$ containing the randomness of $\bx_{0:(t+1)}$.
The first assumption regards the strong convexity and Lipschitz smoothness of the objective.

\begin{assumption} \label{ass:1}
We assume that $f$ and realizations $F(\cdot;\xi)$ are twice continuously differentiable such that for any $\bx$, $\gamma_H \leq \lambda_{\min} \left(\nabla^2 F(\bx;\xi)\right) \leq \lambda_{\max}\left(\nabla^2 F(\bx;\xi)\right) \leq \Upsilon_H
$ for some constants $\Upsilon_{H}>\gamma_H>0$. We also assume $\nabla^2f$ is $\Upsilon_L$-Lipschitz continuous, i.e., $\|\nabla^2 f(\bx) - \nabla^2 f(\bx^{\prime})\|\leq \Upsilon_{L}\|\bx-\bx^{\prime}\|$ for any $\bx,\bx'$.
\end{assumption}

Assumption \ref{ass:1} is standard in existing inference literature \citep{Chen2020Statistical, Zhu2021Online, Kuang2025Online}. The next assumption imposes a growth condition on the moment of gradient noise, which is also standard in those works.$\quad$

\begin{assumption} \label{ass:2}
For any $t\geq 0$, we assume the gradient estimate is unbiased $\mathbb{E}[g_t\mid \mathcal{F}_{t-1}]=\nabla f_t$ and satisfies a~growth condition on its $q_g$-th moment: for some $C_{g,1}, C_{g,2} > 0$,
\begin{equation*}
\mathbb{E}[\|g_t - \nabla f_t\|^{q_g} \mid \mathcal{F}_{t-1}]
\leq C_{g,1}\|\bx_t-\bx^\star\|^{q_g} + C_{g,2}.
\end{equation*}
\end{assumption}

We will specify $q_g$ in theorem statements. In particular, $q_g=2$ suffices for global convergence, $q_g>2$ for asymptotic normality, and $q_g=4$ for online inference; all moment conditions are standard in aforementioned literature. Similarly, we present moment conditions on the Hessian noise.

\begin{assumption} \label{ass:2_Hessian}
For any $t\geq 0$, we assume the Hessian estimate is unbiased $\mathbb{E}[H_t \hskip-1pt\mid\hskip-1pt \mathcal{F}_{t-1}]\hskip-1pt=\hskip-1pt\nabla^2 f_t$ and satisfies a growth condition on its $q_H$-th moment: for some $C_{H,1}, C_{H,2} > 0$,
\begin{equation*}
\mathbb{E}[\|H_t-\nabla^2 f_t\|^{q_H} \mid \mathcal{F}_{t-1}]
\leq C_{H,1}\|\bx_t-\bx^\star\|^{q_H} + C_{H,2}.
\end{equation*}
\end{assumption}

Similarly, $q_H$ is specified as follows: $q_H=0$ for global almost-sure convergence, $q_H=2$ for asymptotic normality, and $q_H=4$ for online covariance estimation. Assumptions \ref{ass:2} and \ref{ass:2_Hessian} are standard in online Newton inference \citep{Na2025Statistical, Kuang2025Online}. For linear and logistic regressions, they can be verified by the arguments in \citet[Appendix A]{Chen2020Statistical}; in particular, $q_H=4$ holds when the design covariates have bounded eighth moments.
We next impose an assumption on the sketching distribution. $\quad$

\begin{assumption} \label{ass:3}
For any $t\geq 0$, we assume $\{S_{t,j}\}_{j=0}^{\tau-1}\stackrel{iid}{\sim}S$  satisfying $Z_t \coloneqq \mathbb{E}\left[B_t S(S^{\top}B_t^2 S)^{\dagger}S^{\top}B_t \mid \mathcal{F}_{t-1}\right] \succeq \gamma_S I$ for some $\gamma_S>0$, and $\{S_{t,j}\}_{j=0}^{\tau-1}$ are independent of data sample $\xi_t$.
Further, we assume the moment of the condition number of $S$ is finite: $\mE[(\|S\|\|S^\dagger\|)^{q_S}]\leq \Upsilon_S$ for $\Upsilon_S>0$.
\end{assumption}

As analyzed in \citet{Na2025Statistical, Kuang2025Online}, $q_S=1$ suffices for asymptotic normality and $q_S=2$ for inference.
Note that the above expectations are taken over the randomness of the sketching matrix $S$. The lower bound on expected projection matrix $Z_t$ is commonly required to ensure convergence of sketching solvers, both with and without acceleration \citep{Gower2015Randomized, Gower2018Accelerated, Derezinski2025Fine}. All conditions in Assumption \ref{ass:3} easily hold for various sketching distributions, including Gaussian sketching $S\sim \mN(\mathbf{0}, \Sigma)$ and Kaczmarz sketching $S\sim \text{Unif}(\{\be_i\}_{i=1}^d)$ \citep{Strohmer2008Randomized}.

\subsection{Guarantees of accelerated sketching}\label{sec:3.2}

We state some results of Algorithm \ref{alg:1}. Note from Section \ref{sec:2.2} that, for each $t$, we draw $\tau$ sketching matrices $\{S_{t,j}\}_{j=0}^{\tau-1}\sim \\S$. %
Define $\tZ_{t,j}=B_tS_{t, j}(S_{t, j}^{\top}B_t^2S_{t, j})^{\dagger}S_{t, j}^{\top}B_t\in\mR^{d\times d}$ and denote their shared conditional expectation, taken over the randomness of sketching, by $Z_t=\mE[\tZ_{t,j}|\mF_{t-1}]$. We also define parameters $(\alpha_t,\beta_t,\gamma_t, \mu_t,\nu_t)$ as in \eqref{def:alpha_beta_gamma_t}-\eqref{def:mu_t_nu_t}. Now, let us construct an asymmetric $2\times 2$ block matrix$\quad$
\begin{align}\label{equ:Cttilde}
\tC_{t,j} & = \begin{pmatrix}
1-\alpha_t & \alpha_t\\
(1-\alpha_t)(1-\beta_t) & \alpha_t+\beta_t-\alpha_t\beta_t
\end{pmatrix}\otimes I_d \nonumber \\
& \quad - \begin{pmatrix}
1-\alpha_t & \alpha_t\\
(1-\alpha_t)\gamma_t & \alpha_t\gamma_t
\end{pmatrix}\otimes \tZ_{t,j}\in\mR^{2d\times 2d},
\end{align}
where $\otimes$ is the Kronecker product. We will show in Lemma \ref{lem:inner_transition} that $\tC_{t,j}$ captures the transition of state-co-state sequence $\{(\bz_{t,j}, \bv_{t,j})\}_j$ of Algorithm \ref{alg:1}: for $t\geq 0$, $0\leq j\leq \tau-1$,$\quad$
\begin{equation*}
(\bz_{t,j+1}-\Delta\bx_t, \bv_{t,j+1}-\Delta\bx_t) = \tC_{t,j}(\bz_{t,j}-\Delta\bx_t, \bv_{t,j}-\Delta\bx_t).
\end{equation*}
Naturally, the product $\Tilde{C}_t \coloneqq \prod_{j = 0}^{\tau - 1} \Tilde{C}_{t,j} \in \mathbb{R}^{2d \times 2d}$ denotes the transition operator mapping $(\bz_{t,0},\bv_{t,0})$ to $(\bz_{t,\tau}, \bv_{t,\tau})$. We use $[\tC_t]_{k,l}$ to denote its $(k,l)$-th block, and let
\begin{equation}\label{def:tilde_K_t}
\tK_t \coloneqq [\Tilde{C}_t]_{1,1}+[\Tilde{C}_t]_{1,2} = \begin{pmatrix} I & \mathbf{0}\end{pmatrix}\Tilde{C}_t\begin{pmatrix}I\\ I\end{pmatrix}\in\mR^{d\times d}
\end{equation}
be the matrix governing the evolution of marginal state $\bz_{t,j}$. To analyze the expected error of Algorithm \ref{alg:1}, we further define $C_t = \mE[\tC_t|\mF_{t-1}]$ and $K_t = \mE[\tK_t|\mF_{t-1}]$.

The first lemma makes the above statement rigorous.$\quad\;\;$

\begin{lemma} \label{lem:inner_transition}
For any $t\ge 0$, the construction of $\tK_t$ in \eqref{def:tilde_K_t} admits the representation
\begin{equation*}
\bz_{t,\tau}-\Delta\bx_t = \tK_t(\bz_{t,0}-\Delta\bx_t)\quad \text{with}\quad \Delta\bx_t = -B_t^{-1}g_t.
\end{equation*}
If $\mE[g_t\hskip-2pt\mid\hskip-2pt \mF_{t-1}]\hskip-2pt=\hskip-2pt\nabla f_t$, the conditional expectation of $\bz_{t,\tau}$ is
\begin{equation*}
\mE[\bz_{t,\tau} \mid \mF_{t-1}] = -(I-K_t)B_t^{-1}\nabla f_t.
\end{equation*}
Thus, the residual $\bz_{t,\tau}-(I-K_t)\Delta\bx_t$ form a martingale difference sequence with respect to filtration $\{\mF_t\}_{t\geq0}$. Finally, if $\gamma_t=1$, $\tK_t$ has a simple form of $\tK_t = \prod_{j=0}^{\tau-1}(I-\tZ_{t,j})$.
\end{lemma}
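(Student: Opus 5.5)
The plan is to first derive the one-step error recursion for Algorithm~\ref{alg:1} and match it against \eqref{equ:Cttilde}. Then I chain the $\tau$ steps and use the zero initialization to get $\tK_t$. The conditional-expectation claim then follows from measurability and independence, and the case $\gamma_t=1$ from a short induction.

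\textbf{One-step transition.} Fix $t$ and write $\be^z_j=\bz_{t,j}-\Delta\bx_t$ and $\be^v_j=\bv_{t,j}-\Delta\bx_t$. Since $B_t\Delta\bx_t=-g_t$, we have $B_t\by_j+g_t=B_t(\by_j-\Delta\bx_t)$, so
\[
\boldsymbol{\omega}_j=\tZ_{t,j}(\by_j-\Delta\bx_t).
\]
The midpoint satisfies $\by_j-\Delta\bx_t=\alpha_t\be^v_j+(1-\alpha_t)\be^z_j$. Substituting into the two updates gives
\begin{align*}
\be^z_{j+1}&=(I-\tZ_{t,j})\bigl((1-\alpha_t)\be^z_j+\alpha_t\be^v_j\bigr),\\
\be^v_{j+1}&=\bigl((1-\alpha_t)(1-\beta_t)I-(1-\alpha_t)\gamma_t\tZ_{t,j}\bigr)\be^z_j+\bigl((\alpha_t+\beta_t-\alpha_t\beta_t)I-\alpha_t\gamma_t\tZ_{t,j}\bigr)\be^v_j,
\end{align*}
where the coefficient of $\be^v_j$ in the second line uses $\beta_t+(1-\beta_t)\alpha_t=\alpha_t+\beta_t-\alpha_t\beta_t$. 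These are exactly the block rows of $\tC_{t,j}$ in \eqref{equ:Cttilde}.

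\textbf{Chaining the steps.} Iterating from $j=0$ to $\tau-1$ gives $(\be^z_\tau,\be^v_\tau)=\tC_t(\be^z_0,\be^v_0)$. Since $\bz_{t,0}=\bv_{t,0}=\boldsymbol{0}$, both initial errors equal $-\Delta\bx_t=\bz_{t,0}-\Delta\bx_t$. Reading off the first block with $\begin{pmatrix}I\\ I\end{pmatrix}$ yields $\bz_{t,\tau}-\Delta\bx_t=\tK_t(\bz_{t,0}-\Delta\bx_t)$, which is equivalent to $\bz_{t,\tau}=(I-\tK_t)\Delta\bx_t$.

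\textbf{Conditional expectation and the martingale property.} This is the step needing care with measurability. The matrix $B_t$ is built from $H_0,\dots,H_{t-1}$ and is therefore $\mF_{t-1}$-measurable. Hence $Z_t$, $(\mu_t,\nu_t)$ and $(\alpha_t,\beta_t,\gamma_t)$ are $\mF_{t-1}$-measurable as well. Given $\mF_{t-1}$, $\tK_t$ depends only on the sketches $\{S_{t,j}\}_j$, and by Assumption~\ref{ass:3} these are independent of $\xi_t$. The vector $\Delta\bx_t=-B_t^{-1}g_t$ depends only on $\xi_t$ given $\mF_{t-1}$. The conditional expectation therefore factorizes:
\[
\mE[\bz_{t,\tau}\mid\mF_{t-1}]=-(I-\mE[\tK_t\mid\mF_{t-1}])\,B_t^{-1}\,\mE[g_t\mid\mF_{t-1}]=-(I-K_t)B_t^{-1}\nabla f_t.
\]
The residual $\bz_{t,\tau}-(I-K_t)\Delta\bx_t$ is $\mF_t$-measurable. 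Its conditional mean given $\mF_{t-1}$ is $-(I-K_t)B_t^{-1}\nabla f_t+(I-K_t)B_t^{-1}\nabla f_t=\boldsymbol{0}$, where the second term uses $\mE[\Delta\bx_t\mid\mF_{t-1}]=-B_t^{-1}\nabla f_t$. So the residuals form a martingale difference sequence with respect to $\{\mF_t\}_{t\ge0}$.

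\textbf{The case $\gamma_t=1$.} When $\gamma_t=1$, the second row of the second Kronecker factor coincides with its first row. Suppose $\be^z_j=\be^v_j=\be$. Then $\by_j-\Delta\bx_t=\be$, and the recursion gives
\[
\be^z_{j+1}=(I-\tZ_{t,j})\be,\qquad \be^v_{j+1}=\beta_t\be+(1-\beta_t)\be-\tZ_{t,j}\be=(I-\tZ_{t,j})\be.
\]
So the diagonal $\{(\be,\be)\}$ is invariant and the dynamics there reduce to $\be\mapsto(I-\tZ_{t,j})\be$. Because the initial errors are equal, induction on $j$ gives $\tK_t=\prod_{j=0}^{\tau-1}(I-\tZ_{t,j})$.
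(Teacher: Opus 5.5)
Your proposal is correct and follows essentially the same route as the paper: the same one-step error recursion matched to $\tC_{t,j}$, chaining with the zero initialization, the conditional expectation via independence of the sketches from $\xi_t$ (the paper phrases this as a tower property through $\mF_{t-0.5}$), and, for $\gamma_t=1$, invariance of the diagonal $\{(\be,\be)\}$. That last step is the vector form of the paper's identity $\tC_{t,j}\bigl(\begin{pmatrix}1\\1\end{pmatrix}\otimes Y\bigr)=\begin{pmatrix}1\\1\end{pmatrix}\otimes\bigl((I-\tZ_{t,j})Y\bigr)$. One wording fix: the matrix identity $\tK_t=\prod_{j=0}^{\tau-1}(I-\tZ_{t,j})$ follows because the invariance holds for every $\be$, not because the particular initial errors $-\Delta\bx_t$ happen to coincide.
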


To control the $\tau$-step contraction of accelerated sketching solver, we analyze its associated asymmetric linear operator $C_t$ and marginal operator $K_t$. We observe that the accelerated sketching dynamics defined by $C_t, K_t$ decouple along eigendirections of $Z_t$ and reduce to a family of matrices:$\quad$
\begin{multline}\label{def:G_tz}
G_t(z) \coloneqq \begin{pmatrix}
1-\alpha_t & \alpha_t \\
(1-\alpha_t)(1-\beta_t) & \alpha_t+\beta_t-\alpha_t\beta_t
\end{pmatrix} \\- \begin{pmatrix}
1-\alpha_t & \alpha_t \\
(1-\alpha_t)\gamma_t & \alpha_t\gamma_t
\end{pmatrix} z .
\end{multline}
The following lemma makes the corresponding \emph{spectral radius recursion} explicit via Cayley--Hamilton theorem.

\begin{lemma} \label{prop:spectral_recursion}

Let $p_{t,\tau}(z)= \be_1^\top G_t(z)^\tau \mathbf{1}$ for a fixed $t\ge 0$ and $z\in[\mu_t,1]$. Then, $p_{t,\tau}(z)$ satisfies the second-order recursion: for all $\tau\geq 2$,
\begin{equation} \label{eq:CH_recursion}
p_{t,\tau}(z)=\text{Tr}_t(z)\,p_{t,\tau-1}(z)-D_t(z)\,p_{t,\tau-2}(z),
\end{equation}
with initialization $p_{t,0}(z)=1$ and $p_{t,1}(z)=1-z$, where $\text{Tr}_t(z)=\mathrm{trace}(G_t(z))$ and $D_t(z)=\det(G_t(z))$. Moreover, under Assumption \ref{ass:3}, the spectral radius of $G_t(z)$, denoted as $\rho\big(G_t(z)\big)$, is uniformly contractive:
\begin{equation}\label{eq:spectral_radius_bound}
\hskip-0.2cm \sup_{z\in[\mu_t,1]}\rho\big(G_t(z)\big) \le 1-\sqrt{\mu_t/\nu_t} \le 1-\gamma_S \eqqcolon\rho_\star< 1.
\end{equation}
Combining the recursion \eqref{eq:CH_recursion} with the contractive spectral radius bound \eqref{eq:spectral_radius_bound} yields
\begin{equation} \label{eq:ptau_bound}
\sup_{z\in[\mu_t,1]}\big|p_{t,\tau}(z)\big| \le 2\tau\ (1-\sqrt{\mu_t/\nu_t})^{\tau-2}.
\end{equation}
\end{lemma}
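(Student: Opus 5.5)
The proof has three parts: the Cayley--Hamilton recursion, an explicit computation of trace and determinant giving the spectral radius bound, and an unrolled linear recursion giving the bound on $p_{t,\tau}$.

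\textbf{Recursion.} Since $G_t(z)$ is $2\times 2$, Cayley--Hamilton gives $G_t(z)^2=\text{Tr}_t(z)G_t(z)-D_t(z)I$. Multiplying by $G_t(z)^{\tau-2}$ on the right, then by $\be_1^\top$ on the left and $\mathbf{1}$ on the right, yields \eqref{eq:CH_recursion} for $\tau\ge2$. For the initial values, $p_{t,0}=\be_1^\top\mathbf{1}=1$. For $p_{t,1}$, the first row of $G_t(z)$ is $(1-\alpha_t,\alpha_t)(1-z)$, and its entries sum to $1-z$.

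\textbf{Trace and determinant.} Write $G_t(z)=A-Bz$ with $A,B$ the two matrices in \eqref{def:G_tz}. Note that $B$ has rank one. Direct computation gives
\[
\text{Tr}_t(z)=1+\beta_t-\alpha_t\beta_t-z(1-\alpha_t+\alpha_t\gamma_t).
\]
For the determinant, $\det A=\beta_t(1-\alpha_t)$. The $z^2$ term vanishes because $\det B=0$. The linear term follows by expanding the $2\times2$ determinant, so $D_t(z)=\beta_t(1-\alpha_t)-c\,z$ for an explicit constant $c$ in $(\alpha_t,\beta_t)$.

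Next I substitute \eqref{def:alpha_beta_gamma_t}: $\gamma_t=1/\sqrt{\mu_t\nu_t}$, $1-\alpha_t=\gamma_t\nu_t/(1+\gamma_t\nu_t)$, and $\beta_t=1-\sqrt{\mu_t/\nu_t}$. This should reduce $D_t(z)$ to a nonnegative quantity bounded by $\beta_t^2$ for $z\in[\mu_t,1]$. I expect the upper endpoint $z=\mu_t$ to give $\det G_t(\mu_t)=\beta_t^2$, i.e.\ a critically damped double root, as in Nesterov's analysis.

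\textbf{Spectral radius.} For the eigenvalues there are two cases. If they are complex conjugates, $\rho=\sqrt{D_t(z)}\le\beta_t$. If they are real, I show $\chi(\lambda)=\lambda^2-\text{Tr}\,\lambda+D$ satisfies $\chi(\pm\beta_t)\ge0$ and $|\text{Tr}|\le2\beta_t$, so both roots lie in $[-\beta_t,\beta_t]$.

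Verifying these inequalities uniformly on $[\mu_t,1]$ is the main obstacle. It is algebraically delicate, and I would use \eqref{prop_mu_nu}, namely $\nu_t\ge1$ and $\mu_t\nu_t\le1$, to fix signs. Since the quantities are affine in $z$, it is natural to check the endpoints $z=\mu_t$ and $z=1$ and conclude by convexity.

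The second inequality in \eqref{eq:spectral_radius_bound} follows from $\sqrt{\mu_t/\nu_t}\ge\mu_t\ge\gamma_S$. Here $\mu_t\ge\gamma_S$ holds by Assumption \ref{ass:3}, and $\sqrt{\mu_t/\nu_t}\ge\mu_t$ holds by $\nu_t\le1/\mu_t$.

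\textbf{Bound on $p_{t,\tau}$.} Let $\lambda_1,\lambda_2$ be the eigenvalues of $G_t(z)$. The recursion solves as
\[
p_\tau=\frac{(p_1-\lambda_2p_0)\lambda_1^\tau-(p_1-\lambda_1p_0)\lambda_2^\tau}{\lambda_1-\lambda_2}
\]
when $\lambda_1\ne\lambda_2$, and the limiting form applies when they coincide. Equivalently,
\[
p_\tau=p_1h_{\tau-1}-D\,h_{\tau-2},\qquad h_k=\sum_{i=0}^k\lambda_1^i\lambda_2^{k-i}.
\]
This form avoids dividing by $\lambda_1-\lambda_2$. With $\rho=\max|\lambda_i|\le\beta_t$, we get $|h_k|\le(k+1)\rho^k$. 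Using $|p_1|\le1$ and $|D|\le\rho^2$,
\[
|p_\tau|\le\tau\rho^{\tau-1}+(\tau-1)\rho^{\tau}\le2\tau\rho^{\tau-2}.
\]
This holds uniformly in $z\in[\mu_t,1]$, which gives \eqref{eq:ptau_bound}.
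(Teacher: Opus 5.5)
Your proposal is correct. It follows the paper on the Cayley--Hamilton recursion and on the second inequality in the spectral-radius bound, but takes a genuinely different route for the spectral radius and for the bound on $p_{t,\tau}$.

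\textbf{Spectral radius.} The paper writes the eigenvalues explicitly and splits $[\mu_t,1]$ by the sign of the discriminant. On the real region it shows, by a derivative computation, that the larger eigenvalue is monotone, so its supremum is at $z=\mu_t$ or $z=1$. The complex region and the degenerate-discriminant case are handled separately.

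Your root-location test replaces all of this with endpoint checks of affine functions. The step you flag as the main obstacle is in fact short. Write $\beta_t=1-\sqrt{\mu_t/\nu_t}$.
\begin{itemize}
\item At $z=\mu_t$, $\chi(\beta_t)$ vanishes for every admissible $(\mu_t,\nu_t)$.
\item At $z=1$, $\chi(\beta_t)=\beta_t(\beta_t-\text{Tr}_t(1))\ge0$; this reduces to $\mu_t\le 1$.
\item $\text{Tr}_t\ge0$ and $D_t\ge0$ on $[\mu_t,1]$, which gives $\chi(-\beta_t)\ge0$ immediately.
\item Both $\text{Tr}_t(\mu_t)\le 2\beta_t$ and $D_t(\mu_t)\le\beta_t^2$ reduce to $\nu_t\ge1$. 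Since $\text{Tr}_t$ and $D_t$ are decreasing in $z$, these suffice on the whole interval.
\end{itemize}

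One correction to your expectation. $\beta_t$ is indeed an eigenvalue of $G_t(\mu_t)$, so the bound is attained. The other eigenvalue, however, is $D_t(\mu_t)/\beta_t$, and
$\beta_t^2-D_t(\mu_t)=\frac{(\sqrt{\nu_t}-\sqrt{\mu_t})\mu_t(\nu_t-1)}{\nu_t(\sqrt{\nu_t}+\sqrt{\mu_t})}$,
which is positive unless $\nu_t=1$. So there is no critically damped double root in general. You only need the inequality $D_t(\mu_t)\le\beta_t^2$, so the argument is unaffected.

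\textbf{Bound on $p_{t,\tau}$.} The paper solves the recursion separately in three cases: distinct real eigenvalues, a repeated eigenvalue, and complex eigenvalues (the last using $|\sin n\theta/\sin\theta|\le n$). Your identity $p_{t,\tau}=p_{t,1}h_{\tau-1}-D_t h_{\tau-2}$, with complete symmetric polynomials $h_k$, covers all three at once. You also use $|D_t|\le\rho^2$, whereas the paper's distinct-real-eigenvalue case effectively bounds $|\lambda_1\lambda_2|$ by $1$. As a result your argument gives the slightly sharper $2\tau(1-\sqrt{\mu_t/\nu_t})^{\tau-1}$.

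\textbf{Comparison.} The paper's route provides explicit eigenvalue formulas and their monotonicity in $z$. Yours is shorter, avoids case splits, and is less error-prone.
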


The quantity $p_{t,\tau}(z)=\be_1^\top G_t(z)^\tau\mathbf 1$ measures the magnitude of $K_t$ along an eigendirection of $Z_t$ with eigenvalue $z$; thus, bounding $\sup_{z\in[\mu_t,1]}|p_{t,\tau}(z)|$ leads to a direct control of $\|K_t\|$, where the range of $z$ is ensured by Assumption \ref{ass:3}. 
The recursion \eqref{eq:CH_recursion} and contraction \eqref{eq:spectral_radius_bound} provide a tractable way of deriving the bound of $p_{t,\tau}(z)$ even if $G_t(z)$ is asymmetric; and the explicit bound in \eqref{eq:ptau_bound} quantifies how $\tau$ exponentially controls $\|K_t\|$, as stated in the next lemma.$\;\;\;$

\begin{lemma} \label{lem:prop_sketch_solver}
Under Assumption \ref{ass:3}, for any $t\geq 0$, we have
\begin{equation*}
\|K_t\| \leq 2\tau (1 - \sqrt{\mu_t/\nu_t})^{\tau - 2}.
\end{equation*}
\end{lemma}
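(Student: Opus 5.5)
The plan is to reduce $K_t$ to the scalar quantity $p_{t,\tau}(z)$ of Lemma \ref{prop:spectral_recursion} along each eigendirection of $Z_t$, and then apply the bound \eqref{eq:ptau_bound}.

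\textbf{Step 1: replace each factor by its expectation.} Conditional on $\mF_{t-1}$, the sketches $S_{t,0},\dots,S_{t,\tau-1}$ are i.i.d. and independent of $\xi_t$. The parameters $(\alpha_t,\beta_t,\gamma_t)$ depend only on $B_t$, so they are $\mF_{t-1}$-measurable. Hence the factors $\tC_{t,j}$ are conditionally independent, and each is affine in $\tZ_{t,j}$. Conditional independence lets the expectation pass through the ordered product:
\[
C_t=\prod_{j=0}^{\tau-1}\mE[\tC_{t,j}\mid\mF_{t-1}]=\bar C_t^{\,\tau},
\]
where $\bar C_t$ is obtained from \eqref{equ:Cttilde} by replacing $\tZ_{t,j}$ with $Z_t$. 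Then $K_t=(I\;\mathbf 0)\,\bar C_t^{\,\tau}\,(I;I)$.

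\textbf{Step 2: diagonalize.} The matrix $Z_t$ is symmetric positive semidefinite, so write $Z_t=U\Lambda U^\top$ with $U$ orthogonal and $\Lambda=\mathrm{diag}(z_1,\dots,z_d)$. Write $\bar C_t=M_0\otimes I-M_1\otimes Z_t$ for the two fixed $2\times2$ matrices $M_0,M_1$ in \eqref{equ:Cttilde}. Conjugating by $I_2\otimes U$ gives
\[
(I_2\otimes U)^\top\,\bar C_t\,(I_2\otimes U)=M_0\otimes I-M_1\otimes\Lambda .
\]
After the perfect-shuffle permutation this becomes the block-diagonal matrix $\mathrm{diag}\big(G_t(z_1),\dots,G_t(z_d)\big)$, with $G_t$ as in \eqref{def:G_tz}. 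Both the conjugation and the permutation commute with taking powers. Likewise, $(I\;\mathbf 0)$ and $(I;I)$ map to $\be_1^\top$ and $\mathbf 1$ on each $2$-dimensional block. Therefore
\[
K_t=U\,\mathrm{diag}\big(p_{t,\tau}(z_1),\dots,p_{t,\tau}(z_d)\big)\,U^\top,
\qquad\text{so}\qquad
\|K_t\|=\max_i|p_{t,\tau}(z_i)|.
\]

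\textbf{Step 3: locate the eigenvalues and conclude.} We need each $z_i\in[\mu_t,1]$. The lower bound $z_i\ge\mu_t$ holds by the definition \eqref{def:mu_t_nu_t} of $\mu_t$ as $\lambda_{\min}(Z_t)$, and $\mu_t\ge\gamma_S>0$ by Assumption \ref{ass:3}. The upper bound $z_i\le1$ holds because each $\tZ_{t,j}$ is an orthogonal projection; its range is that of $B_tS$, and symmetry plus idempotence follow from $B_t$ being symmetric and invertible. Hence $0\preceq\tZ_{t,j}\preceq I$, and so $Z_t\preceq I$. Applying \eqref{eq:ptau_bound} from Lemma \ref{prop:spectral_recursion} then gives $\|K_t\|\le 2\tau(1-\sqrt{\mu_t/\nu_t})^{\tau-2}$.

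\textbf{Main obstacle.} The delicate point is Step 1. We must justify that the conditional expectation factorizes over the ordered product even though the $\tC_{t,j}$ do not commute. This requires that the random parameters are $\mF_{t-1}$-measurable, and that the $S_{t,j}$ are conditionally i.i.d. given $\mF_{t-1}$ with common mean $Z_t$ (Assumption \ref{ass:3}). The argument is then a tower-property induction on $j$. The block-diagonalization bookkeeping in Step 2 is routine.
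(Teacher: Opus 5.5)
Your proposal is correct and follows the paper's proof. Both arguments factor $C_t=M_t^{\tau}$ using conditional independence of the sketches, conjugate by $\mathrm{diag}(U_t,U_t)$ to reduce $\|K_t\|$ to $\max_i|\be_1^\top G_t(z_{t,i})^{\tau}\mathbf 1|$, and then invoke \eqref{eq:ptau_bound}. The only differences are cosmetic: you block-diagonalize with the perfect-shuffle permutation where the paper proves by induction (Lemma~\ref{lem:1}) that the blocks of $\hat M_t^{\tau}$ stay diagonal, and you state explicitly the eigenvalue range $\mu_t\le z_{t,i}\le 1$ that the paper uses implicitly.
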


When $\gamma_t = 1$, Lemma \ref{lem:inner_transition} shows that the marginal operator $\tK_t$ reduces to a product of $\tau$ projection matrices. Lemma \ref{lem:prop_sketch_solver} further shows that the exponential rate $1 - \sqrt{\mu_t / \nu_t}$ reduces to $1 - \mu_t$, which coincides with the rate for unaccelerated sketching \citep{Gower2015Randomized}. The constant factor $\tau$ in \eqref{eq:ptau_bound} and Lemma \ref{lem:prop_sketch_solver} is negligible; the overall behavior of $K_t$ is fully dominated by the exponential decay of $\tau$.

\subsection{Global almost-sure convergence}

With the results of accelerated sketching, we now establish global almost-sure convergence of the outer, online Newton iterates with Nesterov's accelerated sketching.

\begin{theorem} \label{thm:as_convergence}
Consider the Newton scheme \eqref{equ:update} with the stepsize $\varphi_t = C_\varphi/(t+1)^\varphi$ satisfying $C_\varphi>0$, $\varphi\in(0.5,1]$. Suppose Assumptions \ref{ass:1}--\ref{ass:3} hold with $q_g=2$, $q_H=q_S=0$, and $\tau$ satisfies $\tau (1 -\sqrt{\mu_t/\nu_t})^{\tau - 2}\leq \gamma_H/(4\Upsilon_{H})$ for all $t\geq 0$. Then, we have $\bx_t \stackrel{a.s.}{\rightarrow} \bx^{\star}$ as $t \rightarrow \infty$.
\end{theorem}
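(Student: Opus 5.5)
The plan is a Robbins--Siegmund argument with the objective gap $f_t-f^\star$ as Lyapunov function. Set $\bar{\bp}_t \coloneqq \bz_{t,\tau}$, so that $\bx_{t+1}=\bx_t+\varphi_t\bar{\bp}_t$. By Assumption \ref{ass:1}, $f$ is $\Upsilon_H$-smooth, which gives
\begin{equation*}
f_{t+1}\le f_t+\varphi_t\nabla f_t^\top\bar{\bp}_t+\tfrac{\Upsilon_H}{2}\varphi_t^2\|\bar{\bp}_t\|^2 .
\end{equation*}
I will take conditional expectations given $\mF_{t-1}$ and treat the first-order and second-order terms separately.

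For the first-order term, Lemma \ref{lem:inner_transition} gives $\mE[\bar{\bp}_t\mid\mF_{t-1}]=-(I-K_t)B_t^{-1}\nabla f_t$. Here $B_t$ is $\mF_{t-1}$-measurable, and Assumption \ref{ass:1} gives $\gamma_H I\preceq B_t\preceq\Upsilon_H I$ almost surely, since $B_t$ is an average of realized Hessians. Hence
\begin{equation*}
\nabla f_t^\top\mE[\bar{\bp}_t\mid\mF_{t-1}]=-\nabla f_t^\top B_t^{-1}\nabla f_t+\nabla f_t^\top K_tB_t^{-1}\nabla f_t\le-\Big(\tfrac{1}{\Upsilon_H}-\tfrac{\|K_t\|}{\gamma_H}\Big)\|\nabla f_t\|^2 .
\end{equation*}
Lemma \ref{lem:prop_sketch_solver} together with the assumed condition on $\tau$ yields $\|K_t\|\le 2\tau(1-\sqrt{\mu_t/\nu_t})^{\tau-2}\le\gamma_H/(2\Upsilon_H)$. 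The descent coefficient is therefore at least $1/(2\Upsilon_H)$; the factor $2$ in Lemma \ref{lem:prop_sketch_solver} is exactly why the hypothesis uses $\gamma_H/(4\Upsilon_H)$. Note that no Hessian moment is needed, consistent with $q_H=0$, because the bounds on $B_t$ are deterministic.

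For the second-order term, I need $\mE[\|\bar{\bp}_t\|^2\mid\mF_{t-1}]\lesssim\|\nabla f_t\|^2+1$. Write $\bar{\bp}_t=(I-\tK_t)\Delta\bx_t$ using $\bz_{t,0}=\0$. Each $\tZ_{t,j}$ is an orthogonal projection, and the parameters $\alpha_t,\beta_t,\gamma_t$ are bounded in terms of $\gamma_S$ via \eqref{prop_mu_nu}, since $\mu_t\ge\gamma_S$ and $\nu_t\le1/\gamma_S$. So each $\tC_{t,j}$ has norm bounded by a deterministic constant $c(\gamma_S)$, and $\|\tK_t\|\le c(\gamma_S)^\tau$ almost surely. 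The sketches are independent of $\xi_t$ (Assumption \ref{ass:3}), which allows conditioning on $g_t$ first. Then
\begin{equation*}
\mE[\|\bar{\bp}_t\|^2\mid\mF_{t-1}]\le(1+c^\tau)^2\gamma_H^{-2}\,\mE[\|g_t\|^2\mid\mF_{t-1}]\le C\big(\|\nabla f_t\|^2+\|\bx_t-\tx\|^2+1\big),
\end{equation*}
where the last step uses Assumption \ref{ass:2} with $q_g=2$. Strong convexity gives $\|\bx_t-\tx\|^2\le 2(f_t-f^\star)/\gamma_H$ and $\|\nabla f_t\|^2\le 2\Upsilon_H(f_t-f^\star)$.

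Combining the two terms,
\begin{equation*}
\mE[f_{t+1}-f^\star\mid\mF_{t-1}]\le(1+C\varphi_t^2)(f_t-f^\star)-\tfrac{\varphi_t}{2\Upsilon_H}\|\nabla f_t\|^2+C\varphi_t^2 .
\end{equation*}
Since $\varphi\in(0.5,1]$, we have $\sum\varphi_t^2<\infty$. The Robbins--Siegmund theorem then gives that $f_t-f^\star$ converges almost surely and that $\sum_t\varphi_t\|\nabla f_t\|^2<\infty$ almost surely. Because $\sum_t\varphi_t=\infty$, this forces $\liminf_t\|\nabla f_t\|=0$. Since $f_t-f^\star\le\|\nabla f_t\|^2/(2\gamma_H)$, the almost-sure limit of $f_t-f^\star$ must be $0$, and strong convexity then gives $\bx_t\to\tx$ almost surely.

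The main obstacle is the almost-sure bound on $\|\tK_t\|$, since Lemma \ref{lem:prop_sketch_solver} only controls the expectation $K_t$. The Kronecker structure reduces this to bounding the deterministic $2\times2$ coefficient matrices in \eqref{equ:Cttilde}, together with $\|\tZ_{t,j}\|\le1$. A crude constant growing like $c^\tau$ is harmless because $\tau$ is fixed.
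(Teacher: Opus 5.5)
Your proposal is correct and follows the paper's proof essentially step by step. The shared steps are the $\Upsilon_H$-smoothness descent inequality, and Lemma~\ref{lem:inner_transition} with Lemma~\ref{lem:prop_sketch_solver} and the $\tau$-condition to obtain the $-\varphi_t\|\nabla f_t\|^2/(2\Upsilon_H)$ drift. They also include a deterministic $(2+1/\sqrt{\gamma_S})^{\tau}$-type bound on $\|\tilde{K}_t\|$ from the Kronecker block structure for the second moment, and then Robbins--Siegmund with the $\liminf$ argument. The only cosmetic difference is that you absorb the $\varphi_t^2\|\nabla f_t\|^2$ term into a multiplicative factor $(1+C\varphi_t^2)$ on $f_t-f^\star$, whereas the paper absorbs it into the descent term for $t\ge t_0$; both work.
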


We note that global convergence only requires gradient estimates to have bounded variance ($q_g = 2$), which is standard in the literature \citep{Ghadimi2013Stochastic}. No moment conditions are needed for Hessian estimates or the condition number of the sketching distribution ($q_H = q_S = 0$). The condition on $\tau$ can be indeed satisfied uniformly by invoking \eqref{eq:spectral_radius_bound}; specifically, $\tau\rho_\star^{\tau-2}\leq \gamma_H/(4\Upsilon_{H})$. This condition ensures that, at each step, the approximate Newton direction from accelerated sketching solver is not too far from~the exact Newton direction, at least in an average sense.

\section{Uncertainty Quantification and Inference} \label{sec:inference}

We quantify uncertainties of online Newton with Nesterov's accelerated sketching from both random sampling and randomized computation. We first establish asymptotic normality with a limiting covariance given by a Lyapunov equation depending on the solver operator. We then propose an online, consistent covariance estimator for practical inference~goals.

\subsection{Asymptotic rate and normality} \label{sec:normality}

We begin by showing the almost-sure convergence of the Hessian average $B_t$, the acceleration parameters $(\alpha_t,\beta_t,\gamma_t)$, and, more importantly, the marginal sketching operator $K_t$. Let $B^{\star} \coloneqq \nabla^2 f(\bx^{\star})$ and define $(\talpha, \tbeta, \tgamma)$ as in \eqref{def:alpha_beta_gamma_t} with $(\tmu, \tnu)$ in \eqref{def:mu_t_nu_t} evaluated at $B^\star$. Then, we let $K^{\star} \coloneqq \mE[\tK^{\star}]$, where $\tK^{\star}$ is defined analogously to $\tK_t$ in \eqref{def:tilde_K_t} while evaluated at $(\talpha, \tbeta, \tgamma)$ and $B^\star$.

\begin{lemma} \label{lem:converge_rate_and_Bt_converge}
	
Under the conditions of Theorem \ref{thm:as_convergence} but strengthening $q_S=0$ to $q_S=1$, then we have 
$B_t \stackrel{a.s.}{\rightarrow} B^{\star}$, $(\alpha_t,\beta_t,\gamma_t) \stackrel{a.s.}{\rightarrow} (\talpha, \tbeta, \tgamma)$, and $K_t \stackrel{a.s.}{\rightarrow} K^{\star}$.
\end{lemma}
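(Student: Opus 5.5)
The proof proceeds in three stages, each feeding the next.

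\textbf{Step 1: $B_t \to B^\star$.} I would write
\begin{equation*}
B_t - B^\star = \frac{1}{t}\sum_{i=0}^{t-1}\big(H_i - \nabla^2 f_i\big) + \frac{1}{t}\sum_{i=0}^{t-1}\big(\nabla^2 f_i - B^\star\big).
\end{equation*}
For the second term, Theorem \ref{thm:as_convergence} gives $\bx_i \to \tx$ almost surely. The Lipschitz continuity of $\nabla^2 f$ (Assumption \ref{ass:1}) then yields $\|\nabla^2 f_i - B^\star\| \le \Upsilon_L \|\bx_i - \tx\| \to 0$. By Ces\`aro averaging, this term vanishes almost surely.

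For the first term, the summands $H_i - \nabla^2 f_i$ form a martingale difference sequence with respect to $\{\mF_i\}$ (Assumption \ref{ass:2_Hessian}). They are uniformly bounded, $\|H_i - \nabla^2 f_i\| \le 2\Upsilon_H$, by Assumption \ref{ass:1}, so no Hessian moment condition is needed. Hence $\sum_i (H_i - \nabla^2 f_i)/(i+1)$ converges almost surely, being an $L^2$-bounded martingale. Kronecker's lemma then gives that the average tends to zero almost surely.

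\textbf{Step 2: continuity of $(\mu,\nu)$, and hence of $(\alpha,\beta,\gamma)$, in $B$.} By Assumption \ref{ass:1}, all $B_t$ and $B^\star$ lie in the compact set $\{\gamma_H I \preceq B \preceq \Upsilon_H I\}$. On this set, $B \mapsto BS(S^\top B^2 S)^\dagger S^\top B$ is the orthogonal projection onto $\mathrm{range}(BS)$. Since $B$ is invertible, $\mathrm{rank}(BS) = \mathrm{rank}(S)$ is fixed, so the projection depends continuously on $B$ for each fixed $S$.

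Moreover, the map is Lipschitz in $B$ with constant of order $\|S\|\|S^\dagger\|/\gamma_H$. This follows from a perturbation bound for projections: $\|P_{BS} - P_{B'S}\| \lesssim \|(B-B')S\|\,\|(BS)^\dagger\|$.

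With $q_S = 1$, i.e.\ $\mE[\|S\|\|S^\dagger\|] \le \Upsilon_S$, dominated convergence (or directly this Lipschitz bound) gives $\tZ(B_t) \to \tZ(B^\star)$ in mean. Hence $Z_t \to Z^\star := \mE[\tZ(B^\star)]$. The limit satisfies $Z^\star \succeq \gamma_S I$, because the lower bound in Assumption \ref{ass:3} passes to the limit. Consequently $\mu_t = \lambda_{\min}(Z_t) \to \tmu$.

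For $\nu$, I need convergence of $\mE[\tZ Z^{-1} \tZ]$. Since $\tZ$ is a projection bounded by $1$ and $Z_t^{-1}$ is uniformly bounded by $1/\gamma_S$ and converges, dominated convergence again applies. The quantity $\nu$ is then $\lambda_{\max}\big(Z^{-1/2}\,\mE[\tZ Z^{-1}\tZ]\,Z^{-1/2}\big)$, which is a continuous function of these matrices, so $\nu_t \to \tnu$. Since \eqref{prop_mu_nu} keeps $\mu,\nu$ in a compact set bounded away from zero, the formulas \eqref{def:alpha_beta_gamma_t} are continuous there, giving $(\alpha_t,\beta_t,\gamma_t) \to (\talpha,\tbeta,\tgamma)$.

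\textbf{Step 3: $K_t \to K^\star$.} $K_t$ is the conditional expectation of a fixed polynomial of degree $\tau$ in the independent matrices $\tZ_{t,j}$, with coefficients continuous in $(\alpha_t,\beta_t,\gamma_t)$ (see \eqref{equ:Cttilde}). Each factor is uniformly bounded, since $\|\tZ_{t,j}\| \le 1$ and the parameters are bounded.

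To compare with the limit, I would couple both operators on the same sketches $S_{t,j}$. I then telescope the difference $\prod_j \tC_{t,j} - \prod_j \tC^\star_j$ into $\tau$ terms, each containing one factor difference $\tC_{t,j} - \tC^\star_j$. Each such difference is bounded by a constant times $\big(|\alpha_t - \talpha| + |\beta_t - \tbeta| + |\gamma_t - \tgamma| + \|\tZ(B_t,S_{t,j}) - \tZ(B^\star,S_{t,j})\|\big)$. The sketch-dependent part is controlled in expectation by Step 2, using the Lipschitz bound and $q_S = 1$. This gives $\|K_t - K^\star\| \to 0$ almost surely.

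\textbf{Main obstacle.} I expect the hard part to be Step 2: establishing continuity of the projection $\tZ$ in $B$ uniformly enough to pass expectations through. This is exactly where $q_S = 1$ enters, together with the continuity of the supremum defining $\nu$.
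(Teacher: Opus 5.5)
Your proposal is correct and follows the paper's proof in all essentials. Both use the same martingale/Kronecker argument for $B_t$, the same projection-perturbation bound (Lemma~\ref{lem:continuity_projection_matrix}, with $\mathbb{E}[\|S\|\|S^\dagger\|]$ finite) carried through $\mu=\lambda_{\min}(Z)$ and $\nu=\lambda_{\max}(Z^{-1/2}AZ^{-1/2})$ with $A=\mathbb{E}[\tilde Z Z^{-1}\tilde Z]$, and a telescoping argument for $K_t$.

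The differences are minor. The paper makes each step quantitative, proving $|\mu_t-\mu^\star|,\ |\nu_t-\nu^\star|,\ \|K_t-K^\star\|\lesssim\|B_t-B^\star\|$ via Weyl's inequality and a Lipschitz bound for $Z\mapsto Z^{-1/2}$; it reuses these bounds for the rates in Lemma~\ref{lem:local_rate}. For $K_t$, the paper telescopes the $\mathcal{F}_{t-1}$-measurable averaged operator via $C_t=M_t^{\tau}$, whereas you couple the random products on shared sketches; the paper uses your coupling later, in \eqref{equ:89'}. Finally, your dominated-convergence variant does not actually need $q_S=1$, since $\tilde Z(\cdot,S)$ is continuous for each fixed $S$ and bounded by $1$. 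So $q_S=1$ is required only for the quantitative Lipschitz route, not for almost-sure convergence itself.
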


We require finite expected condition number of $S$ since the definition of $(\alpha_t,\beta_t,\gamma_t)$ relies on $(\mu_t, \nu_t)$, involving the expected condition number.
The almost-sure convergence of $K_t$ to $K^{\star}$, together with Lemma \ref{lem:prop_sketch_solver}, ensures that our limiting covariance established in \eqref{equ:lyp} is well defined, provided the number of sketching steps $\tau$ is chosen properly. The following lemma further provides the convergence rates.

\begin{lemma} \label{lem:local_rate}
Suppose Assumptions \ref{ass:1}--\ref{ass:3} hold with $q_g>2$, $q_H=2$, $q_S=1$; $\tau$ satisfies $\tau (1\hskip-1pt - \hskip-1pt\sqrt{\mu_t/\nu_t})^{\tau - 2} \hskip-2pt\leq \hskip-2pt\gamma_H/(4\Upsilon_{H})$ for all $t\geq 0$; and the stepsize parameters satisfy $\varphi \in (0.5, 1]$ and $C_{\varphi} > 0.5/\lambda_{\min}(I-K^{\star})$ if $\varphi=1$. Then, we have
\begin{multline*}
\max\big\{\|B_t-B^\star\|,\; |\alpha_t-\talpha|,\; |\beta_t-\tbeta|, \\ |\gamma_t-\tgamma|,\; \|K_t-K^\star\|\big\} = O_p(\sqrt{\varphi_t}).
\end{multline*}
\end{lemma}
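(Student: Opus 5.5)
The plan is to prove the rate for $B_t$ first and then pass it through a chain of locally Lipschitz maps $B_t\mapsto(\mu_t,\nu_t)\mapsto(\alpha_t,\beta_t,\gamma_t)\mapsto K_t$.

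\emph{Step 1: the Hessian average.} Since $B_t=\frac1t\sum_{i=0}^{t-1}H_i$, decompose
\begin{equation*}
B_t-B^\star=\frac1t\sum_{i=0}^{t-1}\big(H_i-\nabla^2 f_i\big)+\frac1t\sum_{i=0}^{t-1}\big(\nabla^2 f_i-B^\star\big).
\end{equation*}
The first sum is a matrix martingale. Assumption \ref{ass:2_Hessian} with $q_H=2$ gives conditional second moments bounded by $C_{H,1}\|\bx_i-\bx^\star\|^2+C_{H,2}$. Localize on the event $\{\sup_i\|\bx_i-\bx^\star\|\le M\}$, which has probability close to one by Theorem \ref{thm:as_convergence}. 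On that event the first sum is $O_p(1/\sqrt t)$.

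By Assumption \ref{ass:1}, the second sum is bounded by $\frac{\Upsilon_L}{t}\sum_i\|\bx_i-\bx^\star\|$. This step needs a local rate for the iterates, $\|\bx_t-\bx^\star\|=O_p(\sqrt{\varphi_t})$. To get it, write the update as
\begin{equation*}
\bx_{t+1}-\bx^\star=\bx_t-\bx^\star-\varphi_t(I-K_t)B_t^{-1}\nabla f_t+\varphi_t\,\text{mds}_t,
\end{equation*}
where $\text{mds}_t$ is the martingale difference from Lemma \ref{lem:inner_transition}. Linearize $\nabla f_t=B^\star(\bx_t-\bx^\star)+O(\|\bx_t-\bx^\star\|^2)$. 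Using $B_t\to B^\star$ and $K_t\to K^\star$ almost surely (Lemma \ref{lem:converge_rate_and_Bt_converge}), the drift matrix is eventually close to $I-K^\star$.

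The standard Robbins--Siegmund/Chung-type recursion for $\mE[\|\bx_t-\bx^\star\|^2\mathbf 1_{\Omega_{t_0}}]$ then applies, where $\Omega_{t_0}$ is the event that the iterates stay in a small ball after time $t_0$. Its noise term has conditional variance bounded via $q_g>2$ together with the bounds on $\|\tK_t\|$ and $\|B_t^{-1}\|\le 1/\gamma_H$. The recursion yields $\varphi_t$. When $\varphi=1$, the condition $C_\varphi>0.5/\lambda_{\min}(I-K^\star)$ is exactly what makes $\prod(1-\varphi_i\lambda)$ decay faster than $\varphi_t$. Hence $\frac1t\sum_i\sqrt{\varphi_i}\lesssim\sqrt{\varphi_t}$ since $\varphi\le1$, and $1/\sqrt t\le C\sqrt{\varphi_t}$. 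Therefore $\|B_t-B^\star\|=O_p(\sqrt{\varphi_t})$.

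\emph{Step 2: $(\mu_t,\nu_t)$ and the acceleration parameters.} The map $B\mapsto\tZ=BS(S^\top B^2S)^\dagger S^\top B$ is an orthogonal projection onto $\mathrm{range}(BS)$. For $B$ near $B^\star$ (both with eigenvalues in $[\gamma_H,\Upsilon_H]$), a perturbation bound for projections gives
\begin{equation*}
\|\tZ(B)-\tZ(B^\star)\|\lesssim \|S\|\|S^\dagger\|\,\|B-B^\star\|/\gamma_H.
\end{equation*}
Indeed, $BS=B^\star S+(B-B^\star)S$, and the smallest nonzero singular value of $B^\star S$ is at least $\gamma_H\sigma_{\min}(S)$. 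Taking expectations with $q_S=1$ yields $\|Z_t-Z^\star\|\lesssim\Upsilon_S\|B_t-B^\star\|$.

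Then $\mu_t=\lambda_{\min}(Z_t)$ is $1$-Lipschitz in $Z_t$. For $\nu_t$, note that $Z\succeq\gamma_S I$ and $\tZ$ is a projection, so $\mE[\tZ Z^{-1}\tZ]$ is Lipschitz in $(B,Z)$ as well. The generalized Rayleigh quotient defining $\nu_t$ is therefore Lipschitz with constants depending on $\gamma_S$. Since $\nu\in[1,1/\gamma_S]$ and $\mu\ge\gamma_S$, the formulas \eqref{def:alpha_beta_gamma_t} are smooth in $(\mu,\nu)$ on a compact set. This gives the $O_p(\sqrt{\varphi_t})$ rates for $(\alpha_t,\beta_t,\gamma_t)$.

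\emph{Step 3: the marginal operator.} Because the $S_{t,j}$ are iid given $\mF_{t-1}$, we have $K_t=(I\;\mathbf 0)\big(\mE\tC_{t,0}\big)^\tau(I;I)$ with $\mE\tC_{t,0}=M_1\otimes I-M_2\otimes Z_t$. Each factor depends Lipschitz-continuously on $(\alpha_t,\beta_t,\gamma_t,Z_t)$. A telescoping product bound gives $\|K_t-K^\star\|\le\tau\,c^{\tau-1}\cdot O(|\Delta\text{params}|+\|Z_t-Z^\star\|)$, where $c$ bounds the factor norms uniformly; $c$ is a constant because $\tau$ is fixed.

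\emph{Main obstacle.} I expect the main difficulty to be Step 1's iterate rate $O_p(\sqrt{\varphi_t})$. It requires a careful localization argument, because the contraction only holds once $B_t$ and $K_t$ are near their limits, and those limits are established only almost surely. I would handle this by stopping-time truncation on the good event and a Chung-lemma recursion, borrowing the unaccelerated argument of \citet{Na2025Statistical} with $\|K_t\|$ controlled through Lemma \ref{lem:prop_sketch_solver}.
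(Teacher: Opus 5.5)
Your proposal is correct and follows essentially the paper's route. The shared steps are: localize with a stopping time controlling both $\|\bx_t-\bx^\star\|$ and $\|B_t-B^\star\|$; use the recursion $\Delta_{t+1}=(I-\varphi_t(I-K^\star))\Delta_t+\varphi_t\boldsymbol{\theta}_t+\varphi_t\boldsymbol{\delta}_t$ with $\|\boldsymbol{\delta}_t\|\lesssim\|\Delta_t\|^2+\|B_t-B^\star\|\,\|\Delta_t\|$ and a Chung-type lemma to get $\mathbb{E}[\|\Delta_t\|^2\mathbf{1}_{\{\tau_{k,r}>t\}}]\lesssim\varphi_t$; split $B_t-B^\star$ into a martingale part and a bias part bounded by $\frac1t\sum_i\sqrt{\varphi_i}\lesssim\sqrt{\varphi_t}$; and transfer the rate through the Lipschitz chain $B_t\mapsto(\mu_t,\nu_t)\mapsto(\alpha_t,\beta_t,\gamma_t)\mapsto K_t$, finishing with a telescoping bound on the $\tau$-th power of the expected one-step operator. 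Two small points: the indicator in your recursion must be the predictable $\mathbf{1}_{\{\tau_{k,r}>t\}}$, with $B_t$ in the stopping criterion, not the non-adapted event $\Omega_{t_0}$, as your last paragraph already suggests; and no localization is needed for the Hessian martingale, since $\|H_i-\nabla^2 f_i\|\le\Upsilon_H$ under Assumption~\ref{ass:1}.
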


To establish convergence rates in Lemma \ref{lem:local_rate}, we have to leverage techniques for analyzing nonsmooth SGD (\citet{Davis2024Asymptotic}, (10.1)) to introduce a stopping time to localize the analysis. See Appendix \ref{app:B.3} for the formal definition.$\quad$

Before presenting the normality result, we introduce some additional notation. Let us define two sandwich matrices
\begin{equation}\label{equ:Gamma_star}
\begin{aligned}
\Omega^{\star} & \coloneqq (B^{\star})^{-1}\mathbb{E}[\nabla F(\bx^{\star}; \xi) \nabla^{\top} F(\bx^{\star}; \xi)] (B^{\star})^{-1},\\
\Gamma^{\star} & \coloneqq \mathbb{E}[(I - \tK^{\star}) \Omega^{\star} (I - \tilde{K}^{\star})^{\top}].	
\end{aligned}
\end{equation}
In fact, $\Omega^{\star}$ is the limiting covariance of  averaged SGD estimator \citep{Ruppert1988Efficient, Polyak1992Acceleration}, which is asymptotically minimax optimal \citep{Duchi2021Asymptotic, Davis2024Asymptotic}. However, due to randomized computation, $\Gamma^\star$ plays a crucial role in \textit{our} limiting covariance. It depends on the accelerated sketching operator $\tK^\star$ (cf. \eqref{def:tilde_K_t}), and its expectation is taken over $\tau$ sketching matrices $\{S_j\}_{j=0}^{\tau-1}\sim S$.

Now, with the notation in place, we present the following asymptotic normality of the last Newton iterate $\bx_t$.

\begin{theorem} \label{thm:asymptotic_normality}
Under the conditions of Lemma \ref{lem:local_rate}, we have
\begin{equation*}
1/\sqrt{\varphi_{t}} \cdot (\bx_t - \bx^{\star}) \xrightarrow{d} \mathcal{N}(\boldsymbol{0}, \Sigma^{\star}),
\end{equation*}
where $\Sigma^{\star}$ satisfies the following Lyapunov equation:
\begin{equation}\label{equ:lyp}
\left[(I - K^{\star}) -\zeta I\right]\Sigma^{\star} + \Sigma^{\star} \left[(I - K^{\star}) -\zeta I\right] = \Gamma^{\star},	
\end{equation}
and $\zeta \coloneqq \mathbf{1}_{\{\varphi=1\}} / 2 C_{\varphi}$.
\end{theorem}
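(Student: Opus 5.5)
We follow the standard route for last-iterate normality of stochastic approximation: decompose the error recursion into a linear deterministic part, a martingale-difference part, and higher-order remainders. Write $\bar{\bx}_t=\bz_{t,\tau}$ for the solver output. By Lemma \ref{lem:inner_transition}, $\bar{\bx}_t=(I-\tK_t)\Delta\bx_t=-(I-\tK_t)B_t^{-1}g_t$, and $\mE[\bar{\bx}_t\mid\mF_{t-1}]=-(I-K_t)B_t^{-1}\nabla f_t$. Hence
\begin{equation*}
\bx_{t+1}-\tx=\bx_t-\tx-\varphi_t(I-K_t)B_t^{-1}\nabla f_t+\varphi_t\boldsymbol{\theta}_t ,
\end{equation*}
where $\boldsymbol{\theta}_t=\bar{\bx}_t+(I-K_t)B_t^{-1}\nabla f_t$ is a martingale difference with respect to $\{\mF_t\}$. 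Next we Taylor expand $\nabla f_t=B^\star(\bx_t-\tx)+O(\Upsilon_L\|\bx_t-\tx\|^2)$ and replace $(I-K_t)B_t^{-1}$ by $(I-K^\star)(B^\star)^{-1}$. This gives the linear recursion
\begin{equation*}
\bx_{t+1}-\tx=\bigl(I-\varphi_t(I-K^\star)\bigr)(\bx_t-\tx)+\varphi_t\boldsymbol{\theta}^\star_t+\varphi_t\boldsymbol{\delta}_t ,
\end{equation*}
with $\boldsymbol{\theta}^\star_t=-(I-\tK^\star)(B^\star)^{-1}(\nabla F(\tx;\xi_t))$ plus its centering. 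The remainder $\boldsymbol{\delta}_t$ collects the Taylor remainder, the operator error $\|K_t-K^\star\|+\|B_t-B^\star\|$ multiplied by $\|\bx_t-\tx\|$, and the martingale error $\boldsymbol{\theta}_t-\boldsymbol{\theta}^\star_t$.

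Unrolling the recursion gives $\bx_t-\tx=\sum_{i<t}\prod_{k=i+1}^{t-1}(I-\varphi_k(I-K^\star))\varphi_i(\boldsymbol{\theta}^\star_i+\boldsymbol{\delta}_i)$ plus a transient term that decays faster than $\sqrt{\varphi_t}$. Lemma \ref{lem:prop_sketch_solver} and the condition on $\tau$ give $\|K^\star\|\le\gamma_H/(2\Upsilon_H)<1$, so $I-K^\star$ has eigenvalues with positive real part. When $\varphi=1$, the stepsize condition $C_\varphi>0.5/\lambda_{\min}(I-K^\star)$ keeps $(I-K^\star)-\zeta I$ stable, and the product is then the usual contraction. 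For the martingale sum, we apply the Lindeberg martingale CLT to $\varphi_t^{-1/2}\sum_i(\cdots)\varphi_i\boldsymbol{\theta}^\star_i$. We have $\mE[\boldsymbol{\theta}^\star\boldsymbol{\theta}^{\star\top}]=\mE[(I-\tK^\star)\Omega^\star(I-\tK^\star)^\top]=\Gamma^\star$, which uses independence of the sketches from $\xi$. If $\mE[\tK^\star\cdot]$ adds a mean correction, it is absorbed because $\mE\nabla F(\tx;\xi)=0$. The conditional covariance then converges to the solution of \eqref{equ:lyp}, by the classical computation of \citet{Leluc2023Asymptotic} and \citet{Na2025Statistical}: the weighted sum $\varphi_t^{-1}\sum_i\varphi_i^2 P_{i,t}\Gamma^\star P_{i,t}^\top$ solves a discrete Lyapunov recursion whose limit is $\Sigma^\star$, with the $\zeta$ shift arising from $\varphi_{i}/\varphi_{i+1}=1+\zeta\varphi_i+o(\varphi_i)$. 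The Lindeberg condition follows from $q_g>2$ together with $\|\tK^\star\|$ being bounded; the latter holds by the spectral bound of Lemma \ref{prop:spectral_recursion} applied realization-wise, using the moment of the condition number with $q_S=1$.

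The main obstacle is showing $\varphi_t^{-1/2}\sum_i(\cdots)\varphi_i\boldsymbol{\delta}_i=o_p(1)$, particularly the martingale error $\boldsymbol{\theta}_t-\boldsymbol{\theta}^\star_t$. This term involves the random acceleration parameters $(\alpha_t,\beta_t,\gamma_t)$ inside $\tK_t$, so we need $\mE\|\tK_t-\tK^\star\|^2\to0$ and not merely convergence of the mean $K_t$. I would localize via the stopping time of Lemma \ref{lem:local_rate}, on the event that the iterates stay in a small ball and the parameters are close to their limits. On this event, $\tK_t$ is a continuous function of $(\alpha_t,\beta_t,\gamma_t,B_t)$ and the sketches, so its second-moment error vanishes. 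Its variance contribution then tends to zero, and the martingale error is negligible by a conditional-variance argument. For the bias terms, Lemma \ref{lem:local_rate} gives $\|K_t-K^\star\|,\|B_t-B^\star\|=O_p(\sqrt{\varphi_t})$, and the localized second-moment bound gives $\|\bx_t-\tx\|=O_p(\sqrt{\varphi_t})$. Products are then $O_p(\varphi_t)$, and $\varphi_t^{-1/2}\sum_i\varphi_i\|P_{i,t}\|O_p(\varphi_i)=O_p(\sqrt{\varphi_t})\to0$. Finally, letting the localization radius shrink while its probability tends to one transfers the result to the unlocalized iterates, using Theorem \ref{thm:as_convergence} and Lemma \ref{lem:converge_rate_and_Bt_converge}.
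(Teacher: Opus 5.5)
Your argument works, but it handles the martingale part by a different route from the paper, and three slips need fixing (second paragraph).

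\textbf{The paper's route.} The paper keeps the state-dependent difference $\boldsymbol{\theta}_t$ of Lemma~\ref{lem:error_recursion}. It proves $\mathbb{E}[\boldsymbol{\theta}_{t+1}\boldsymbol{\theta}_{t+1}^\top\mid\mathcal{F}_t]\to\Gamma^\star$ almost surely (Lemmas~\ref{lem:3} and~\ref{lem:2}). It then truncates on $\mathcal{A}_{t-1}=\{\|\Gamma_{t-1}\|\le 2\|\Gamma^\star\|\vee 1\}$, so the quadratic variation is deterministically bounded and obeys a perturbed Lyapunov recursion. Lindeberg is checked through $(2+\delta)$-th conditional moments of $\boldsymbol{\theta}_t$ (Lemma~\ref{lem:B1}), and localization is used only for the bias $\boldsymbol{\delta}_t$.

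\textbf{Your route.} You couple to $\boldsymbol{\theta}^\star_t=-(I-\tilde K^\star_t)(B^\star)^{-1}\nabla F(\mathbf{x}^\star;\xi_t)$, built from the same sketches $\{S_{t,j}\}_j$. This is exactly the $\tilde{\boldsymbol{\theta}}_k/\hat{\boldsymbol{\theta}}_k$ split that the paper introduces only later, for covariance estimation (Lemmas~\ref{appen:A4:lem2} and~\ref{appen:A4:lem5}). The gain is that the leading term becomes a deterministically weighted sum of i.i.d.\ mean-zero vectors with covariance exactly $\Gamma^\star$. So you need no truncation and no almost-sure convergence of conditional covariances, and Lindeberg follows from finite variance alone because the maximal weight tends to zero.

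The cost is an extra martingale term $\varphi_t^{-1/2}\sum_i\varphi_iP_{i,t}(\boldsymbol{\theta}_i-\boldsymbol{\theta}^\star_i)$. Its conditional variance is $\lesssim\|\mathbf{x}_i-\mathbf{x}^\star\|^2+\|B_i-B^\star\|$. This uses realization-wise Lipschitzness of $\nabla F(\cdot;\xi)$, independence of the sketches and $\xi_i$, and $\|\tilde K_i-\tilde K_i^\star\|^2\le 2C_K\|\tilde K_i-\tilde K_i^\star\|$ combined with Lemma~\ref{lem:continuity_projection_matrix} under $q_S=1$. No rate is needed here, so you should bound the square via boundedness rather than squaring the Lipschitz estimate, as the paper does with $q_S=2$. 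A Toeplitz argument then makes this term $o_p(1)$. Both routes rest on the same coupling estimate $\mathbb{E}[\|\tilde K_t-\tilde K^\star_t\|\mid\mathcal{F}_{t-1}]\lesssim\|B_t-B^\star\|$.

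\textbf{Slips to repair.}
First, per-index rates cannot be summed: $\varphi_t^{-1/2}\sum_i\varphi_i\|P_{i,t}\|\,O_p(\varphi_i)=O_p(\sqrt{\varphi_t})$ does not follow from the $O_p$ statement of Lemma~\ref{lem:local_rate}. Instead use the localized bounds from its proof, $\mathbb{E}[\|\mathbf{x}_i-\mathbf{x}^\star\|^2\mathbf{1}_{\{\tau_{k,r}>i\}}]\lesssim\varphi_i$ and $\mathbb{E}[\|B_i-B^\star\|^2\mathbf{1}_{\{\tau_{k,r}>i\}}]\lesssim\varphi_i$. These give $\mathbb{E}[\|\boldsymbol{\delta}_i\|\mathbf{1}_{\{\tau_{k,r}>i\}}]\lesssim\varphi_i$, which you then sum in $L^1$ (the paper's $e_t$ recursion). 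Keep $r$ fixed and send $k\to\infty$, rather than shrinking the radius.

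Second, boundedness of $\tilde K^\star$ does not come from Lemma~\ref{prop:spectral_recursion} applied realization-wise. That lemma concerns the averaged operator with $z\in[\mu_t,1]$. Each realization $\tilde Z_{t,j}$ is a projection with eigenvalue $0$, along which $G_t(0)$ is row-stochastic with eigenvalue $1$, and the factors do not commute. The correct reason is the crude deterministic bound $\|\tilde K\|\le C_K$ in \eqref{equ:uniform_bound_tK}, which needs no moment of $\|S\|\|S^\dagger\|$.

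Third, $\varphi_i/\varphi_{i+1}=1+2\zeta\varphi_i+o(\varphi_i)$, not $1+\zeta\varphi_i$. It is $\sqrt{\varphi_i/\varphi_{i+1}}=1+\zeta\varphi_i+o(\varphi_i)$ that produces the shift $(I-K^\star)-\zeta I$.
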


By standard properties of the Lyapunov equation (\citet[Theorem 4.6]{Khalil2002Nonlinear}), we know \eqref{equ:lyp} always admits a unique positive semidefinite solution. This follows from the facts that $\Gamma^{\star} \succeq \0$ and $(I - K^{\star}) - \zeta I \succ \0$; the latter is guaranteed by the conditions on $\tau$ and $C_\varphi$ in Lemma \ref{lem:local_rate}.$\quad\quad\quad$

To end this subsection, we draw connections between our limiting covariance $\Sigma^\star$ and those of exact and unaccelerated sketched Newton methods.

\begin{proposition} \label{prop:covariance_comparison}
Consider $\Sigma^\star$ in \eqref{equ:lyp} in two degenerate regimes.
\begin{enumerate}[wide, labelindent=3pt, label=\textbf{(\alph*)},topsep=-2pt]
\setlength\itemsep{-0pt}
\item When the Newton system is exactly solved ($K^\star=\tK^\star=\0$), we have $\Sigma^\star=0.5\tOmega$ for $\varphi\in(0.5,1)$ and $\Sigma^\star = \frac{C_\varphi\tOmega}{2C_\varphi-1}$ for $\varphi=1$. In the latter case, setting $\varphi=C_{\varphi}=1$ leads to \vskip-0.3cm
\begin{equation*}
\sqrt{t}\cdot(\bx_t-\tx)\xrightarrow{d} \mathcal{N}(\boldsymbol{0}, \tOmega).
\end{equation*}
\item When the accelerated rate is not provably achieved ($\tgamma=1$), $\tSigma$ is reduced to the solution to
\begin{equation}\label{equ:unacc:cov}
\left[(I - \mathcal{C}^{\star}) -\zeta I\right] \Sigma^{\star} + \Sigma^{\star}  \left[(I - \mathcal{C}^{\star}) -\zeta I\right] = \mathcal{G}^{\star},
\end{equation}
where $\mathcal{C}^{\star} \coloneqq \mE[\Tilde{\mathcal{C}}^{\star}]$ and $\mathcal{G}^{\star} \coloneqq \mE[(I - \Tilde{\mathcal{C}}^{\star}) \Omega^{\star} (I - \Tilde{\mathcal{C}}^{\star})^\top]$ with
$\Tilde{\mathcal{C}}^{\star} \coloneqq \prod_{j = 0}^{\tau - 1} (I - B^\star S_j (S_j^{\top}(B^\star)^2 S_j)^{\dagger}S_j^{\top} B^\star)$.
\end{enumerate}
\end{proposition}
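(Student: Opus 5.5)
Both parts follow by substituting the degenerate operators into the Lyapunov equation \eqref{equ:lyp} and using that, since $(I-K^\star)-\zeta I\succ\0$, its solution is unique (\citet[Theorem 4.6]{Khalil2002Nonlinear}). It therefore suffices to exhibit a candidate solution in (a), and in (b) to show that the coefficient matrix and the right-hand side coincide with those of \eqref{equ:unacc:cov}.

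\textbf{Part (a).} With $K^\star=\tK^\star=\0$, the definition \eqref{equ:Gamma_star} gives $\Gamma^\star=\mE[I\,\Omega^\star I]=\Omega^\star$. The equation \eqref{equ:lyp} becomes $2(1-\zeta)\Sigma^\star=\Omega^\star$, so $\Sigma^\star=\Omega^\star/(2(1-\zeta))$.
\begin{itemize}
\item For $\varphi\in(0.5,1)$ we have $\zeta=0$, hence $\Sigma^\star=0.5\,\Omega^\star$.
\item For $\varphi=1$ we have $\zeta=1/(2C_\varphi)$, hence $\Sigma^\star=C_\varphi\Omega^\star/(2C_\varphi-1)$. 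The condition $C_\varphi>0.5/\lambda_{\min}(I)=0.5$ from Lemma \ref{lem:local_rate} guarantees positivity.
\end{itemize}
Setting $\varphi=C_\varphi=1$ gives $\varphi_t=1/(t+1)$ and $\Sigma^\star=\Omega^\star$. Theorem \ref{thm:asymptotic_normality} then yields $\sqrt{t+1}(\bx_t-\tx)\xrightarrow{d}\mathcal N(\0,\Omega^\star)$. Slutsky's lemma with $\sqrt{t/(t+1)}\to1$ converts this to the stated normalization $\sqrt{t}$.

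\textbf{Part (b).} The key claim is that when $\tgamma=1$ we have $\tK^\star=\Tilde{\mathcal C}^\star$ realization-wise. This is exactly the final statement of Lemma \ref{lem:inner_transition}, applied with $B_t$ replaced by $B^\star$ and the parameters by $(\talpha,\tbeta,\tgamma)$; the lemma is purely algebraic in $(B,\alpha,\beta,\gamma)$, so the substitution is legitimate.

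To justify it directly, I would note that $\gamma=1$ forces $\mu\nu=1$, so $\beta=1-\mu$ and $\alpha=1/(1+\nu)$. With $\gamma=1$ both rows of the second matrix in \eqref{equ:Cttilde} coincide and equal $(1-\alpha,\alpha)$. I would then verify by induction on $j$ that the subspace of pairs $(\bz,\bv)$ with $\bz=\bv$ is invariant under $\tC_{t,j}$. Starting from $\bz_0=\bv_0$, the midpoint is $\by_j=\bz_j$. The updates $\bz_{j+1}=\by_j-\bw_j$ and $\bv_{j+1}=\beta\bv_j+(1-\beta)\by_j-\bw_j$ then agree, so $\bz_{j+1}=\bv_{j+1}$.

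On this invariant subspace, each step acts as $I-\tZ_{j}$. Hence $\tK^\star=[\tC^\star]_{1,1}+[\tC^\star]_{1,2}=\prod_j(I-\tZ^\star_j)=\Tilde{\mathcal C}^\star$. Taking expectations gives $K^\star=\mathcal C^\star$ and $\Gamma^\star=\mathcal G^\star$, so \eqref{equ:lyp} becomes \eqref{equ:unacc:cov}.

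The only subtle step is this invariance argument, in particular correctly identifying the $(1,1)+(1,2)$ block sum as the action on the diagonal subspace (applying $\tC$ to $(\ba,\ba)$ and reading the first component). That step is routine given Lemma \ref{lem:inner_transition}.
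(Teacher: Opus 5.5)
Your proposal is correct and follows the paper's proof. In (a) you substitute $K^\star=\tK^\star=\0$, so that $\Gamma^\star=\Omega^\star$, and solve the resulting scalar-coefficient Lyapunov equation. In (b) you use the $\gamma=1$ case of Lemma \ref{lem:inner_transition} to get $\tK^\star=\Tilde{\mathcal C}^\star$, and hence $K^\star=\mathcal C^\star$ and $\Gamma^\star=\mathcal G^\star$. Your invariance of the diagonal $\{\bz=\bv\}$ is exactly the paper's identity $\tC_{t,j}\bigl((1,1)^\top\otimes Y\bigr)=(1,1)^\top\otimes\bigl((I-\tZ_{t,j})Y\bigr)$. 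Your Slutsky step converting $\sqrt{t+1}$ to $\sqrt{t}$ is a small detail the paper leaves implicit.
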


Case \textbf{(a)} matches the limiting covariance matrix of conditioned SGD method \citep{Leluc2023Asymptotic}, indicating that the Newton estimator attains optimal statistical efficiency on par with the averaged SGD estimator. Case \textbf{(b)} matches the limiting covariance matrix of unaccelerated sketched Newton method \citep{Na2025Statistical}; however, the latter corresponds only to Algorithm \ref{alg:1} under the parameter setup $(\alpha_t,\beta_t,\gamma_t) = (0.5,0,1)$. In contrast, our condition $\gamma^{\star}=1$ is significantly weaker, achieved by our novel analysis.$\quad\quad$

\subsection{Online covariance estimation} \label{sec:covariance}

Building on the results of Section \ref{sec:normality}, we construct in this section an online estimator for the limiting covariance $\Sigma^{\star}$. 
With this covariance estimator, we can then perform online statistical inference, e.g., constructing asymptotically valid confidence intervals for the model parameters.

Inspired by prior work on covariance estimation \citep{Chen2020Statistical, Zhu2021Online, Jiang2025Online, Kuang2025Online}, we consider the weighted sample covariance estimator:\vskip-0.7cm
\begin{align} \label{exp:Xihat}
& \hat{\Sigma}_t  \coloneqq \frac{1}{t}\sum_{i=1}^t \frac{1}{\varphi_{i-1}} (\bx_i-\bar{\bx}_t)(\bx_i-\bar{\bx}_t)^{\top} \\
&= \frac{1}{t}\sum_{i=1}^t \frac{\bx_i\bx_i^{\top}}{\varphi_{i-1}}
- \frac{1}{t}\sum_{i=1}^t \frac{\bx_i\bar{\bx}_t^{\top}+\bar{\bx}_t\bx_i^{\top}}{\varphi_{i-1}} + \frac{1}{t}\sum_{i=1}^t \frac{\bar{\bx}_t\bar{\bx}_t^{\top}}{\varphi_{i-1}}.
\end{align}\vskip-0.65cm
The decomposition in the second line shows that all summations can be updated recursively, in the same spirit as \eqref{def:B_t}. Hence, $\hSigma_t$ can be constructed efficiently in an online manner using only the accelerated Newton iterates, without requiring any additional matrix-vector products. Consequently, the inference procedure only needs to maintain a few running sums, rather than storing all past data samples or repeatedly solving auxiliary linear systems.

To show the consistency of $\hat{\Sigma}_t$, we first need to provide non-asymptotic rates for the fourth moments of the iterate and Hessian average errors, $\mE[\|\bx_t - \tx\|^4]$ and $\mE[\|B_t - \tB\|^4]$.

\begin{lemma} \label{sec4:lem1}

Suppose Assumptions \ref{ass:1}-\ref{ass:3} hold with $q_g=q_H=4$, $q_S=2$; $\tau$ satisfies $\tau (1 \hskip-0.5pt - \hskip-0.5pt\sqrt{\mu_t/\nu_t})^{\tau- 2}\hskip-1pt\leq\hskip-1pt \gamma_H\hskip-1pt/\hskip-1pt(4\Upsilon_{H})$ for all $t\geq 0$; and the stepsize parameters satisfy $C_\varphi>0$ and $\varphi \in (0.5, 1)$. Then, we have
\begin{equation*}
\max\big\{\mathbb{E}[\|\bx_t-\bx^\star\|^4],\;  \mathbb{E}[\|B_t-B^\star\|^4]\big\} = O(\varphi_t^2).
\end{equation*}	
\end{lemma}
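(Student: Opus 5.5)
The plan is to derive a one-step recursion for $\mE[\|\bx_t-\tx\|^4]$ and then close it with a standard Chung-type lemma for sequences of the form $a_{t+1}\le(1-c\varphi_t)a_t+C\varphi_t^3+(\text{coupling terms})$. For $\varphi\in(0.5,1)$, such a recursion yields $a_t=O(\varphi_t^2)$. The Hessian-average bound is handled separately and is fed back into the iterate recursion.

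\textbf{Hessian average.} Write $t(B_t-\tB)=\sum_{i<t}(H_i-\nabla^2 f_i)+\sum_{i<t}(\nabla^2 f_i-\tB)$. The first sum is a martingale. Burkholder/Rosenthal together with Assumption~\ref{ass:2_Hessian} ($q_H=4$) give $\mE\|\cdot\|^4\lesssim t^2(1+\max_i\mE\|\bx_i-\tx\|^4)$. The second sum is bounded by $\Upsilon_L\sum_i\|\bx_i-\tx\|$. By Minkowski, its fourth moment is at most $(\sum_i(\mE\|\bx_i-\tx\|^4)^{1/4})^4$. Once $\mE\|\bx_i-\tx\|^4\lesssim\varphi_i^2$ is known, this term is $(\sum_i\varphi_i^{1/2})^4/t^4\asymp\varphi_t^2$, using $\sum_{i\le t} i^{-\varphi/2}\asymp t^{1-\varphi/2}$. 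The martingale part gives $t^{-2}\le\varphi_t^2$ since $\varphi<1$. So the $B_t$ bound follows from the $\bx_t$ bound, plus a preliminary uniform bound $\sup_t\mE\|\bx_t-\tx\|^4<\infty$.

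\textbf{Iterate recursion.} By Lemma~\ref{lem:inner_transition}, $\bx_{t+1}-\tx=\bx_t-\tx-\varphi_t(I-K_t)B_t^{-1}\nabla f_t+\varphi_t\bm{\epsilon}_t$. Here $\bm\epsilon_t$ is a martingale difference made of gradient noise and sketching noise. I will bound $\|\bm\epsilon_t\|$ using $\|\tK_t\|$. Since $\tK_t$ is not a projection in the accelerated case, its moments must be controlled through $\alpha_t,\beta_t,\gamma_t$ and $\|\tZ_{t,j}\|\le1$. The entries of $G_t$ are bounded by $1+\gamma_t\le 1+1/\mu_t\le1+1/\gamma_S$, so $\|\tK_t\|\le c^\tau$ deterministically. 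Hence $\mE[\|\bm\epsilon_t\|^4|\mF_{t-1}]\lesssim\|\bx_t-\tx\|^4+1$ by Assumption~\ref{ass:2} with $q_g=4$ and $\|B_t^{-1}\|\le1/\gamma_H$. For the drift, write $\nabla f_t=\bar H_t(\bx_t-\tx)$ with $\gamma_H I\preceq\bar H_t\preceq\Upsilon_H I$. The condition on $\tau$ with Lemma~\ref{lem:prop_sketch_solver} gives $\|K_t\|\le\gamma_H/(2\Upsilon_H)$. Then $(\bx_t-\tx)^\top(I-K_t)B_t^{-1}\bar H_t(\bx_t-\tx)\ge c\|\bx_t-\tx\|^2$ with $c\ge\gamma_H/(2\Upsilon_H)$, uniformly, exactly as in Theorem~\ref{thm:as_convergence}. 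Expanding $\|\bx_{t+1}-\tx\|^4=(\|\bx_t-\tx\|^2+2\varphi_t\langle\cdot,\cdot\rangle+\varphi_t^2\|\cdot\|^2)^2$ and taking conditional expectations kills the first-order martingale terms. Cross terms are handled by Young's inequality, which gives
\[
\mE\|\bx_{t+1}-\tx\|^4\le(1-c'\varphi_t)\mE\|\bx_t-\tx\|^4+C\varphi_t^2\mE\|\bx_t-\tx\|^2+C\varphi_t^4.
\]

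\textbf{Closing the argument, and the main obstacle.} The same expansion at second order gives $\mE\|\bx_t-\tx\|^2=O(\varphi_t)$. Substituting yields a forcing term $O(\varphi_t^3)$, and the Chung-type lemma gives $O(\varphi_t^2)$. The main obstacle is uniform control of the random drift matrix $(I-K_t)B_t^{-1}\bar H_t$ without an a priori rate on $B_t$. Positive-definiteness is uniform, so this does not block the fourth-moment bound for $\bx_t$. Since $B_t$ only enters through its spectral bounds, the $\bx_t$ bound can be closed first and the $B_t$ bound derived afterward, which avoids a circular argument. The bound $\|\tK_t\|\le c^\tau$ is where $q_S=2$ might alternatively be needed, if one instead controls $\gamma_t$ variations. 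I would first attempt the deterministic bound above.
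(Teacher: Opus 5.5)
\textbf{The gap is in the contraction step.} Your overall architecture matches the paper's proof: deterministic $\|\tK_t\|\le C_K$ for the noise moments, a Chung/Leluc-type recursion closed using $\varphi\in(0.5,1)$, and a Burkholder/Minkowski bound for $B_t$ derived after the iterate bound. However, you take $\|\bx_t-\bx^\star\|^2$ as the Lyapunov function and assert $\Delta_t^\top(I-K_t)B_t^{-1}\bar H_t\Delta_t\ge c\|\Delta_t\|^2$ uniformly, ``exactly as in Theorem~\ref{thm:as_convergence}.''

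That theorem does something different. It uses $f$ as the Lyapunov function, so its drift is $\nabla f_t^\top(I-K_t)B_t^{-1}\nabla f_t$, a quadratic form in a single vector. Since $B_t^{-1}\succeq\Upsilon_H^{-1}I$ and $\|K_tB_t^{-1}\|\le 1/(2\Upsilon_H)$, this form is at least $\|\nabla f_t\|^2/(2\Upsilon_H)$.

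In your form the matrix is $B_t^{-1}\bar H_t$, a product of two \emph{different} SPD matrices. Its eigenvalues are positive, but its symmetric part need not be positive definite. For example, take $\gamma_H=1$, $\Upsilon_H=100$, $B_t=\mathrm{diag}(1,100)$, $\bar H_t=\begin{pmatrix}50.5&49.5\\49.5&50.5\end{pmatrix}$ and $v=(1,-2)^\top$. Then $v^\top B_t^{-1}\bar H_t v\approx-47.5<0$, even with $K_t=\mathbf 0$. In general the form can be indefinite once $\Upsilon_H/\gamma_H>3+2\sqrt2$.

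Nothing rules out such configurations at finite $t$: $B_t$ averages stochastic Hessians at past iterates, while $\bar H_t$ averages $\nabla^2 f$ along the segment $[\bx^\star,\bx_t]$. So your statement ``positive-definiteness is uniform'' is exactly the false point. It holds only asymptotically, as $B_t,\bar H_t\to B^\star$, and exploiting that would need a localization that does not directly give global fourth moments. Without it you do not get the factor $1-c'\varphi_t$. The same defect affects your auxiliary bound $\mE\|\bx_t-\bx^\star\|^2=O(\varphi_t)$.

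\textbf{The repair is the paper's choice of Lyapunov function.} Square the descent inequality \eqref{equ:48}, which is legitimate since $f_{t+1}-f^\star\ge0$, and run the recursion on $(f_t-f^\star)^2$. The steps are:
\begin{enumerate}
\item The first-order term contributes $-\frac{2\gamma_H}{\Upsilon_H}\varphi_t(f_t-f^\star)^2$, using $\|\nabla f_t\|^2\ge2\gamma_H(f_t-f^\star)$.
\item Young's inequality absorbs the $\varphi_t^2$ cross terms into $\frac{\gamma_H}{\Upsilon_H}\varphi_t(f_t-f^\star)^2+C\varphi_t^3\,\mE[\|\bz_{t,\tau}\|^4\mid\mF_{t-1}]$. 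This way no separate second-moment rate is needed, though your route of inserting $\mE[f_t-f^\star]=O(\varphi_t)$, obtained from \eqref{equ:51}, would also work.
\item The bound $\mE[\|\bz_{t,\tau}\|^4\mid\mF_{t-1}]\le C_1(f_t-f^\star)^2+C_2$ then gives $\mE(f_{t+1}-f^\star)^2\le(1-c\varphi_t)\mE(f_t-f^\star)^2+C\varphi_t^3$, hence $\mE(f_t-f^\star)^2=O(\varphi_t^2)$.
\item Finally, $\frac{\gamma_H}{2}\|\bx_t-\bx^\star\|^2\le f_t-f^\star$ transfers this to $\mE\|\bx_t-\bx^\star\|^4=O(\varphi_t^2)$.
\end{enumerate}
With that substitution, the rest of your plan goes through essentially as you describe, including the ordering ``iterate bound first, then $B_t$''.
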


\vskip-0.15cm
With the above lemma, we show the convergence rate of $\hat{\Sigma}_t$ in the following theorem.

\begin{theorem} \label{sec4:thm1}
Under the conditions of Lemma \ref{sec4:lem1}, the covariance estimator $\hat{\Sigma}_t$ defined in \eqref{exp:Xihat} satisfies
\begin{equation*}
\mathbb{E}[\|\hat{\Sigma}_t-\Sigma^\star\|] = O (1/\sqrt{t\varphi_t}).
\end{equation*}	
\end{theorem}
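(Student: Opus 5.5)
We follow the batch-free weighted sample covariance route of \citet{Zhu2021Online, Kuang2025Online}. The starting point is an explicit error recursion. Let $\bphi_t \coloneqq \bx_t - \tx$. Using Lemma \ref{lem:inner_transition}, write $\bz_{t,\tau} = -(I-\tK_t)B_t^{-1}g_t$ and expand $g_t = \nabla f_t + (g_t-\nabla f_t)$ and $\nabla f_t = \tB\bphi_t + O(\|\bphi_t\|^2)$. This gives
\begin{equation*}
\bphi_{t+1} = \bigl(I-\varphi_t(I-K^\star)\bigr)\bphi_t + \varphi_t\bm{\theta}_t + \varphi_t\bm{\delta}_t .
\end{equation*}
Here $\bm{\theta}_t \coloneqq -(I-\tK^\star)(\tB)^{-1}\nabla F(\tx;\xi_t)$ is the leading martingale difference. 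Its conditional covariance is exactly $\Gamma^\star$, because the sketches are independent of $\xi_t$. The remainder $\bm{\delta}_t$ collects several pieces: $(\tK_t-\tK^\star)$, $(B_t-\tB)$, $(K_t-K^\star)$, the curvature term $O(\|\bphi_t\|^2)$, and the noise difference $\nabla F(\bx_t;\xi_t)-\nabla F(\tx;\xi_t)$. Lemma \ref{sec4:lem1}, together with a fourth-moment analogue of Lemma \ref{lem:local_rate} for $(\alpha_t,\beta_t,\gamma_t,K_t)$, gives $\mE\|\bm{\delta}_t\|^2 = O(\varphi_t)$. The $K$ terms are controlled through Lipschitz dependence on $B_t$ via $\mu_t,\nu_t$, using $q_S=2$. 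The nonmartingale part of $\bm{\delta}_t$ has second moment $O(\varphi_t^2)$.

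Next, unroll the linear recursion with $A\coloneqq I-K^\star$ and $Y_{i}^{t}\coloneqq\prod_{k=i}^{t}(I-\varphi_k A)$. This gives
\begin{equation*}
\bphi_t = Y_0^{t-1}\bphi_0 + \sum_{i<t} Y_{i+1}^{t-1}\varphi_i\bm{\theta}_i + \text{(remainder)}.
\end{equation*}
We then decompose $\hSigma_t - \tSigma$ into four parts:
\begin{enumerate}[label=(\roman*)]
\item the main term $\frac1t\sum_i \varphi_{i-1}^{-1}\bphi_i\bphi_i^\top$ minus $\tSigma$;
\item the centering correction $\frac1t\sum_i\varphi_{i-1}^{-1}\bigl(\bar{\bx}_t-\tx\bigr)\bigl(\bar{\bx}_t-\tx\bigr)^\top$ together with its cross terms;
\item remainder contributions;
\item the initial-condition term.
\end{enumerate}
The centering term (ii) is easy. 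Averaging yields $\mE\|\bar{\bx}_t-\tx\|^2 = O(1/t)$, by the standard Polyak--Juditsky argument applied to the recursion above. Hence (ii) is at most $O\bigl(\frac1t\sum_i\varphi_i^{-1}\cdot\frac1t\bigr)=O\bigl(1/(t\varphi_t)\bigr)$. The remainder (iii) is bounded via Cauchy--Schwarz by $\frac1t\sum_i\varphi_i^{-1}\sqrt{\varphi_i}\cdot\sqrt{\varphi_i}\,\sqrt{\varphi_i} = O(\sqrt{\varphi_t})$, which must be checked to be $\lesssim 1/\sqrt{t\varphi_t}$. If it is not, we refine it using the fact that the martingale part of $\bm{\delta}_t$ averages out. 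The initial-condition term (iv) decays exponentially.

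The main term (i) is the crux. Write $\bphi_i\bphi_i^\top/\varphi_{i-1} = \mE[\cdot] + $ fluctuation. For the deterministic part, use the matrix recursion
\begin{equation*}
\Sigma_{i+1} \approx (I-\varphi_iA)\Sigma_i(I-\varphi_iA)^\top + \varphi_i^2\Gamma^\star .
\end{equation*}
With $\varphi\in(0.5,1)$ the $\zeta$ term vanishes, and $\Sigma_i/\varphi_{i-1}$ converges to the solution of $A\tSigma+\tSigma A^\top=\Gamma^\star$, i.e.\ to $\tSigma$ in \eqref{equ:lyp}. The deviation $\Sigma_i/\varphi_{i-1}-\tSigma$ is of order $\varphi_i + (\varphi_{i-1}/\varphi_i-1)/\varphi_i$. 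Its Cesàro average is $o\bigl(1/\sqrt{t\varphi_t}\bigr)$ for $\varphi<1$.

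For the fluctuation, expand $\bphi_i\bphi_i^\top$ into a double sum of $Y\bm{\theta}_k\bm{\theta}_l^\top Y^\top$. Swap the order of summation over $i$ and $(k,l)$, and use the exponential decay of $Y_{k+1}^{i}$, so that $\sum_{i\ge k}\varphi_i^{-1}\|Y_{k+1}^i\|^2\varphi_k^2 = O(1)$. This reduces the fluctuation to $\frac1t\sum_k(\bm{\theta}_k\bm{\theta}_k^\top-\Gamma^\star)$ plus $\frac1t\sum_{k\ne l}$ martingale cross terms, each with second moment $O(1/(t\varphi_t))$. The fourth moments from $q_g=4$ are needed for the diagonal part.

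We expect the main obstacle to be this last step: controlling the cross terms $k\neq l$ with the correct weight $1/(t\varphi_t)$. The plan is to bound the sum $\sum_{k<l}$ as a martingale in $l$, with coefficients $\sum_{i}\varphi_i^{-1}Y_{l+1}^{i}\varphi_l(\ldots)\varphi_kY_{k+1}^{i\top}$. We then bound its square via orthogonality, obtaining $\frac1{t^2}\sum_l\sum_{k<l}\varphi_k\varphi_l^{-1}\exp\bigl(-c\sum_{j=k}^{l}\varphi_j\bigr)\lesssim \frac1{t^2}\sum_l\varphi_l^{-1}=O\bigl(1/(t\varphi_t)\bigr)$. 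Taking square roots gives the claimed $O(1/\sqrt{t\varphi_t})$.
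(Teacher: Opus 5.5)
Your proposal is correct and is essentially the paper's argument. The correspondences are as follows: your leading term $-(I-\tilde{K}^\star_t)(B^\star)^{-1}\nabla F(\bx^\star;\xi_t)$ is the paper's $\tilde{\btheta}_t$; your split of the remainder into a martingale part with second moment $O(\varphi_t)$ and a predictable part with second moment $O(\varphi_t^2)$ is the paper's $\hat{\btheta}_t$ versus $\bdelta_t$; and your bias recursion plus diagonal/off-diagonal variance bound is the paper's bias--variance computation for $\tilde{\mathcal{L}}_{1,i}$ (its Case~1 gives $1/t$, Cases~2--3 give $1/(t\varphi_t)$).

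The check you left open in (iii) needs no refinement, since $\sqrt{\varphi_t}\cdot\sqrt{t\varphi_t}=\sqrt{t}\,\varphi_t\to 0$ for $\varphi>1/2$. Separately, the cross terms in (ii) are only $O(1/\sqrt{t\varphi_t})$ by Cauchy--Schwarz, using $\frac{1}{t}\sum_i\varphi_{i-1}^{-1}\mathbb{E}\|\bx_i-\bx^\star\|^2=O(1)$, not $O(1/(t\varphi_t))$ as written; this still meets the target rate.
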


\vskip-0.3cm
Since $t \varphi_t\rightarrow \infty$ as $t\rightarrow\infty$, Theorem \ref{sec4:thm1} implies that $\hat{\Sigma}_t$ is a consistent estimator of $\Sigma^{\star}$. An immediate consequence of this consistency is the construction of confidence intervals:
\begin{equation*}
P\big(\bw^\top\tx\in\big[\bw^\top\bx_t\pm z_{1-q/2}\{\varphi_t\bw^\top\hSigma_t\bw\}^{0.5}\big]\big)\rightarrow 1-q
\end{equation*}
as $t\rightarrow\infty$. Here, $z_{1-q/2}$ is the $(1-q/2)$-quantile of $\mN(0,1)$.

\section{Numerical Experiment} \label{sec:experiments}

\begin{table*}[thbp!] 
\vskip-0.1cm
\centering
\setlength{\tabcolsep}{6pt}
\renewcommand{\arraystretch}{1.10}
\resizebox{0.95\linewidth}{!}{\begin{tabular}{|c|cc|c|ccc|ccc|ccc|ccc|}
\hline
&&&&&&&&&&&&&&&\\[-2.5ex]
\multirow{3}{*}{$d$}&\multicolumn{2}{c|}{\multirow{3}{*}{$\quad \ \Sigma_a \quad \;$}}& \multirow{3}{*}{$\tau$} & \multicolumn{3}{c|}{Linear Regression (Kaczmarz)}& \multicolumn{3}{c|}{Linear Regression (Gaussian)} & \multicolumn{3}{c|}{Logistic Regression (Kaczmarz)}& \multicolumn{3}{c|}{Logistic Regression (Gaussian)}  \\
\cline{5-16}
& & & & & & & & &  &&&&&&\\[-2.85ex]
& & & &  MAE  &  \multicolumn{1}{|c|}{Ave Cov } & Ave Len  & MAE &  \multicolumn{1}{|c|}{Ave Cov } & Ave Len &  MAE  &  \multicolumn{1}{|c|}{Ave Cov } & Ave Len  & MAE &  \multicolumn{1}{|c|}{Ave Cov } & Ave Len  \\[-0.5pt]
& & & &  {\scriptsize $(10^{-2})$} &  \multicolumn{1}{|c|}{{\scriptsize $(\%)$}} & {\scriptsize $(10^{-2})$} & {\scriptsize $(10^{-2})$} &  \multicolumn{1}{|c|}{{\scriptsize $(\%)$}} & {\scriptsize $(10^{-2})$} &  {\scriptsize $(10^{-2})$} &  \multicolumn{1}{|c|}{{\scriptsize $(\%)$}} & {\scriptsize $(10^{-2})$} & {\scriptsize $(10^{-2})$} &  \multicolumn{1}{|c|}{{\scriptsize $(\%)$}} & {\scriptsize $(10^{-2})$}  \\
\hline
\multirow{24}{*}{40}
&\parbox[t]{2mm}{\multirow{8}{*}{\;Identity}}&  & \text{SGD} &  25.49 & \multicolumn{1}{|c|}{62.50} & 0.65 & --- & \multicolumn{1}{|c|}{---} & --- &  7.79 & \multicolumn{1}{|c|}{96.50} & 0.64 & --- & \multicolumn{1}{|c|}{---} & ---  \\
\cline{4-16}
&&& {$\infty$} &  25.80 & \multicolumn{1}{|c|}{93.00} & 2.55 & 25.24 & \multicolumn{1}{|c|}{98.00} & 2.56 &  3.76 & \multicolumn{1}{|c|}{94.50} & 0.24 & 3.80 & \multicolumn{1}{|c|}{95.00} & 0.24  \\
\cline{4-16}
&&& \multirow{3}{*}{$10$} & 25.64 & \multicolumn{1}{|c|}{93.00} & 2.55 & 27.53 &\multicolumn{1}{|c|}{96.00} & 2.72 & 3.76 & \multicolumn{1}{|c|}{94.50} & 0.33 & 4.10 &\multicolumn{1}{|c|}{94.00} & 0.36  \\
\cline{5-16}
&&& & 25.68 & \multicolumn{1}{|c|}{96.50} & 2.54 & 27.66 &\multicolumn{1}{|c|}{92.00} & 2.72 & 3.83 & \multicolumn{1}{|c|}{95.50} & 0.33 & 4.03 &\multicolumn{1}{|c|}{97.00} & 0.36  \\
\cline{5-16}
&&& & 26.16 & \multicolumn{1}{|c|}{94.50} & 2.49 & 27.60 &\multicolumn{1}{|c|}{95.00} & 2.73 & 3.77 & \multicolumn{1}{|c|}{95.00} & 0.33 & 3.99 &\multicolumn{1}{|c|}{93.50} & 0.36  \\
\cline{4-16}
&&& \multirow{3}{*}{$5$} & 26.08 &\multicolumn{1}{|c|}{94.50}& 2.50 & 26.31 &\multicolumn{1}{|c|}{96.00}& 2.58 & 3.79 &\multicolumn{1}{|c|}{92.50}& 0.34 & 3.91 &\multicolumn{1}{|c|}{94.00}& 0.35  \\
\cline{5-16}
&&& & 25.94 &\multicolumn{1}{|c|}{92.50}& 2.44 & 26.56 &\multicolumn{1}{|c|}{95.50}& 2.54 & 3.83 &\multicolumn{1}{|c|}{95.00}& 0.34 & 3.90 &\multicolumn{1}{|c|}{93.00}& 0.35  \\
\cline{5-16}
&&& & 25.79 &\multicolumn{1}{|c|}{95.50}& 2.47 & 26.75 &\multicolumn{1}{|c|}{96.50}& 2.58 & 3.79 &\multicolumn{1}{|c|}{91.00}& 0.34 & 3.87 &\multicolumn{1}{|c|}{92.50}& 0.35  \\
\cline{2-16}
& \multicolumn{15}{c|}{} \\[-2ex]
\cline{2-16}
&\parbox[t]{2mm}{\multirow{8}{*}{\;\shortstack{Toeplitz\\$r=0.4$}}}&  & \text{SGD} & 25.96 & \multicolumn{1}{|c|}{68.50} & 0.65 & --- & \multicolumn{1}{|c|}{---} & --- & 6.40 & \multicolumn{1}{|c|}{95.00} & 0.63 & --- & \multicolumn{1}{|c|}{---} & --- \\
\cline{4-16}
&&& {$\infty$} & 29.87 & \multicolumn{1}{|c|}{96.00} & 1.70 & 30.00 & \multicolumn{1}{|c|}{95.50} & 1.70 & 3.09 & \multicolumn{1}{|c|}{97.00} & 0.27 & 5.63 & \multicolumn{1}{|c|}{95.50} & 0.33  \\
\cline{4-16}
&&& \multirow{3}{*}{$10$} & 22.37 & \multicolumn{1}{|c|}{94.00} & 2.17 & 24.30 & \multicolumn{1}{|c|}{95.00} & 2.23 & 3.13 & \multicolumn{1}{|c|}{96.50} & 0.29 & 6.18 & \multicolumn{1}{|c|}{93.50} & 0.53  \\
\cline{5-16}
&&& & 22.92 & \multicolumn{1}{|c|}{93.50} & 2.18 & 24.26 & \multicolumn{1}{|c|}{91.00} & 2.25 & 3.13 & \multicolumn{1}{|c|}{92.00} & 0.30 & 5.98 & \multicolumn{1}{|c|}{95.50} & 0.53 \\
\cline{5-16}
&&& & 22.86 & \multicolumn{1}{|c|}{94.00} & 2.16 & 24.81 & \multicolumn{1}{|c|}{92.50} & 2.23 & 3.17 & \multicolumn{1}{|c|}{94.50} & 0.29 & 6.15 & \multicolumn{1}{|c|}{95.00} & 0.53 \\
\cline{4-16}
&&& \multirow{3}{*}{$5$} & 22.24 & \multicolumn{1}{|c|}{94.00} & 2.17 & 23.78 & \multicolumn{1}{|c|}{94.50} & 2.18 & 3.15 & \multicolumn{1}{|c|}{92.00} & 0.29 & 5.88 & \multicolumn{1}{|c|}{96.50} & 0.52  \\
\cline{5-16}
&&& & 22.30 & \multicolumn{1}{|c|}{95.50} & 2.17 & 23.53 & \multicolumn{1}{|c|}{94.00} & 2.20 & 3.13 & \multicolumn{1}{|c|}{94.50} & 0.30 & 5.82 & \multicolumn{1}{|c|}{97.00} & 0.52 \\
\cline{5-16}
&&& & 22.34 & \multicolumn{1}{|c|}{97.50} & 2.17 & 23.42 & \multicolumn{1}{|c|}{97.00} & 2.19 & 3.17 & \multicolumn{1}{|c|}{93.50} & 0.30 & 5.90 & \multicolumn{1}{|c|}{91.00} & 0.52 \\
\cline{2-16}
& \multicolumn{15}{c|}{}\\[-2ex]
\cline{2-16}
&\parbox[t]{2mm}{\multirow{8}{*}{\shortstack{Equi-Corr\\$r=0.4$}}}&  & \text{SGD} & 27.11 & \multicolumn{1}{|c|}{88.50} & 0.64 & --- & \multicolumn{1}{|c|}{---} & --- & 5.27 & \multicolumn{1}{|c|}{98.50} & 0.63 & --- & \multicolumn{1}{|c|}{---} & --- \\
\cline{4-16}
&&& {$\infty$} & 20.11 & \multicolumn{1}{|c|}{92.00} & 0.61 & 19.76 & \multicolumn{1}{|c|}{95.00} & 0.61 & 2.67 & \multicolumn{1}{|c|}{93.50} & 0.19 & 2.59 & \multicolumn{1}{|c|}{96.00} & 0.19  \\
\cline{4-16}
&&& \multirow{3}{*}{$10$} &  13.55 & \multicolumn{1}{|c|}{92.00} & 1.03 & 18.85 & \multicolumn{1}{|c|}{95.50} & 1.09 &  2.67 & \multicolumn{1}{|c|}{93.50} & 0.24 & 2.81 & \multicolumn{1}{|c|}{91.50} & 0.26  \\
\cline{5-16}
&&& &  13.33 & \multicolumn{1}{|c|}{96.50} & 1.02 & 19.04 & \multicolumn{1}{|c|}{95.00} & 1.09 &  2.67 & \multicolumn{1}{|c|}{92.00} & 0.24 & 2.83 & \multicolumn{1}{|c|}{95.00} & 0.26  \\
\cline{5-16}
&&& & 13.23 & \multicolumn{1}{|c|}{92.50} & 1.03 & 18.98 & \multicolumn{1}{|c|}{94.00} & 1.09 & 2.67 & \multicolumn{1}{|c|}{93.50} & 0.24 & 2.84 & \multicolumn{1}{|c|}{95.00} & 0.26 \\
\cline{4-16}
&&& \multirow{3}{*}{$5$} &  12.56 & \multicolumn{1}{|c|}{98.00} & 1.05 & 18.03 & \multicolumn{1}{|c|}{92.50} & 1.12 &  2.68 & \multicolumn{1}{|c|}{92.00} & 0.24 & 2.72 & \multicolumn{1}{|c|}{96.50} & 0.25  \\
\cline{5-16}
&&& &  12.79 & \multicolumn{1}{|c|}{94.00} & 1.04 & 18.12 & \multicolumn{1}{|c|}{93.00} & 1.11 &  2.64 & \multicolumn{1}{|c|}{94.50} & 0.24 & 2.79 & \multicolumn{1}{|c|}{92.50} & 0.25  \\
\cline{5-16}
&&& & 12.87 & \multicolumn{1}{|c|}{94.50} & 1.04 & 18.24 & \multicolumn{1}{|c|}{94.00} & 1.12 & 2.69 & \multicolumn{1}{|c|}{94.00} & 0.24 & 2.76 & \multicolumn{1}{|c|}{95.50} & 0.25 \\
\hline
\end{tabular}}
\caption{\textit{A subset of evaluation results for the inference procedure under different parameter choices with $d=40$. `MAE'' denotes the mean absolute iterate error over 200 independent runs, Ave Cov'' denotes the empirical coverage rate, and Ave Len'' denotes the average confidence-interval length. The row labeled SGD'' reports the standard SGD baseline. Since SGD does not involve sketching, there is no distinction between the Kaczmarz and Gaussian variants; accordingly, the SGD results are reported only once for each regression model, and redundant entries are marked by ---''. For the sketching-based methods with $\tau \in \{5,10\}$, each three-row block, from top to bottom, corresponds to a different refresh period $N\in\{1,500,1000\}$ for the acceleration parameters $(\mu_t,\nu_t)$. Here, $N=1$ recovers the standard non-periodic setting.}}\label{tab:1}
\vspace{-0.5cm}
\end{table*}

\begin{figure*}[t]
\captionsetup{font=footnotesize}
\centering
\begin{subfigure}[t]{0.123\textwidth}
\centering
\includegraphics[width=\linewidth,trim={0.3cm 0.5cm 0.2cm 0.5cm},clip]{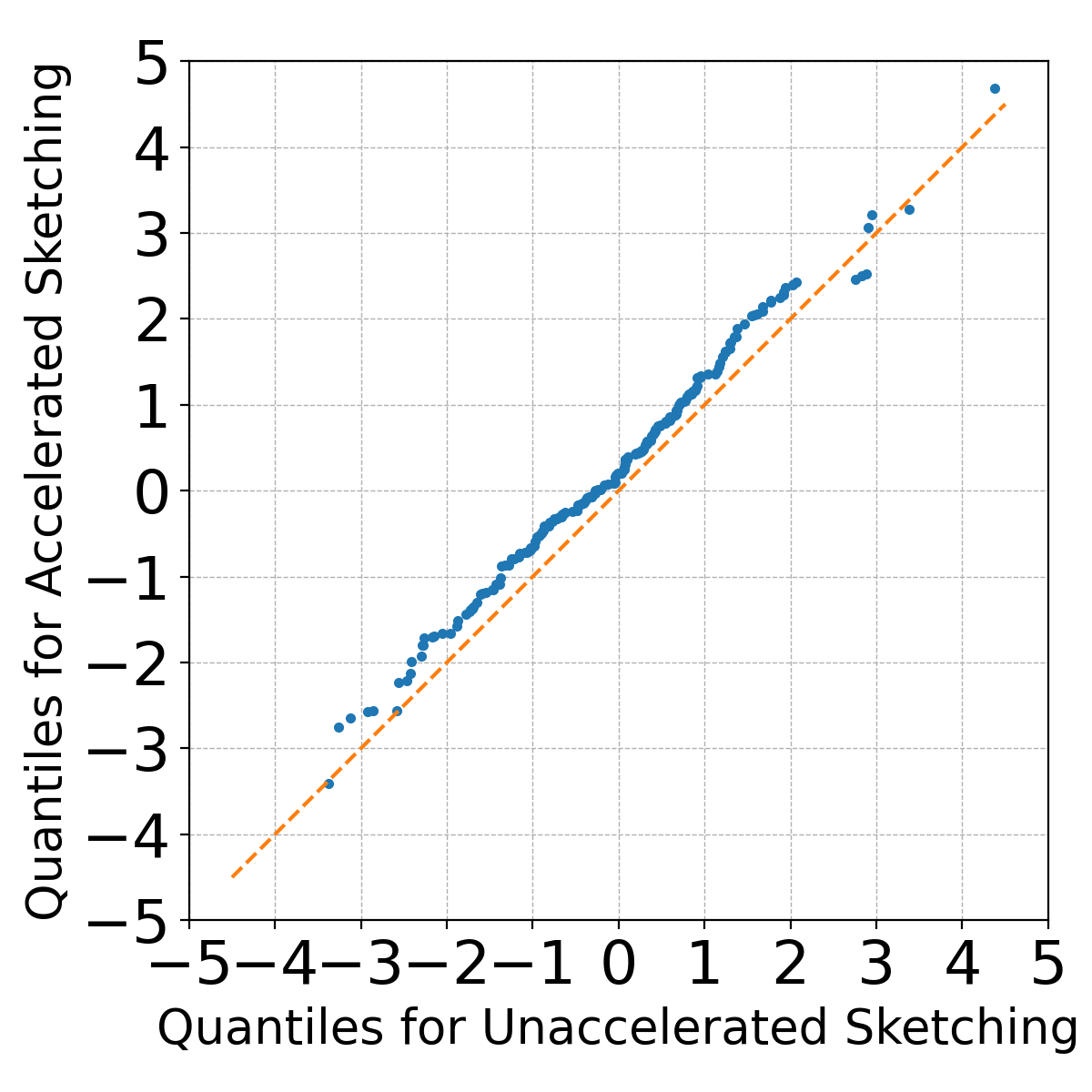}
\caption{$d=10$, $\tau=5\;$}
\end{subfigure}\hfill
\begin{subfigure}[t]{0.123\textwidth}
\centering
\includegraphics[width=\linewidth,trim={0.3cm 0.5cm 0.2cm 0.5cm},clip]{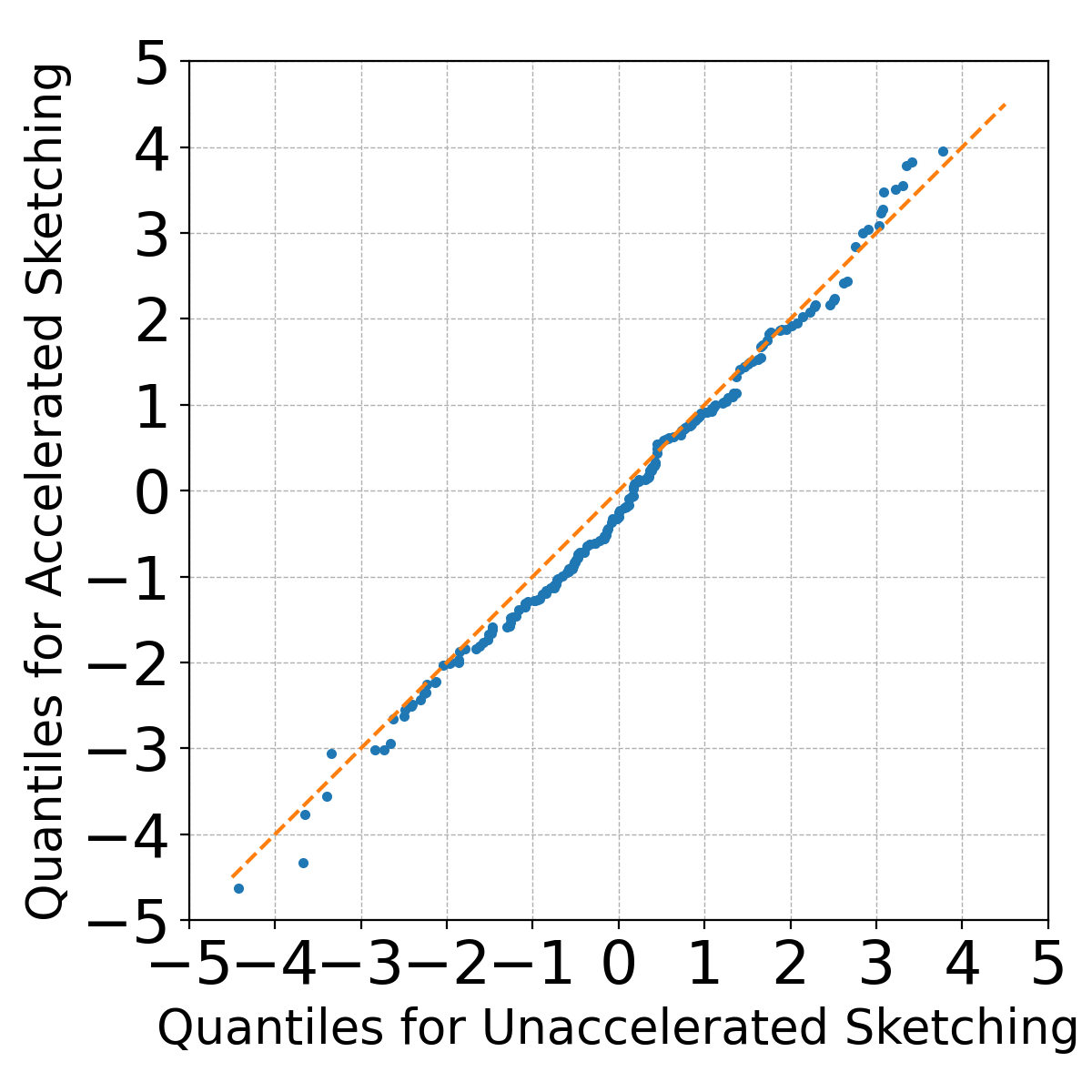}
\caption{$d=20$, $\tau=5\;$}
\end{subfigure}\hfill
\begin{subfigure}[t]{0.123\textwidth}
\centering
\includegraphics[width=\linewidth,trim={0.3cm 0.5cm 0.2cm 0.5cm},clip]{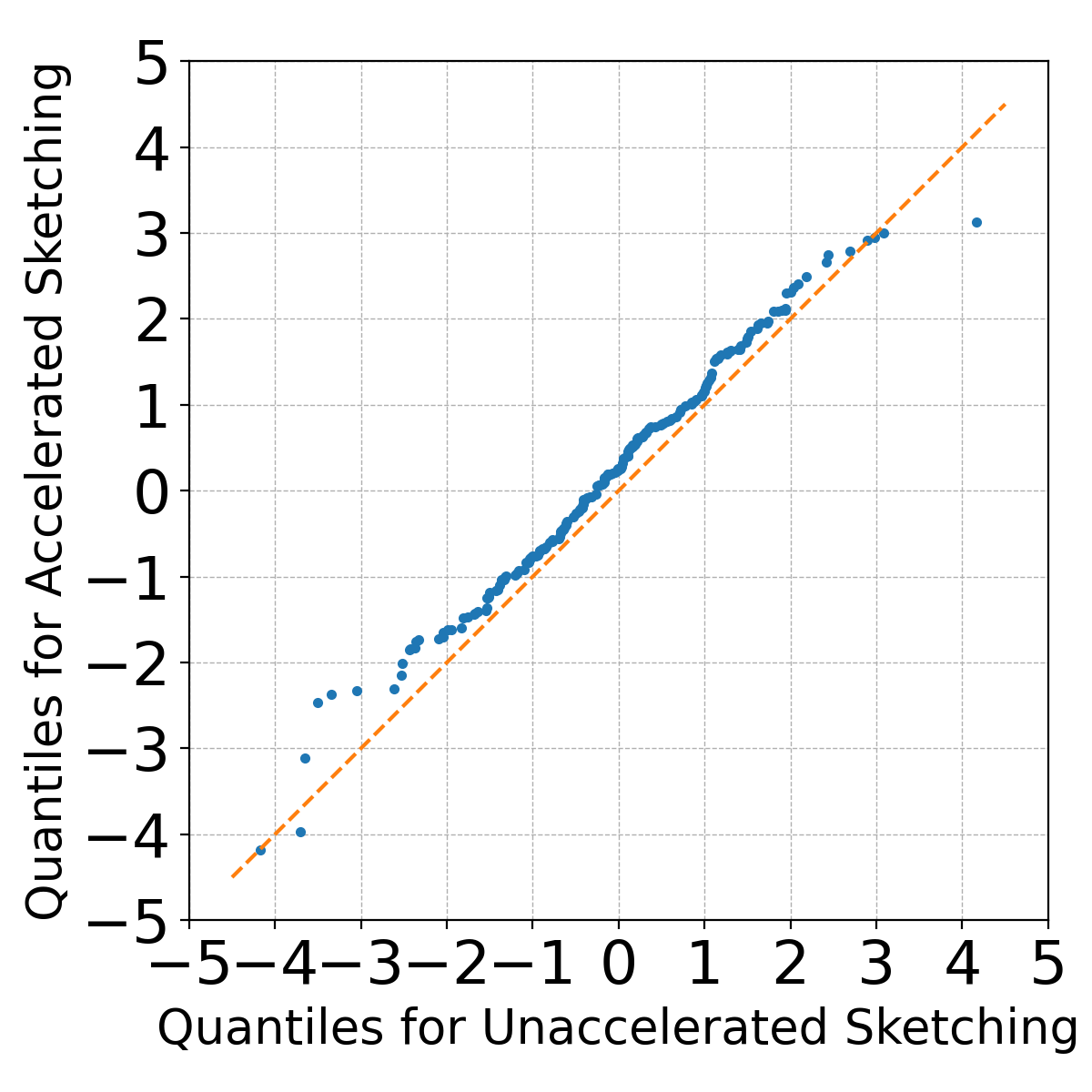}
\caption{$d=10$, $\tau=3\;$}
\end{subfigure}\hfill
\begin{subfigure}[t]{0.123\textwidth}
\centering
\includegraphics[width=\linewidth,trim={0.3cm 0.5cm 0.2cm 0.5cm},clip]{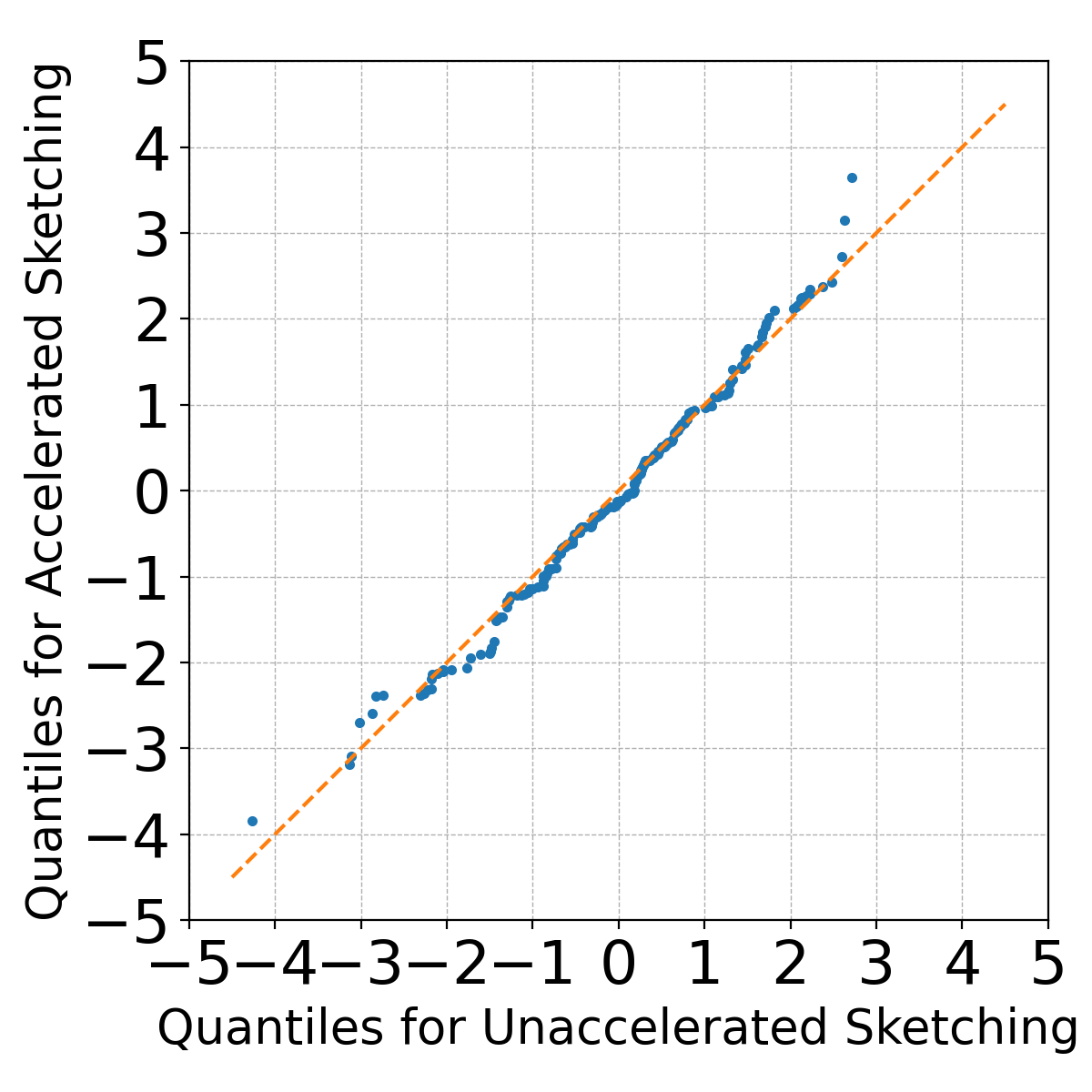}
\caption{$d=10$, $\tau=5$, $\gamma = 1$}
\end{subfigure}
\begin{subfigure}[t]{0.123\textwidth}
\centering
\includegraphics[width=\linewidth,trim={0.3cm 0.5cm 0.2cm 0.5cm},clip]{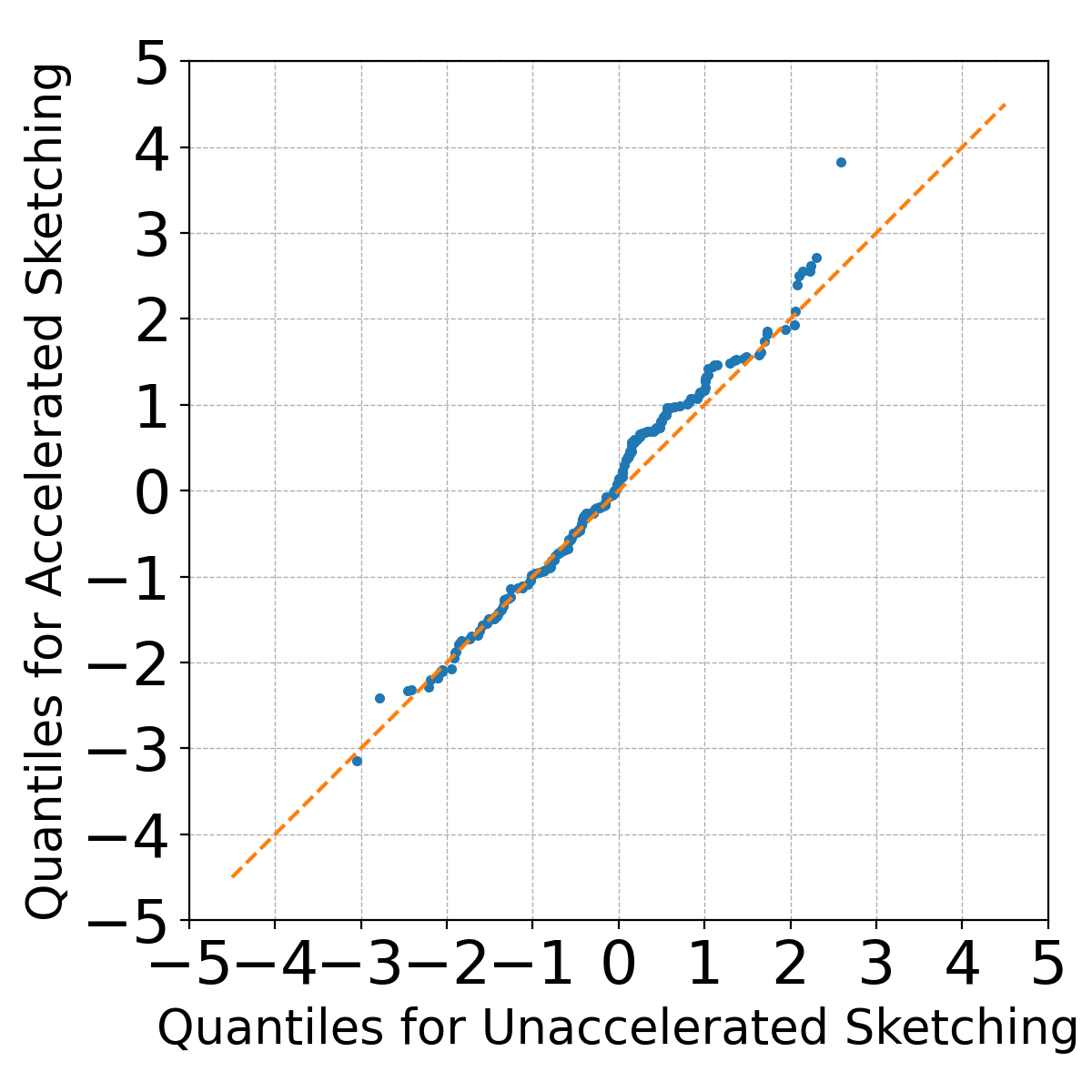}
\caption{$d=10$, $\tau=5\;$}
\end{subfigure}\hfill
\begin{subfigure}[t]{0.123\textwidth}
\centering
\includegraphics[width=\linewidth,trim={0.3cm 0.5cm 0.2cm 0.5cm},clip]{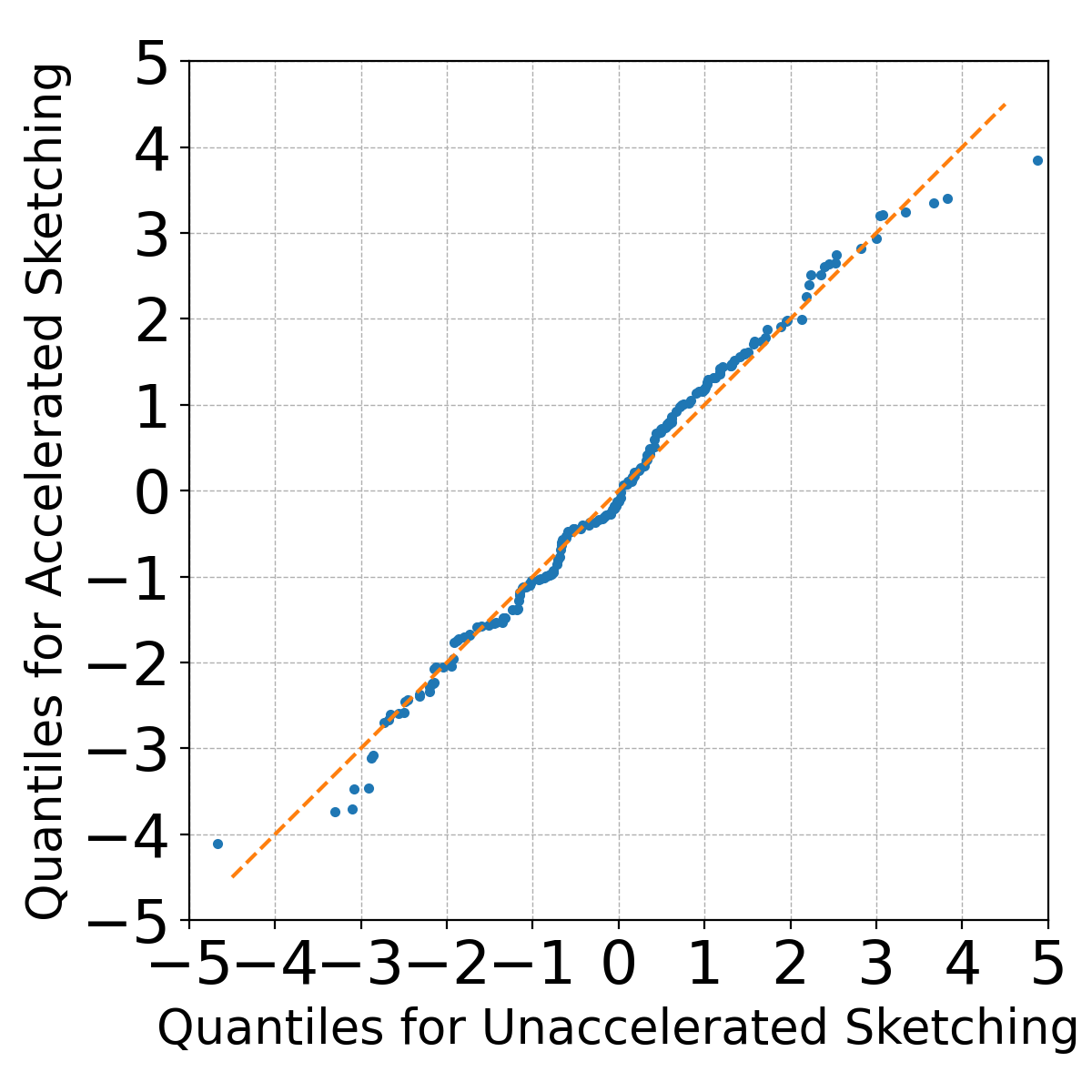}
\caption{$d=20$, $\tau=5\;$}
\end{subfigure}\hfill
\begin{subfigure}[t]{0.123\textwidth}
\centering
\includegraphics[width=\linewidth,trim={0.3cm 0.5cm 0.2cm 0.5cm},clip]{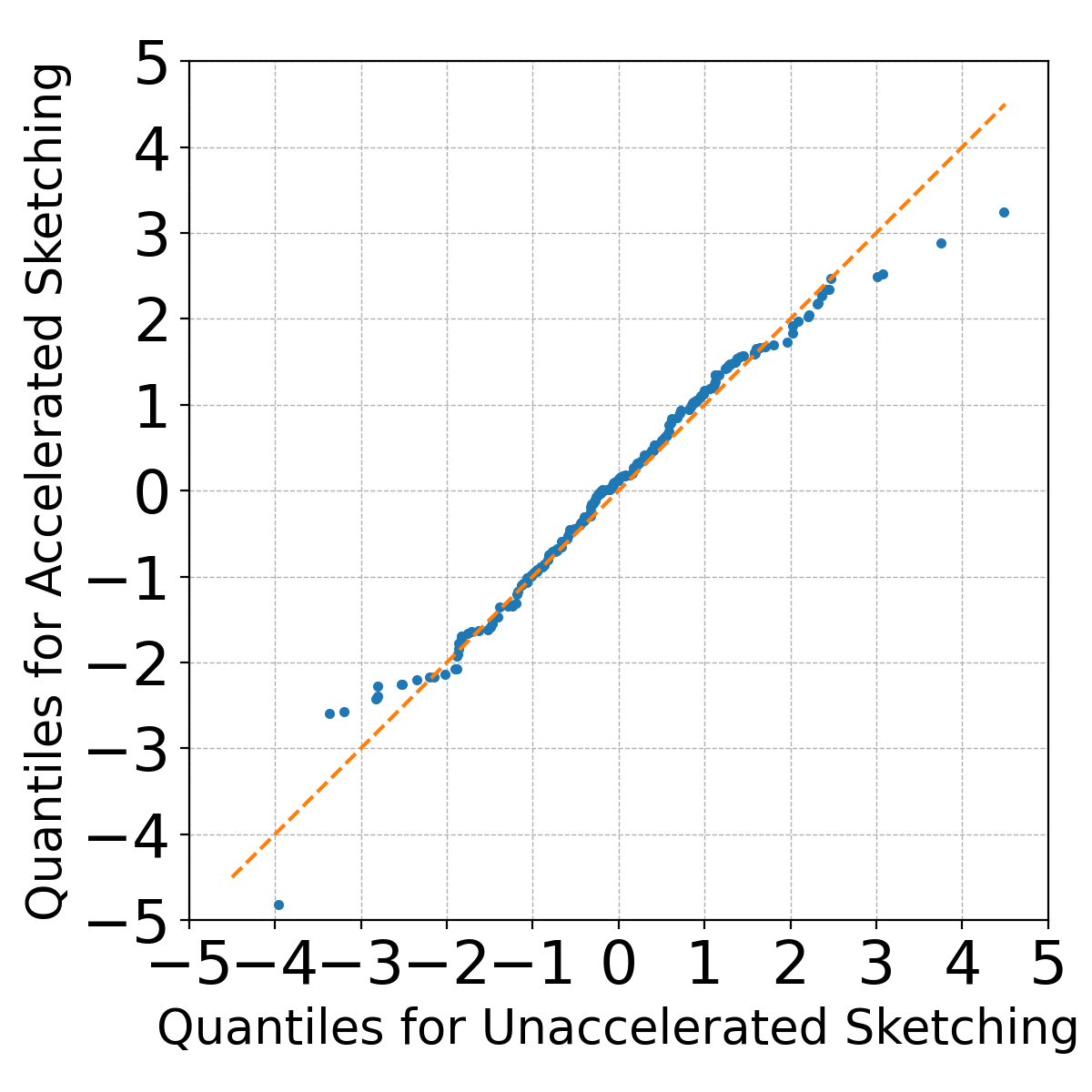}
\caption{$d=10$, $\tau=3\;$}
\end{subfigure}\hfill
\begin{subfigure}[t]{0.123\textwidth}
\centering
\includegraphics[width=\linewidth,trim={0.3cm 0.5cm 0.2cm 0.5cm},clip]{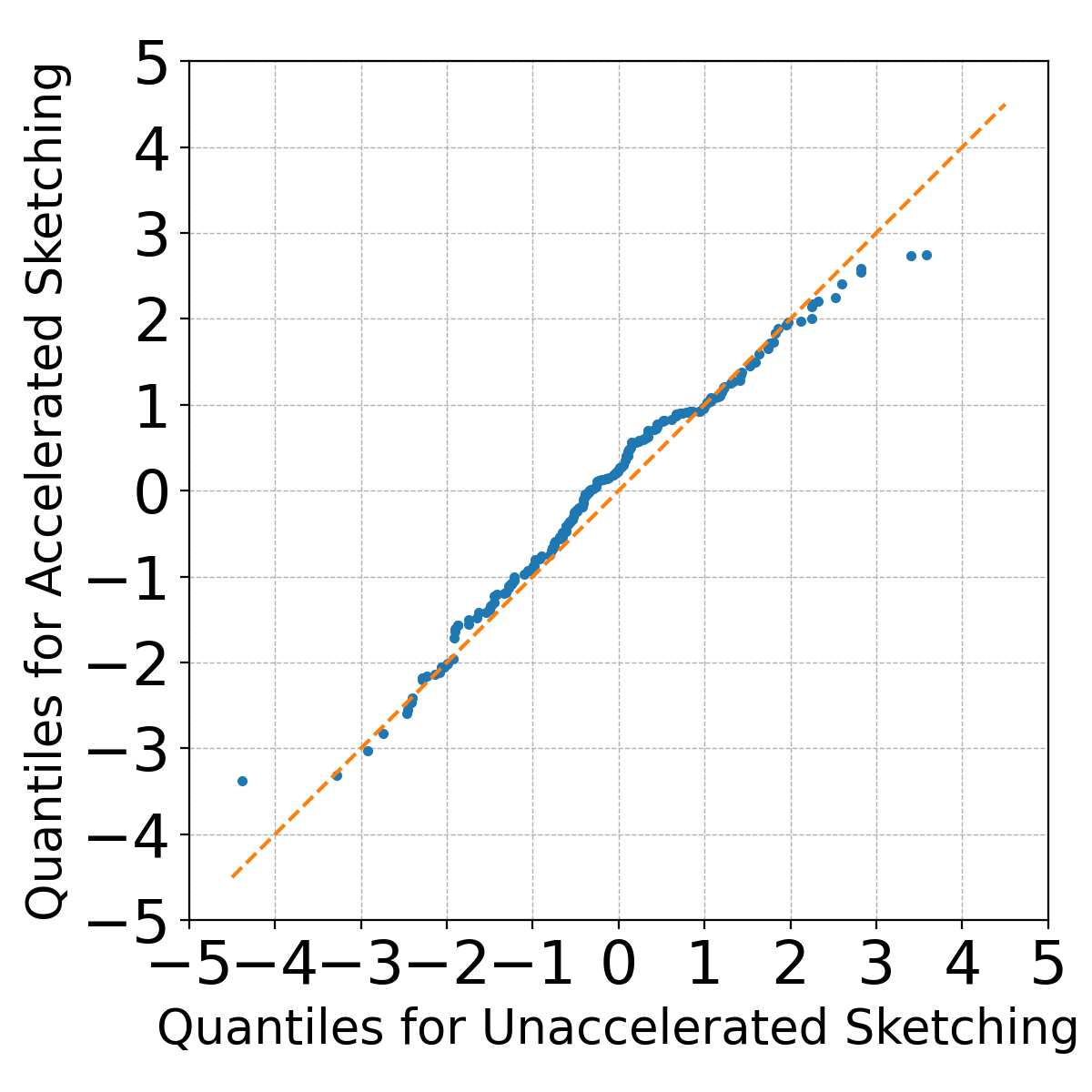}
\caption{$d=10$, $\tau=5$, $\gamma = 1$}
\end{subfigure}
\vspace{-0.3cm}
\caption{\textit{A subset of QQ plots for linear regression with Kaczmarz (a-d) and Gaussian (e-h) accelerated v.s. unaccelerated sketching.}}
\label{fig:qq_linear}
\vspace{-0.5cm}
\end{figure*}

We empirically evaluate online Newton with Nesterov's accelerated sketching and compare it with exact Newton, unaccelerated sketched Newton, and the SGD baseline. Our main goal is to examine how computational acceleration affects online inference, especially its empirical coverage and statistical efficiency as reflected by confidence-interval length. We compare accelerated and unaccelerated sketching through covariance QQ plots, and include SGD in Tables \ref{tab:1} and \ref{tab:2} as a first-order benchmark. We also evaluate our inference procedure based on the covariance estimator $\hat{\Sigma}_t$. Inference performance is assessed by the mean absolute error, the empirical coverage rate of the constructed confidence intervals, and their average lengths over 200 independent runs. Due to space limits, we present a subset of results, while complete experimental setups and results are in Appendix \ref{app:exp}.

Following prior literature \citep{Chen2020Statistical, Zhu2024High, Na2025Statistical, Kuang2025Online}, we consider the same linear and logistic regressions with varying dimensions $d$ and covariance $\Sigma_a$ for random covariates. We also vary the sketching steps $\tau$ ($\tau=\infty$ corresponds to exact Newton) and sketching distributions (Kaczmarz and Gaussian). For accelerated sketching, Tables \ref{tab:1} and \ref{tab:2} report three refresh periods $N\in\{1,500,1000\}$ for recomputing the acceleration parameters $(\mu_t,\nu_t)$, where $N=1$ denotes recomputation at every iteration and $N=500,1000$ are periodic updates. This comparison helps assess whether less frequent parameter refresh can reduce implementation overhead while maintaining comparable coverage and interval lengths.

\noindent$\bullet$ \textbf{Covariance of accelerated vs. unaccelerated sketching.} 
Since accelerated sketching enhances computational efficiency of Newton methods, a natural question is \textit{whether accelerated sketching introduces additional uncertainty to the estimation procedure}. We use QQ plots to compare the covariance matrices in \eqref{equ:lyp} and \eqref{equ:unacc:cov}. We examine several combinations of $(d,\tau)$, as reported in Figure \ref{fig:qq_linear}. From Figure \ref{fig:qq_linear}, we observe that the quantiles of accelerated sketched online Newton closely align with those of the unaccelerated scheme, indicating similar empirical distributions and comparable asymptotic covariance structures. This comparison is nontrivial because, although both solvers approximate the same Newton direction, acceleration changes the inner state--co-state dynamics of the sketching solver. The observed alignment therefore suggests that accelerating the sketching solver does not degrade statistical efficiency or introduce additional leading-order estimation uncertainty, an aspect largely missing in the literature.
Beyond comparing the covariance matrices, these QQ plots also provide empirical evidence for the normality of accelerated Newton iterates~(Theorem \ref{thm:asymptotic_normality}).

\vskip-0.03cm
\noindent $\bullet$ \textbf{Asymptotic validity of inference.} Table \ref{tab:1} shows that the empirical coverage rates of the confidence intervals constructed by accelerated sketched online Newton remain close to the nominal $95\%$ level across most sketching settings and refresh periods $N\in\{1,500,1000\}$.
In particular, the periodic choices $N=500$ and $N=1000$ maintain near-nominal coverage, supporting the discussion above that exact per-iteration recomputation of the acceleration parameters $(\mu_t,\nu_t)$ is not necessary for valid inference.
In contrast, the SGD baseline exhibits clear undercoverage in several settings, consistent with prior observations for first-order online inference \citep{Zhu2021Online, Kuang2025Online}. This provides numerical support for our asymptotic normality and covariance estimation theory (Theorems \ref{thm:asymptotic_normality} and \ref{sec4:thm1}), and motivates the use of second-order methods for robust online inference.

\vskip-0.03cm
\noindent $\bullet$ \textbf{Mean absolute error.}
Compared to exact Newton method ($\tau=\infty$), our inexact Newton method achieves comparable mean absolute errors in both linear and logistic regressions. This aligns with the fact that both methods are $\sqrt{\varphi_t}$-consistent, implying the same leading-order rate for the estimation error. Therefore, the observed error differences mainly reflect finite-sample and algorithmic effects, rather than a change in the asymptotic order.
Notably, in several linear regression settings (especially when $\Sigma_a$ is Toeplitz or Equi-correlation), our inexact Newton method attains even smaller iterate errors than the exact method, with reductions varying from $8.8\%$ to $37.5\%$ as reported in Table \ref{tab:1}.

\noindent $\bullet$ \textbf{Statistical efficiency of accelerated vs. exact Newton.}~As shown in Table \ref{tab:1}, in most settings, inexact Newton method yields slightly longer average confidence intervals than exact Newton method. This modest inflation is indeed expected, as the additional randomness introduced by the sketching solver can increase the variability of the resulting iterates and, consequently, the estimated asymptotic variance. The key point is that this inflation remains modest relative to the computational savings gained by replacing exact Newton solves with accelerated sketching.

Overall, our inexact Newton method achieves near-nominal coverage, comparable (and in some cases improved) mean absolute errors, and only modest increases in averaged confidence interval length relative to the exact Newton method. These results suggest that our inference procedure preserves asymptotic validity, numerical accuracy, and statistical efficiency, while benefiting from faster theoretical convergence (compared to unaccelerated sketching) and lower computational costs (compared to the exact Newton). Furthermore, while \citet{Kuang2025Online} employs at least 10 sketching steps to achieve competitive performance in large dimensions, our method attains comparable or even superior results with at most 5 sketching steps. This reduction of sketching steps leads to substantial computational savings in practice, highlighting the practical advantages of Nesterov's acceleration for online Newton-based inference.

\vspace{-0.1cm}
\section{Conclusion}

We studied uncertainty quantification and inference of online Newton methods with Nesterov's accelerated sketching. The proposed method has $O(d^{2})$ time and memory complexity, matching that of first-order methods. We established global almost-sure convergence, derived asymptotic normality, and characterized the limiting covariance through a Lyapunov equation that explicitly captures the effect of the sketching solver. We further proposed a fully online covariance estimator with non-asymptotic guarantees, enabling practical inference without inverting any matrices. Our study clarifies the \textit{computational-statistical trade-offs} induced by accelerated sketching and connects resulting inference behavior to that of exact and unaccelerated sketched Newton methods. Empirically, the method maintains near-nominal coverage while using only a small number of sketching steps, suggesting that acceleration can improve computational efficiency without noticeably sacrificing statistical efficiency.$\quad\quad$

\newpage

\section*{Impact Statement}

This paper presents work on the theoretical and algorithmic foundations of online Newton methods with Nesterov’s accelerated sketching, with a particular focus on uncertainty quantification and inference in streaming data settings. The results advance the fields of machine learning, optimization, statistics, and computation. There are many potential societal consequences of our work, none of which we feel must be specifically highlighted here.

\bibliography{ref}
\bibliographystyle{icml2026}

\newpage
\appendix
\onecolumn

\begin{center}
\Large \bf Appendix: Inference of Online Newton Methods with Nesterov’s Accelerated Sketching
\end{center}

\allowdisplaybreaks[1]
\mathtoolsset{showonlyrefs=true}

\section{Proofs of Section \ref{sec:assumptions}}

\subsection{Proof of Lemma \ref{lem:inner_transition}}

First, we derive the recursion of $(\bz_{t, j} - \Delta \bx_t, \boldsymbol{v}_{t, j} - \Delta \bx_t)$ for any fixed outer loop iteration $t$. In Section \ref{sec:2.2}, we temporarily suppress the outer iteration index and write $\{\bz_j, \bv_j, \by_j\}$. Below, we write $\{\bz_{t,j}, \bv_{t,j}, \by_{t,j}\}$ to make the dependence on the outer index $t$ explicit. By Algorithm \ref{alg:1}, we have
\begin{align*}
\bz_{t, j+1} - \Delta \bx_t & = \by_{t, j} - \Delta \bx_t - \boldsymbol{\omega}_{t, j} \\
& = \alpha_t (\boldsymbol{v}_{t, j} - \Delta\bx_{t}) + (1 - \alpha_t)(\bz_{t, j} - \Delta\bx_{t}) - \tZ_{t, j} (\by_{t, j} - \Delta \bx_t) \\
&= (I - \tZ_{t, j}) [\alpha_t (\boldsymbol{v}_{t, j} - \Delta\bx_t) + (1 - \alpha_t) (\bz_{t, j} - \Delta\bx_t)],
\end{align*}
and
\begin{align*}
\boldsymbol{v}_{t, j+1} - \Delta \bx_t & = \beta_t (\boldsymbol{v}_{t, j} - \Delta \bx_t) + (1-\beta_t) (\boldsymbol{y}_{t, j} - \Delta \bx_t) - \gamma_t \tZ_{t, j}(\by_{t, j} - \Delta \bx_t) \\
& = \beta_t (\boldsymbol{v}_{t, j} - \Delta \bx_t) + (I - \beta_t I - \gamma_t \tZ_{t, j}) (\by_{t, j} - \Delta \bx_t) \\
& = \beta_t (\boldsymbol{v}_{t, j} - \Delta \bx_t) + (I - \beta_t I - \gamma_t \tZ_{t, j}) (\alpha_t (\boldsymbol{v}_{t, j} - \Delta \bx_t) + (1 - \alpha_t) (\bz_{t, j} - \Delta \bx_t)) \\
& = [(\alpha_t + \beta_t - \alpha_t \beta_t)I - \alpha_t \gamma_t \tZ_{t, j}] (\boldsymbol{v}_{t, j} - \Delta \bx_t) + [(1-\alpha_t)(1-\beta_t)I - (1-\alpha_t)\gamma_t \tZ_{t, j}](\bz_{t, j} - \Delta \bx_t).
\end{align*}
Combining the above two displays, we have the following recursion formula:
\begin{equation}\label{equ:7}
\begin{pmatrix}
\bz_{t, j+1} - \Delta \bx_t \\
\boldsymbol{v}_{t, j+1} - \Delta \bx_t
\end{pmatrix} = \underbrace{\begin{pmatrix}
(1 - \alpha_t)(I - \tZ_{t, j}) &  \alpha_t(I - \tZ_{t, j}) \\
(1 - \alpha_t)(1 - \beta_t)I - (1 - \alpha_t)\gamma_t \tZ_{t, j} & (\alpha_t + \beta_t - \alpha_t \beta_t)I - \alpha_t \gamma_t \tZ_{t, j}
\end{pmatrix}}_{\tC_{t, j}} \begin{pmatrix}
\bz_{t, j} - \Delta \bx_t \\
\boldsymbol{v}_{t, j} - \Delta \bx_t
\end{pmatrix}.
\end{equation}
With the above formula \eqref{equ:7}, we have the following closed form of $(\bz_{t, \tau} - \Delta \bx_t, \boldsymbol{v}_{t, \tau} - \Delta \bx_t)$:
\begin{align*}
\begin{pmatrix}
\bz_{t, \tau} - \Delta \bx_t \\
\boldsymbol{v}_{t, \tau} - \Delta \bx_t
\end{pmatrix} &\; \stackrel{\mathclap{\eqref{equ:7}}}{=} \; \tC_{t, \tau - 1} \begin{pmatrix}
\bx_{t, \tau-1} - \Delta \bx_t \\
\boldsymbol{v}_{t, \tau-1} - \Delta \bx_t
\end{pmatrix} = \cdots = \left(\prod_{j = 0}^{\tau - 1} \tC_{t, j} \right) \begin{pmatrix}
\bz_{t, 0} - \Delta \bx_t \\
\boldsymbol{v}_{t, 0} - \Delta \bx_t
\end{pmatrix} \\
&\; =\;  - \left( \prod_{j = 0}^{\tau - 1} \tC_{t, j} \right) \begin{pmatrix}
\Delta \bx_t \\
\Delta \bx_t
\end{pmatrix} = - \tilde{C}_t \begin{pmatrix}
\Delta \bx_{t} \\
\Delta \bx_{t}
\end{pmatrix},
\end{align*}
where the second last equality holds since we set $\bz_{t, 0} = \boldsymbol{v}_{t, 0} = \boldsymbol{0}$ in Algorithm \ref{alg:1}. Thus, for any fixed $\tau > 0$, we have the following closed form for the sketched solver operator:
\begin{equation} \label{equ:8}
\begin{pmatrix}
\bz_{t, \tau}  \\
\boldsymbol{v}_{t, \tau}
\end{pmatrix} = (I - \tC_t) \begin{pmatrix}
\Delta \bx_{t} \\
\Delta \bx_{t}
\end{pmatrix}.
\end{equation}
By the definition of $\tK_t$ in \eqref{def:tilde_K_t} and the fact that $\Delta \bx_t = -B_t^{-1} g_t$, we have
\begin{equation*}
\begin{pmatrix}
I & \boldsymbol{0}
\end{pmatrix}  \begin{pmatrix}
\bz_{t, \tau}  \\
\boldsymbol{v}_{t, \tau}
\end{pmatrix} = \begin{pmatrix}
I & \boldsymbol{0}
\end{pmatrix} (I - \tilde{C}_t) \begin{pmatrix}
I  \\
I
\end{pmatrix}\Delta\bx_t \implies \bz_{t, \tau} = (I - \tilde{K}_t) \Delta \bx_t = -(I - \tilde{K}_t) B_t^{-1} g_t,
\end{equation*}
which gives us the first part of Lemma \ref{lem:inner_transition}.

For any $t \geq 0$, by the above conclusion that $\bz_{t, \tau} = (I - \tK_t) B_t^{-1} g_t$, we have
\begin{align} \label{equ:92}
\mathbb{E}[\bz_{t, \tau} \mid \mathcal{F}_{t-1}] &= - \mathbb{E}[(I - \tK_t) B_t^{-1} g_t \mid \mathcal{F}_{t-1}] = - \mathbb{E}[\mathbb{E}[(I - \tK_t) B_t^{-1} g_t \mid \mathcal{F}_{t-0.5}] \mid \mathcal{F}_{t-1}] \nonumber\\
& = - \mathbb{E}[(I - K_t) B_t^{-1} g_t \mid \mathcal{F}_{t-1}] = -(I - K_t)B_t^{-1} \nabla f_t.
\end{align}
Moreover, we have
\begin{align*}
\mathbb{E}[\bz_{t, \tau} - (I - K_t) \Delta \bx_t  \mid \mathcal{F}_{t-1}] &\; \stackrel{\mathclap{\eqref{equ:92}}}{=} \; -(I - K_t)B_t^{-1} \nabla f_t - (I - K_t) \mathbb{E}[\Delta \bx_t \mid \mathcal{F}_{t-1}] \\
& \; =\;  -(I - K_t)B_t^{-1}\nabla f_t + (I - K_t) \mathbb{E}[B_t^{-1} g_t \mid \mathcal{F}_{t-1}] \\
&\; =\;  -(I - K_t)B_t^{-1}\nabla f_t + (I - K_t) B_t^{-1} \nabla f_t \\
& \;=\; \boldsymbol{0}.
\end{align*}
We complete the proof of the second part of Lemma \ref{lem:inner_transition}.

Finally, by the definition of $\tC_{t,j}$ in \eqref{equ:7}, we claim that when $\gamma_t=1$, for any $Y\in\mR^{d\times d}$ and any $j\geq 0$, we have
\begin{equation}\label{equ:101}
\tilde{C}_{t, j} \left( \begin{pmatrix}
1 \\ 1
\end{pmatrix} \otimes Y \right) =  \begin{pmatrix}
1 \\ 1
\end{pmatrix} \otimes \left((I_d - \tilde{Z}_{t, j})Y \right).
\end{equation}
In fact, by the property of Kronecker product that $(A \otimes B)(C \times D) = (AC) \otimes (BD)$, we have
\begin{align*}
\tilde{C}_{t,j} \left( \begin{pmatrix}
1 \\ 1
\end{pmatrix} \otimes Y \right) & = \begin{pmatrix}
1-\alpha_t & \alpha_t\\
(1-\alpha_t)(1-\beta_t) & \alpha_t+\beta_t-\alpha_t\beta_t
\end{pmatrix}\begin{pmatrix}
1\\ 1
\end{pmatrix}\otimes Y-\begin{pmatrix}
1-\alpha_t & \alpha_t\\
1-\alpha_t & \alpha_t
\end{pmatrix}\begin{pmatrix}
1\\1
\end{pmatrix}\otimes\tZ_{t,j}Y\\
& = \begin{pmatrix}
1\\1
\end{pmatrix}\otimes Y - \begin{pmatrix}
1\\1
\end{pmatrix}\otimes\tZ_{t,j}Y = \begin{pmatrix}
1 \\ 1
\end{pmatrix} \otimes \left((I_d - \tilde{Z}_{t, j})Y \right).
\end{align*}
By the above property, we then obtain 
\begin{align*}
\tilde{C}_t \begin{pmatrix}
I_d \\
I_d
\end{pmatrix} &= \tilde{C}_{t, \tau-1} \tilde{C}_{t, \tau-2} \cdots \tilde{C}_{t, 0} \rbr{\begin{pmatrix}
1 \\ 1
\end{pmatrix} \otimes I_d}  \stackrel{\eqref{equ:101}}{=} \begin{pmatrix}
1 \\ 1
\end{pmatrix} \otimes \rbr{\prod_{j = 0}^{\tau - 1} (I_d - \tilde{Z}_{t, j})}.
\end{align*}
We then further obtain
\begin{equation*}
\tilde{K}_t \stackrel{\eqref{def:tilde_K_t}}{=} \begin{pmatrix}
I_d & \boldsymbol{0}
\end{pmatrix} \tilde{C}_t \begin{pmatrix}
I_d \\
I_d
\end{pmatrix} = \begin{pmatrix}
I_d & \boldsymbol{0}
\end{pmatrix} \rbr{\begin{pmatrix}
1 \\ 1
\end{pmatrix} \otimes \rbr{\prod_{j = 0}^{\tau - 1} (I_d - \tilde{Z}_{t, j})}}  =  \begin{pmatrix}
I & \boldsymbol{0}
\end{pmatrix} \begin{pmatrix}
\prod_{j = 0}^{\tau - 1} (I_d - \tilde{Z}_{t, j}) \\
\prod_{j = 0}^{\tau - 1} (I_d - \tilde{Z}_{t, j})
\end{pmatrix} = \prod_{j = 0}^{\tau - 1} (I_d - \tilde{Z}_{t, j}).
\end{equation*}
This completes the proof of the final part of Lemma \ref{lem:inner_transition}.

\subsection{Proof of Lemma \ref{prop:spectral_recursion}}

\noindent $\bullet$ {\textbf{Proof of \eqref{eq:CH_recursion}.}}
Recall the definition of $G_t(z)$ in \eqref{def:G_tz}. By Cayley--Hamilton Theorem for a $2 \times 2$ matrix, we have 
\begin{equation*}
G_{t}(z)^2 - \text{Tr}_t(z)G_{t}(z) + D_t(z)I = \boldsymbol{0},
\end{equation*}
where $\text{Tr}_t(z) = \text{trace}(G_t(z))$, and $D_t(z) = \text{det}(G_t(z))$. 
Left-multiplying both sides by $\be_1^{\top}G_t(z)^{\tau-2}$ and right-multiplying by $\boldsymbol{1}$ yields
\begin{equation*}
p_{t, \tau}(z) = \text{Tr}_t(z) p_{t, \tau-1}(z) - D_t(z) p_{t, \tau-2}(z), \qquad \forall \ \tau \geq 2.
\end{equation*}
It is a second-order recursion form, with initialization $p_{t, 0}(z) = 1$ and $p_{t, 1}(z) = \be_1^{\top} G_t(z) \boldsymbol{1} = 1 - z$.

\vskip0.2cm
\noindent$\bullet$ {\textbf{Proof of \eqref{eq:spectral_radius_bound}.}}
Recall that $\text{Tr}_{t}(z) = \text{trace}(G_{t}(z)) = 1 + \beta_t - \alpha_t\beta_t - z(1 - \alpha_t + \alpha_t\gamma_t)$ and $D_{t}(z) = \text{det}(G_{t}(z)) = (1-\alpha_t)\beta_t(1 - z)$. By the definition of $\alpha_t$, $\beta_t$ and $\gamma_t$ in \eqref{def:alpha_beta_gamma_t}, we know
\begin{equation} \label{def:Trace_Det}
\text{Tr}_t(z) = \frac{2\nu_t - (\nu_t + 1)z}{\sqrt{\nu_t}(\sqrt{\mu_t} + \sqrt{\nu_t})}, \quad\quad D_t(z) = \frac{\sqrt{\nu_t} - \sqrt{\mu_t}}{\sqrt{\nu_t} + \sqrt{\mu_t}} (1-z).
\end{equation}
Also eigenvalues of $G_{t}(z)$ are the roots of the following characteristic equation:
\begin{equation*}
\phi(\lambda) = \lambda^2 - \text{Tr}_{t}(z) \lambda + D_{t}(z) = 0.
\end{equation*}
Hence we obtain (to ease notation, we drop the index $t$ for $\lambda$)
\begin{equation} \label{eigenvalues_M(z)}
\lambda = \frac{\text{Tr}_t(z) \pm \sqrt{\text{Tr}_t^2(z) - 4D_t(z)}}{2} \stackrel{\eqref{def:Trace_Det}}{=} \frac{2\nu_t - (\nu_t + 1)z \pm \sqrt{(\nu_t + 1)^2z^2 - 4\nu_t(\mu_t + 1)z + 4\mu_t\nu_t}}{2\sqrt{\nu_t}(\sqrt{\nu_t} + \sqrt{\mu_t})}.
\end{equation}
Recall that $z \in [\mu_t, 1]$ and $1 \leq \nu_t \leq 1/\mu_t$ (cf. \eqref{prop_mu_nu}). Since
\begin{equation*}
16\nu^2_t(\mu_t+1)^2 - 16(\nu_t+1)^2\mu_t\nu_t = 16\nu_t[\nu_t(\mu_t+1)^2 - (\nu_t+1)^2\mu_t] =16\nu_t (\nu_t - \mu_t)(1 - \mu_t\nu_t) \geq 0,
\end{equation*}
we know $(\nu_t + 1)^2z^2 - 4\nu_t(\mu_t + 1)z + 4\mu_t\nu_t$ as a function of $z$ must have real roots. Below we consider two cases.
\begin{itemize}

\item \textbf{Case 1:} $(\nu_t - \mu_t)(1 - \mu_t\nu_t) > 0$. In this case, $(\nu_t + 1)^2z^2 - 4\nu_t(\mu_t + 1)z + 4\mu_t\nu_t$ has two distinctive real roots, which are denoted as $z_1$ and $z_2$.
Plugging the two boundaries $z=\mu_t$ and $z=1$ and observing that the function values~are both nonnegative, we know $\mu_t \leq z_1 \leq \frac{2\nu_t(\mu_t+1)}{(\nu_t+1)^2} \leq z_2 \leq 1$. We take the derivative of the larger one in \eqref{eigenvalues_M(z)} and~obtain
\begin{equation}\label{def:derivative}
\frac{d \lambda}{dz} = \frac{-(\nu_t+1) + \frac{(\nu_t+1)^2z - 2\nu_t(\mu_t+1)}{\sqrt{(\nu_t + 1)^2z^2 - 4\nu_t(\mu_t + 1)z + 4\mu_t\nu_t}}}{2\sqrt{\nu_t}(\sqrt{\nu_t} + \sqrt{\mu_t})}.
\end{equation}	
Note that
\begin{align} \label{equ:115}
((\nu_t+1)^2z & - 2\nu_t(\mu_t+1))^2 - (\nu_t+1)^2 ((\nu_t + 1)^2z^2 - 4\nu_t(\mu_t + 1)z + 4\mu_t\nu_t) \nonumber\\
& = (\nu_t + 1)^4z^2 - 4\nu_t(\nu_t+1)^2(\mu_t+1)^2z + 4\nu_t^2(\mu_t+1)^2 - (\nu_t + 1)^4z^2 \nonumber\\
& \quad + 4\nu_t(\nu_t+1)^2(\mu_t+1)^2z - 4(\nu_t+1)^2 \mu_t\nu_t \nonumber\\
&= 4\nu_t^2(\mu_t+1)^2 - 4(\nu_t+1)^2 \mu_t\nu_t \nonumber\\
&= 4 \nu_t (\nu_t(\mu_t+1)^2 - \mu_t(\nu_t+1)^2) \nonumber\\
&= 4 \nu_t (\nu_t\mu_t^2 + \nu_t - \mu_t\nu_t^2 - \mu_t) \nonumber\\
&= 4 \nu_t (\nu_t - \mu_t)(1 - \nu_t\mu_t) \geq 0.
\end{align}	
As a result, $(\nu_t+1)^2z - 2\nu_t(\mu_t+1) \geq (\nu_t+1) \sqrt{(\nu_t + 1)^2z^2 - 4\nu_t(\mu_t + 1)z + 4\mu_t\nu_t}$ when $(\nu_t+1)^2z - 2\nu_t(\mu_t+1) \geq 0$, i.e., when $z \geq 2\nu_t(\mu_t+1) / (\nu_t+1)^2$.

\textbf{Case 1a:} $z \in [\mu_t, z_1] \cup [z_2, 1]$. In this case, $\lambda$ is real. According to \eqref{def:derivative} and \eqref{equ:115}, we have, on $z \in [\mu_t, z_1]$, $d\lambda / dz < 0$; on $z \in [z_2, 1]$, $d\lambda / dz > 0$. Therefore, $\lambda$ decreases on $[\mu_t, z_1]$ and increases on $[z_2, 1]$. Hence, the supremum is achieved on $z = \mu_t$ or $z = 1$, i.e.,
\begin{align*}
\sup_{z \in [\mu_t, z_1] \cup [z_2, 1]}\rho(G_t(z)) & \leq  \text{max}\left\{\frac{\text{Tr}_t(1) + \sqrt{\text{Tr}_t^2(1) - 4D_t(1)}}{2}, \ \frac{\text{Tr}_t(\mu_t) + \sqrt{\text{Tr}_t^2(\mu_t) - 4D_t(\mu_t)}}{2}\right\} \\
&\leq \text{max}\left\{\frac{\nu_t - 1}{\sqrt{\nu_t}(\sqrt{\mu_t} + \sqrt{\nu_t})}, \ 1 - \sqrt{\frac{\mu_t}{\nu_t}} \right\} = 1 - \sqrt{\frac{\mu_t}{\nu_t}}.
\end{align*}
\textbf{Case 1b:} $z \in (z_1, z_2)$. In this case, $\lambda$ is complex, and we have
\begin{equation*}
\sup_{z \in [z_1, z_2]}\rho(G_t(z)) = \sup_{z \in [z_1, z_2]} \sqrt{D_t(z)} \stackrel{\eqref{def:Trace_Det}}{\leq} \sqrt{\frac{\sqrt{\nu_t} - \sqrt{\mu_t}}{\sqrt{\nu_t} + \sqrt{\mu_t}} (1-\mu_t)},
\end{equation*}
where the last inequality holds since $D_t(z)$ is a linear function of $z$.
	
Combining \textbf{Case 1a} and \textbf{Case 1b}, and observing that
\begin{equation*}
\left( 1 - \sqrt{\frac{\mu_t}{\nu_t}}\right)^2 - \frac{\sqrt{\nu_t} - \sqrt{\mu_t}}{\sqrt{\nu_t} + \sqrt{\mu_t}} (1-\mu_t) = \frac{(\sqrt{\nu_t} - \sqrt{\mu_t}) \mu_t(\nu_t - 1)}{\nu_t (\sqrt{\nu_t} + \sqrt{\mu_t})} \geq 0,
\end{equation*}
we obtain $\sup_{z \in [\mu_t, 1]}\rho(G_t(z)) \leq 1 - \sqrt{\mu_t/\nu_t}$ for all $ t \geq 0$.

\item \textbf{Case 2:} $(\nu_t - \mu_t)(1 - \mu_t\nu_t) = 0$. In this case, $(\nu_t + 1)^2z^2 - 4\nu_t(\mu_t + 1)z + 4\mu_t\nu_t$ has two identical real roots $2\nu_t(\mu_t+1) / (\nu_t+1)^2$. Thus, \eqref{eigenvalues_M(z)} becomes
\begin{equation*}
\lambda = \frac{\text{Tr}_t(z) \pm \sqrt{\text{Tr}_t^2(z) - 4D_t(z)}}{2} \stackrel{\eqref{def:Trace_Det}}{=} \frac{2\nu_t - (\nu_t + 1)z \pm \left|(\nu_t+1)z - \frac{2\nu_t(\mu_t+1)}{\nu_t+1} \right|}{2\sqrt{\nu_t}(\sqrt{\nu_t} + \sqrt{\mu_t})}.
\end{equation*}
We can see that the spectral radius of $G_t(z)$ is piecewise linear, and the supremum is attained on $z = \mu_t$ or $z = 1$. By plugging the two boundaries into the above display, we further conclude the supremum is attained on $z = \mu_t$ . Thus, we have $ \sup_{z \in [\mu_t, 1]}\rho(G_t(z)) \leq 1 - \sqrt{\mu_t/\nu_t}$ for all $ t \geq 0$.
\end{itemize}

By Assumption \ref{ass:3}, we have $\|Z_t^{-1}\|\le 1/\gamma_S$. Therefore, by \eqref{prop_mu_nu} and the definition of $\gamma_t$ in \eqref{def:alpha_beta_gamma_t}, it follows that
\begin{equation}\label{prop_gamma_t}
1\stackrel{\eqref{prop_mu_nu}}{\le} \gamma_t \stackrel{\eqref{def:alpha_beta_gamma_t}}{=}
1/\sqrt{\mu_t\nu_t} \stackrel{\eqref{prop_mu_nu}}{\le} 1/\sqrt{\mu_t} \stackrel{\eqref{prop_mu_nu}}{\le}
1/\sqrt{\gamma_S}.
\end{equation}
This implies that $1-\sqrt{\mu_t/\nu_t}\leq 1-\mu_t\leq 1-\gamma_S \eqqcolon \rho^\star$. Combining \textbf{Case 1} and \textbf{Case 2} completes the proof of \eqref{eq:spectral_radius_bound}.

\vskip0.2cm
\noindent$\bullet$ {\textbf{Proof of \eqref{eq:ptau_bound}.}}
In the following, we will consider three cases.

\noindent $\bullet\bullet$ {\textbf{Case 1: $G_t(z)$ has two distinct real eigenvalues.}} Let us denote the two distinct real eigenvalues of $G_t(z)$ as $\lambda_1$ and $\lambda_2$, and we know $\text{Tr}_t(z)^2 - 4D_t(z) > 0$. The solution for the recursion \eqref{eq:CH_recursion} gives us
\begin{equation*}
p_{t, \tau}(z) = A \lambda_1^{\tau} + B \lambda_2^{\tau}.
\end{equation*}
We use the initial conditions to determine $A$ and $B$: $p_{t, 0}(z) = A+B = 1$ and $p_{t, 1}(z) = A\lambda_1 + B\lambda_2 = 1-z$. By solving these equations, we have
\begin{equation*}
A = \frac{1-z-\lambda_2}{\lambda_1 - \lambda_2}, \quad\quad B = \frac{\lambda_1 - (1-z)}{\lambda_1 - \lambda_2}.
\end{equation*}
Combining the above two displays, we get the solution of $p_{t, \tau}(z)$ as
\begin{align}\label{equ:27}
p_{t, \tau}(z) & = \left( \frac{1-z-\lambda_2}{\lambda_1 - \lambda_2} \right) \lambda_1^{\tau} + \frac{\lambda_1 - (1-z)}{\lambda_1 - \lambda_2} \lambda_2^{\tau} \nonumber\\
& = (1-z)\frac{\lambda_1^{\tau} - \lambda_2^{\tau}}{\lambda_1 - \lambda_2} - \lambda_1\lambda_2 \frac{\lambda_1^{\tau-1} - \lambda_2^{\tau-1}}{\lambda_1 - \lambda_2} \nonumber\\
& = (1-z) \sum_{k = 0}^{\tau-1}\lambda_1^{\tau-1-k}\lambda_2^k - \lambda_1\lambda_2 \sum_{k = 0}^{\tau-2}\lambda_1^{\tau-2-k}\lambda_2^k,
\end{align}
where the last equality uses the identity that $\frac{a^n - b^n}{a - b} = \sum_{k = 0}^{n-1} a^{n-1-k}b^k$ for real numbers $a \neq b$. By \eqref{eq:spectral_radius_bound}, we know that $\max\{|\lambda_1|,|\lambda_2|\}\leq \sup_{z \in [\mu_t, 1]}\rho(G_t(z)) \leq 1 - \sqrt{\mu_t / \nu_t}$ for all $t \geq 0$. Then, \eqref{equ:27} leads to the following derivation
\begin{equation} \label{equ:34}   
\sup_{z \in [\mu_t, 1]} |p_{t, \tau}(z)| \stackrel{\eqref{equ:27}}{\leq} \tau \left(1 - \sqrt{\mu_t / \nu_t} \right)^{\tau - 1} + (\tau-1) \left(1 - \sqrt{\mu_t / \nu_t} \right)^{\tau - 2} \leq 2\tau \left(1 - \sqrt{\mu_t / \nu_t} \right)^{\tau-2}.
\end{equation}

\noindent $\bullet\bullet$ {\textbf{Case 2: $G_t(z)$ has two identical real eigenvalues.}} In this case, the eigenvalue of $G_t(z)$ is given by $\lambda = \text{Tr}_t(z)/2$ and $\text{Tr}_t(z)^2 - 4D_t(z) = 0$. The solution for the recursion \eqref{eq:CH_recursion} gives us
\begin{equation*}
p_{t, \tau}(z) = (A+B\tau) \lambda^{\tau}.
\end{equation*}
We use the initial conditions to determine $A$ and $B$: $p_{t, 0}(z) = A = 1$ and $p_{t, 1}(z) = (A+B)\lambda = 1-z$. By solving~these~equations, we have
\begin{equation*}
A = 1, \quad\quad B = \frac{1-z}{\lambda} - 1.
\end{equation*}
Combining the above two displays, we get the solution of $p_{t, \tau}(z)$ as
\begin{equation} \label{equ:30}
p_{t, \tau}(z) = \left( 1+ \left( \frac{1-z}{\lambda} - 1\right) \tau \right) \lambda^{\tau}.
\end{equation}
By~\eqref{eq:spectral_radius_bound}, $|\lambda|\leq \sup_{z \in [\mu_t, 1]}\rho(G_t(z)) \leq 1 - \sqrt{\mu_t / \nu_t}$ for all $t \geq 0$. Then \eqref{equ:30} leads to the following derivation
\begin{equation}\label{equ:35}
\sup_{z \in [\mu_t, 1]} |p_{t, \tau}(z)| \stackrel{\eqref{equ:30}}{\leq} \tau \lambda^{\tau} + \tau \lambda^{\tau - 1} \stackrel{\eqref{equ:30}}{\leq} 2 \tau \left( 1 - \sqrt{\mu_t / \nu_t} \right)^{\tau-1}.
\end{equation}

\noindent $\bullet\bullet$ {\textbf{Case 3: $G_t(z)$ has two conjugate complex eigenvalues.}} Let us denote the two conjugate complex eigenvalues of~$G_t(z)$ by $\lambda_{+} =  r(\cos\theta + i\sin\theta)$ and $\lambda_{-} = r(\cos\theta - i\sin\theta)$ with $r = \sqrt{D_t(z)}$ and $\cos\theta = \text{Tr}_t(z)/(2\sqrt{D_t(z)})$, and $\text{Tr}_t(z)^2 - 4D_t(z) < 0$. The solution for the recursion \eqref{eq:CH_recursion} gives us
\begin{equation*}
p_{t, \tau}(z) = \left(\sqrt{D_t(z)}\right)^{\tau} (A \cos(\tau \theta) + B \sin (\tau \theta)).
\end{equation*}
We use the initial conditions to determine $A$ and $B$: $p_{t, 0}(z) = A = 1$ and $p_{t, 1}(z) = \sqrt{D_t(z)} (A \cos(\theta) + B \sin (\theta)) = 1-z$. By solving these equations, we have
\begin{equation*}
A = 1, \quad\quad B = \frac{2(1-z) - \text{Tr}_t(z)}{\sqrt{4D_t(z) - \text{Tr}_t^2(z)}} = \frac{2(1-z) - \text{Tr}_t(z)}{2\sqrt{D_t(z)} \sin \theta}.
\end{equation*}
Combining the above two displays, we get the solution of $p_{t, \tau}(z)$ as
\begin{align}\label{equ:33}
p_{t, \tau}(z) &= \left(\sqrt{D_t(z)}\right)^{\tau} \left( \cos(\tau \theta) + \frac{2(1-z) - \text{Tr}_t(z)}{\sqrt{4D_t(z) - \text{Tr}_t(z)^2}} \sin (\tau \theta)\right) \nonumber \\
&= r^{\tau}\cos(\tau \theta) +  r^{\tau-1} \left( 1-z- \frac{\text{Tr}_t(z)}{2} \right) \frac{\sin(\tau \theta)}{\sin \theta}.
\end{align}
Since $\frac{|\sin n \theta|}{|\sin \theta|} \leq n$ for all positive integers $n$ and any $\theta \in (0, \pi)$,\footnote{Consider the summation $\sum_{k = 0}^{n-1}e^{i(n-1-2k)\theta} = e^{i(n-1)\theta} \sum_{k = 0}^{n-1} e^{-2k i\theta } = e^{i(n-1)\theta} \cdot \frac{1- e^{-2i n \theta }}{1- e^{-2i \theta}} = e^{i(n-1)\theta} \cdot e^{-i(n-1)\theta} \cdot \frac{e^{i n \theta} - e^{-in \theta}}{e^{i\theta} - e^{-i\theta}} = \frac{2i\sin n\theta}{2i \sin \theta} = \frac{\sin n \theta}{\sin \theta} $, where the last equality uses Euler's formula, i.e. $e^{i\theta} - e^{-i\theta} = 2i \sin \theta$. Therefore, for all positive integers $n$ and $\theta \in (0, \pi)$, we have $\frac{|\sin n \theta|}{|\sin \theta|} \leq \sum_{k = 0}^{n-1} |e^{i(n-1-2k)\theta}| \leq n$.} \eqref{equ:33} leads to the following derivation
\begin{align} \label{equ:36}
\sup_{z \in [\mu_t, 1]} |p_{t, \tau}(z)| \; & \stackrel{\mathclap{\eqref{equ:33}}}{\le}\; r^{\tau} + (1-z-0.5\text{Tr}_t(z))\,\tau\, r^{\tau-1} = r^\tau + (0.5-0.5z+0.5\alpha_t\beta_t-0.5\beta_t-0.5z\alpha_t+0.5z\alpha_t\gamma_t)\tau\, r^{\tau-1} \nonumber\\
& \leq r^\tau + (0.5+0.5\alpha_t\beta_t)\tau\, r^{\tau-1} \leq 2\tau r^{\tau-1} \stackrel{\eqref{eq:spectral_radius_bound}}{\leq} 2\tau\,
\left( 1 - \sqrt{\mu_t / \nu_t} \right)^{\tau -1},
\end{align}
where the second equality is due to $\text{Tr}_t(z)=1+\beta_t-\alpha_t\beta_t - z(1-\alpha_t+\alpha_t\gamma_t)$; the third inequality is due to $\alpha_t\gamma_t\leq 1$ by \eqref{def:alpha_beta_gamma_t} and \eqref{prop_mu_nu}; and the fourth inequality is due to $0<\alpha_t<1$ and $0\le\beta_t<1$.

Combining the results of \textbf{Case 1}, \textbf{Case 2}, \textbf{Case 3} in \eqref{equ:34}, \eqref{equ:35}, and \eqref{equ:36} completes the proof.

\subsection{Proof of Lemma \ref{lem:prop_sketch_solver}}

Recall that $C_t = \mE[\tC_t \mid \mathcal{F}_{t-1}]$ and $\tC_t = \prod_{j=0}^{\tau-1} \tC_{t,j}$, where $\tC_{t,j}$ is defined in \eqref{equ:7}. By the independence of $\{S_{t,j}\}_j$, we have $C_t = (M_t)^{\tau}$, where 
\begin{equation}\label{equ:def_M_t}
M_{t} = \begin{pmatrix}
1-\alpha_t & \alpha_t\\
(1-\alpha_t)(1-\beta_t) & \alpha_t+\beta_t-\alpha_t\beta_t
\end{pmatrix}\otimes I_d - \begin{pmatrix}
1-\alpha_t & \alpha_t\\
(1-\alpha_t)\gamma_t & \alpha_t\gamma_t
\end{pmatrix}\otimes Z_{t}.
\end{equation}
We diagonalize each block of $M_t$ and get the following diagonalized block matrix:
\begin{align} \label{def:tilde_M_t}
\hat{M}_t &= \Omega_t^{\top} M_t \Omega_t = \begin{pmatrix}
U_t^{\top} [M_t]_{1, 1} U_t &  U_t^{\top} [M_t]_{1, 2} U_t \\
U_t^{\top} [M_t]_{2, 1} U_t &  U_t^{\top} [M_t]_{2, 2} U_t
\end{pmatrix}  \nonumber \\
& = \begin{pmatrix}
(1 - \alpha_t)(I - \Lambda_t) & \alpha_t (I - \Lambda_t) \\
(1 - \alpha_t)(1 - \beta_t) I - (1 - \alpha_t)\gamma_t \Lambda_t & (\alpha_t + \beta_t-\alpha_t\beta_t)I - \alpha_t \gamma_t \Lambda_t 
\end{pmatrix},
\end{align}
where $\Omega_t = \begin{pmatrix}
U_t & \boldsymbol{0} \\
\boldsymbol{0} & U_t
\end{pmatrix}$, $U_t \in \mathbb{R}^{d \times d}$ is orthogonal with $U_t U_t^{\top} = U_t^{\top}U_t = I$, and $\Lambda_t=\diag(z_{t, 1}, z_{t, 2},\ldots, z_{t, d})\in\mR^{d\times d}$ is a diagonal matrix with each $(i, i)$-th entry $z_{t,i}$ being the $i$-th eigenvalue of $Z_t$. Raising $\hat{M}_t$ to the power $\tau$ yields
\begin{equation*}
\hat{M}_t^{\tau} = \Omega_t^{\top} M_t^{\tau} \Omega_t = \begin{pmatrix}
U_t^{\top} [M_t^{\tau}]_{1, 1} U_t &  U_t^{\top} [M_t^{\tau}]_{1, 2} U_t \\
U_t^{\top} [M_t^{\tau}]_{2, 1} U_t &  U_t^{\top} [M_t^{\tau}]_{2, 2} U_t
\end{pmatrix} = \begin{pmatrix}
U_t^{\top} [C_t]_{1, 1} U_t &  U_t^{\top} [C_t]_{1, 2} U_t \\
U_t^{\top} [C_t]_{2, 1} U_t &  U_t^{\top} [C_t]_{2, 2} U_t
\end{pmatrix}.
\end{equation*}
Therefore, by the definition of $K_t$, we know that
\begin{equation} \label{equ:17} 
U_t^{\top} K_t U_t = [\hat{M}_t^{\tau}]_{1, 1} + [\hat{M}_t^{\tau}]_{1, 2}.
\end{equation}
To proceed, we introduce the following lemma.

\begin{lemma}\label{lem:1}
	
Each block of $\hat{M}_t^{\tau}$ is diagonal, that is
\begin{equation}\label{equ:14}   
[\hat{M}_t^{\tau}]_{1, 1} = 
\begin{pmatrix}
[G_{t, 1}^{\tau}]_{1, 1} & 0   & \cdots & 0 \\
0   & [G_{t, 2}^{\tau}]_{1, 1} & \cdots & 0 \\
\vdots & \vdots & \ddots & \vdots \\
0 & 0 & \cdots & [G_{t, d}^{\tau}]_{1, 1}
\end{pmatrix}, \quad\quad  [\hat{M}_t^{\tau}]_{1, 2} = 
\begin{pmatrix}
[G_{t, 1}^{\tau}]_{1, 2} & 0   & \cdots & 0 \\
0   & [G_{t, 2}^{\tau}]_{1, 2} & \cdots & 0 \\
\vdots & \vdots & \ddots & \vdots \\
0 & 0 & \cdots & [G_{t, d}^{\tau}]_{1, 2}
\end{pmatrix},
\end{equation}
where we define for each $i \in[d]\coloneqq\{1,\ldots,d\}$, $G_{t, i} \coloneqq G_t(z_{t,i})\in\mR^{2\times 2}$ (cf. \eqref{def:G_tz} for the definition of $G_t(\cdot)$).
\end{lemma}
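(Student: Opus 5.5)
The plan is to show that $\hat M_t$ is permutation-similar to a block-diagonal matrix whose $i$-th $2\times 2$ block is exactly $G_{t,i}$, and then pass to the power $\tau$.

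\textbf{Step 1: each block of $\hat M_t$ is diagonal.} Writing $Z_t=U_t\Lambda_tU_t^\top$ (possible since $Z_t$ is symmetric PSD), the expression \eqref{def:tilde_M_t} gives each block $[\hat M_t]_{k,l}$ as a linear combination of $I$ and $\Lambda_t$. Hence every block is diagonal. Its $(i,i)$ entry equals the $(k,l)$ entry of
\begin{equation*}
\begin{pmatrix}1-\alpha_t & \alpha_t\\ (1-\alpha_t)(1-\beta_t) & \alpha_t+\beta_t-\alpha_t\beta_t\end{pmatrix}-\begin{pmatrix}1-\alpha_t & \alpha_t\\ (1-\alpha_t)\gamma_t & \alpha_t\gamma_t\end{pmatrix}z_{t,i},
\end{equation*}
which is $[G_{t,i}]_{k,l}$ by \eqref{def:G_tz}. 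Equivalently, $\hat M_t=\sum_{i=1}^d G_{t,i}\otimes \be_i\be_i^\top$.

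\textbf{Step 2: the power $\tau$.} By the mixed-product rule $(A\otimes B)(C\otimes D)=(AC)\otimes(BD)$ and the orthogonality $\be_i\be_i^\top\be_j\be_j^\top=\delta_{ij}\be_i\be_i^\top$, the cross terms vanish. Therefore
\begin{equation*}
\hat M_t^\tau=\sum_{i=1}^d G_{t,i}^\tau\otimes \be_i\be_i^\top,
\end{equation*}
which can be checked by a straightforward induction on $\tau$. Reading off the $(1,1)$ and $(1,2)$ blocks gives the diagonal matrices in \eqref{equ:14}. Alternatively, let $P$ be the perfect-shuffle permutation sending $A\otimes B$ to $B\otimes A$. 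Then $P\hat M_tP^\top=\mathrm{blkdiag}(G_{t,1},\dots,G_{t,d})$, and block-diagonal matrices are closed under powers.

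\textbf{Main obstacle.} The only delicate point is bookkeeping the Kronecker ordering. In \eqref{def:tilde_M_t} the outer $2\times2$ structure is the first factor, so the identity must be written with $G_{t,i}$ as the first factor. I will verify this ordering explicitly in Step 1.

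The remaining ingredient is the identity $\Omega_t^\top M_t^\tau\Omega_t=(\Omega_t^\top M_t\Omega_t)^\tau$. It holds because $\Omega_t$ is orthogonal, and it has already been used in the text preceding the lemma.
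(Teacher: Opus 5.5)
Your proposal is correct, but it takes a different route from the paper. The paper never writes $\hat M_t$ as a Kronecker sum. Instead it proves \eqref{equ:14} by direct induction on $\tau$:
\begin{itemize}
\item the first block-row of $\hat M_t^{\tau+1}=\hat M_t^{\tau}\hat M_t$ depends only on the first block-row of $\hat M_t^{\tau}$ and on the diagonal blocks $A_t^{(k)}$ of $\hat M_t$, so diagonality propagates;
\item the resulting entrywise recursion is then matched against $G_{t,i}^{\tau+1}=G_{t,i}^{\tau}G_{t,i}$.
\end{itemize}

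Your identity $\hat M_t=\sum_{i}G_{t,i}\otimes \be_i\be_i^\top$ (equivalently, the perfect-shuffle similarity to $\mathrm{blkdiag}(G_{t,1},\dots,G_{t,d})$) does all of this in one step. Because the $\be_i\be_i^\top$ are orthogonal idempotents, the mixed-product rule makes powers act componentwise. Your Kronecker ordering is also right: block $(k,l)$ of $\sum_i G_{t,i}\otimes\be_i\be_i^\top$ is $\mathrm{diag}_i([G_{t,i}]_{k,l})$, which matches \eqref{def:tilde_M_t}.

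Your version has two advantages. First, it covers all four blocks at once, whereas the paper's induction tracks only the $(1,1)$ and $(1,2)$ blocks, even though the lemma's first sentence claims every block is diagonal. Second, it makes the decoupling of $M_t$ into the $2\times 2$ systems $G_{t,i}$ along the eigendirections of $Z_t$ explicit, which is the structure the paper then uses to reduce $\|K_t\|$ to $\sup_z|p_{t,\tau}(z)|$. The paper's induction, in exchange, is fully elementary and needs no Kronecker or permutation bookkeeping.
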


By \eqref{equ:17} and Lemma \ref{lem:1}, for all $t\geq 0$,
\begin{align*}
\|K_t\| &= \max_{i \in [d]} |[G_{t, i}^{\tau}]_{1, 1} + [G_{t, i}^{\tau}]_{1, 2}| = \max_{i \in [d]} |[G_{t}(z_{t, i})^{\tau}]_{1, 1} + [G_{t}(z_{t, i})^{\tau}]_{1, 2}|\\
&\leq \sup_{z \in [\mu_t, 1]} |p_{t, \tau}(z)| \leq 2 \tau \left( 1 - \sqrt{\mu_t / \nu_t} \right)^{\tau - 2},
\end{align*}
where the third inequality is from the definition of $p_{t, \tau}(z)$ and the last inequality is from Lemma \ref{prop:spectral_recursion}. This completes the~proof.

\subsection{Proof of Lemma~\ref{lem:1}}

For simplicity, let us rewrite $\hat{M}_t$ in \eqref{def:tilde_M_t} as
\begin{equation*}
\hat{M}_t=
\begin{pmatrix}
A^{(1)}_t & A^{(2)}_t \\
A^{(3)}_t & A^{(4)}_t
\end{pmatrix}.
\end{equation*}
We note that the blocks $A^{(1)}_t,A^{(2)}_t,A^{(3)}_t$, and $A^{(4)}_t$ are all diagonal, expressed as $A^{(k)}_t=\mathrm{diag}(a^{(k)}_{t,1},\ldots,a^{(k)}_{t,d})\in\mathbb{R}^{d\times d}$ for $k=1,2,3,4$. Then, for each $i\in[d]$, we have
\begin{equation}\label{equ:GA}
G_{t,i}=
\begin{pmatrix}
a^{(1)}_{t,i} & a^{(2)}_{t,i} \\
a^{(3)}_{t,i} & a^{(4)}_{t,i}
\end{pmatrix}.
\end{equation}
We prove the claim by induction on $\tau$.
\begin{itemize}
\item \emph{Base case ($\tau=1$).} By \eqref{def:tilde_M_t}, the representation \eqref{equ:14} holds trivially.
\item \emph{Induction step.} Assume that \eqref{equ:14} holds for $\tau$. For $\tau+1$, we have
\begin{align*}
\hat{M}_t^{\tau+1} & =\hat{M}_t^{\tau}\hat{M}_t =
\begin{pmatrix}
[\hat{M}_t^{\tau}]_{1,1} & [\hat{M}_t^{\tau}]_{1,2} \\
[\hat{M}_t^{\tau}]_{2,1} & [\hat{M}_t^{\tau}]_{2,2}
\end{pmatrix}\begin{pmatrix}
A^{(1)}_t & A^{(2)}_t \\
A^{(3)}_t & A^{(4)}_t
\end{pmatrix} \\
& = \begin{pmatrix}
[\hat{M}_t^{\tau}]_{1,1}A^{(1)}_t+[\hat{M}_t^{\tau}]_{1,2}A^{(3)}_t & [\hat{M}_t^{\tau}]_{1,1}A^{(2)}_t+[\hat{M}_t^{\tau}]_{1,2}A^{(4)}_t \\
[\hat{M}_t^{\tau}]_{2,1}A^{(1)}_t+[\hat{M}_t^{\tau}]_{2,2}A^{(3)}_t & [\hat{M}_t^{\tau}]_{2,1}A^{(2)}_t+[\hat{M}_t^{\tau}]_{2,2}A^{(4)}_t
\end{pmatrix}.
\end{align*}
Thus, we have
\begin{equation*}
[\hat{M}_t^{\tau+1}]_{1,1} = [\hat{M}_t^{\tau}]_{1,1}A^{(1)}_t+[\hat{M}_t^{\tau}]_{1,2}A^{(3)}_t, \qquad
[\hat{M}_t^{\tau+1}]_{1,2} = [\hat{M}_t^{\tau}]_{1,1}A^{(2)}_t+[\hat{M}_t^{\tau}]_{1,2}A^{(4)}_t.
\end{equation*}
Since $[\hat{M}_t^{\tau}]_{1,1}$ and $[\hat{M}_t^{\tau}]_{1,2}$ are diagonal by the induction hypothesis, and $A^{(1)}_t,A^{(2)}_t,A^{(3)}_t,A^{(4)}_t$ are diagonal, we know	both $[\hat{M}_t^{\tau+1}]_{1,1}$ and $[\hat{M}_t^{\tau+1}]_{1,2}$ are diagonal. Furthermore, for any $i\in[d]$,
\begin{equation} \label{equ:15}
([\hat{M}_t^{\tau+1}]_{1,1})_{i,i} = ([\hat{M}_t^{\tau}]_{1,1})_{i,i}a^{(1)}_{t,i} + ([\hat{M}_t^{\tau}]_{1,2})_{i,i}a^{(3)}_{t,i}, \quad ([\hat{M}_t^{\tau+1}]_{1,2})_{i,i} = ([\hat{M}_t^{\tau}]_{1,1})_{i,i}a^{(2)}_{t,i} + ([\hat{M}_t^{\tau}]_{1,2})_{i,i}a^{(4)}_{t,i}.
\end{equation}	
On the other hand, by the definition of $G_{t,i}=G_t(z_{t,i})$ in \eqref{def:G_tz}, we have
\begin{equation*}
G_{t,i}^{\tau+1} = G_{t,i}^{\tau}G_{t,i} \stackrel{\eqref{equ:GA}}{=}
\begin{pmatrix}
[G_{t,i}^{\tau}]_{1,1} & [G_{t,i}^{\tau}]_{1,2} \\
[G_{t,i}^{\tau}]_{2,1} & [G_{t,i}^{\tau}]_{2,2}
\end{pmatrix}
\begin{pmatrix}
a^{(1)}_{t,i} & a^{(2)}_{t,i} \\
a^{(3)}_{t,i} & a^{(4)}_{t,i}
\end{pmatrix}.
\end{equation*}
Thus, for each $i\in[d]$,
\begin{equation} \label{equ:16}
[G_{t,i}^{\tau+1}]_{1,1} = [G_{t,i}^{\tau}]_{1,1}a^{(1)}_{t,i} + [G_{t,i}^{\tau}]_{1,2}a^{(3)}_{t,i}, \qquad [G_{t,i}^{\tau+1}]_{1,2} = [G_{t,i}^{\tau}]_{1,1}a^{(2)}_{t,i} + [G_{t,i}^{\tau}]_{1,2}a^{(4)}_{t,i}.	
\end{equation}
By the induction hypothesis, $[G_{t,i}^{\tau}]_{1,1}=([\hat{M}_t^{\tau}]_{1,1})_{i,i}$ and $[G_{t,i}^{\tau}]_{1,2}=([\hat{M}_t^{\tau}]_{1,2})_{i,i}$. Substituting these identities into \eqref{equ:15} and \eqref{equ:16} yields
\begin{equation*}
([\hat{M}_t^{\tau+1}]_{1,1})_{i,i} = [G_{t,i}^{\tau+1}]_{1,1}, \qquad ([\hat{M}_t^{\tau+1}]_{1,2})_{i,i} = [G_{t,i}^{\tau+1}]_{1,2},
\quad \forall\, i\in[d].
\end{equation*}	
This establishes \eqref{equ:14} for $\tau+1$ and completes the induction.
\end{itemize}

\subsection{Proof of Theorem \ref{thm:as_convergence}}

By Assumption \ref{ass:1}, we know $f(\bx)$ is strongly convex. By the basic property of strong convexity \citep{Nesterov2018Lectures}, we have 
\begin{equation} \label{equ:46}
\frac{\gamma_H}{2} \|\bx_t - \bx^{\star}\|^2 \leq f_t - f^{\star} \leq \frac{1}{2\gamma_H} \|\nabla f_t\|^2.
\end{equation}
Still, by the $\Upsilon_H$-Lipschitz continuity of $\nabla f(\bx)$ implied by Assumption \ref{ass:2}, we know from \citet{Nesterov2018Lectures} that
\begin{equation} \label{equ:59}
\frac{1}{2\Upsilon_H} \|\nabla f_t\|^2 \leq f_t - f^{\star} \leq \frac{\Upsilon_H}{2} \|\bx_t - \bx^{\star}\|^2 .
\end{equation}
Furthermore, by the $\Upsilon_H$-Lipschitz continuity of $\nabla f(\bx)$, we have the Taylor expansion
\begin{equation}\label{equ:48}
f_{t+1} - f^{\star} \leq (f_t - f^{\star}) + \varphi_t \nabla f_t^{\top} \bz_{t, \tau} + \frac{\Upsilon_H}{2} \varphi_t^2 \| \bz_{t, \tau}\|^2.
\end{equation}
We then take the conditional expectation $\mathbb{E}[\cdot \mid \mathcal{F}_{t-1}]$ on both sides of \eqref{equ:48} and obtain
\begin{equation}\label{equ:44}
\mathbb{E}[f_{t+1} - f^{\star} \mid \mathcal{F}_{t-1}] \leq (f_t - f^{\star}) + \varphi_t \mathbb{E}[\nabla f_t^{\top} \bz_{t, \tau} \mid \mathcal{F}_{t-1}] + \frac{\Upsilon_H}{2} \varphi_t^2 \mathbb{E}[\|\bz_{t, \tau}\|^2 \mid \mathcal{F}_{t-1}].
\end{equation}
Also, by Assumption~\ref{ass:1} as well as the construction of the Hessian approximate $B_t$, we have
\begin{equation} \label{ass:2:c2}
\gamma_H \leq \lambda_{\min}(B_t) \leq \lambda_{\max}(B_t) \leq \Upsilon_H.
\end{equation}
For the second term on the right-hand side of \eqref{equ:44}, we obtain
\begin{align} \label{equ:45}
\varphi_t \mathbb{E}[\nabla f_t^{\top} \bz_{t, \tau} \mid \mathcal{F}_{t-1}] &= \varphi_t \nabla f_t^{\top} \mathbb{E}[\bz_{t, \tau} \mid \mathcal{F}_{t-1}] = -\varphi_t \nabla f_t^{\top} (I - K_t) B_t^{-1} \nabla f_t \nonumber\\
&= -\varphi_t \nabla f_t^{\top} B_t^{-1} \nabla f_t + \varphi_t \nabla f_t^{\top} K_t B_t^{-1} \nabla f_t \nonumber\\
&\stackrel{\mathclap{\eqref{ass:2:c2}}}{\leq} -\frac{\varphi_t}{\Upsilon_H}  \|\nabla f_t\|^2 + \frac{\varphi_t}{\gamma_H} \|\nabla f_t\|^2 \|K_t\| \leq -\frac{\varphi_t}{2\Upsilon_H}  \|\nabla f_t\|^2,	
\end{align}
where the second equality follows from Lemma~\ref{lem:inner_transition} and the last inequality follows from Lemma~\ref{lem:prop_sketch_solver} as well as the as the condition on $\tau$ such that $\|K_t\| \leq \gamma_H / 2\Upsilon_H$. To proceed, we establish a uniform bound for $\|\tK_t\|$. In particular, we have
\begin{align} \label{equ:uniform_bound_tK}
\|\tK_t\| & \; \stackrel{\mathclap{\eqref{def:tilde_K_t}}}{=}\;
\left\|\begin{pmatrix}
I & \boldsymbol{0}
\end{pmatrix} \tC_t
\begin{pmatrix}
I \\ I
\end{pmatrix} \right\| \leq \sqrt{2} \|\tC_t\| \leq \sqrt{2} \cdot \prod_{j = 0}^{\tau - 1} \|\tC_{t, j}\| \nonumber\\
&\; \leq \sqrt{2} \cdot \prod_{j = 0}^{\tau - 1} \left(\|I - \tZ_{t, j}\| + (1-\alpha_t)(1-\beta_t) + (1-\alpha_t)\gamma_t \|\tZ_{t, j}\| + (\alpha_t + \beta_t - \alpha_t\beta_t) + \alpha_t \gamma_t \|\tZ_{t, j}\| \right) \nonumber\\
&\; \leq \sqrt{2} \cdot \prod_{j = 0}^{\tau - 1} \left(\|I - \tZ_{t, j}\| + 1 + \gamma_t \|\tZ_{t, j}\| \right) \nonumber\\
&\; \leq \sqrt{2} \cdot \prod_{j = 0}^{\tau - 1} \left(1 + 1 + \gamma_t \right) \stackrel{\mathclap{\eqref{prop_gamma_t}}}{\leq} \sqrt{2} (2+1/\sqrt{\gamma_S})^{\tau} \eqqcolon C_K.
\end{align}
We now turn to the third term on the right-hand side of \eqref{equ:44}, for which we have
\begin{align} \label{equ:47}
\frac{\Upsilon_H}{2}& \varphi_t^2 \mathbb{E}[\|\bz_{t, \tau} \|^2  \mid \mathcal{F}_{t-1}]  \leq \frac{\Upsilon_H}{2} \varphi_t^2 \mathbb{E}[\|I - \tK_t\|^2 \|B_t^{-1}\|^2 \|g_t\|^2 \mid \mathcal{F}_{t-1}] \quad (\text{by Lemma \ref{lem:inner_transition}})\nonumber\\
& \stackrel{\mathclap{\substack{\eqref{ass:2:c2} \\ \eqref{equ:uniform_bound_tK}}}}{\leq} \; \frac{\Upsilon_H}{2} \varphi_t^2 \frac{(1+C_K)^2}{\gamma_H^2} \mathbb{E}[\|g_t\|^2 \mid \mathcal{F}_{t-1}] \leq \frac{\Upsilon_{H}(1+C_K)^2}{2\gamma_H^2} \varphi_t^2 \left( \|\nabla f_t\|^2 + \mathbb{E}[\|g_t - \nabla f_t\|^2 \mid \mathcal{F}_{t-1}]\right) \nonumber\\
&\leq \frac{\Upsilon_{H}(1+C_K)^2}{2\gamma_H^2} \varphi_t^2  \|\nabla f_t\|^2 + \frac{\Upsilon_{H}(1+C_K)^2}{2\gamma_H^2} \varphi_t^2 \left(C_{g, 1} \|\bx_t - \bx^{\star}\|^2 + C_{g, 2} \right) \nonumber\\
& \stackrel{\mathclap{\eqref{equ:46}}}{\leq} \ \frac{\Upsilon_{H}(1+C_K)^2}{2\gamma_H^2} \varphi_t^2  \|\nabla f_t\|^2 + \frac{\Upsilon_{H}(1+C_K)^2 C_{g, 1}}{2\gamma_H^4} \varphi_t^2  \|\nabla f_t\|^2 + \frac{\Upsilon_{H}(1+C_K)^2 C_{g, 2}}{2\gamma_H^2} \varphi_t^2 \nonumber\\
& = \frac{(\gamma_H^2+ C_{g, 1}) \Upsilon_{H}(1+C_K)^2}{2\gamma_H^4}\varphi_t^2  \|\nabla f_t\|^2 + \frac{\Upsilon_{H}(1+C_K)^2 C_{g, 2}}{2\gamma_H^2} \varphi_t^2,
\end{align}
where the first inequality follows Lemma \ref{lem:inner_transition}, and the fourth inequality follows from Assumption \ref{ass:2} with $q_g = 2$. Substituting \eqref{equ:45} and \eqref{equ:47} into \eqref{equ:44} gives us
\begin{equation} \label{equ:50}
\mathbb{E}[f_{t+1} - f^{\star} \mid \mathcal{F}_{t-1}] \leq f_t - f^{\star} - \frac{\varphi_t}{2\Upsilon_H}  \|\nabla f_t\|^2 + \frac{\left(\gamma_H^2+ C_{g, 1}\right) \Upsilon_{H}(1+C_K)^2}{2\gamma_H^4}\varphi_t^2  \|\nabla f_t\|^2 + \frac{\Upsilon_{H}(1+C_K)^2C_{g, 2}}{2\gamma_H^2} \varphi_t^2.
\end{equation}
Since $\varphi_t = C_{\varphi}/(t+1)^{\varphi}$ and $\varphi \in \left(1/2, 1\right]$, there exists $t_0 > 0$ such that for all $t \geq t_0$, $\frac{\left(\gamma_H^2+ C_{g, 1}\right) \Upsilon_{H}(1+C_K)^2}{2\gamma_H^4} \varphi_t \leq \frac{1}{4\Upsilon_H}$. Then, for all $t \geq t_0$, we have
\begin{equation}\label{equ:51}
\mathbb{E}[f_{t+1} - f^{\star} \mid \mathcal{F}_{t-1}] \leq f_t - f^{\star} - \frac{\varphi_t}{4\Upsilon_H} \|\nabla f_t\|^2 + \frac{\Upsilon_{H}(1+C_K)^2 C_{g, 2}}{2\gamma_H^2} \varphi_t^2.
\end{equation}
Since $\sum_t\varphi_t=\infty$ and $\sum_t \varphi_t^2 < \infty$, we apply the Robbins-Siegmund Theorem \citep[Theorem 1.3.12]{Duflo2013Random}, and conclude that $f_t - f^{\star}$ converges to a finite random variable, and $\sum_{t = t_0}^{\infty} \varphi_t \|\nabla f_t\|^2 < \infty$ almost surely. This implies that $\liminf_{t \to \infty} \|\nabla f_t\| = 0$ almost surely; and thus $\liminf_{t \to \infty}(f_t - f^{\star}) = 0$ almost surely by \eqref{equ:46}. Since we have already known that $f_t - f^{\star}$ converges almost surely, the conclusion can be strengthened to $\lim_{t \to \infty} f_t - f^{\star} = 0$. Again, we~apply~\eqref{equ:46} and obtain $\lim_{t \to \infty} \bx_t = \bx^{\star}$ almost surely. We complete the proof of Theorem \ref{thm:as_convergence}.

\section{Proofs of Section \ref{sec:inference}}

Throughout the section, we exchangeably denote $a_t=O(b_t)$ and $a_t\lesssim b_t$ to ease notation. We also use $\1_{\{\cdot\}}$ to denote the indicator function.

\subsection{Proof of Lemma \ref{lem:converge_rate_and_Bt_converge}}

\noindent $\bullet$ {\textbf{Convergence of $B_t$.}}
We first establish the almost-sure convergence of $B_t$. By the definition of $B_t$ in~\eqref{def:B_t}, we have
\begin{align} \label{equ:42}
\|B_t - B^{\star}\| &= \left\|\frac{1}{t}\sum_{i = 0}^{t-1}(H_i - \nabla^2 f_i) + \frac{1}{t}\sum_{i = 0}^{t-1} (\nabla^2 f_i - \nabla^2f^{\star})\right\| \nonumber\\
&\leq \left\|\frac{1}{t}\sum_{i = 0}^{t-1}(H_i - \nabla^2 f_i)\right\| + \left\| \frac{1}{t}\sum_{i = 0}^{t-1} (\nabla^2 f_i - \nabla^2f^{\star})\right\|  \nonumber\\
&\leq \left\|\frac{1}{t}\sum_{i = 0}^{t-1}(H_i - \nabla^2 f_i)\right\| + \frac{\Upsilon_L}{t} \sum_{i = 0}^{t-1} \|\bx_i - \bx^{\star}\|,
\end{align}
where the last inequality follows from Assumption \ref{ass:1}. The second term on the right-hand side of \eqref{equ:42} converges to $0$ almost surely, which is the direct consequence of Theorem~\ref{thm:as_convergence} together with the Stolz--Ces\`aro theorem. For the first term on the right-hand side of~\eqref{equ:42}, we note from Assumption \ref{ass:1} that $\mE[\|H_i - \nabla^2f_i\|^2_F]\lesssim 1$. Since $\mE[H_t|\mF_{t-1}]=\nabla^2 f_t$, we know $(H_i - \nabla^2 f_i, \mathcal{F}_i)_{i \geq 0}$ is a martingale difference sequence. Thus, $(\sum_{i = 0}^{t-1} (H_i - \nabla^2 f_i)/(i+1), \, \mathcal{F}_{t-1})_{t \geq 1}$ is a martingale~satisfying 
\begin{equation*}
\mE\left[\left\|\sum_{i = 0}^{t-1} \frac{1}{i+1}(H_i - \nabla^2 f_i)\right\|^2\right] \leq \mE\left[\left\|\sum_{i = 0}^{t-1} \frac{1}{i+1}(H_i - \nabla^2 f_i)\right\|^2_F\right] = \sum_{i = 0}^{t-1} \mathbb{E}\left[\frac{1}{(i+1)^2} \|H_i - \nabla^2 f_i\|^2_F \right] < \infty,
\end{equation*}
where the equality uses the orthogonality of martingale differences under the Frobenius inner product, so that all cross terms have zero expectation. The above display suggests that $(\sum_{i = 0}^{t-1} (H_i - \nabla^2 f_i)/(i+1))_{t \geq 1}$ is $L^2$-bounded, and hence $\sum_{i = 0}^{t-1} \frac{1}{i+1}(H_i - \nabla^2 f_i)$ converges almost surely by Doob's martingale convergence theorem \citep[Corollary 2.2]{Hall2014Martingale}. Applying Kronecker's Lemma gives us $\frac{1}{t}\sum_{i = 0}^{t-1}(H_i - \nabla^2 f_i)  \stackrel{a.s.}{\rightarrow} 0$ as $t \rightarrow \infty$. This shows that both two terms in~\eqref{equ:42} converge to $0$ almost surely; thus, $B_t  \stackrel{a.s.}{\rightarrow} B^{\star}$ as $t \rightarrow \infty$.

\vskip0.2cm
\noindent $\bullet$ {\textbf{Convergence of $(\alpha_t, \beta_t, \gamma_t)$.}}
Recalling the definitions of $\alpha, \beta, \gamma$ in~\eqref{def:alpha_beta_gamma_t} (the distinction between $(\alpha_t, \beta_t, \gamma_t)$ and $(\alpha^{\star}, \beta^{\star}, \gamma^{\star})$ is solely due to the evaluation at $(\mu_t,\nu_t)$ v.s. $(\mu^{\star},\nu^{\star})$), we regard them as functions of $(\mu, \nu)$ as follows:
\begin{equation*}
\alpha(\mu, \nu) = \frac{1}{1 + \gamma \nu}, \quad \beta(\mu, \nu) = 1 - \sqrt{\frac{\mu}{\nu}}, \quad \gamma(\mu, \nu) = \frac{1}{\sqrt{\mu \nu}}.
\end{equation*}
Each of $\alpha,\beta,\gamma$ is continuously differentiable on $(0,\infty)^2$. As shown in \eqref{prop_mu_nu} and Assumption \ref{ass:3}, $(\mu_t, \nu_t) \in [\gamma_S, 1] \times [1, 1/\gamma_S]$, $\forall t \geq 0$. Since $\nabla\alpha, \nabla\beta, \nabla\gamma$ are continuous, $\nabla\alpha(\mu, \nu), \nabla\beta(\mu, \nu), \nabla\gamma(\mu, \nu)$ are bounded on the compact set $[\gamma_S,1]\times[1,1/\gamma_S]$. Moreover, since $[\gamma_S,1]\times[1,1/\gamma_S]$ is convex, we know from the mean value theorem that
\begin{equation}\label{equ:99}
|\alpha_t-\alpha^\star| \lesssim |\mu_t-\mu^\star|+|\nu_t-\nu^\star|, \quad |\beta_t-\beta^\star| \lesssim |\mu_t-\mu^\star|+|\nu_t-\nu^\star|, \quad |\gamma_t-\gamma^\star| \lesssim |\mu_t-\mu^\star|+|\nu_t-\nu^\star|.
\end{equation}
Thus, it suffices to show $(\mu_t, \nu_t)  \stackrel{a.s.}{\rightarrow} (\mu^{\star}, \nu^{\star})$. To that end, let us start with defining several matrices as follows:
\begin{equation}\label{def:Z_t}
\begin{aligned}
\tZ_t & \coloneqq B_t S(S^{\top}B_t^2S)^{\dagger}S^{\top}B_t, \quad &&Z_t \coloneqq \mE[\tZ_t \mid \mathcal{F}_{t-1}],  \\
\tZ^{\star} & \coloneqq B^{\star} S (S^{\top}(B^{\star})^2S)^{\dagger}S^{\top}B^{\star}, \quad &&Z^{\star} \coloneqq \mE[\tZ^{\star}].
\end{aligned}
\end{equation}
We next state a lemma characterizing the continuity of the projection matrix $\tZ_t = B_t S (S^{\top}B_t^2S)^{\dagger}S^{\top}B_t$ with respect to~the Hessian estimate $B_t$.

\begin{lemma}\citep[Lemma 5.2]{Na2025Statistical}\label{lem:continuity_projection_matrix}
Suppose $B_t, B^{\star} \in \mathbb{R}^{d \times d}$ are non-singular. Then for any $S \in \mathbb{R}^{d \times s}$, we have
\begin{equation*}
\|\Tilde{Z}_t - \Tilde{Z}^{\star}\| \leq \frac{2\|B_t - B^{\star}\|}{\sigma_{\min}(B^{\star})} \cdot \|S\|\|S^{\dagger}\|,
\end{equation*}
where $\sigma_{\min}(\cdot)$ denotes the least singular value.
\end{lemma}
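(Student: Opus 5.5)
The plan is to reduce the statement to a perturbation bound for the pseudoinverse-based projection. The key observation is that $\tZ_t$ is the orthogonal projector onto $\mathrm{range}(B_tS)$. Indeed, $P_A \coloneqq A(A^\top A)^{\dagger}A^\top$ with $A=B_tS$ is symmetric and idempotent, and its range is exactly $\mathrm{range}(A)$. Likewise, $\tZ^\star$ is the orthogonal projector onto $\mathrm{range}(B^\star S)$. So we must compare two orthogonal projectors onto the column spaces of $B_tS$ and $B^\star S$.

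\textbf{Step 1: reduce to a fixed matrix.} Since $B_t$ and $B^\star$ are nonsingular, $\mathrm{range}(B_tS)=B_t\,\mathrm{range}(S)$. We then write $B_tS = (B_t(B^\star)^{-1})B^\star S = B^\star S + E\,S$ with $E\coloneqq B_t-B^\star$. To avoid rank issues from a rank-deficient $S$, I would replace $S$ by an orthonormal basis $Q$ of $\mathrm{range}(S)$. The ranges are unchanged, so $\tZ_t=P_{B_tQ}$ and $\tZ^\star=P_{B^\star Q}$. Moreover, $\|S\|\|S^\dagger\|\ge 1$, so it suffices to prove the bound without that factor, or with $\|Q\|\|Q^\dagger\|=1$. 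The factor $\|S\|\|S^\dagger\|$ in the statement then only loosens the bound. Alternatively, if one keeps $S$, the factor arises naturally when $\|ES\|$ is bounded against the smallest nonzero singular value $\sigma_{\min}(B^\star S)\ge \sigma_{\min}(B^\star)/\|S^\dagger\|$.

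\textbf{Step 2: projector perturbation.} Let $A=B^\star S$ and $\tilde A=B_tS$, which have equal rank. I would use the identity
\[P_{\tilde A}-P_A = P_{\tilde A}(I-P_A) - (I-P_{\tilde A})P_A .\]
The two terms have orthogonal ranges, so the norm is at most $\|P_{\tilde A}(I-P_A)\|+\|(I-P_{\tilde A})P_A\|$. For the first term we have $(I-P_A)\tilde A=(I-P_A)(\tilde A-A)=(I-P_A)ES$, and $P_{\tilde A}=\tilde A\tilde A^{\dagger}$. Taking transposes gives
\[\|(I-P_A)P_{\tilde A}\|\le \|ES\|\,\|\tilde A^\dagger\|.\]
The second term symmetrically gives $\le \|ES\|\|A^\dagger\|$. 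This is the classical Wedin-type bound, summing to at most $\|E\|\|S\|(\|A^\dagger\|+\|\tilde A^\dagger\|)$.

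\textbf{Step 3: bound the pseudoinverses.} Both $A$ and $\tilde A$ have full column rank relative to $\mathrm{range}(S)$. So $\|A^\dagger\|\le \|(B^\star)^{-1}\|\|S^\dagger\|=\|S^\dagger\|/\sigma_{\min}(B^\star)$, which follows from $A^\dagger = S^\dagger (B^\star)^{-1}$ up to a projection; more carefully, $\sigma_{\min}^+(B^\star S)\ge\sigma_{\min}(B^\star)\sigma_{\min}^+(S)$. The main obstacle is the $\tilde A$ term, since the statement has only $\sigma_{\min}(B^\star)$, not $\sigma_{\min}(B_t)$. To handle it, I would avoid bounding $\|\tilde A^\dagger\|$ and use the symmetry of the sine-theta structure instead. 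Equal ranks imply $\|P_{\tilde A}(I-P_A)\|=\|(I-P_{\tilde A})P_A\|$, since both equal the sine of the largest principal angle. Hence
\[\|P_{\tilde A}-P_A\|\le 2\|(I-P_{\tilde A})P_A\|.\]
Now $(I-P_{\tilde A})A = (I-P_{\tilde A})(A-\tilde A)=-(I-P_{\tilde A})ES$, so
\[\|(I-P_{\tilde A})P_A\|=\|(I-P_{\tilde A})AA^\dagger\|\le \|E\|\|S\|\|A^\dagger\|\le \frac{\|E\|\,\|S\|\|S^\dagger\|}{\sigma_{\min}(B^\star)} .\]
Combining the two displays yields exactly the claimed constant $2$. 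I expect the equal-principal-angle step for equal-rank projectors to be the delicate point. It is standard: $\|P(I-Q)\|=\|Q(I-P)\|$ when $\mathrm{rank}P=\mathrm{rank}Q$. It holds here because $B_t$ and $B^\star$ are nonsingular, so $\mathrm{rank}(B_tS)=\mathrm{rank}(S)=\mathrm{rank}(B^\star S)$.
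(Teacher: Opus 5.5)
The paper gives no proof of this lemma; it imports it from \citet[Lemma~5.2]{Na2025Statistical}, so there is no in-paper argument to compare against. Your proposal is a correct, self-contained proof. The key steps are all sound:
\begin{itemize}
\item $\tilde Z_t=(B_tS)(B_tS)^\dagger$ and $\tilde Z^\star=(B^\star S)(B^\star S)^\dagger$ are orthogonal projectors of equal rank.
\item The equal-rank identity $\|(I-P_A)P_{\tilde A}\|=\|(I-P_{\tilde A})P_A\|$ removes any need for $\|\tilde A^\dagger\|$. It has a one-line proof: with orthonormal bases $U,V$ of the two $k$-dimensional ranges, both squared norms equal $1-\sigma_{\min}^2(U^\top V)$, because $U^\top V$ is square.
\item The identity $(I-P_{\tilde A})P_A=-(I-P_{\tilde A})ESA^\dagger$ then inverts only on the $B^\star$ side. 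This is exactly why the bound involves $\sigma_{\min}(B^\star)$ alone.
\end{itemize}

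Three remarks.

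First, writing $\tilde Z_t$ as a projector uses $S^\top B_t^2S=(B_tS)^\top(B_tS)$, i.e.\ symmetry of $B_t$ and $B^\star$. This is not among the lemma's stated hypotheses, but it holds in the paper, since these are averaged Hessians. It is also genuinely necessary. Take $B^\star$ a rotation by $\pi/4$ and $S=\mathbf{e}_1$: then $S^\top(B^\star)^2S=0$, so $\tilde Z^\star=0$. For $B_t$ a rotation by $\pi/4+\epsilon$, however, $\|\tilde Z_t\|=1/|\sin 2\epsilon|\to\infty$ while $\|B_t-B^\star\|\to 0$. You should therefore state symmetry explicitly.

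Second, the reverse-order formula $A^\dagger=S^\dagger(B^\star)^{-1}$ is false in general. Drop the ``up to a projection'' phrasing and keep only $\sigma_{\min}^+(B^\star S)\ge\sigma_{\min}(B^\star)\,\sigma_{\min}^+(S)$, where $\sigma_{\min}^+$ denotes the smallest nonzero singular value. This inequality holds because $\ker(B^\star S)=\ker S$, so the minimization runs over the same row space.

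Third, your orthonormal-basis variant proves strictly more than the statement. With $Q$ in place of $S$, the same chain gives $\|\tilde Z_t-\tilde Z^\star\|\le 2\|B_t-B^\star\|/\sigma_{\min}(B^\star)$ for every $S$, with no condition-number factor. Moreover, the two pieces of your decomposition have orthogonal ranges and equal norms. Splitting $v=P_Av+(I-P_A)v$ therefore even yields constant $1$. For the paper, this means the moment conditions on $\|S\|\|S^\dagger\|$ in Assumption~\ref{ass:3} are not needed at the places where this lemma is invoked, e.g.\ \eqref{equ:B_bound_projection} and \eqref{equ:cov_H}.
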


By Lemma~\ref{lem:continuity_projection_matrix}, the discrepancy between the projection matrices induced by the Hessian estimates can be bounded as
\begin{equation} \label{equ:B_bound_projection}
\mathbb{E} [\|\tilde Z_t - \tilde Z^{\star}\|\mid \mathcal F_{t-1}] \leq \frac{2\|B_t - B^{\star}\|}{\gamma_H}\, \mathbb{E}\!\left[\|S\|\,\|S^{\dagger}\|\right]\;\lesssim\;\|B_t - B^{\star}\|.
\end{equation}
Here, the first inequality follows from Lemma \ref{lem:continuity_projection_matrix} together with the independence between $S$ and $\mathcal F_{t-1}$ implied by Assumption \ref{ass:3}, while the second one is a direct consequence of Assumption \ref{ass:3} with $q_S=1$.

\vskip4pt
\noindent $\bullet\bullet$ {\textbf{Convergence of $\mu_t$.}}
By the definitions of $\mu_t$ in \eqref{def:mu_t_nu_t} and $\mu^{\star}$ in Section~\ref{sec:normality}, we have $\mu_t = \lambda_{\min}(Z_t)$ and $\mu^{\star} = \lambda_{\min}(Z^{\star})$. Applying Weyl’s perturbation theorem \citep[Corollary~III.2.6]{Bhatia1997Matrix}, we obtain
\begin{align} \label{equ:mu_bound}
|\mu_t - \mu^{\star}| &\leq \|Z_t - Z^{\star}\|\stackrel{\eqref{def:Z_t}}{=} \|\mathbb{E} [\tilde Z_t - \tilde Z^{\star}\mid \mathcal F_{t-1}]\| \leq \mathbb{E} [\|\tilde Z_t - \tilde Z^{\star}\|\mid \mathcal F_{t-1}] \stackrel{\eqref{equ:B_bound_projection}}{\lesssim} \|B_t - B^{\star}\|,
\end{align}
where the third inequality follows from Jensen's inequality. Since $B_t  \stackrel{a.s.}{\rightarrow} B^{\star}$, we conclude that $\mu_t  \stackrel{a.s.}{\rightarrow} \mu^{\star}$ as $t \rightarrow \infty$.

\vskip4pt
\noindent $\bullet\bullet$ {\textbf{Convergence of $\nu_t$.}}
Recall the definition of $\nu_t$ in \eqref{def:mu_t_nu_t} and define $A_t \coloneqq \mathbb{E}[\tilde{Z}_t Z_t^{-1} \tilde{Z}_t \mid \mathcal{F}_{t-1}]$. The matrix $A_t$ is symmetric positive semidefinite, since $Z_t^{-1} \succeq I$ and both $\tilde Z_t$ and $Z_t$ are symmetric. Letting $\bb = Z_t^{1/2}\ba$, we obtain
\begin{align*}
\nu_t & \stackrel{\mathclap{\eqref{def:mu_t_nu_t}}}{=} \sup_{\boldsymbol{a} \in \mathbb{R}^d \backslash \{\boldsymbol{0}\}} \frac{\langle A_t \boldsymbol{a}, \boldsymbol{a} \rangle}{\langle Z_t \boldsymbol{a}, \boldsymbol{a} \rangle} = \sup_{\boldsymbol{b} \in \mathbb{R}^d \backslash \{\boldsymbol{0}\}} \frac{\langle A_t Z_t^{-1/2}\boldsymbol{b}, Z_t^{-1/2}\boldsymbol{b} \rangle}{\langle Z_t Z_t^{-1/2}\boldsymbol{b}, Z_t^{-1/2}\boldsymbol{b} \rangle} \\
& = \sup_{\boldsymbol{b} \in \mathbb{R}^d \backslash \{\boldsymbol{0}\}} \frac{\langle Z_t^{-1/2} A_t Z_t^{-1/2}\boldsymbol{b}, \boldsymbol{b} \rangle}{\langle \boldsymbol{b}, \boldsymbol{b} \rangle} = \lambda_{\max} (Z_t^{-1/2} A_t Z_t^{-1/2} ) = \|Z_t^{-1/2} A_t Z_t^{-1/2}\|,
\end{align*}
where $\lambda_{\max}(\cdot)$ denotes the largest eigenvalue. Applying the same argument to $\nu^{\star}$ yields $\nu^{\star} = \|(Z^{\star})^{-1/2} A^{\star} (Z^{\star})^{-1/2}\|$ with $A^{\star} \coloneqq \mathbb{E} [\tilde Z^{\star} (Z^{\star})^{-1} \tilde Z^{\star}]$, where the expectation is taken only over the sketching distribution $S$. Consequently, we have
\begin{align} \label{equ:95}
& |\nu_t - \nu^{\star}| = \left|\|Z_t^{-1/2} A_t Z_t^{-1/2}\| - \|(Z^{\star})^{-1/2} A^{\star} (Z^{\star})^{-1/2}\|\right| \nonumber\\
& \leq \|Z_t^{-1/2} A_t Z_t^{-1/2} - (Z^{\star})^{-1/2} A^{\star} (Z^{\star})^{-1/2}\| \nonumber\\
& \leq \|(Z_t^{-1/2} - (Z^{\star})^{-1/2})A_t Z_t^{-1/2}\| + \|(Z^{\star})^{-1/2}(A_t - A^{\star})Z_t^{-1/2}\| + \|(Z^{\star})^{-1/2}A^{\star}(Z_t^{-1/2} - (Z^{\star})^{-1/2})\|. 
\end{align}
To control the three terms on the right-hand side of~\eqref{equ:95}, we first introduce the following preparation lemma.

\begin{lemma} \label{lem:matrix_squareroot_lipschitz}
Under the conditions of Lemma \ref{lem:converge_rate_and_Bt_converge}, we have
\begin{equation*}
\|Z_t^{-1} - (Z^{\star})^{-1}\| \leq \frac{1}{\gamma_S^2} \|Z_t - Z^{\star}\|, \quad\quad\quad \|Z_t^{-1/2} - (Z^{\star})^{-1/2}\| \leq \frac{1}{2\gamma_S^{3/2}}\|Z_t - Z^{\star}\|.
\end{equation*}	
\end{lemma}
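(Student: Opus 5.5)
The plan uses only two facts: $Z_t$ and $Z^\star$ are symmetric positive definite, and both are bounded below by $\gamma_S I$. For $Z_t$ this is Assumption \ref{ass:3}. For $Z^\star$ it follows because $Z_t\to Z^\star$ almost surely, which we get from \eqref{equ:mu_bound} together with $B_t\to B^\star$, and the set $\{Z\succeq\gamma_S I\}$ is closed. Alternatively, $\tZ^\star$ is an orthogonal projection, so $Z^\star\preceq I$, and applying the sketching assumption at $B^\star$ gives the same lower bound. With this, $\|Z_t^{-1}\|,\|(Z^\star)^{-1}\|\le 1/\gamma_S$ and $\|Z_t^{-1/2}\|,\|(Z^\star)^{-1/2}\|\le \gamma_S^{-1/2}$.

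For the first inequality we use the resolvent identity
\[
Z_t^{-1}-(Z^\star)^{-1}=Z_t^{-1}(Z^\star-Z_t)(Z^\star)^{-1},
\]
so by submultiplicativity $\|Z_t^{-1}-(Z^\star)^{-1}\|\le \gamma_S^{-2}\|Z_t-Z^\star\|$.

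For the square-root inverse we first write
\[
Z_t^{-1/2}-(Z^\star)^{-1/2}=Z_t^{-1/2}\big((Z^\star)^{1/2}-Z_t^{1/2}\big)(Z^\star)^{-1/2},
\]
which reduces the problem to a Lipschitz bound for the matrix square root. Set $X=Z_t^{1/2}$, $Y=(Z^\star)^{1/2}$ and $E=X-Y$. Then $XE+EY=X^2-Y^2=Z_t-Z^\star$, a Sylvester equation. Let $E\mathbf u=\lambda\mathbf u$ with $|\lambda|=\|E\|$ and $\|\mathbf u\|=1$; this is possible since $E$ is symmetric. Then
\[
\mathbf u^\top(Z_t-Z^\star)\mathbf u=\lambda\,\mathbf u^\top(X+Y)\mathbf u,
\]
and $\mathbf u^\top(X+Y)\mathbf u\ge 2\sqrt{\gamma_S}$. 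Hence $\|E\|\le \|Z_t-Z^\star\|/(2\sqrt{\gamma_S})$. Combining with the two factors $\gamma_S^{-1/2}$ gives the constant $\gamma_S^{-1/2}\cdot\gamma_S^{-1/2}\cdot\frac{1}{2\sqrt{\gamma_S}}=\frac{1}{2\gamma_S^{3/2}}$, as stated.

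The only delicate step is the square-root perturbation bound. It needs the eigenvector trick, or equivalently the integral representation $E=\int_0^\infty e^{-sX}(Z_t-Z^\star)e^{-sY}\,ds$, which gives the same $1/(2\sqrt{\gamma_S})$. It also needs the lower bound $Z^\star\succeq\gamma_S I$, which is justified as in the first paragraph.
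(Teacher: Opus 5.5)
Your proof is correct and gives the same constants. The first inequality uses the same resolvent-identity argument as the paper, but the second is obtained by a genuinely different route.

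The paper never passes through $Z_t^{1/2}$. It writes $X^{-1/2}=\frac{1}{\pi}\int_0^\infty s^{-1/2}(X+sI)^{-1}\,ds$ and applies the resolvent identity to $(Z_t+sI)^{-1}-(Z^\star+sI)^{-1}$ inside the integral. It then bounds both resolvents by $(\gamma_S+s)^{-1}$ and evaluates $\frac{1}{\pi}\int_0^\infty s^{-1/2}(\gamma_S+s)^{-2}\,ds=\frac{1}{2}\gamma_S^{-3/2}$.

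You instead peel off $Z_t^{-1/2}$ and $(Z^\star)^{-1/2}$ and reduce to the square-root perturbation bound $\|Z_t^{1/2}-(Z^\star)^{1/2}\|\le\|Z_t-Z^\star\|/(2\sqrt{\gamma_S})$. You prove this from the Sylvester equation $XE+EY=Z_t-Z^\star$ by testing against an extremal eigenvector of the symmetric matrix $E$. The three factors multiply to exactly $\frac{1}{2}\gamma_S^{-3/2}$, so the factorization costs nothing.

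Your route is more elementary, with no integral to evaluate, and it isolates a reusable fact: the matrix square root is $\frac{1}{2\sqrt{\gamma}}$-Lipschitz on $\{Z\succeq\gamma I\}$. The paper's resolvent-integral route handles the inverse square root in one step. It also extends verbatim to any power $X^{-p}$ with $p\in(0,1)$, giving the sharp Lipschitz constant $p\,\gamma_S^{-p-1}$.

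For $Z^\star\succeq\gamma_S I$, keep your limit argument: closedness of $\{Z\succeq\gamma_S I\}$ together with $Z_t\to Z^\star$ from \eqref{equ:mu_bound}. This is exactly what the paper relies on. The alternative of invoking Assumption \ref{ass:3} at $B^\star$ is not literally granted, since that assumption is stated only for $Z_t$. The observation $Z^\star\preceq I$ plays no role in a lower bound.
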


\noindent \textbf{First term in \eqref{equ:95}.}
By Assumption~\ref{ass:3} and the fact that $\tilde Z_t$ is a projection matrix, we have $\|A_t\|\le \gamma_S^{-1}$ and $\|Z_t^{-1/2}\|\le \gamma_S^{-1/2}$. Therefore,
\begin{align} \label{equ:96}
\|(Z_t^{-1/2} - (Z^{\star})^{-1/2})A_t Z_t^{-1/2}\| & \leq \|Z_t^{-1/2} - (Z^{\star})^{-1/2}\| \|A_t\| \|Z_t^{-1/2}\| \nonumber\\
& \leq \frac{1}{\gamma_S} \frac{1}{\sqrt{\gamma_S}} \frac{1}{2\gamma_S^{3/2}} \|Z_t - Z^{\star}\| = \frac{1}{2\gamma_S^3} \|Z_t - Z^{\star}\|,
\end{align}
where the second last inequality follows from Lemma~\ref{lem:matrix_squareroot_lipschitz}.

\noindent \textbf{Third term in \eqref{equ:95}.}
Similarly, since $\|A^{\star}\|\le \gamma_S^{-1}$ and $\|(Z^{\star})^{-1/2}\|\le \gamma_S^{-1/2}$, we again apply Lemma~\ref{lem:matrix_squareroot_lipschitz} and obtain
\begin{equation} \label{equ:97}
\|(Z^{\star})^{-1/2}A^{\star}(Z_t^{-1/2} - (Z^{\star})^{-1/2})\| \leq \frac{1}{\sqrt{\gamma_S}} \frac{1}{\gamma_S} \frac{1}{2\gamma_S^{3/2}} \|Z_t - Z^{\star}\| = \frac{1}{2\gamma_S^3} \|Z_t - Z^{\star}\|.
\end{equation}
\noindent \textbf{Second term in \eqref{equ:95}.}
Using $\|(Z^{\star})^{-1/2}\|\le \gamma_S^{-1/2}$ and $\|Z_t^{-1/2}\|\le \gamma_S^{-1/2}$, we have
\begin{align} \label{equ:98}
& \|(Z^{\star})^{-1/2} (A_t  - A^{\star})Z_t^{-1/2}\| \leq \frac{1}{\gamma_S} \|A_t - A^{\star}\| \nonumber\\
& = \frac{1}{\gamma_S} \left\|\mathbb{E}[\tilde{Z}_t Z_t^{-1} \tilde{Z}_t \mid \mathcal{F}_{t-1}] - \mathbb{E}[\tilde{Z}^{\star} (Z^{\star})^{-1} \tilde{Z}^{\star}]\right\| \nonumber\\
& = \frac{1}{\gamma_S} \left\|\mathbb{E}[\tilde{Z}_t Z_t^{-1} \tilde{Z}_t - \tilde{Z}^{\star} (Z^{\star})^{-1} \tilde{Z}^{\star} \mid \mathcal{F}_{t-1}] \right\| \nonumber\\
& = \frac{1}{\gamma_S} \left\|\mathbb{E}[(\tilde{Z}_t - \tilde{Z}^{\star}) Z_t^{-1} \tilde{Z}_t + \tilde{Z}^{\star} (Z_t^{-1} - (Z^{\star})^{-1}) \tilde{Z}_t +  \tilde{Z}^{\star} (Z^{\star})^{-1} (\tilde{Z}_t - \tilde{Z}^{\star}) \mid \mathcal{F}_{t-1}] \right\| \nonumber\\
& \leq \frac{1}{\gamma_S} (\mathbb{E}[\|(\tilde{Z}_t - \tilde{Z}^{\star}) Z_t^{-1} \tilde{Z}_t\| \mid \mathcal{F}_{t-1}] + \mathbb{E}[\|\tilde{Z}^{\star} (Z_t^{-1} - (Z^{\star})^{-1}) \tilde{Z}_t\| \mid \mathcal{F}_{t-1}] + \mathbb{E}[\|\tilde{Z}^{\star} (Z^{\star})^{-1} (\tilde{Z}_t - \tilde{Z}^{\star})\| \mid \mathcal{F}_{t-1}]) \nonumber\\
& \leq \frac{1}{\gamma_S} \rbr{\frac{1}{\gamma_S}\mathbb{E}[\|\tilde{Z}_t - \tilde{Z}^{\star}\| \mid \mathcal{F}_{t-1}] + \|Z_t^{-1} - (Z^{\star})^{-1}\| + \frac{1}{\gamma_S}\mathbb{E}[\|\tilde{Z}_t - \tilde{Z}^{\star}\| \mid \mathcal{F}_{t-1}]} \nonumber\\
& \leq \frac{1}{\gamma_S} \rbr{ \frac{1}{\gamma_S}\mathbb{E}[\|\tilde{Z}_t - \tilde{Z}^{\star}\| \mid \mathcal{F}_{t-1}] + \frac{1}{\gamma_S^2}\|Z_t - Z^{\star}\| + \frac{1}{\gamma_S}\mathbb{E}[\|\tilde{Z}_t - \tilde{Z}^{\star}\| \mid \mathcal{F}_{t-1}]} \nonumber\\
& = \frac{1}{\gamma_S^3} \|Z_t - Z^{\star}\| + \frac{2}{\gamma_S^2} \mathbb{E}[\|\tilde{Z}_t - \tilde{Z}^{\star}\| \mid \mathcal{F}_{t-1}].
\end{align}
Plugging \eqref{equ:96}, \eqref{equ:97}, \eqref{equ:98} into \eqref{equ:95} gives us
\begin{align} \label{equ:nu_bound}
|\nu_t - \nu^{\star}| &\leq \frac{2}{\gamma_S^3} \|Z_t - Z^{\star}\| + \frac{2}{\gamma_S^2} \mathbb{E}[\|\tilde{Z}_t - \tilde{Z}^{\star}\| \mid \mathcal{F}_{t-1}] \nonumber\\
& = \frac{2}{\gamma_S^3} \left\|\mathbb{E}[\tilde{Z}_t - \tilde{Z}^{\star} \mid \mathcal{F}_{t-1}] \right\| + \frac{2}{\gamma_S^2} \mathbb{E}[\|\tilde{Z}_t - \tilde{Z}^{\star}\| \mid \mathcal{F}_{t-1}] \nonumber\\
& \leq \frac{2}{\gamma_S^3} \mathbb{E}[\|\tilde{Z}_t - \tilde{Z}^{\star}\| \mid \mathcal{F}_{t-1}] + \frac{2}{\gamma_S^2} \mathbb{E}[\|\tilde{Z}_t - \tilde{Z}^{\star}\| \mid \mathcal{F}_{t-1}] \nonumber\\
& \lesssim \mathbb{E}[\|\tilde{Z}_t - \tilde{Z}^{\star}\| \mid \mathcal{F}_{t-1}] \stackrel{\eqref{equ:B_bound_projection}}{\lesssim} \|B_t - B^{\star}\|.
\end{align}
Since $B_t  \stackrel{a.s.}{\rightarrow} B^{\star}$, we conclude that $\nu_t  \stackrel{a.s.}{\rightarrow} \nu^{\star}$ as $t \rightarrow \infty$. Combining~\eqref{equ:mu_bound} and~\eqref{equ:nu_bound} with~\eqref{equ:99} yields
\begin{equation} \label{equ:alpha_beta_gamma_bounds}
|\alpha_t-\alpha^\star| \lesssim \|B_t - B^{\star}\|, \quad\quad |\beta_t-\beta^\star| \lesssim \|B_t - B^{\star}\|, \quad\quad |\gamma_t-\gamma^\star| \lesssim \|B_t - B^{\star}\|,
\end{equation}
which implies that $(\alpha_t,\beta_t,\gamma_t) \stackrel{a.s.}{\rightarrow} (\talpha, \tbeta, \tgamma)$ as $t \rightarrow \infty$.

\noindent $\bullet$ {\textbf{Convergence of $K_t$.}} Recall the constructions of $K_t$ and $K^{\star}$ from $C_t$ and $C^{\star}$ in \eqref{def:tilde_K_t}. To establish the convergence of~$K_t$, it suffices to study the convergence of $C_t$. Due to the independence among the sketching matrices, both $C_t$ and $C^{\star}$ admit the formulas $C_t = (M_t)^{\tau}$ and $C^{\star} = (M^{\star})^{\tau}$, where $M_t$ is defined in~\eqref{equ:def_M_t} and $M^{\star}$ is defined analogously by replacing $\alpha_t, \beta_t, \gamma_t$ and $Z_t$ with their counterparts evaluated at $\bx^{\star}$. We first bound $\|M_t - M^{\star}\|$ as follows:
\begin{align} \label{equ:M_bound}
\|M_t - M^{\star}\| &\leq \|(1 - \alpha_t)(I - Z_t) - (1 - \alpha^{\star})(I - Z^{\star})\| + \|\alpha_t(I - Z_t) - \alpha^{\star} (I - Z^{\star})\| \nonumber\\
& \quad + \|(1 - \alpha_t)(1 - \beta_t)I - (1 - \alpha_t)\gamma_t Z_t - (1 - \alpha^{\star})(1 - \beta^{\star})I + (1 - \alpha^{\star}) \gamma^{\star} Z^{\star}\| \nonumber\\
& \quad + \|(\alpha_t+\beta_t - \alpha_t\beta_t)I - \alpha_t \gamma_t Z_t - (\alpha^{\star}  + \beta^{\star} - \alpha^{\star} \beta^{\star})I + \alpha^{\star} \gamma^{\star} Z^{\star}\| \nonumber\\
& \lesssim |\alpha_t - \alpha^{\star}| + |\beta_t - \beta^{\star}| + |\gamma_t - \gamma^{\star}| + \|Z_t - Z^{\star}\| \nonumber\\
& \stackrel{\mathclap{\eqref{equ:alpha_beta_gamma_bounds}}}{\lesssim}\; \|B_t - B^{\star}\| + \mathbb{E} [\|\tilde Z_t - \tilde Z^{\star}\|\mid \mathcal F_{t-1}] \stackrel{\eqref{equ:B_bound_projection}}{\lesssim} \|B_t - B^{\star}\|.
\end{align}
Here, the second inequality uses the bounds $\alpha_t < 1$, $\beta_t < 1$, $\gamma_t \leq 1/\sqrt{\gamma_S}$, and $\|Z_t\| \leq 1$, and the third inequality uses the fact that $\|Z_t - Z^{\star}\| = \|\mathbb{E}[\tilde{Z}_t - \tilde{Z}^{\star} \mid \mathcal{F}_{t-1}] \| \leq \mathbb{E}[\|\tilde{Z}_t - \tilde{Z}^{\star}\| \mid \mathcal{F}_{t-1}]$. Combining~\eqref{equ:M_bound} with a telescoping expansion, we obtain
\begin{align*}
\|C_t - C^{\star}\| &= \|(M_t)^{\tau} - (M^{\star})^{\tau}\| = \left\| \sum_{k = 0}^{\tau - 1} M_t^{\tau - 1 - k}(M_t - M^{\star}) (M^{\star})^k  \right\| \leq \left(\sum_{k = 0}^{\tau - 1} \|M_t\|^{\tau -1 -k} \|M^{\star}\|^{k}\right) \|M_t - M^{\star}\|.
\end{align*}
For $M_t$, we find the following uniform upper bound:
\begin{align*}
\|M_t\| & \; \stackrel{\mathclap{\eqref{equ:def_M_t}}}{\leq} \; \|I - Z_{t}\| + (1-\alpha_t)(1-\beta_t) + (1-\alpha_t)\gamma_t \|Z_{t}\| + (\alpha_t + \beta_t - \alpha_t\beta_t) + \alpha_t \gamma_t \|Z_{t}\| \nonumber\\
& \; \leq\; \|I - Z_{t}\| + 1 + \gamma_t \|Z_{t}\| \leq  1 + 1 + \gamma_t \stackrel{\mathclap{\eqref{prop_gamma_t}}}{\leq} 2+ 1/\sqrt{\gamma_S}.
\end{align*}
By the almost-sure convergence of $B_t$, $Z_t$, and $(\alpha_t,\beta_t,\gamma_t)$, the same bound holds for $\|M^{\star}\|$. Consequently, the above two displays lead to
\begin{equation} \label{equ:103}  
\|C_t - C^{\star}\| \leq (2+1/\sqrt{\gamma_S})^{\tau - 1} \cdot \tau \cdot \|M_t - M^{\star}\| \lesssim \|M_t - M^{\star}\| \stackrel{\eqref{equ:M_bound}}{\lesssim} \Vert B_t - B^{\star} \Vert.
\end{equation}
Finally, by the definitions of $K_t$ and $K^{\star}$, we have
\begin{equation}\label{equ:K_t-Kstar}
\|K_t - K^{\star}\| = \left\|\begin{pmatrix}
I & \boldsymbol{0}
\end{pmatrix} (C_t - C^{\star}) \begin{pmatrix}
I \\ I
\end{pmatrix} \right\| \leq \sqrt{2} \cdot \|C_t - C^{\star}\| \stackrel{\eqref{equ:103}}{\lesssim} \|B_t - B^{\star}\|.	
\end{equation}
Since $B_t  \stackrel{a.s.}{\rightarrow} B^{\star}$, we conclude that $K_t \stackrel{a.s.}{\rightarrow} K^{\star}$ as $t \rightarrow \infty$.

\subsection{Proof of Lemma \ref{lem:matrix_squareroot_lipschitz}}

For the first result, we have
\begin{equation*}
\|Z_t^{-1} - (Z^{\star})^{-1}\| = \| Z_t^{-1} (Z_t - Z^{\star}) (Z^{\star})^{-1} \| \leq \|Z_t^{-1}\| \|Z_t - Z^{\star}\| \|(Z^{\star})^{-1}\| \leq \frac{1}{\gamma_S^2} \|Z_t - Z^{\star}\|,
\end{equation*}
where the last inequality follows from $Z_t \succeq \gamma_S I$ and $Z^{\star}\succeq \gamma_S I$ in Assumption~\ref{ass:3}, together with the almost sure convergence $Z_t \xrightarrow{a.s.} Z^{\star}$ established in Theorem \ref{thm:as_convergence} and Lemma~\ref{lem:continuity_projection_matrix}.

For the second result, we proceed in two steps.

\noindent $\bullet$ {\textbf{Step 1.}}
We show that for any positive definite matrix $X$,
\begin{equation*}
X^{-1/2} = \frac{1}{\pi}\int_0^{\infty} t^{-1/2}(X+tI)^{-1}\,dt.
\end{equation*}
Indeed, for any scalar $x>0$, the following identity holds
\begin{equation*}
x^{-1/2} = \frac{1}{\pi}\int_0^{\infty} t^{-1/2}(x+t)^{-1}\,dt.
\end{equation*}
To verify this, let $t=xu^2$, so that $dt=2xu\,du$. Then
\begin{align} \label{equ:93}
\frac{1}{\pi} \int_{0}^{\infty} t^{-1/2}(x + t)^{-1} dt  & = \frac{1}{\pi} \int_{0}^{\infty} (xu^2)^{-1/2} (x+xu^2)^{-1} 2xudu \nonumber\\
& = \frac{1}{\pi} \int_{0}^{\infty} \frac{2}{x^{1/2}(1+u^2)}du = \frac{2}{\pi x^{1/2}} \cdot \left. \text{arctan}(u) \right|_0^{\infty} = x^{-1/2}.      
\end{align}
We now extend this identity to matrices. Since $X$ is positive definite, it admits an eigendecomposition $X=UDU^{\top}$, where $D$ is diagonal and $U$ is orthogonal. Then $X^{-1/2}=UD^{-1/2}U^{\top}$, and
\begin{align} \label{equ:94}
\frac{1}{\pi} \int_{0}^{\infty} t^{-1/2}(X + tI)^{-1} dt &= \frac{1}{\pi} \int_{0}^{\infty} t^{-1/2}(UDU^{\top} + tUIU^{\top})^{-1} dt = \frac{1}{\pi} \int_{0}^{\infty} t^{-1/2}(U(D+tI)U^{\top})^{-1} dt \nonumber\\
& = \frac{1}{\pi} \int_{0}^{\infty} t^{-1/2} U(D+tI)^{-1}U^{\top} dt = U \left( \frac{1}{\pi} \int_{0}^{\infty} t^{-1/2} (D+tI)^{-1} dt \right) U^{\top}.       
\end{align}
Since $(D+tI)^{-1}$ is diagonal, the integral is evaluated entry-wise. By~\eqref{equ:93}, the $i$th diagonal entry equals $\lambda_i^{-1/2}$, where $\lambda_i$ is the $i$-th eigenvalue of $X$. Thus, we have
\begin{equation*}
\frac{1}{\pi} \int_{0}^{\infty} t^{-1/2}(X + tI)^{-1} dt \stackrel{\eqref{equ:94}}{=} U \left( \frac{1}{\pi} \int_{0}^{\infty} t^{-1/2} (D+tI)^{-1} dt \right) U^{\top} = U D^{-1/2} U^{\top} = X^{-1/2}.
\end{equation*}
\noindent $\bullet$ {\textbf{Step 2.}}
By the representation from Step~1, we can write
\begin{align*}
\|Z_t^{-1/2} - (Z^{\star})^{-1/2}\| &= \left\| \frac{1}{\pi} \int_0^{\infty} y^{-1/2} \left[(Z_t + yI)^{-1} - (Z^{\star} + yI)^{-1}\right] dy \right\| \\
& = \left\| \frac{1}{\pi} \int_0^{\infty} y^{-1/2} (Z_t + yI)^{-1}(Z^{\star} - Z_t)(Z^{\star} + yI)^{-1} dy \right\| \\
& \leq \frac{\|Z^{\star} - Z_t\|}{\pi} \int_0^{\infty} y^{-1/2}\|(Z_t+ yI)^{-1}\| \|(Z^{\star}+ yI)^{-1}\| dy \\
& \leq \frac{\|Z^{\star} - Z_t\|}{\pi} \int_0^{\infty} y^{-1/2} \frac{1}{(\gamma_S + y)^2} dy,
\end{align*}
where the last inequality uses $Z_t\succeq\gamma_S I$ and $Z^{\star}\succeq\gamma_S I$. To evaluate the integral, set $y = \gamma_S u^2$, which yields
\begin{equation*}
\int_0^{\infty} y^{-1/2} \frac{1}{(\gamma_S + y)^2} dy = \int_0^{\infty} \frac{(\gamma_S u^2)^{-1/2}}{(\gamma_S + \gamma_S u^2)^2} (2\gamma_S u du) = \frac{2}{\gamma_S^{3/2}} \int_0^{\infty} \frac{1}{(1+u^2)^2} du.
\end{equation*}
Letting $u=\tan\theta$ further gives us
\begin{equation*}
\int_0^{\infty} \frac{1}{(1+u^2)^2} du = \int_0^{\frac{\pi}{2}}\frac{\text{sec}^2 \theta}{(1+\text{tan}^2\theta)^2} d \theta = \int_0^{\frac{\pi}{2}}\frac{1}{\text{sec}^2 \theta} d \theta = \int_0^{\frac{\pi}{2}} \text{cos}^2 \theta d \theta = \int_0^{\frac{\pi}{2}} \frac{1+ \text{cos}2\theta}{2} d\theta  = \frac{\pi}{4}.
\end{equation*}
Combining the above three displays, we conclude that
\begin{equation*}
\|Z_t^{-1/2} - (Z^{\star})^{-1/2}\| \leq  \frac{\|Z^{\star} - Z_t\|}{\pi} \int_0^{\infty} y^{-1/2} \frac{1}{(\gamma_S + y)^2} dy = \frac{1}{2\gamma_S^{3/2}} \|Z^{\star} - Z_t\|.
\end{equation*}
This completes the proof.

\subsection{Proof of Lemma \ref{lem:local_rate}}\label{app:B.3}

To prove this lemma, we have to localize the analysis by introducing the following stopping time: for any given $k \geq 0$,~$r > 0$,
\begin{equation}\label{equ:def_stopping_time}
\tau_{k, r} \coloneqq \inf\left\{t \geq k: \|\bx_t-\bx^{\star}\|>r\;\; \text{OR} \;\; \|B_t-B^{\star}\|>r\right\}.
\end{equation}
It is a stopping time since for any $t \geq k$,
\begin{equation*}
\{\tau_{k,r}\leq t\} = \bigcup_{s=k}^t\Big(\{\|\bx_s-\bx^{\star}\|>r\}\cup\{\|B_s-B^{\star}\|>r\}\Big)\in\mathcal F_{t-1},
\end{equation*}
where the last inclusion uses the fact that $(\bx_s,B_s)$ is $\mathcal F_{s-1}$-measurable for any $s\geq 1$. Furthermore, since $\bx_t \stackrel{a.s.}{\rightarrow} \bx^{\star}$~and~$B_t \stackrel{a.s.}{\rightarrow} B^{\star}$ given in Theorem~\ref{thm:as_convergence} and Lemma~\ref{lem:converge_rate_and_Bt_converge}, respectively, we know there exists a random variable $T_r$ with $\Pm( T_r < \infty)=1$~such that, for any realization of the sample sequence, $\|\bx_t - \bx^{\star}\|\leq r$ and $\|B_t- B^{\star} \| \leq r$ for any $t\geq T_r$. For this realization,~we know that if $k\geq T_r$, then $\tau_{k, r} = \infty$. Therefore, we conclude that
\begin{equation*}
\Pm(\tau_{k,r}<\infty) \leq \Pm(T_r>k) \rightarrow 0\quad \text{ as }\quad k\rightarrow\infty.
\end{equation*}
This suggests that for any given $\epsilon, r>0$, we can choose $k = k(\epsilon, r)$ (depending on $\epsilon, r$) such that $\Pm(\tau_{k,r}<\infty)\leq 0.5\epsilon$. Furthermore, for any $t\geq k$, we have
\begin{equation}\label{equ:tau}
\mathbb{P}(\tau_{k,r} \leq t) \leq \mathbb{P}(\tau_{k, r} < \infty) < \frac{\epsilon}{2}.
\end{equation}
Next, we introduce how we choose $r>0$. To this end, we introduce the following decomposition of the iterate error~$\Delta_t \coloneqq \bx_t - \bx^{\star}$.

\begin{lemma} \label{lem:error_recursion}
The iterate error $\Delta_t$ of the algorithm scheme \eqref{equ:update} can be decomposed as 
\begin{equation} \label{equ:xt_recursion}
\Delta_{t+1} = (I - \varphi_t \mathcal{R})\Delta_t + \varphi_t \btheta_t + \varphi_t \boldsymbol{\delta}_t,
\end{equation}
where $\mathcal{R} \coloneqq I - K^{\star}$, $(\btheta_t, \mathcal{F}_t)_{t \geq 0}$ is a martingale difference sequence and $\boldsymbol{\delta}_t$ is the error term, which are defined by
\begin{subequations} \label{rec:def}
\begin{flalign}
\boldsymbol{\theta}_t &\coloneqq -  (I - K_t) B_t^{-1} (g_t - \nabla f_t ) + (\bz_{t, \tau} - (I - K_t) \Delta \bx_t), \label{rec:def:b}\\ 
\bdelta_{t} &\coloneqq (K_t - K^{\star}) \Delta_t - (I - K_t)\left[(B^{\star})^{-1}(\nabla f_t - B^{\star}\Delta_t)\right] - (I - K_t) \left[B_t^{-1} - (B^{\star})^{-1}\right] \nabla f_t. \label{rec:def:c}
\end{flalign}
\end{subequations}		
\end{lemma}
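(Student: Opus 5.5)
This identity is pure algebra; I would verify it by expanding $\Delta_{t+1}=\Delta_t+\varphi_t\bz_{t,\tau}$ from \eqref{equ:update} and splitting $\bz_{t,\tau}$ into a martingale part and a drift part. The martingale part $\btheta_t$ splits into gradient noise and sketching noise; the drift part is the linearization $-\mathcal{R}\Delta_t$ plus the remainder $\bdelta_t$.

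\textbf{Step 1 (martingale part).} I would write
\[
\bz_{t,\tau}=\bigl(\bz_{t,\tau}-(I-K_t)\Delta\bx_t\bigr)+(I-K_t)\Delta\bx_t .
\]
Since $\Delta\bx_t=-B_t^{-1}g_t$, I would further write $(I-K_t)\Delta\bx_t=-(I-K_t)B_t^{-1}(g_t-\nabla f_t)-(I-K_t)B_t^{-1}\nabla f_t$. The first summand of the decomposition together with the noise term $-(I-K_t)B_t^{-1}(g_t-\nabla f_t)$ is exactly $\btheta_t$ in \eqref{rec:def:b}. For the martingale-difference property, Lemma \ref{lem:inner_transition} already shows that $\bz_{t,\tau}-(I-K_t)\Delta\bx_t$ is a martingale difference. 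The other term is also one, because $K_t$ and $B_t$ are $\mathcal{F}_{t-1}$-measurable and $\mathbb{E}[g_t\mid\mathcal{F}_{t-1}]=\nabla f_t$ by Assumption \ref{ass:2}. Measurability of $\btheta_t$ with respect to $\mathcal{F}_t$ is immediate from the definitions of the filtrations.

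\textbf{Step 2 (drift part).} Here I would show that
\[
-(I-K_t)B_t^{-1}\nabla f_t=-(I-K^\star)\Delta_t+\bdelta_t .
\]
I would add and subtract terms in sequence:
\[
-(I-K_t)B_t^{-1}\nabla f_t=-(I-K_t)\bigl[B_t^{-1}-(B^\star)^{-1}\bigr]\nabla f_t-(I-K_t)(B^\star)^{-1}\bigl(\nabla f_t-B^\star\Delta_t\bigr)-(I-K_t)\Delta_t ,
\]
and then use $-(I-K_t)\Delta_t=-(I-K^\star)\Delta_t+(K_t-K^\star)\Delta_t$. Collecting these three pieces gives precisely \eqref{rec:def:c}.

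\textbf{Step 3.} Substituting into $\Delta_{t+1}=\Delta_t+\varphi_t\bz_{t,\tau}$ gives \eqref{equ:xt_recursion}. The only point needing care is Step 1: confirming that $K_t$ and $B_t$ are $\mathcal{F}_{t-1}$-measurable, so the gradient-noise term has zero conditional mean. The rest is bookkeeping.
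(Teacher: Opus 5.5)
Your proposal is correct and follows essentially the same route as the paper. Both arguments add and subtract $(I-K_t)\Delta\bx_t$, substitute $\Delta\bx_t=-B_t^{-1}g_t$, split off the gradient noise, and then peel off $B_t^{-1}-(B^\star)^{-1}$, the linearization error $\nabla f_t-B^\star\Delta_t$, and $K_t-K^\star$ to isolate $-\mathcal{R}\Delta_t$. Your explicit check that $K_t$ and $B_t$ are $\mathcal{F}_{t-1}$-measurable, so the gradient-noise term has zero conditional mean, fills in a step the paper leaves implicit.
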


In addition, by Lemma~\ref{lem:prop_sketch_solver} and the condition on $\tau$ that $\tau (1 -\sqrt{\mu_t/\nu_t})^{\tau - 2}\leq \gamma_H/(4\Upsilon_{H})$, we know $\| K_t \| \leq \gamma_H/(2\Upsilon_{H}) < 0.5$. By the convergence of $K_t \stackrel{a.s.}{\rightarrow} K^{\star}$ in Lemma~\ref{lem:converge_rate_and_Bt_converge}, $\Vert K^{\star} \Vert \leq \gamma_H/(2\Upsilon_{H}) < 0.5$. By the definition of $\boldsymbol{\delta}_t$ in \eqref{rec:def:c}, we have
\begin{align} \label{nequ:4}
\|\bdelta_t\| \; & \leq \;  \frac{3}{2} \left(\|(B^\star)^{-1}\|\cdot\|\nabla f_t - B^{\star}(\bx_t - \bx^{\star})\| + \|B_t^{-1} - (B^\star)^{-1}\|\cdot\|\nabla f_t\|\right) + \|K_t - K^{\star}\|\cdot \|\bx_t - \bx^{\star}\| \nonumber\\
& \stackrel{\mathclap{\eqref{equ:K_t-Kstar}}}{\lesssim} \; \frac{3 \Upsilon_{L}}{2 \gamma_H}\|\bx_t - \bx^{\star}\|^2 + \frac{3}{2 \gamma_H^2} \|B_t - B^{\star}\|\cdot\|\nabla f_t\| +  \|B_t - B^{\star}\|\cdot\|\bx_t - \bx^{\star}\|,
\end{align}
where the first inequality is due to $\|K_t\| < 0.5$ and the second inequality also uses Assumptions~\ref{ass:1} and \ref{ass:2}. By the~$\Upsilon_H$-Lipschitz continuity of $\nabla f(\bx)$ as implied by Assumption \ref{ass:1}, \eqref{nequ:4} directly implies that there exists $C_{\delta} > 0$ such that
\begin{equation} \label{equ:delta_bound}
\|\bdelta_t\| \leq C_{\delta} \|\bx_t - \bx^{\star}\|^2 + C_{\delta} \|B_t - B^{\star}\| \cdot \|\bx_t - \bx^{\star}\|, \quad \forall t \geq 0.
\end{equation}
Let $\lambda_m \coloneqq \lambda_{\min}(\mathcal{R}) > 0.5$. By the condition on $C_{\varphi}$ that $2 C_{\varphi} \lambda_m > 1$ when $\varphi = 1$, we choose $\eta > 0$ such that $2 C_{\varphi} \lambda_m (1 - \eta) > 1$. Then, for such an $\eta > 0$, we choose $r > 0$ small enough to satisfy
\begin{equation} \label{equ:r_conditions}
2 C_{\delta} r \leq \eta \lambda_m \quad \text{ and } \quad 4 C_{\delta}^2 r^2 \leq \eta \lambda_m.
\end{equation}
With the above chosen $r>0$ and any given $\epsilon>0$, by \eqref{equ:tau}, we can choose $k=k(\epsilon)$ large enough so that $\mathbb{P}(\tau_{k,r} \leq t) < 0.5 \epsilon$ for any $t \geq k$. Here, $r$ has been chosen as a constant and hence the dependency of $k$ on $r$ is suppressed.

With this result, we next establish the convergence rates in probability of $B_t$, $\alpha_t$, $\beta_t$, $\gamma_t$, and $K_t$ separately.

\noindent $\bullet$ {\textbf{Convergence rate of $B_t$.}}
We follow the above discussion. To obtain $O_p(\sqrt{\varphi_t})$ rate for $\|B_t - B^{\star}\|$, for any given $\epsilon > 0$~and any $M > 0$, we investigate the following quantity
\begin{align}\label{equ:B_t_Op}
\mathbb{P}\!\left(\|B_t-B^{\star}\| > M\sqrt{\varphi_t}\right) &= \mathbb{P}\!\left(\|B_t-B^{\star}\|^2 > M^2\varphi_t\right) \nonumber\\
& = \mathbb{P}\!\left(\{\|B_t-B^{\star}\|^2 > M^2\varphi_t\} \cap \{\tau_{k,r} > t\}\right) + \mathbb{P}\!\left(\{\|B_t-B^{\star}\|^2 > M^2\varphi_t\} \cap \{\tau_{k,r} \leq t\}\right) \nonumber\\
& \leq \mathbb{P}\!\left(\|B_t-B^{\star}\|^2 \1_{\{\tau_{k,r} > t\}} > M^2\varphi_t\right) + \mathbb{P}(\tau_{k, r} \leq t) \nonumber\\
& \leq \frac{\mE[\|B_t-B^{\star}\|^2 \1_{\{\tau_{k,r} > t\}}]}{M^2\varphi_t} + \frac{\epsilon}{2}.
\end{align}
By the above derivation, we see that it suffices to show $\mE[\|B_t-B^{\star}\|^2 \1_{\{\tau_{k,r} > t\}}] \leq C_{B}\,\varphi_t$ for $t$ large enough with some~constant $C_B > 0$. Then, we can choose $M = M(\epsilon)=\sqrt{2C_{B}/\epsilon}$ such that $\mathbb{P}\!\left(\|B_t-B^{\star}\| > M\sqrt{\varphi_t}\right)\leq \epsilon$, which further implies~$\|B_t - B^{\star}\| = O_p(\sqrt{\varphi_t})$.

By \eqref{equ:42}, we obtain
\begin{align} \label{equ:B_t_local_rate}
\mE[\|B_t-B^{\star}\|^2 \1_{\{\tau_{k,r} > t\}}] &\leq 2 \mE\left[\left\|\frac{1}{t}\sum_{i = 0}^{t-1}(H_i - \nabla^2 f_i)\right\|^2\right] + 2 \mE \left[\left(\frac{\Upsilon_L}{t} \sum_{i=0}^{t-1} \|\bx_i - \bx^{\star}\| \right)^2 \1_{\{\tau_{k,r} > t\}}\right] \nonumber\\
&\leq 2 \mE\left[\left\|\frac{1}{t}\sum_{i = 0}^{t-1}(H_i - \nabla^2 f_i)\right\|^2\right] + \frac{2\Upsilon_L^2}{t^2} \mE \left[\left(\sum_{i = 0}^{t-1} \|\bx_i - \bx^{\star}\| \1_{\{\tau_{k,r} > i\}} \right)^2 \right] \nonumber\\
&\leq 2 \mE \left[\left\|\frac{1}{t}\sum_{i = 0}^{t-1}(H_i - \nabla^2 f_i)\right\|^2\right] + \frac{2\Upsilon_L^2}{t^2} \left(\sum_{i = 0}^{t-1} \left(\mE \left[\|\bx_i - \bx^{\star}\|^2 \1_{\{\tau_{k,r} > i\}}\right] \right)^{1/2} \right)^2,
\end{align}
where the second inequality uses the fact $\{\tau_{k,r} > t\}\subseteq\{\tau_{k,r} > i\}$ for $i\leq t$, and the last inequality uses the Cauchy-Schwarz inequality.

\vskip4pt
\noindent $\bullet\bullet$ {\textbf{Convergence rate of the first term in~\eqref{equ:B_t_local_rate}.}}
To control the first term on the right-hand side, we first establish the bound for $\mathbb{E}[\|\bx_t - \bx^{\star}\|^2]$ as follows. We take full expectation on both sides of \eqref{equ:51} and obtain for all $t$ large enough,
\begin{align*}
\mathbb{E}[f_{t+1} - f^{\star}] & \leq \mathbb{E}[f_t - f^{\star}] - \frac{\varphi_t}{4\Upsilon_H} \mathbb{E}[\|\nabla f_t\|^2] + \frac{\Upsilon_{H}(1+C_K)^2 C_{g, 2}}{2\gamma_H^2} \varphi_t^2 \\
& \stackrel{\mathclap{\eqref{equ:46}}}{\leq} \; \mathbb{E}[f_t - f^{\star}] - \frac{\gamma_H\varphi_t}{2\Upsilon_H} \mathbb{E}[f_t - f^{\star}] + \frac{\Upsilon_{H}(1+C_K)^2 C_{g, 2}}{2\gamma_H^2} \varphi_t^2 \\
& = \left(1 - \frac{\gamma_H\varphi_t}{2\Upsilon_H} \right) \mathbb{E}[f_t - f^{\star}] + \frac{\Upsilon_{H}(1+C_K)^2 C_{g, 2}}{2\gamma_H^2} \varphi_t^2.
\end{align*}
By \citet[Lemma 3]{Leluc2023Asymptotic}, we conclude $\limsup_{t \rightarrow \infty} \mE[f_{t} - f^{\star}]=0$ and hence $\mE[f_{t} - f^{\star}] \lesssim 1$. Applying~\eqref{equ:46} gives us
\begin{equation}\label{equ:53}
\mathbb{E}[\|\bx_t - \bx^{\star}\|^2] \leq \frac{2}{\gamma_H}\mathbb{E}[f_{t} - f^{\star}] \lesssim 1.
\end{equation}
We now return to the first term on the right-hand side of~\eqref{equ:B_t_local_rate} and obtain
\begin{align} \label{equ:B_t_local_rate_term1}
\hskip-1cm \mathbb{E}\left[\left\|\frac{1}{t}\sum_{i = 0}^{t-1} (H_i - \nabla^2 f_i) \right\|^2\right] &\leq \mathbb{E}\left[\left\|\frac{1}{t}\sum_{i = 0}^{t-1} (H_i - \nabla^2 f_i) \right\|^2_{F}\right] \nonumber\\
& = \frac{1}{t^2} \sum_{i = 0}^{t-1} \mathbb{E} \left[\left\|H_i - \nabla^2 f_i\right\|^2_F\right] \leq \frac{d}{t^2} \sum_{i = 0}^{t-1} \mathbb{E} \left[\left\|H_i - \nabla^2 f_i\right\|^2\right] \nonumber\\
& = \frac{d}{t^2} \sum_{i = 0}^{t-1} \mathbb{E}[\mathbb{E}[\|H_i - \nabla^2 f_i\|^2 \mid \mathcal{F}_{i-1}]] \leq \frac{d}{t^2} \sum_{i = 0}^{t-1} \mathbb{E}[C_{H, 1}\|\bx_i - \bx^{\star}\|^2 + C_{H, 2}] \;\stackrel{\mathclap{\eqref{equ:53}}}{\lesssim}\; \frac{1}{t}.
\end{align}
Here, the second equality uses the conditional orthogonality of martingale differences (via the tower property) under the Frobenius inner product and the second-to-last inequality is implied by Assumption~\ref{ass:2_Hessian} with $q_H = 2$.

\vskip4pt
\noindent $\bullet\bullet$ {\textbf{Convergence rate of the second term in~\eqref{equ:B_t_local_rate}.}}
To control the second term on the right-hand side, we first establish~the bound for $u_t \coloneqq \mE\left[\|\Delta_t\|^2 \1_{\{\tau_{k,r} > t\}}\right]$, where we recall that $\Delta_t=\bx_t-\tx$. By the one-step recursion of $\Delta_t$ in Lemma~\ref{lem:error_recursion}, $\forall t \geq k$, we have
\begin{align} \label{equ:indicator_x_recursion}
& \mE\left[\|\Delta_{t+1}\|^2 \1_{\{\tau_{k,r} > t\}} \mid \mathcal{F}_{t-1}\right] = \mE\left[\left\|(I - \varphi_t \mathcal{R})\Delta_t + \varphi_t \btheta_t + \varphi_t \boldsymbol{\delta}_t\right\|^2 \1_{\{\tau_{k,r} > t\}} \big| \mathcal{F}_{t-1}\right] \nonumber\\
&\leq (1 - \varphi_t \lambda_m)^2 \Vert \Delta_t \Vert^2 \1_{\{\tau_{k,r} > t\}} + \varphi_t^2 \mE\left[\Vert \btheta_t + \bdelta_t \Vert^2 \1_{\tau_{k,r} > t} \big| \mathcal{F}_{t-1}\right] + 2\varphi_t\left\langle (I-\varphi_t \mathcal{R})\Delta_t,\;\bdelta_t\,\mathbf{1}_{\{\tau_{k, r}>t\}}\right\rangle \nonumber\\
&\leq (1 - \varphi_t \lambda_m)^2 \Vert \Delta_t \Vert^2 \1_{\{\tau_{k,r} > t\}} + 2 \varphi_t^2 \mE\left[\Vert \btheta_t \Vert^2 \big| \mathcal{F}_{t-1}\right] + 2 \varphi_t^2 \Vert \bdelta_t \Vert^2 \mathbf{1}_{\{\tau_{k,r}>t\}} \nonumber\\
& \quad + 2\varphi_t\left\langle (I-\varphi_t \mathcal{R})\Delta_t,\;\bdelta_t\,\mathbf{1}_{\{\tau_{k,r}>t\}}\right\rangle,
\end{align}
Here, the second inequality holds since $(\btheta_t, \mathcal{F}_t)$ is a martingale difference sequence and $\mathbf{1}_{\{\tau_{k,r}>t\}}$ is $\mathcal{F}_{t-1}$-measurable; the third inequality follows from the Young's inequality. Below we deal with each term on the right-hand side of~\eqref{equ:indicator_x_recursion} and~then derive the one-step recursion for $u_t = \mE\left[\|\Delta_t\|^2 \1_{\{\tau_{k,r} > t\}}\right]$.

For the \textbf{second} term in~\eqref{equ:indicator_x_recursion}, we state the following lemma.

\begin{lemma} \label{lem:B1}
Under the conditions of Lemma~\ref{lem:local_rate}, for $q = q_g > 2$, there exist $C_{q,1}, C_{q,2} > 0$ such that
\begin{equation*}
\mathbb{E}\left[\Vert \btheta_t \Vert^{q} \middle| \mathcal{F}_{t-1} \right] \leq C_{q,1} \Vert \bx_t - \bx^{\star} \Vert^q + C_{q,2}.
\end{equation*}	
\end{lemma}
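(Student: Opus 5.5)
We need to bound the $q$-th conditional moment of $\btheta_t = -(I-K_t)B_t^{-1}(g_t-\nabla f_t) + (\bz_{t,\tau}-(I-K_t)\Delta\bx_t)$. The plan is to split $\btheta_t$ into these two pieces and use $\|a+b\|^q\le 2^{q-1}(\|a\|^q+\|b\|^q)$. Each piece is then controlled by a constant times $\|g_t\|$ or $\|g_t-\nabla f_t\|$. We then convert to the growth form with Assumption \ref{ass:2} and the Lipschitz bound $\|\nabla f_t\|\le\Upsilon_H\|\bx_t-\bx^\star\|$ (from Assumption \ref{ass:1} and $\nabla f^\star=\0$).

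\textbf{First piece.} Since $K_t$ and $B_t$ are $\mF_{t-1}$-measurable, Lemma \ref{lem:prop_sketch_solver} together with the condition on $\tau$ gives $\|I-K_t\|\le 3/2$. From \eqref{ass:2:c2} we have $\|B_t^{-1}\|\le 1/\gamma_H$. Therefore
\[
\mE[\|(I-K_t)B_t^{-1}(g_t-\nabla f_t)\|^q\mid\mF_{t-1}]\le (3/(2\gamma_H))^q\,\mE[\|g_t-\nabla f_t\|^q\mid\mF_{t-1}],
\]
and Assumption \ref{ass:2} with $q_g=q$ bounds the right-hand side by $(3/(2\gamma_H))^q(C_{g,1}\|\bx_t-\bx^\star\|^q+C_{g,2})$.

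\textbf{Second piece.} By Lemma \ref{lem:inner_transition}, $\bz_{t,\tau}=(I-\tK_t)\Delta\bx_t$. Hence the second piece equals $-(\tK_t-K_t)\Delta\bx_t = (\tK_t-K_t)B_t^{-1}g_t$.

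The only subtlety is that $\tK_t$ is random. Its operator norm, however, is bounded deterministically by \eqref{equ:uniform_bound_tK}: $\|\tK_t\|\le C_K=\sqrt2(2+1/\sqrt{\gamma_S})^\tau$. The same bound passes to $\|K_t\|$ by Jensen's inequality. So $\|\tK_t-K_t\|\le 2C_K$ almost surely, regardless of the coupling between the sketches and $\xi_t$, and no moment assumption on $S$ is needed here. This gives
\[
\|(\tK_t-K_t)B_t^{-1}g_t\|^q\le (2C_K/\gamma_H)^q\|g_t\|^q\le (2C_K/\gamma_H)^q\,2^{q-1}\big(\|\nabla f_t\|^q+\|g_t-\nabla f_t\|^q\big).
\]
Taking conditional expectations and applying Assumption \ref{ass:2} again yields a bound of the form $c\,\|\bx_t-\bx^\star\|^q+c'$.

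Adding the two pieces gives constants $C_{q,1},C_{q,2}$ that depend only on $q,\gamma_H,\Upsilon_H,\gamma_S,\tau,C_{g,1},C_{g,2}$. I expect no real obstacle. The one point to check is the uniform almost-sure bound on $\|\tK_t\|$, which \eqref{equ:uniform_bound_tK} supplies through $\gamma_t\le 1/\sqrt{\gamma_S}$ in \eqref{prop_gamma_t}. Conditioning through $\mF_{t-0.5}$ is unnecessary because all the bounds used hold pathwise.
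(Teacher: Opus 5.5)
Your proposal is correct and matches the paper's proof essentially step by step. Both use the same two-piece split of $\btheta_t$, the bounds $\|I-K_t\|\le 3/2$ and $\|B_t^{-1}\|\le 1/\gamma_H$, the deterministic bound $\|\tK_t\|\le C_K$ from \eqref{equ:uniform_bound_tK}, and then Assumption~\ref{ass:2} with $\|\nabla f_t\|\le\Upsilon_H\|\bx_t-\bx^\star\|$. The only cosmetic difference is that you bound $\|\tK_t-K_t\|$ by $2C_K$ via Jensen, whereas the paper uses $C_K+1/2$ from $\|K_t\|<1/2$.
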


By Lemma~\ref{lem:B1} with $q = q_g > 2$, we have
\begin{equation} \label{equ:2_2_bound}
\mathbb{E}\!\left[\|\btheta_t\|^2 \big| \mathcal{F}_{t-1}\right] \leq \left(\mathbb{E}\!\left[\|\btheta_t\|^{q_g} \big| \mathcal{F}_{t-1}\right]\right)^{2/q_g} \leq \left(C_{q_g, 1}\|\Delta_t\|^{q_g} + C_{q_g, 2} \right)^{2/q_g} \leq C_{q_g,1}^{2/q_g}\|\Delta_t\|^2 + C_{q_g,2}^{2/q_g},
\end{equation}
Here, the first step follows from the Jensen's inequality; and the last inequality uses that $2/q_g\in(0,1)$ and hence $(a+b)^{2/q_g} \leq a^{2/q_g}+b^{2/q_g}$ for $a, b \geq 0$.

For the \textbf{third} term in~\eqref{equ:indicator_x_recursion}, we first obtain from \eqref{equ:delta_bound} that
\begin{equation}\label{equ:delta} 
\|\bdelta_t\|\,\1_{\{\tau_{k,r}>t\}} \leq C_\delta r\|\Delta_t\|\,\1_{\{\tau_{k,r}>t\}} + C_\delta r \|\Delta_t\|\,\1_{\{\tau_{k,r}>t\}} = 2 C_\delta r \|\Delta_t\|\,\1_{\{\tau_{k,r}>t\}},
\end{equation}
where the first inequality applies the definition of the stopping time $\tau_{k,r}$ in \eqref{equ:def_stopping_time}. Then, by the choice of $r$ mentioned in~\eqref{equ:r_conditions}, we have
\begin{equation} \label{equ:2_3_bound}
\|\bdelta_t\|^2 \,\1_{\{\tau_{k,r}>t\}} \leq 4 C_\delta^2 r^2 \|\Delta_t\|^2 \,\1_{\{\tau_{k,r}>t\}} \leq \eta \lambda_m \|\Delta_t\|^2 \,\1_{\{\tau_{k,r}>t\}},
\end{equation}
where we recall that $\eta$ is chosen so that $2 C_{\varphi} \lambda_m (1 - \eta) > 1$.

For the \textbf{last} term in~\eqref{equ:indicator_x_recursion}, we have
\begin{equation} \label{equ:2_4_bound}
\left\langle (I-\varphi_t \mathcal{R})\Delta_t,\;\bdelta_t\,\mathbf{1}_{\{\tau_{k,r}>t\}}\right\rangle \stackrel{\eqref{equ:delta}}{\leq} \|I-\varphi_t \mathcal{R}\|\,\|\Delta_t\|\cdot 2 C_\delta r\|\Delta_t\|\,\mathbf{1}_{\{\tau_{k,r}>t\}} \stackrel{\eqref{equ:r_conditions}}{\leq} \eta\lambda_m\|\Delta_t\|^2\,\mathbf{1}_{\{\tau_{k,r}>t\}},
\end{equation}
where the second inequality also uses the fact that $0\preceq I-\varphi_t \mathcal{R} = (1-\varphi_t)I + \varphi_tK^\star\preceq I$ for $t$ large enough.

Combining~\eqref{equ:2_2_bound}, \eqref{equ:2_3_bound} and~\eqref{equ:2_4_bound} with~\eqref{equ:indicator_x_recursion}, we have
\begin{align} \label{equ:indcator_x_final_recursion}
\mathbb{E}\!\left[\|\Delta_{t+1}\|^2\mathbf{1}_{\{\tau_{k,r}>t\}} \big| \mathcal{F}_{t-1}\right] & \leq (1-\varphi_t\lambda_m)^2 \|\Delta_t\|^2 \mathbf{1}_{\{\tau_{k,r}>t\}} + 2 \varphi_t^2 (C_{q_g,1}^{2/q_g}\|\Delta_t\|^2 + C_{q_g,2}^{2/q_g}) \nonumber\\
& \quad + \varphi_t^2 \cdot 2 \eta \lambda_m \|\Delta_t\|^2 \mathbf{1}_{\{\tau_{k,r}>t\}} + \varphi_t\cdot 2\eta\lambda_m\|\Delta_t\|^2\mathbf{1}_{\{\tau_{k,r}>t\}}, \nonumber\\
\implies u_{t+1} = \mathbb{E}\!\left[\|\Delta_{t+1}\|^2\mathbf{1}_{\{\tau_{k,r}>t+1\}}\right] & \leq \mathbb{E}\!\left[\|\Delta_{t+1}\|^2\mathbf{1}_{\{\tau_{k,r}>t\}}\right] \leq (1-\varphi_t\lambda_m)^2 u_t + 2 \varphi_t^2 (C_{q_g,1}^{2/q_g} \mE\left[\|\Delta_t\|^2\right] + C_{q_g,2}^{2/q_g}) \nonumber\\
& \quad + \varphi_t^2 \cdot 2\eta \lambda_m\,\mathbb{E}\left[\|\Delta_t\|^2\right] + \varphi_t\cdot 2\eta\lambda_m \, u_t \nonumber\\
&= \bigl[1-\varphi_t\cdot 2(1-\eta)\lambda_m\big]u_t + C_u \cdot \varphi_t^2.
\end{align}
Here, such a constant $C_u>0$ exists since $u_t \leq \mathbb{E}[\|\Delta_t\|^2]\lesssim 1$ as established in~\eqref{equ:53}.

By \citet[Lemma 3]{Leluc2023Asymptotic}, we obtain for $\varphi\in(0.5,1]$
\begin{align} \label{equ:indicator_x_bound}
& \limsup_{t \rightarrow \infty} \frac{u_t}{\varphi_t} \ \lesssim \
\begin{cases}
\frac{1}{2 C_{\varphi} (1-\eta) \lambda_m} \cdot C_{u}, & \varphi \in (0.5, 1), \\
\frac{1}{2 C_{\varphi} (1-\eta) \lambda_m - 1} \cdot C_{u}, & \varphi = 1,
\end{cases} 
\implies u_t = \mathbb{E}\!\left[\|\Delta_t\|^2\mathbf{1}_{\{\tau_{k,r}>t\}}\right] \lesssim \varphi_t.	
\end{align}
Combining~\eqref{equ:indicator_x_bound} with the second term on the right-hand side of~\eqref{equ:B_t_local_rate}, we obtain
\begin{align} \label{equ:B_t_local_rate_term2}
\frac{2\Upsilon_L^2}{t^2} \left(\sum_{i = 0}^{t-1} \left(\mE \left[\|\bx_i - \bx^{\star}\|^2 \1_{\{\tau_{k,r} > i\}}\right] \right)^{1/2} \right)^2 &\stackrel{\eqref{equ:indicator_x_bound}}{\lesssim} \frac{1}{t^2} \left(\sum_{i=0}^{t-1} \sqrt{\varphi_i}\right)^2 = \frac{C_{\varphi}}{t^2} \left(\sum_{i=0}^{t-1}\frac{1}{(i+1)^{\varphi/2}}\right)^2 \nonumber\\
& \lesssim \frac{C_{\varphi}}{t^2} \left(\int_{0}^{t}\frac{1}{(x+1)^{\varphi/2}}\;dx\right)^2 \leq \frac{C_{\varphi}}{(t+1)^\varphi} = \varphi_t.
\end{align}
Plugging~\eqref{equ:B_t_local_rate_term1} and~\eqref{equ:B_t_local_rate_term2} into~\eqref{equ:B_t_local_rate} yields
\begin{equation} \label{equ:B_t_l2_rate}
\mE[\|B_t-B^{\star}\|^2 \1_{\{\tau_{k,r} > t\}}] \lesssim 1/t + \varphi_t \lesssim \varphi_t, \quad\quad \varphi \in \left(0.5, 1\right].
\end{equation}
As discussed around~\eqref{equ:B_t_Op}, the above display implies
\begin{equation} \label{equ:B_t_Op_rate}
\|B_t - B^{\star}\| = O_p(\sqrt{\varphi_t}).
\end{equation}
\noindent $\bullet$ {\textbf{Convergence rates of $(\alpha_t, \beta_t, \gamma_t)$.}}
The result immediately follows from \eqref{equ:alpha_beta_gamma_bounds} and~\eqref{equ:B_t_Op_rate}.

\noindent $\bullet$ {\textbf{Convergence rate of $K_t$.}} The result immediately follows from \eqref{equ:K_t-Kstar} and~\eqref{equ:B_t_Op_rate}.

This completes the proof of Lemma \ref{lem:local_rate}.

\subsection{Proof of Lemma \ref{lem:error_recursion}}

By Lemma~\ref{lem:inner_transition}, $\bz_{t,\tau}-(I-K_t)\Delta\bx_t$ forms a martingale difference sequence with respect to filtration $\{\mF_t\}_{t\geq0}$. Recalling that $B^{\star}=\nabla^2 f(\bx^{\star})$, we decompose the error $\Delta_{t+1} = \bx_{t+1} - \bx^{\star}$ as follows
\begin{align*}
& \Delta_{t+1} = \bx_{t+1} - \bx^{\star} = \bx_t - \bx^{\star} + \varphi_t \bz_{t,\tau} \\
&= \bx_t - \bx^{\star} + \varphi_t (I - K_t) \Delta \bx_t + \varphi_t (\bz_{t,\tau} - (I - K_t)\Delta \bx_t) \\
&= \bx_t - \bx^{\star} - \varphi_t (I - K_t) B_t^{-1} g_t + \varphi_t (\bz_{t,\tau} - (I - K_t)\Delta \bx_t) \\
&= \bx_t - \bx^{\star} - \varphi_t (I - K_t) B_t^{-1} \nabla f_t - \varphi_t (I - K_t) B_t^{-1} (g_t - \nabla f_t ) + \varphi_t (\bz_{t,\tau} - (I - K_t)\Delta \bx_t) \\
&= \bx_t - \bx^{\star} - \varphi_t (I - K_t)(B^{\star})^{-1} \nabla f_t - \varphi_t (I - K_t) [B_t^{-1} - (B^{\star})^{-1}] \nabla f_t + \varphi_t \btheta_t \\
&= \{I - \varphi_t(I - K_t)\}(\bx_t - \bx^{\star}) - \varphi_t (I - K_t)[(B^{\star})^{-1}(\nabla f_t - B^{\star}(\bx_t - \bx^{\star}))] - \varphi_t (I - K_t) [B_t^{-1} - (B^{\star})^{-1}] \nabla f_t + \varphi_t \btheta_t \\
&= \{I - \varphi_t(I - K^{\star})\}\Delta_t + \varphi_t (K_t - K^{\star}) \Delta_t - \varphi_t (I - K_t)[(B^{\star})^{-1}(\nabla f_t - B^{\star}\Delta_t)] - \varphi_t (I - K_t) [B_t^{-1} - (B^{\star})^{-1}] \nabla f_t + \varphi_t \btheta_t \\
&= \{I - \varphi_t(I - K^{\star})\} \Delta_t + \varphi_t \bdelta_t + \varphi_t \btheta_t,	
\end{align*}
where
\begin{equation*}
\btheta_t = - (I - K_t) B_t^{-1} (g_t - \nabla f_t ) + (\bz_{t,\tau} - (I - K_t)\Delta \bx_t)
\end{equation*}
forms a martingale difference with respect to filtration $\{\mF_t\}_{t\geq0}$, and
\begin{equation*}
\bdelta_t = (K_t - K^{\star}) \Delta_t - (I - K_t)[(B^{\star})^{-1}(\nabla f_t - B^{\star}\Delta_t)] - (I - K_t) [B_t^{-1} - (B^{\star})^{-1}] \nabla f_t
\end{equation*}
collects the higher-order approximation errors. This completes the proof of Lemma~\ref{lem:error_recursion}.

\subsection{Proof of Lemma \ref{lem:B1}}

For $q_g > 2$ in Assumption~\ref{ass:2}, we let $q = q_g$ and have
\begin{align*}
& \mathbb{E}[\|\btheta_t\|^q \mid \mathcal{F}_{t-1}]\;\; \stackrel{\mathclap{\eqref{rec:def:b}}}{\leq}\;\; 2^{q-1}\left\{\mathbb{E}[\|(I-K_t)B_t^{-1}(g_t - \nabla f_t)\|^q \mid \mathcal{F}_{t-1}] + \mathbb{E}[\|\bz_{t, \tau} - (I-K_t)\Delta \bx_t\|^q \mid \mathcal{F}_{t-1}]\right\} \\
& \leq 2^{q-1}\left(\mathbb{E}[\|(I-K_t)B_t^{-1}\|^q \|g_t - \nabla f_t\|^q\mid \mathcal{F}_{t-1}] + \mathbb{E}[\|(\tK_t - K_t)\Delta \bx_t\|^q \mid \mathcal{F}_{t-1}]\right) \quad (\text{by Lemma \ref{lem:inner_transition}})\\
& \stackrel{\mathclap{\eqref{equ:uniform_bound_tK}}}{\leq} 2^{q-1}\left\{\left(\frac{3}{2\gamma_H} \right)^{q} \mathbb{E}[\|g_t - \nabla f_t\|^q \mid \mathcal{F}_{t-1}] + \left(C_K + \frac{1}{2}\right)^q \mathbb{E}[\|\Delta \bx_t\|^q \mid \mathcal{F}_{t-1}]\right\} \quad (\text{also }\|K_t\|\leq 0.5)\\
& \leq 2^{q-1}\left\{\left(\frac{3}{2\gamma_H} \right)^{q} \left(C_{g, 1}\|\bx_t - \bx^{\star}\|^q + C_{g, 2} \right) + \left(C_H + \frac{1}{2}\right)^q \mathbb{E}[\|B_t^{-1} g_t\|^q \mid \mathcal{F}_{t-1}] \right\} \\
& \stackrel{\mathclap{\eqref{ass:2:c2}}}{\leq} \frac{1}{2\gamma_H^q} \left\{3^{q} \left(C_{g, 1}\|\bx_t - \bx^{\star}\|^q + C_{g, 2} \right) + \left(2C_H + 1\right)^q \mathbb{E}[\|g_t\|^q \mid \mathcal{F}_{t-1}] \right\} \\
& \leq \frac{1}{2\gamma_H^q} \left\{3^{q} \left(C_{g, 1}\|\bx_t - \bx^{\star}\|^q + C_{g, 2} \right) + \left(2C_H + 1\right)^q \cdot 2^{q-1} (\|\nabla f_t\|^q + \mathbb{E}[\|g_t - \nabla f_t\|^q \mid \mathcal{F}_{t-1}])\right\} \\
& \leq \frac{1}{2\gamma_H^q} \left\{3^{q} \left(C_{g, 1}\|\bx_t - \bx^{\star}\|^q + C_{g, 2} \right) + \frac{1}{2}\left(4C_H + 2\right)^q  (\Upsilon_H^q \|\bx_t - \bx^{\star}\|^q + C_{g, 1}\|\bx_t - \bx^{\star}\|^q + C_{g, 2} )\right\} \\
& \eqqcolon C_{q,1} \|\bx_t - \bx^{\star}\|^q + C_{q,2}.	
\end{align*}
Here, the second inequality holds due to Lemma~\ref{lem:inner_transition}; and the fourth and the second last inequalities are both direct applications of Assumption~\ref{ass:2}. This completes the proof.

\subsection{Proof of Theorem \ref{thm:asymptotic_normality}}

Let $\Phi_t \coloneqq \sum_{i=0}^{t-1} \varphi_i$ and $\lambda_M \coloneqq \lambda_{\max}(I - K^{\star})$.
By Lemma~\ref{lem:converge_rate_and_Bt_converge}, we have $K_t \stackrel{a.s.}{\rightarrow} K^{\star}$.
Moreover, since $\|K^{\star}\|\le \gamma_H/(2\Upsilon_H) < 0.5$, we know the matrix $\mathcal{R} = I - K^{\star}$ is positive definite with $\lambda_m = \lambda_{\min}(\mathcal{R}) > 0.5$. Define
\begin{equation}\label{equ:At}
\Gamma_t \coloneqq \mathbb E\!\left[\btheta_{t+1}\btheta_{t+1}^{\top}\mid\mathcal F_t\right],
\qquad\qquad
\mathcal A_t \coloneqq \bigl\{\|\Gamma_t\|\le 2\|\Gamma^{\star}\|\vee 1\bigr\}.
\end{equation}
With these definitions, we can rewrite~\eqref{equ:xt_recursion} as
\begin{equation} \label{equ:xt_recursion_2}
\Delta_{t+1} = (I - \varphi_t \mathcal{R})\Delta_t + \varphi_t \btheta_t \mathbf{1}_{\mathcal{A}_{t-1}} + \varphi_t \boldsymbol{\delta}_t + \varphi_t \btheta_t \mathbf{1}_{\mathcal{A}_{t-1}^c}.
\end{equation}
Applying~\eqref{equ:xt_recursion_2} recursively gives us
\begin{equation*}
\Delta_t = \prod_{i=0}^{t-1}(I - \varphi_i \mathcal{R}) \Delta_0 + \sum_{i=0}^{t-1} \prod_{j=i+1}^{t-1} (I - \varphi_j \mathcal{R}) \varphi_i [\btheta_i \mathbf{1}_{\mathcal{A}_{i-1}} + \boldsymbol{\delta}_i + \btheta_i \mathbf{1}_{\mathcal{A}_{i-1}^c}],
\end{equation*}
which implies
\begin{align} \label{equ:error_decomposition}
\frac{1}{\sqrt{\varphi_{t-1}}} \Delta_t & =  \frac{1}{\sqrt{\varphi_{t-1}}} \Pi_{t, 0} \Delta_0  + \frac{1}{\sqrt{\varphi_{t-1}}} \sum_{i=0}^{t-1} \varphi_i \Pi_{t, i+1} \btheta_i \mathbf{1}_{\mathcal{A}_{i-1}} + \frac{1}{\sqrt{\varphi_{t-1}}} \sum_{i=0}^{t-1} \varphi_i \Pi_{t, i+1} \boldsymbol{\delta}_i + \frac{1}{\sqrt{\varphi_{t-1}}} \sum_{i=0}^{t-1} \varphi_i \Pi_{t, i+1} \btheta_i \mathbf{1}_{\mathcal{A}_{i-1}^c} \nonumber\\
& \eqqcolon \mathcal{I}_{1, t} + \mathcal{I}_{2, t} + \mathcal{I}_{3, t} + \mathcal{I}_{4, t}.
\end{align}
Here, we define $\Pi_{n, m} \coloneqq \prod\limits_{i=m}^{n-1} (I - \varphi_i \mathcal{R})$ for $m<n$ and $I$ otherwise. In what follows, we analyze each term on the right-hand side of~\eqref{equ:error_decomposition} and establish the following claims:
\begin{enumerate}
\renewcommand{\labelenumi}{\textbf{(\alph{enumi})}}
\item $\mathcal{I}_{1, t} \stackrel{a.s.}{\longrightarrow} \mathbf{0}$, \, as $t \rightarrow \infty$;
\item $\mathcal{I}_{2, t} \stackrel{d}{\longrightarrow} \mathcal{N}(\mathbf{0}, \Sigma^{\star})$, \, as $t \rightarrow \infty$, \, where $\Sigma^{\star}$ satisfies $(\mathcal{R}-\zeta I) \Sigma^{\star} + \Sigma^{\star} (\mathcal{R}-\zeta I) = \Gamma^{\star}$ and $\zeta \coloneqq \mathbf{1}_{\{\varphi=1\}} / (2 C_{\varphi})$;
\item $\mathcal{I}_{3, t} \stackrel{p}{\longrightarrow} \mathbf{0}$, \, as $t \rightarrow \infty$;
\item $\mathcal{I}_{4, t} \stackrel{a.s.}{\longrightarrow} \mathbf{0}$, \, as $t \rightarrow \infty$.	
\end{enumerate}
By the four claims above, Slutsky's theorem, and the fact that $\varphi_t/\varphi_{t-1}\rightarrow 1$, we immediately finish the proof of Theorem~\ref{thm:asymptotic_normality}.

\noindent $\bullet$ {\textbf{Proof of (a).}} Since $\varphi_t \rightarrow 0$ as $t \rightarrow \infty$, there exists $j \in \mathbb{N}$ such that $1 - \varphi_i \lambda_M > 0, \, \forall \, i > j$. Therefore, we have
\begin{align*}
\| \mathcal{I}_{1, t} \| & \leq \frac{1}{\sqrt{\varphi_{t-1}}} \left\Vert \prod_{i=0}^{t-1} (I - \varphi_i \mathcal{R}) \right\Vert \Vert \Delta_0 \Vert \\
& \leq \frac{1}{\sqrt{\varphi_{t-1}}} \left\Vert \prod_{i=0}^{j} (I - \varphi_i \mathcal{R}) \right\Vert \Vert \Delta_0 \Vert \prod_{i=j+1}^{t-1} (1 -\varphi_i \lambda_m) \\
& \leq \frac{1}{\sqrt{\varphi_{t-1}}} \left\Vert \prod_{i=0}^{j} (I - \varphi_i \mathcal{R}) \right\Vert \Vert \Delta_0 \Vert \prod_{i=j+1}^{t-1} \exp(-\varphi_i \lambda_m) \\
& = \left\Vert \prod_{i=0}^{j} (I - \varphi_i \mathcal{R}) \right\Vert \Vert \Delta_0 \Vert \exp(\lambda_m \Phi_{j+1}) \cdot \exp\left(-\lambda_m \Phi_t - 0.5 \log \varphi_{t-1}\right).
\end{align*}
Note that $\left\Vert \prod_{i=0}^{j} (I - \varphi_i K) \right\Vert \Vert \Delta_0 \Vert \exp(\lambda_m \Phi_{j+1})$ is independent of $t$ and let
\begin{equation} \label{def:d_t}
d_t \coloneqq -\lambda_m \Phi_t - 0.5 \log \varphi_{t-1},
\end{equation}
for which we have the following:
\begin{equation*}
d_t\ \lesssim\
\begin{cases}
-\,t^{\,1-\varphi}+\log t, & \varphi\in\left(\frac12, 1\right),\\
-\left(\lambda_m C_\varphi-0.5\right)\log t, & \varphi=1.	
\end{cases}
\end{equation*}
Since $C_{\varphi} > 1/(2\lambda_m)$ , we obtain $d_t \rightarrow -\infty$ and $\mathcal{I}_{1, t} \stackrel{a.s.}{\rightarrow} 0$ as $t \rightarrow \infty$.

\noindent $\bullet$ {\textbf{Proof of (b).}} Recall that $\mathcal{I}_{2, t} = 1/\sqrt{\varphi_{t-1}} \sum_{i=0}^{t-1} \varphi_i \Pi_{t, i+1} \btheta_i \mathbf{1}_{\mathcal{A}_{i-1}}$. Since $\mathcal{A}_{i-1}$ is $\mathcal{F}_{i-1}$-measurable, $\mathcal{I}_{2,t}$ is a sum of martingale increments and we can show its limiting distribution via the following central limit theorem for martingale arrays. For simplicity, we let
\begin{equation*}
W_{t, i} = \frac{\varphi_i}{\sqrt{\varphi_{t-1}}} \Pi_{t, i+1} \btheta_i \mathbf{1}_{\mathcal{A}_{i-1}}.
\end{equation*}

\begin{lemma} \citep[Corollary 3.1]{Hall2014Martingale} \label{lemma:hall}
Let $\left(W_{t,i}\right)$ be a triangular array of random vectors such that
\begin{align} \label{hh1}
& \mathbb{E}\left[W_{t,i} \middle| \mathcal{F}_{i-1}\right] = \mathbf{0}, \\
& \sum_{i=0}^{t-1} \mathbb{E}\left[W_{t,i} W_{t,i}^{\top} \middle| \mathcal{F}_{i-1} \right] \stackrel{p}{\rightarrow} V^* \succeq \mathbf{0}, \label{hh2}\\
&  \sum_{i=0}^{t-1} \mathbb{E}\left[\|W_{t,i}\| ^2 \mathbf{1}_{\{\Vert W_{t,i} \Vert > \epsilon\}} \middle| \mathcal F_{i-1} \right] \stackrel{p}{\rightarrow} 0, \quad \forall \,\epsilon > 0,\label{hh3}	
\end{align}
then $\sum\limits_{i=0}^{t-1} W_{t,i} \stackrel{d}{\rightarrow} \mathcal{N} (\mathbf{0}, V^*) $ as $ t \rightarrow \infty$.	
\end{lemma}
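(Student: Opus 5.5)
This is a classical result (\citet[Corollary 3.1]{Hall2014Martingale}). To verify it directly, I would reduce to the scalar case and then run McLeish's characteristic-function argument. The one structural feature to keep in mind is that the filtration $\{\mathcal F_i\}$ does not depend on the row index $t$, so the arrays are automatically nested.

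\textbf{Step 1 (Cramér--Wold).} Fix $\boldsymbol{a}\in\mathbb{R}^d$ and set $X_{t,i}\coloneqq \boldsymbol{a}^\top W_{t,i}$. By \eqref{hh1}, $X_{t,i}$ is a martingale difference array. By \eqref{hh2}, $\sum_i \mathbb{E}[X_{t,i}^2\mid\mathcal F_{i-1}]\stackrel{p}{\to}\boldsymbol{a}^\top V^*\boldsymbol{a}\eqqcolon \sigma^2$. Since $|X_{t,i}|\le\|\boldsymbol{a}\|\|W_{t,i}\|$, the conditional Lindeberg condition \eqref{hh3} transfers to $X_{t,i}$. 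It therefore suffices to show $\sum_i X_{t,i}\stackrel{d}{\to}\mathcal N(0,\sigma^2)$ for every $\boldsymbol{a}$.

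\textbf{Step 2 (truncation by a stopping time).} Let $V_{t,k}\coloneqq\sum_{i< k}\mathbb{E}[X_{t,i}^2\mid\mathcal F_{i-1}]$, which is predictable. Define $\tilde X_{t,i}\coloneqq X_{t,i}\mathbf 1_{\{V_{t,i+1}\le \sigma^2+1\}}$. Then $\tilde X_{t,i}$ is still a martingale difference array, its total conditional variance is at most $\sigma^2+1$, and $P\big(\sum_i\tilde X_{t,i}\ne\sum_i X_{t,i}\big)\to0$ by \eqref{hh2}.

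Next, using the Lindeberg condition, I would show $\max_i|\tilde X_{t,i}|\stackrel{p}{\to}0$. The argument is
\[
P\Big(\max_i|\tilde X_{t,i}|>\epsilon\Big)\le \epsilon^{-2}\,\mathbb{E}\Big[\min\Big\{1,\;\textstyle\sum_i\mathbb{E}[\tilde X_{t,i}^2\mathbf 1_{\{|\tilde X_{t,i}|>\epsilon\}}\mid\mathcal F_{i-1}]\Big\}\Big]+o(1),
\]
which follows from a Lenglart-type domination inequality. The same domination lets me upgrade the conditional conditions to unconditional ones on the truncated array. In particular, $\sum_i\tilde X_{t,i}^2\stackrel{p}{\to}\sigma^2$, because $\sum_i(\tilde X_{t,i}^2-\mathbb{E}[\tilde X_{t,i}^2\mid\mathcal F_{i-1}])$ is a martingale whose $L^1$ norm vanishes after a further truncation at $\epsilon$.

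\textbf{Step 3 (characteristic function).} I would use the identity $e^{\mathrm{i}x}=(1+\mathrm{i}x)\exp(-x^2/2+r(x))$ with $|r(x)|\le|x|^3$. Writing $T_t\coloneqq\prod_i(1+\mathrm{i}u\tilde X_{t,i})$, this gives
\[
\exp\Big(\mathrm{i}u\textstyle\sum_i\tilde X_{t,i}\Big)=T_t\exp\Big(-\tfrac{u^2}{2}\sum_i\tilde X_{t,i}^2+\sum_i r(u\tilde X_{t,i})\Big).
\]
The remainder satisfies $\big|\sum_i r(u\tilde X_{t,i})\big|\le |u|^3\max_i|\tilde X_{t,i}|\sum_i\tilde X_{t,i}^2\stackrel{p}{\to}0$, so the exponential factor converges in probability to $e^{-u^2\sigma^2/2}$. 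Since $\mathbb{E}[T_t]=1$ by the martingale property, it remains to show $(T_t)$ is uniformly integrable.

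\textbf{Main obstacle: uniform integrability of $T_t$.} One has $|T_t|^2=\prod_i(1+u^2\tilde X_{t,i}^2)\le\exp(u^2\sum_i\tilde X_{t,i}^2)$. This is not bounded a priori, because only the \emph{conditional} variance is controlled by the truncation, not the realized sum $\sum_i\tilde X_{t,i}^2$.

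My first attempt would be a second truncation at the first index where the realized sum $\sum_{j\le i}\tilde X_{t,j}^2$ exceeds $\sigma^2+1$. With this truncation, $|T_t|^2\le e^{u^2(\sigma^2+1)}(1+u^2\max_i\tilde X_{t,i}^2)$, whose expectation is bounded because $\mathbb{E}\max_i\tilde X_{t,i}^2\le\epsilon^2+\mathbb{E}\sum_i\tilde X_{t,i}^2\mathbf 1_{\{|\tilde X_{t,i}|>\epsilon\}}$ stays bounded. By Step 2, the second truncation changes the sum only on an event of vanishing probability. Bounded second moments give uniform integrability, so $\mathbb{E}[T_t\,e^{-u^2\sigma^2/2}]\to e^{-u^2\sigma^2/2}$.

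Combining these pieces via Slutsky's theorem gives $\mathbb{E}\exp\big(\mathrm{i}u\sum_i\tilde X_{t,i}\big)\to e^{-u^2\sigma^2/2}$. By Lévy's continuity theorem and Step 1, this yields $\sum_i W_{t,i}\stackrel{d}{\to}\mathcal N(\mathbf 0,V^*)$.
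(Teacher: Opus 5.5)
Your proposal is correct and follows essentially the same route as the standard McLeish argument behind Hall--Heyde's Theorem~3.2 and Corollary~3.1, which the paper invokes only by citation: Cram\'er--Wold, a predictable truncation of the conditional variance, Lenglart's domination inequality to turn the conditional Lindeberg condition into $\max_i|\tilde X_{t,i}|\stackrel{p}{\rightarrow}0$ and $\sum_i\tilde X_{t,i}^2\stackrel{p}{\rightarrow}\sigma^2$, and a second predictable truncation making $\mathbb{E}|T_t|^2$ bounded. The one step you should write out explicitly is the last: writing $Y_t$ for the exponential factor, $\mathbb{E}[T_t(Y_t-e^{-u^2\sigma^2/2})]\rightarrow 0$ does not follow from Slutsky alone; it follows from $T_t(Y_t-e^{-u^2\sigma^2/2})\stackrel{p}{\rightarrow}0$ together with the bound $|T_t(Y_t-e^{-u^2\sigma^2/2})|\le 1+|T_t|$ (since $|T_tY_t|=1$), whose right-hand side is uniformly integrable.
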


To proceed, we also need the following lemma.

\begin{lemma} \label{lem:3}
Under the conditions of Theorem \ref{thm:asymptotic_normality}, we have the following convergence as $t \rightarrow \infty$
\begin{equation*}
\Gamma_{t} = \mathbb{E}[\boldsymbol{\theta}_{t+1} \boldsymbol{\theta}_{t+1}^{\top} \mid \mathcal{F}_{t}] \stackrel{a.s.}{\rightarrow} \mathbb{E}[(I - \tK^{\star}) \Omega^{\star} (I - \tK^{\star})^{\top}] = \Gamma^{\star}.
\end{equation*}	
\end{lemma}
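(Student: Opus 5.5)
Write $\btheta_{t+1} = \ba_{t+1} + \bb_{t+1}$ with $\ba_{t+1}\coloneqq -(I-\tK_{t+1})B_{t+1}^{-1}(g_{t+1}-\nabla f_{t+1})$. This uses $\bz_{t+1,\tau} = (I-\tK_{t+1})\Delta\bx_{t+1}$ from Lemma \ref{lem:inner_transition}, which gives
\begin{equation*}
\btheta_{t+1} = -(I-\tK_{t+1})B_{t+1}^{-1}g_{t+1} + (I-K_{t+1})B_{t+1}^{-1}\nabla f_{t+1}.
\end{equation*}
I would condition first on $\mF_{t+0.5}$, which fixes $\xi_{t+1}$, and then on $\mF_t$. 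Because the sketches $\{S_{t+1,j}\}_j$ are independent of $\xi_{t+1}$ and $\mE[\tK_{t+1}\mid\mF_{t+0.5}] = K_{t+1}$, the identity
\begin{equation*}
\mE[\btheta_{t+1}\btheta_{t+1}^\top\mid\mF_t] = \mE\bigl[(I-\tK_{t+1})B_{t+1}^{-1}g_{t+1}g_{t+1}^\top B_{t+1}^{-1}(I-\tK_{t+1})^\top\mid\mF_t\bigr] - (I-K_{t+1})B_{t+1}^{-1}\nabla f_{t+1}\nabla f_{t+1}^\top B_{t+1}^{-1}(I-K_{t+1})^\top
\end{equation*}
holds. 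Here $B_{t+1}$ and $\bx_{t+1}$ are $\mF_t$-measurable. The second term tends to $\0$ a.s., since $\nabla f_{t+1}\to\nabla f^\star=\0$ by Theorem \ref{thm:as_convergence}, $\|B_{t+1}^{-1}\|\le 1/\gamma_H$, and $\|K_{t+1}\|<0.5$.

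For the first term, independence again lets me integrate out the sketches for fixed $g_{t+1}$. Setting $\Psi(B,\alpha,\beta,\gamma;W) \coloneqq \mE_S[(I-\tK(B,\alpha,\beta,\gamma))W(I-\tK(B,\alpha,\beta,\gamma))^\top]$, the first term equals $\Psi(B_{t+1},\alpha_{t+1},\beta_{t+1},\gamma_{t+1};W_{t+1})$ with
\begin{equation*}
W_{t+1}\coloneqq B_{t+1}^{-1}\mE[g_{t+1}g_{t+1}^\top\mid\mF_t]B_{t+1}^{-1}.
\end{equation*}
The map $\Psi$ is linear in $W$. The goal then splits into two parts: (i) $W_{t+1}\to\Omega^\star$ a.s., and (ii) $\Psi(B_{t+1},\dots;W)\to\Psi(B^\star,\alpha^\star,\beta^\star,\gamma^\star;W)$ uniformly over bounded $W$. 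Together these give $\Gamma_t\to\Gamma^\star$.

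For (i), write $\mE[g_{t+1}g_{t+1}^\top\mid\mF_t] = \Sigma_g(\bx_{t+1})$ with $\Sigma_g(\bx)\coloneqq\mE_\xi[\nabla F(\bx;\xi)\nabla F(\bx;\xi)^\top]$. Note that this needs the sample to be fresh, which is implicit in $\xi\sim\mathcal P$ i.i.d. Continuity of $\Sigma_g$ at $\bx^\star$ would follow from $\|\nabla F(\bx;\xi)-\nabla F(\bx^\star;\xi)\|\le\Upsilon_H\|\bx-\bx^\star\|$ (Assumption \ref{ass:1}) together with the second-moment bound of Assumption \ref{ass:2}, via a Cauchy--Schwarz estimate of the form
\begin{equation*}
\|\Sigma_g(\bx)-\Sigma_g(\bx^\star)\|\lesssim \|\bx-\bx^\star\|\bigl(\|\bx-\bx^\star\|+\mE^{1/2}\|\nabla F(\bx^\star;\xi)\|^2\bigr).
\end{equation*}
Combining this with $\bx_{t+1}\to\bx^\star$ and $B_{t+1}\to B^\star$ a.s. (Lemma \ref{lem:converge_rate_and_Bt_converge}) gives $W_{t+1}\to\Omega^\star$, and in particular $\sup_t\|W_{t+1}\|<\infty$ a.s.

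For (ii), which I expect to be the main obstacle, I would work pathwise in $S$ and couple both sides through the same sketches $\{S_j\}$. Then
\begin{equation*}
\|\Psi_{t+1}(W)-\Psi^\star(W)\|\le \mE_S\bigl[\|\tK_{t+1}-\tK^\star\|(2+\|\tK_{t+1}\|+\|\tK^\star\|)\bigr]\|W\|.
\end{equation*}
The bound \eqref{equ:uniform_bound_tK} gives $\|\tK\|\le C_K$ deterministically. To bound $\|\tK_{t+1}-\tK^\star\|$ I would telescope the product $\prod_j\tC_{j}$ exactly as in \eqref{equ:103}, using the uniform bound $\|\tC_{t,j}\|\le 2+1/\sqrt{\gamma_S}$. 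This reduces the problem to
\begin{equation*}
\|\tC_{t+1,j}-\tC^\star_j\|\lesssim |\alpha_{t+1}-\alpha^\star|+|\beta_{t+1}-\beta^\star|+|\gamma_{t+1}-\gamma^\star|+\|\tZ_{t+1,j}-\tZ^\star_j\|.
\end{equation*}
By Lemma \ref{lem:continuity_projection_matrix}, the last term is at most $2\gamma_H^{-1}\|B_{t+1}-B^\star\|\,\|S_j\|\|S_j^\dagger\|$. Taking $\mE_S$ with $q_S=1$ and using \eqref{equ:alpha_beta_gamma_bounds} yields $\|\Psi_{t+1}-\Psi^\star\|_{op}\lesssim\|B_{t+1}-B^\star\|\to0$ a.s. One subtlety needs care: the product over $j$ contains several factors of $\|S_j\|\|S_j^\dagger\|$ only if the terms are bounded crudely. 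The telescoping keeps exactly one difference factor per term and bounds all other factors by deterministic constants, so only the first moment of the condition number is needed. Finally, since $\Psi^\star$ is continuous in $W$, (i) and (ii) combine with the vanishing second term to give $\Gamma_t\stackrel{a.s.}{\to}\mE[(I-\tK^\star)\Omega^\star(I-\tK^\star)^\top]=\Gamma^\star$.
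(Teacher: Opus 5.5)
Your proof is correct, but it organizes the computation differently from the paper.

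The paper splits $\boldsymbol{\theta}_t$ into a gradient-noise part $-(I-K_t)B_t^{-1}(g_t-\nabla f_t)$ and a sketching-noise part $(K_t-\tilde{K}_t)\Delta\bx_t$, and shows the two are conditionally orthogonal. It then proves separately that $\mathcal{J}_{1,t}\to(I-K^{\star})\Omega^{\star}(I-K^{\star})^{\top}$ and $\mathcal{J}_{2,t}\to\mathbb{E}[(\tilde{K}^{\star}-K^{\star})\Omega^{\star}(\tilde{K}^{\star}-K^{\star})^{\top}]$. The second limit needs a further four-term expansion of $g_t=(g_t-\nabla f_t)+\nabla f_t$, with two vanishing cross terms and a vanishing $\nabla f_t$ term (Lemma \ref{lem:2}). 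Finally, the two limits are recombined using $K^{\star}=\mathbb{E}[\tilde{K}^{\star}]$.

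You instead write $\boldsymbol{\theta}_{t+1}$ as minus a conditionally centered vector $X$ and use $\mathrm{Cov}(X)=\mathbb{E}[XX^{\top}]-\mathbb{E}[X]\mathbb{E}[X]^{\top}$ with the uncentered second moment. You exploit $\nabla f(\bx^{\star})=\mathbf{0}$ twice: the squared-mean term vanishes, and the uncentered second moment of $\nabla F(\bx^{\star};\xi)$ equals the matrix defining $\Omega^{\star}$. You then package the sketch average as the linear map $\Psi(\cdot\,;W)$ and prove the uniform operator-norm bound $\|\Psi_{t+1}-\Psi^{\star}\|\lesssim\|B_{t+1}-B^{\star}\|$. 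This lands directly on $\mathbb{E}[(I-\tilde{K}^{\star})\Omega^{\star}(I-\tilde{K}^{\star})^{\top}]$ with no cross-term bookkeeping and no recombination step.

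The analytic inputs are the same in both routes:
\begin{itemize}
\item continuity of the gradient second moment at $\bx^{\star}$, as in \eqref{equ:67}--\eqref{equ:70};
\item the coupled telescoping bound $\mathbb{E}_S\|\tilde{K}_{t+1}-\tilde{K}^{\star}\|\lesssim\|B_{t+1}-B^{\star}\|$, which needs only $q_S=1$, as in \eqref{equ:89'}.
\end{itemize}
Your version is therefore shorter and cleanly separates data randomness (part (i)) from sketch randomness (part (ii)). The paper's version has a different advantage: it exhibits $\Gamma^{\star}$ as the data-sampling covariance $(I-K^{\star})\Omega^{\star}(I-K^{\star})^{\top}$ plus an explicit excess $\mathbb{E}[(\tilde{K}^{\star}-K^{\star})\Omega^{\star}(\tilde{K}^{\star}-K^{\star})^{\top}]$ contributed by the randomized solver. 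That is the computational-statistical interpretation the paper emphasizes.
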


\noindent $\bullet\bullet$ {\textbf{Verifying \eqref{hh1}.}} By Lemma~\ref{lem:inner_transition}, we obtain
\begin{equation*}
\mathbb{E}[W_{t,i}| \mathcal{F}_{i-1}] = \frac{\varphi_i}{\sqrt{\varphi_{t-1}}} \Pi_{t, i+1} \mathbb{E}[\btheta_i | \mathcal{F}_{i-1}] \mathbf{1}_{\mathcal{A}_{i-1}} = \mathbf{0}.
\end{equation*}
\noindent $\bullet\bullet$ {\textbf{Verifying \eqref{hh2}.}} 
Let $\Sigma_t \coloneqq \sum_{i=0}^{t-1} \mathbb{E}\left[W_{t,i} W_{t,i}^{\top} \middle| \mathcal{F}_{i-1} \right]$ be the quadratic variation of $\mathcal{I}_{2, t}$. First, we show that $(\Sigma_t)$ is bounded as follows:
\begin{align*}
\Vert \Sigma_t \Vert & = \left\Vert \sum_{i=0}^{t-1} \frac{\varphi_i^2}{\varphi_{t-1}} \Pi_{t, i+1} \mathbb{E}\left[\btheta_i \btheta_i^T \middle| \mathcal{F}_{i-1}\right] \Pi_{t, i+1} \mathbf{1}_{\mathcal{A}_{i-1}}\right\Vert \\
& \leq \frac{1}{\varphi_{t-1}} \sum_{i=0}^{t-1} \varphi_i^2 \left\Vert \Pi_{t,i+1} \right\Vert^2 \Vert \Gamma_{i-1} \Vert \mathbf{1}_{\mathcal{A}_{i-1}} \stackrel{\eqref{equ:At}}{\leq} \frac{1}{\varphi_{t-1}} \sum_{i=0}^{t-1} \varphi_i^2 \left\Vert \Pi_{t,i+1} \right\Vert^2 \cdot (2 \|\Gamma^{\star}\| \vee 1).	
\end{align*}
Similar to the argument in the proof of \textbf{(a)}, there exists $j \in \mathbb{N}$ such that $1 - \varphi_i \lambda_M > 0, \, \forall \, i > j$. Therefore, for $t$ large enough, we have
\begin{equation*}
\Vert \Sigma_t \Vert \lesssim \underbrace{\frac{1}{\varphi_{t-1}} \sum_{i=0}^{j} \varphi_i^2 \left\Vert \Pi_{j+1, i+1} \right\Vert^2 \cdot \prod_{k=j+1}^{t-1} (1 - \varphi_k \lambda_m)^2}_{a_t} + \underbrace{\frac{1}{\varphi_{t-1}} \sum_{i=j+1}^{t-1} \varphi_i^2 \prod_{k=i+1}^{t-1} (1 - \varphi_k \lambda_m)^2}_{b_t}.
\end{equation*}
Then we analyze the two sequences $(a_t)$ and $(b_t)$ separately. For $(a_t)$, we have
\begin{multline*}
a_t = \frac{1}{\varphi_{t-1}} \sum_{i=0}^{j} \varphi_i^2 \left\Vert \Pi_{j+1, i+1} \right\Vert^2 \cdot \prod_{k=j+1}^{t-1} (1 - \varphi_k \lambda_m)^2  \\ \leq \frac{1}{\varphi_{t-1}} \left(\sum_{i=0}^{j} \varphi_i^2\right) \left(\max_{0\leq i \leq j}\left\Vert \Pi_{j+1, i+1} \right\Vert^2\right) \cdot \prod_{k=j+1}^{t-1} (1 - \varphi_k \lambda_m)^2 
\lesssim \frac{1}{\varphi_{t-1}} \prod_{k=j+1}^{t-1} \exp(-2\varphi_k \lambda_m) \stackrel{\eqref{def:d_t}}{\lesssim} \exp(2d_t).
\end{multline*}
As shown in the proof of \textbf{(a)}, $d_t \rightarrow -\infty$ as $t \rightarrow \infty$. Thus, $a_t \rightarrow 0$ as $t \rightarrow \infty$. For $(b_t)$, we have the following recursive relation:
\begin{align*}
& b_t = \frac{1}{\varphi_{t-1}} \sum_{i=j+1}^{t-1} \varphi_i^2 \prod_{k=i+1}^{t-1} (1 - \varphi_k \lambda_m)^2 = \frac{1}{\varphi_{t-1}}\left[\varphi_{t-1}^2 + \sum_{i=j+1}^{t-2}\varphi_i^2 \prod_{k=i+1}^{t-2}(1-\varphi_k \lambda_m)^2 \cdot (1-\varphi_{t-1} \lambda_m)^2\right] \\
& \implies \varphi_{t-1} b_t = (1 - \varphi_{t-1} \lambda_m)^2 \varphi_{t-2} b_{t-1} + \varphi_{t-1}^2.
\end{align*}
By \citet[Lemma 3]{Leluc2023Asymptotic}, we directly obtain
\begin{equation*}
\limsup_{t \rightarrow \infty} \frac{\varphi_{t-1} b_t}{1/t^{\varphi}} \ \leq \
\begin{cases}
\frac{1}{2 C_{\varphi} \lambda_m} \cdot C_{\varphi}^2, & \varphi \in (0.5, 1), \\
\frac{1}{2 C_{\varphi} \lambda_m - 1} \cdot C_{\varphi}^2, & \varphi = 1,
\end{cases} 
\quad \implies \quad \limsup_{t \rightarrow \infty} b_t  \ \leq \
\begin{cases}
\frac{1}{2 \lambda_m}, & \varphi \in (0.5, 1), \\
\frac{C_{\varphi}}{2 C_{\varphi} \lambda_m - 1}, & \varphi = 1.
\end{cases} 	
\end{equation*}
Thus, $(b_t)$ is bounded for $\varphi \in (0.5, 1]$. Combining the above four displays, we obtain that $\Vert \Sigma_t \Vert$ is bounded by a deterministic upper bound almost surely.
Furthermore, by the definition of $\Sigma_t$, we have the following recursive relation:
\begin{align*}
\varphi_{t-1} \Sigma_t &= \sum_{i=0}^{t-1} \varphi_i^2 \Pi_{t,i+1} \Gamma_{i-1} \Pi_{t,i+1} \mathbf{1}_{\mathcal{A}_{i-1}} \\
&= \varphi_{t-1}^2 \Gamma_{t-2} \mathbf{1}_{\mathcal{A}_{t-2}} + (I - \varphi_{t-1} \mathcal{R}) \left(\sum_{i=0}^{t-2} \varphi_i^2 \Pi_{t,i+1} \Gamma_{i-1} \Pi_{t,i+1} \mathbf{1}_{\mathcal{A}_{i-1}}\right) (I - \varphi_{t-1} \mathcal{R}) \\
&= \varphi_{t-1}^2 \Gamma_{t-2} \mathbf{1}_{\mathcal{A}_{t-2}} + \varphi_{t-2} (I - \varphi_{t-1} \mathcal{R}) \Sigma_{t-1} (I - \varphi_{t-1} \mathcal{R}) \\
&= \varphi_{t-1}^2 \Gamma_{t-2} \mathbf{1}_{\mathcal{A}_{t-2}} + \varphi_{t-2} \left[\Sigma_{t-1} - \varphi_{t-1} \mathcal{R} \Sigma_{t-1} - \varphi_{t-1} \Sigma_{t-1} \mathcal{R} + O\left(\varphi_{t-1}^2\right)\right],	
\end{align*}
where the last equality uses the result that $(\Sigma_t)$ is deterministically bounded as we have shown earlier. 
Dividing by $\varphi_{t-1}$ on both sides, we obtain
\begin{equation*}
\Sigma_t = \Sigma_{t-1} - \varphi_{t-1}\left(\mathcal{R} \Sigma_{t-1} + \Sigma_{t-1} \mathcal{R} - \Gamma_{t-2} \mathbf{1}_{{\mathcal{A}_{t-2}}}\right) + \frac{\varphi_{t-2} - \varphi_{t-1}}{\varphi_{t-1}} \Sigma_{t-1} + O\left(\varphi_{t-2} \varphi_{t-1} + |\varphi_{t-2} - \varphi_{t-1}|\right),
\end{equation*}
which gives us the following:
\begin{equation*}
\Sigma_t = 
\begin{cases}
\Sigma_{t-1} - \varphi_{t-1}\left(\mathcal{R} \Sigma_{t-1} + \Sigma_{t-1} \mathcal{R} - \Gamma_{t-2} \mathbf{1}_{{\mathcal{A}_{t-2}}}\right) + o(\varphi_{t-1}), & \varphi \in (0.5, 1), \\
\Sigma_{t-1} - \varphi_{t-1}\left[\left(\mathcal{R}-\frac{I}{2 C_{\varphi}}\right) \Sigma_{t-1} + \Sigma_{t-1} \left(\mathcal{R}-\frac{I}{2 C_{\varphi}}\right) - \Gamma_{t-2} \mathbf{1}_{{\mathcal{A}_{t-2}}}\right] + o(\varphi_{t-1}), & \varphi = 1.	
\end{cases}
\end{equation*}
Recall that $\zeta = \mathbf{1}_{\{\varphi=1\}} / (2 C_{\varphi})$ and define $\mathcal{R}_{\zeta} \coloneqq \mathcal{R} - \zeta I$. We can combine the above two cases for $\varphi \in (0.5, 1]$ as
\begin{equation*}
\Sigma_t = \Sigma_{t-1} - \varphi_{t-1}\left(\mathcal{R}_{\zeta} \Sigma_{t-1} + \Sigma_{t-1} \mathcal{R}_{\zeta} - \Gamma_{t-2} \mathbf{1}_{{\mathcal{A}_{t-2}}}\right) + o(\varphi_{t-1}).
\end{equation*}
Then, we vectorize this equation by letting $s_t = \text{vec}(\Sigma_t)$, and obtain
\begin{align*}
s_t &= s_{t-1} - \varphi_{t-1} \left[\text{vec}(\mathcal{R}_{\zeta} \Sigma_{t-1} + \Sigma_{t-1} \mathcal{R}_{\zeta}) - \text{vec}(\Gamma_{t-2} \mathbf{1}_{\mathcal{A}_{t-2}})\right] + o(\varphi_{t-1}) \\
&= s_{t-1} - \varphi_{t-1} \left[(I \otimes \mathcal{R}_{\zeta} + \mathcal{R}_{\zeta}\otimes I) \, s_{t-1} - \text{vec}(\Gamma^{\star})\right] + \varphi_{t-1} \text{vec}(\Gamma_{t-2} \mathbf{1}_{\mathcal{A}_{t-2}} - \Gamma^{\star}) + o(\varphi_{t-1}) \\
&= s_{t-1} - \varphi_{t-1}(Q \, s_{t-1} - \text{vec}(\Gamma^{\star})) + \varepsilon_{t-1} \varphi_{t-1},	
\end{align*}
where $Q = I \otimes \mathcal{R}_{\zeta} + \mathcal{R}_{\zeta} \otimes I$ and $\varepsilon_t \stackrel{a.s.}{\rightarrow} \mathbf{0}$, which is due to $\Gamma_{t} \stackrel{a.s.}{\rightarrow} \Gamma^{\star}$ given in Lemma~\ref{lem:3}. 
Let $s^{\star}$ be the solution of equation $Q s - \text{vec}(\Gamma^{\star}) = \mathbf{0}$. The existence and uniqueness of such a solution are guaranteed since $\mathcal{R}_{\zeta} = \mathcal{R} - \zeta I$ is positive definite. Since all eigenvalues of $\mathcal{R}_{\zeta}$ are strictly positive, the sum of any pair of eigenvalues is non-zero, implying that $Q$ is positive definite. For $t$ large enough, we have $1 - 2 \varphi_{t-1} \lambda_M > 0$ and hence $\Vert I - \varphi_{t-1} Q \Vert = 1 - 2 \varphi_{t-1} \lambda_m$. Then, we can rewrite the recursive relation as
\begin{equation*}
s_t - s^{\star} = (s_{t-1} - s^\star) - \varphi_{t-1} Q (s_{t-1} - s^\star) + \varepsilon_{t-1} \varphi_{t-1} = (I - \varphi_{t-1} Q)(s_{t-1} - s^\star) + \varepsilon_{t-1} \varphi_{t-1},
\end{equation*}
which implies
\begin{equation*}
\Vert s_t - s^\star \Vert \leq \Vert I - \varphi_{t-1} Q \Vert \Vert s_{t-1} - s^\star \Vert + \|\varepsilon_{t-1}\| \varphi_{t-1} = (1 - 2 \varphi_{t-1} \lambda_m) \Vert s_{t-1} - s^\star \Vert + \|\varepsilon_{t-1}\| \varphi_{t-1}.
\end{equation*}
By \citet[Lemma 3]{Leluc2023Asymptotic}, for $\varphi \in (0.5, 1]$, almost surely we have
\begin{equation*}
\limsup_{t \rightarrow \infty} \Vert s_t - s^{\star} \Vert \leq \frac{1}{2 C_{\varphi} \lambda_m} \cdot 0, \quad \implies \quad \lim_{t \rightarrow \infty} \Vert s_t - s^{\star} \Vert = 0,
\end{equation*}
which is equivalent to $\Sigma_t \stackrel{a.s.}{\rightarrow} \Sigma^{*}$, where $\Sigma^{*}$ is the unique solution of the Lyapunov equation
\begin{equation*}
(\mathcal{R}-\zeta I) \Sigma^{\star} + \Sigma^{\star} (\mathcal{R}-\zeta I) = \Gamma^{\star}.
\end{equation*}
\noindent $\bullet\bullet$ {\textbf{Verifying \eqref{hh3}.}} For simplicity, let
\begin{equation*}
\sigma_t^2 \coloneqq \sum_{i=0}^{t-1} \mathbb{E}\left[\|W_{t,i}\| ^2 \mathbf{1}_{\{\Vert W_{t,i} \Vert > \epsilon\}} \middle| \mathcal F_{i-1} \right]
\end{equation*}
for any given $\epsilon > 0$. Thus, it suffices to show that $\sigma_t^2 \stackrel{a.s.}{\rightarrow} 0$ as $t \rightarrow \infty$. Under Assumption~\ref{ass:2} with $q_g > 2$, we write~$q_g = 2 + \delta$ for some $\delta > 0$. Then, we have
\begin{align}\label{equ:sigma}
\sigma_t^2 & \leq \sum_{i=0}^{t-1} \frac{\varphi_i^2}{\varphi_{t-1}} \mathbb{E}\left[\Vert \Pi_{t, i+1}\Vert^2 \mathbf{1}_{\{\varphi_i \Vert \Pi_{t,i+1} \btheta_i\Vert > \epsilon \sqrt{\varphi_{t-1}}\}} \middle| \mathcal{F}_{i-1}\right] \leq \frac{1}{\epsilon^\delta} \sum_{i=0}^{t-1} \frac{\varphi_i^{2+\delta}}{\varphi_{t-1}^{1+\delta/2}} \mathbb{E}\left[\Vert \Pi_{t,i+1} \btheta_i \Vert^{2+\delta} \middle| \mathcal{F}_{i-1}\right] \nonumber\\
& \leq \frac{1}{\epsilon^{\delta}} \sum_{i=0}^{t-1} \left(\frac{\varphi_i}{\sqrt{\varphi_{t-1}}} \Vert \Pi_{t,i+1} \Vert\right)^{2+\delta} \mathbb{E}\left[\Vert \btheta_i \Vert^{2+\delta} \middle| \mathcal{F}_{i-1} \right].
\end{align}
By Lemma~\ref{lem:B1}, there exist $C_{2+\delta, 1}, C_{2+\delta, 2} > 0$ such that
\begin{equation*}
\mathbb{E}\left[\Vert \btheta_t \Vert^{2+\delta} \middle| \mathcal{F}_{t-1} \right] \leq C_{2+\delta, 1} \Vert \bx_t - \bx^{\star} \Vert^{2+\delta} + C_{2+\delta, 2}.
\end{equation*}
Since $\bx_t \stackrel{a.s.}{\rightarrow} \bx^{*}$ as $t \rightarrow \infty$, we can conclude that the sequence $\left(\mathbb{E}\left[\Vert \btheta_t \Vert^{2+\delta} \middle| \mathcal{F}_{t-1} \right]\right)$ is bounded almost surely. Therefore, in order to show $\sigma_t^2 \stackrel{a.s.}{\rightarrow} 0$ as $t \rightarrow \infty$, we only need to show
\begin{equation*}
S_t \coloneqq \sum_{i=0}^{t-1} \left(\frac{\varphi_i}{\sqrt{\varphi_{t-1}}} \Vert \Pi_{t,i+1} \Vert\right)^{2+\delta} \rightarrow 0 \quad \text{as \,} t \rightarrow \infty.
\end{equation*}
Again, similar to the argument in the proof of \textbf{(a)}, there exists $j \in \mathbb{N}$ such that $1 - \varphi_i \lambda_M > 0, \, \forall \, i > j$. Therefore, for~$t$~large enough, we have
\begin{equation*}
S_t \leq \underbrace{\frac{1}{\varphi_{t-1}^{1+\delta/2}} \sum_{i=0}^{j} \varphi_i^{2+\delta} \left\Vert \Pi_{j+1, i+1} \right\Vert^{2+\delta} \cdot \prod_{k=j+1}^{t-1} (1 - \varphi_k \lambda_m)^{2+\delta}}_{a_t^{\prime}} + \underbrace{\frac{1}{\varphi_{t-1}^{1+\delta/2}} \sum_{i=j+1}^{t-1} \varphi_i^{2+\delta} \prod_{k=i+1}^{t-1} (1 - \varphi_k \lambda_m)^{2+\delta}}_{b_t^{\prime}}.
\end{equation*}
We analyze the two sequences $(a_t^{\prime})$ and $(b_t^{\prime})$ separately. For $(a_t^{\prime})$, we have
\begin{align*}
a_t^{\prime} &= \frac{1}{\varphi_{t-1}^{1+\delta/2}} \sum_{i=0}^{j} \varphi_i^{2+\delta} \left\Vert \Pi_{j+1, i+1} \right\Vert^{2+\delta} \cdot \prod_{k=j+1}^{t-1} (1 - \varphi_k \lambda_m)^{2+\delta} \\
& \leq \frac{1}{\varphi_{t-1}^{1+\delta/2}} \left(\sum_{i=0}^{j} \varphi_i^{2+\delta}\right) \left(\max_{0\leq i \leq j}\left\Vert \Pi_{j+1, i+1} \right\Vert^{2+\delta}\right) \cdot \prod_{k=j+1}^{t-1} (1 - \varphi_k \lambda_m)^{2+\delta} \\
& \lesssim \frac{1}{\varphi_{t-1}^{1+\delta/2}} \prod_{k=j+1}^{t-1} \exp(-(2+\delta)\varphi_k \lambda_m) \stackrel{\eqref{def:d_t}}{\lesssim} \exp((2+\delta)d_t).
\end{align*}
As discussed in the proof of \textbf{(a)}, $d_t \rightarrow -\infty$ as $t \rightarrow \infty$. Thus, $a_t^{\prime} \rightarrow 0$ as $t \rightarrow \infty$. For $(b_t^{\prime})$, we have the following recursive relation:
\begin{equation*}
\begin{aligned}
& b_t^{\prime} = \frac{1}{\varphi_{t-1}^{1+\delta/2}} \sum_{i=j+1}^{t-1} \varphi_i^{2+\delta} \prod_{k=i+1}^{t-1} (1 - \varphi_k \lambda_m)^{2+\delta} = \frac{1}{\varphi_{t-1}^{1+\delta/2}}\left[\varphi_{t-1}^{2+\delta} + \sum_{i=j+1}^{t-2}\varphi_i^{2+\delta} \prod_{k=i+1}^{t-2}(1-\varphi_k \lambda_m)^{2+\delta} \cdot (1-\varphi_{t-1} \lambda_m)^{2+\delta}\right], \\
& \implies \varphi_{t-1}^{1+\delta/2} b_t^{\prime} = (1 - \varphi_{t-1} \lambda_m)^{2+\delta} \varphi_{t-2}^{1+\delta/2} b_{t-1}^{\prime} + \varphi_{t-1}^{2+\delta}.
\end{aligned}
\end{equation*}
By \citet[Lemma 3]{Leluc2023Asymptotic}, we directly obtain
\begin{equation*}
\limsup_{t \rightarrow \infty} \frac{\varphi_{t-1}^{1+\delta/2} b_t^{\prime}}{(1/t^{\varphi})^{1+\delta/2}} \ \leq \
\begin{cases}
\frac{1}{(2+\delta) C_{\varphi} \lambda_m} \limsup_{t \rightarrow \infty} (\frac{1}{t^{\varphi}})^{\delta/2}, & \varphi \in (0.5, 1), \\
\frac{1}{(2+\delta) C_{\varphi} \lambda_m - (1+\delta/2)} \limsup_{t \rightarrow \infty} (\frac{1}{t^{\varphi}})^{\delta/2}, & \varphi = 1,
\end{cases}
\quad \implies \quad \lim_{t \rightarrow \infty} b_t^{\prime} = 0.
\end{equation*}
Combining the above four displays, we can conclude that $S_t \rightarrow 0$ and hence $\sigma_t^2 \stackrel{a.s.}{\rightarrow} 0$ as $t \rightarrow \infty$ by \eqref{equ:sigma}. Therefore, all the conditions of Lemma \ref{lemma:hall} are satisfied, and we conclude that
\begin{equation*}
\mathcal{I}_{2, t} = \frac{1}{\sqrt{\varphi_{t-1}}} \sum_{i=0}^{t-1} \varphi_i \Pi_{t, i+1} \btheta_i \mathbf{1}_{\mathcal{A}_{i-1}} \xrightarrow{d} \mathcal{N}(\mathbf{0}, \Sigma^{*}),
\end{equation*}
where $\Sigma^{*}$ satisfies $(\mathcal{R} - \zeta I) \Sigma^{*} + \Sigma^{*} (\mathcal{R} - \zeta I) = \Gamma^{\star}$.

\noindent $\bullet$ {\textbf{Proof of (c).}} By arguments analogous to \eqref{equ:B_t_Op}, it suffices to show that
\begin{equation*}
\mE[\Vert \mathcal{I}_{3,t} \Vert \1_{\{\tau_{k,r}>t\}}] = \frac{1}{\sqrt{\varphi_t}} \mE\left[\left\Vert \sum_{i=0}^{t-1} \varphi_i \Pi_{t, i+1} \bdelta_i \right\Vert \1_{\{\tau_{k,r}>t\}}\right] \rightarrow 0 \quad \text{as } \, t \rightarrow \infty.
\end{equation*}
Here, the stopping time $\tau_{k,r}$ is defined as in Section~\ref{app:B.3}, with the same choice of $\eta$ and the corresponding parameters $r$~and $k$. For simplicity, let
\begin{equation*}
e_t \coloneqq \mE\left[\left\Vert \sum_{i=0}^{t-1} \varphi_i \Pi_{t, i+1} \bdelta_i \right\Vert \1_{\{\tau_{k,r}>t\}}\right].
\end{equation*}
For all sufficiently large $t$ (in particular, for $t\ge k$), the sequence $\{e_t\}$ satisfies the following recursion:
\begin{align} \label{equ:e_t_recursion_1}
e_t &\leq \mE\left[\left\Vert \varphi_{t-1} \bdelta_{t-1} \right\Vert \1_{\{\tau_{k,r}>t\}}\right] + \mE\left[\left\Vert \left(I - \varphi_{t-1}\mathcal{R}\right)\left(\sum_{i=0}^{t-2} \varphi_i \Pi_{t-1, i+1} \bdelta_i\right) \right\Vert \1_{\{\tau_{k,r}>t\}}\right] \nonumber\\
&\leq \varphi_{t-1} \mE\left[\left\Vert \bdelta_{t-1} \right\Vert \1_{\{\tau_{k,r}>t-1\}}\right] + (1-\varphi_{t-1} \lambda_m) \mE\left[\left\Vert \sum_{i=0}^{t-2} \varphi_i \Pi_{t-1, i+1} \bdelta_i \right\Vert \1_{\{\tau_{k,r}>t-1\}}\right] \nonumber\\
&\leq (1-\varphi_{t-1} \lambda_m) e_{t-1} + \varphi_{t-1} \mathbb{E}\left[\Vert \bdelta_{t-1}\Vert \1_{\{\tau_{k,r}>t-1\}}\right].
\end{align}
To characterize the asymptotic behavior of $e_t$, it remains to bound $\mathbb{E}\!\left[\|\boldsymbol{\delta}_t\|\mathbf 1_{\{\tau_{k,r}>t\}}\right]$. By~\eqref{equ:delta_bound} and the Cauchy-Schwarz inequality, we have
\begin{align*}
\mE[\|\bdelta_t\| \1_{\{\tau_{k,r} > t\}}] &\lesssim \mE\left[\|B_t - B^\star\|^2 \1_{\{\tau_{k,r} > t\}}\right] + \mE\left[\|\Delta_t\|^2 \1_{\{\tau_{k,r} > t\}}\right] \\
&\stackrel{\mathclap{\substack{\eqref{equ:B_t_local_rate} \\ \eqref{equ:indicator_x_bound}}}}{\lesssim} \; \mE \left[\left\|\frac{1}{t}\sum_{i = 0}^{t-1}(H_i - \nabla^2 f_i)\right\|^2\right] + \frac{1}{t^2} \left(\sum_{i = 0}^{t-1} \left(\mE \left[\|\bx_i - \bx^{\star}\|^2 \1_{\{\tau_{k,r} > i\}}\right] \right)^{1/2} \right)^2 + \varphi_t \\
&\stackrel{\mathclap{\substack{\eqref{equ:B_t_local_rate_term1} \\ \eqref{equ:B_t_local_rate_term2}}}}{\lesssim} \; \frac{1}{t} + \varphi_t + \varphi_t \lesssim \varphi_t.	
\end{align*}
Therefore, the recursive relation in \eqref{equ:e_t_recursion_1} can be written as
\begin{equation*}
e_t \leq (1-\varphi_{t-1} \lambda_m) e_{t-1} + \varphi_{t-1} \mE[\|\bdelta_t\| \1_{\{\tau_{k,r} > t\}}]  \leq (1-\varphi_{t-1} \lambda_m) e_{t-1} + O(\varphi_{t-1}^2),
\end{equation*}
which implies 
\begin{align*}
\frac{e_t}{\sqrt{\varphi_{t-1}}} \leq (1-\varphi_{t-1} \lambda_m) \frac{e_{t-1}}{\sqrt{\varphi_{t-2}}} \sqrt{\frac{\varphi_{t-2}}{\varphi_{t-1}}} + \frac{O(\varphi_{t-1}^2)}{\sqrt{\varphi_{t-1}}} \leq (1-\varphi_{t-1} \lambda_m)\left(1+\frac{\varphi}{2t} + O(1/t^2)\right) \frac{e_{t-1}}{\sqrt{\varphi_{t-2}}} + O(\varphi_{t-1}^{3/2}).
\end{align*}
\noindent $\bullet\bullet$ \textbf{Case 1: $\varphi \in (0.5, 1)$.} Given that $1/t = o(\varphi_{t-1})$, there exists $\eta_1 \in (0, 1)$ such that
\begin{equation*}
(1-\varphi_{t-1} \lambda_m)\left(1+\frac{\varphi}{2t} + O(1/t^2)\right) \leq 1 - \varphi_{t-1} \lambda_m (1-\eta_1)
\end{equation*}
for all $t$ large enough. Therefore, we have
\begin{equation*}
\frac{e_t}{\sqrt{\varphi_{t-1}}} \leq (1 - \varphi_{t-1} \lambda_m (1-\eta_1)) \frac{e_{t-1}}{\sqrt{\varphi_{t-2}}} + O(\varphi_{t-1}^{3/2}).
\end{equation*}
By \citet[Lemma 3]{Leluc2023Asymptotic}, we obtain
\begin{equation*}
\limsup_{t \rightarrow \infty} \frac{e_t}{\sqrt{\varphi_{t-1}}} \lesssim \frac{1}{C_{\varphi} \lambda_m (1-\eta_1)} \cdot \limsup_{t\rightarrow \infty} \sqrt{\varphi_{t-1}}, \quad \implies \quad \lim_{t \rightarrow \infty} \frac{e_t}{\sqrt{\varphi_{t-1}}} = 0.
\end{equation*}
\noindent $\bullet\bullet$ \textbf{Case 2: $\varphi = 1$.} Given $C_{\varphi} > 1/(2\lambda_m)$, we have
\begin{equation*}
(1-\varphi_{t-1}\lambda_m)\left(1+\frac{1}{2t}+O(1/t^2)\right) = 1 - \frac{1}{t} \left(C_{\varphi}\lambda_m-\frac{1}{2}\right) + O(1/t^2).
\end{equation*}
There exists $\eta_2 \in (0,1)$ such that
\begin{equation*}
(1-\varphi_{t-1}\lambda_m)\left(1+\frac{\varphi}{2t}+O(1/t^2)\right) \leq 1 - \frac{1}{t} \left(C_{\varphi}\lambda_m-\frac{1}{2}\right) (1-\eta_2)
\end{equation*}
for all $t$ large enough. Therefore, we have the following recursion:
\begin{equation*}
\frac{e_t}{\sqrt{\varphi_{t-1}}} \leq \left(1 - \frac{1}{t} \left(C_{\varphi}\lambda_m-\frac{1}{2}\right) (1-\eta_2)\right)\frac{e_{t-1}}{\sqrt{\varphi_{t-2}}} + O(\varphi_{t-1}^{3/2}).
\end{equation*}
Again by \citet[Lemma 3]{Leluc2023Asymptotic}, we obtain
\begin{equation*}
\limsup_{t \rightarrow \infty} \frac{e_t}{\sqrt{\varphi_{t-1}}}
\lesssim \frac{1}{\left(C_{\varphi}\lambda_m-\frac{1}{2}\right)(1-\eta_2)}
\cdot \limsup_{t\rightarrow \infty} \sqrt{\varphi_{t-1}},
\quad \implies \quad \lim_{t \rightarrow \infty} \frac{e_t}{\sqrt{\varphi_{t-1}}} = 0.	
\end{equation*}
Combining \textbf{Case 1} and \textbf{Case 2}, we conclude that $\mathcal{I}_{3,t} \stackrel{p}{\rightarrow }\mathbf{0}$ as $t \rightarrow \infty$.

\noindent $\bullet$ {\textbf{Proof of (d).}} By Lemma \ref{lem:3}, we know for each run of the algorithm, there exists an (potentially random) index $J<\infty$~such that $\mathbf{1}_{\mathcal{A}_{i-1}^c}=0$ for all $i > J$. Thus, for such a realization, we have for $t > J$ that
\begin{equation*}
\bigl\|\mathcal{I}_{4,t}\bigr\|
\leq
\frac{1}{\sqrt{\varphi_{t-1}}}\sum_{i=0}^{J}\varphi_i\,\|\Pi_{t,i+1}\| \|\btheta_i\| \lesssim \left(\sum_{i=0}^{J} \frac{1}{\sqrt{\varphi_{t-1}}} \Vert \Pi_{t,i+1} \Vert \right) \left(\max_{0\leq i \leq J} \|\btheta_i\| \right).	
\end{equation*}
As in the proof of \textbf{(a)}, for any $0\leq i \leq J$, we have $\|\Pi_{t,i+1}\|/\sqrt{\varphi_{t-1}} \rightarrow 0$ as $t \rightarrow \infty$. This concludes that $\mathcal{I}_{4,t} \stackrel{a.s.}{\rightarrow} \mathbf{0}$ as $t \rightarrow \infty$.

Now, we have proved \textbf{(a)-(d)} and, by \eqref{equ:error_decomposition}, we complete the proof of Theorem \ref{thm:asymptotic_normality}.

\subsection{Proof of Lemma \ref{lem:3}}

Recall from~\eqref{rec:def:b} that $\btheta_t = -  (I - K_t) B_t^{-1} (g_t - \nabla f_t ) + (\bz_{t, \tau} - (I - K_t)\Delta \bx_t)$ forms a martingale difference sequence. We now investigate the limiting covariance of $\mathbb{E}[\boldsymbol{\theta}_t \boldsymbol{\theta}_t^{\top} \mid \mathcal{F}_{t-1}]$. Note that
\begin{align} \label{equ:66}
\mathbb{E}[\boldsymbol{\theta}_t \boldsymbol{\theta}_t^{\top} \mid \mathcal{F}_{t-1}] &= \mathbb{E}[\{(I - K_t) B_t^{-1} (g_t - \nabla f_t) - (\bz_{t, \tau} - (I - K_t)\Delta \bx_t)\} \nonumber\\
& \qquad \{(I - K_t) B_t^{-1} (g_t - \nabla f_t ) - (\bz_{t, \tau} - (I - K_t)\Delta \bx_t)\}^{\top} \mid \mathcal{F}_{t-1}] \nonumber\\
&= (I - K_t) B_t^{-1} \mathbb{E}[(g_t - \nabla f_t) (g_t - \nabla f_t)^{\top} \mid \mathcal{F}_{t-1}] B_t^{-1}(I - K_t) \nonumber\\
& \quad + \mathbb{E}[(\bz_{t, \tau} - (I - K_t)\Delta \bx_t)(\bz_{t, \tau} - (I - K_t)\Delta \bx_t)^{\top} \mid \mathcal{F}_{t-1}] \coloneqq \mathcal{J}_{1, t} + \mathcal{J}_{2, t}, 	       
\end{align}
where the second equality is due to the tower property and the fact that $\bz_{t, \tau} - (I - K_t)\Delta \bx_t$ forms a martingale difference sequence given in Lemma~\ref{lem:inner_transition}. Specifically,
\begin{multline*}
\mathbb{E}[(g_t - \nabla f_t)(\bz_{t, \tau} - (I - K_t)\Delta \bx_t)^{\top} \mid \mathcal{F}_{t-1}] = \mathbb{E}[\mathbb{E}[(g_t - \nabla f_t)(\bz_{t, \tau} - (I - K_t)\Delta \bx_t)^{\top} \mid \mathcal{F}_{t-0.5}]\mid \mathcal{F}_{t-1}] \\
= \mathbb{E}[(g_t - \nabla f_t)\mathbb{E}[(\bz_{t, \tau} - (I - K_t)\Delta \bx_t)^{\top} \mid \mathcal{F}_{t-0.5} ]\mid \mathcal{F}_{t-1}] = \boldsymbol{0}.	
\end{multline*}
\noindent $\bullet$ {\textbf{Convergence of the $\mathcal{J}_{1,t}$.}}
For the term $\mathcal{J}_{1, t}$, we have $\mathbb{E}[(g_t - \nabla f_t) (g_t - \nabla f_t)^{\top} \mid \mathcal{F}_{t-1}] = \mathbb{E} [g_t g^{\top}_t \mid \mathcal{F}_{t-1}] - \nabla f_t \nabla f_t^{\top}$. Also note that
\begin{align} \label{equ:67}
& \left\|\mathbb{E}[g_t g^{\top}_t \mid \mathcal{F}_{t-1}] - \mathbb{E}[\nabla F(\bx^{\star}; \xi) (\nabla F(\bx^{\star}; \xi))^{\top}]\right\| = \left\|\mathbb{E}[g_t g^{\top}_t - \nabla F(\bx^{\star}; \xi_t) (\nabla F(\bx^{\star}; \xi_t))^{\top} \mid \mathcal{F}_{t-1}] \right\| \nonumber\\
& \leq 2\mathbb{E}[\|g_t - \nabla F(\bx^{\star}; \xi_t)\|\cdot\|g_t\| \mid \mathcal{F}_{t-1}] + \mathbb{E}[\|g_t - \nabla F(\bx^{\star}; \xi_t)\|^2 \mid \mathcal{F}_{t-1}] \nonumber\\
& \leq 2\sqrt{\mathbb{E}[\|g_t - \nabla F(\bx^{\star}; \xi_t)\|^2 \mid \mathcal{F}_{t-1}]} \sqrt{\mathbb{E}[\|g_t\|^2 \mid \mathcal{F}_{t-1}]} + \mathbb{E}[\|g_t - \nabla F(\bx^{\star}; \xi_t)\|^2 \mid \mathcal{F}_{t-1}],
\end{align}
where the first equality holds since $\xi_t$ is independent of $\mathcal{F}_{t-1}$, the second inequality uses $\|aa^{\top} - bb^{\top}\| \leq 2\|a\|\|a-b\| + \|a-b\|^2$ and the last inequality follows from the Cauchy--Schwarz inequality. For the first term in \eqref{equ:67}, by the $\Upsilon_H$-Lipschitz continuity of $\nabla f(\bx)$ and Assumption \ref{ass:2} with $q_g > 2$, we have
\begin{equation} \label{equ:68}
\mathbb{E}[\|g_t - \nabla F(\bx^{\star}; \xi_t)\|^2 \mid \mathcal{F}_{t-1}] \leq \mathbb{E}[\sup_{\bx}\|\nabla^2 F(\bx; \xi)\|^2] \cdot \|\bx_t - \bx^{\star}\|^2 \leq \Upsilon_H^2 \|\bx_t - \bx^{\star}\|^2,
\end{equation}
and
\begin{align} \label{equ:69}
\hskip-0.3cm \mathbb{E}[\|g_t\|^2 \mid \mathcal{F}_{t-1}] \leq 2 \|\nabla f_t\|^2 + 2 \mathbb{E}[\|g_t - \nabla f_t\|^2\mid \mathcal{F}_{t-1}] \leq 2\Upsilon_H^2 \|\bx_t - \bx^{\star}\|^2 + 2 C_{g, 1}^{2/q_g} \|\bx_t - \bx^{\star}\|^2 + 2C_{q_g,2}^{2/q_g}.
\end{align}
Here the second inequality in~\eqref{equ:69} also uses $2/q_g\in(0,1)$ and hence $(a+b)^{2/q_g} \leq a^{2/q_g}+b^{2/q_g}$ for $a, b \geq 0$. Plugging \eqref{equ:68} and \eqref{equ:69} into \eqref{equ:67} gives us
\begin{equation*}
\left\|\mathbb{E}[g_t g^{\top}_t \mid \mathcal{F}_{t-1}] - \mathbb{E}[\nabla F(\bx^{\star}; \xi_t) (\nabla F(\bx^{\star}; \xi_t))^{\top}]\right\| \stackrel{a.s.}{\rightarrow} 0,
\end{equation*}
which further implies
\begin{equation} \label{equ:70}
\mathbb{E}[(g_t - \nabla f_t) (g_t - \nabla f_t)^{\top}  \mid \mathcal{F}_{t-1}] \stackrel{a.s.}{\rightarrow} \mathbb{E}[\nabla F(\bx^{\star}; \xi) (\nabla F(\bx^{\star}; \xi))^{\top}].
\end{equation}
Since $K_t \stackrel{a.s.}{\rightarrow} K^{\star}$ and $B_t \stackrel{a.s.}{\rightarrow} B^{\star}$, which are shown in Lemma \ref{lem:converge_rate_and_Bt_converge}, we then have
\begin{equation} \label{equ:71}
\mathcal{J}_{1, t} \stackrel{a.s.}{\rightarrow} (I - K^{\star}) ( B^{\star})^{-1} \mathbb{E}[\nabla F(\bx^{\star}; \xi) (\nabla F(\bx^{\star}; \xi))^{\top}] (B^{\star})^{-1} (I - K^{\star}) \stackrel{\eqref{equ:Gamma_star}}{=} (I - K^{\star}) \Omega^{\star} (I - K^{\star}).
\end{equation}
\noindent $\bullet$ {\textbf{Convergence of the $\mathcal{J}_{2,t}$.}}
For the term $\mathcal{J}_{2, t}$, we have
\begin{align} \label{equ:72}
\mathcal{J}_{2, t} &= \mathbb{E}[(\bz_{t, \tau} - (I - K_t)\Delta \bx_t)(\bz_{t, \tau} - (I - K_t)\Delta \bx_t)^{\top} \mid \mathcal{F}_{t-1}] \nonumber\\
&= \mathbb{E}[(\tK_t - K_t)\Delta \bx_t \Delta \bx_t^{\top} (\tK_t - K_t)^{\top} \mid \mathcal{F}_{t-1}] \nonumber\\
&= \mathbb{E}[(\tK_t - K_t) B_t^{-1} g_t g^{\top}_t B_t^{-1} (\tK_t - K_t)^{\top} \mid \mathcal{F}_{t-1}] \nonumber\\
&= \mathbb{E}[(\tK_t - K_t) B_t^{-1} (g_t - \nabla f_t) (g_t - \nabla f_t)^{\top} B_t^{-1} (\tK_t - K_t)^{\top} \mid \mathcal{F}_{t-1}] \nonumber\\
& \quad + \mathbb{E}[(\tK_t - K_t) B_t^{-1} \nabla f_t \nabla f_t^{\top} B_t^{-1} (\tK_t - K_t)^{\top} \mid \mathcal{F}_{t-1}] \nonumber\\
& \quad + \mathbb{E}[(\tK_t - K_t) B_t^{-1} (g_t - \nabla f_t) \nabla f_t^{\top} B_t^{-1} (\tK_t - K_t)^{\top} \mid \mathcal{F}_{t-1}] \nonumber\\
& \quad + \mathbb{E}[(\tK_t - K_t) B_t^{-1} \nabla f_t (g_t - \nabla f_t)^{\top} B_t^{-1} (\tK_t - K_t)^{\top} \mid \mathcal{F}_{t-1}].	
\end{align}
For the last two terms in \eqref{equ:72}, we can apply the tower property. In particular, for the last term, we have
\begin{align} \label{equ:73}
& \mathbb{E}[(\tK_t - K_t) B_t^{-1} \nabla f_t (g_t - \nabla f_t)^{\top} B_t^{-1} (\tK_t - K_t)^{\top} \mid \mathcal{F}_{t-1}] \nonumber\\
&= \mathbb{E}[(\tK_t - K_t) B_t^{-1} \nabla f_t \mathbb{E}[(g_t - \nabla f_t)^{\top} \mid \mathcal{F}_{t-1} \cup \sigma(\{S_{t, j}\}_j)] B_t^{-1} (\tK_t - K_t)^{\top} \mid \mathcal{F}_{t-1}] \nonumber\\
&= \mathbb{E}[(\tK_t - K_t) B_t^{-1} \nabla f_t \mathbb{E}[(g_t - \nabla f_t)^{\top} \mid \mathcal{F}_{t-1}] B_t^{-1} (\tK_t - K_t)^{\top} \mid \mathcal{F}_{t-1}] = \boldsymbol{0},	
\end{align}
where the last equality holds since $\mE[g_t - \nabla f_t \mid \mathcal{F}_{t-1}] = 0$. Similarly, for the third term in~\eqref{equ:72}, we have
\begin{align} \label{equ:74}
& \mathbb{E}[(\tK_t - K_t) B_t^{-1} (g_t - \nabla f_t) \nabla f_t^{\top} B_t^{-1} (\tK_t - K_t)^{\top} \mid \mathcal{F}_{t-1}] \nonumber\\
&= \mathbb{E}[(\tK_t - K_t) B_t^{-1} \mathbb{E}[(g_t - \nabla f_t) \mid \mathcal{F}_{t-1} \cup \sigma(\{S_{t, j}\}_j)] \nabla f_t^{\top}  B_t^{-1} (\tK_t - K_t)^{\top} \mid \mathcal{F}_{t-1}] \nonumber\\
& = \mathbb{E}[(\tK_t - K_t) B_t^{-1} \mathbb{E}[(g_t - \nabla f_t) \mid \mathcal{F}_{t-1}] \nabla f_t^{\top} B_t^{-1} (\tK_t - K_t)^{\top} \mid \mathcal{F}_{t-1}] = \boldsymbol{0}.	
\end{align}
For the second term in \eqref{equ:72}, we have
\begin{align} \label{equ:75}
& \left\|\mathbb{E}[(\tK_t - K_t)B_t^{-1} \nabla f_t \nabla f_t^{\top} B_t^{-1}(\tK_t - K_t)^{\top} \mid \mathcal{F}_{t-1}]\right\|  \leq \mathbb{E}[\|(\tK_t - K_t) B_t^{-1} \nabla f_t \nabla f_t^{\top} B_t^{-1} (\tK_t - K_t)^{\top} \|\mid \mathcal{F}_{t-1}]  \nonumber\\
& \leq \mathbb{E}[\|(\tK_t - K_t)\|^2 \|B_t^{-1}\|^2 \|\nabla f_t\|^2 \mid \mathcal{F}_{t-1}] \leq (C_K + 0.5)^2 \cdot 1/\gamma_H^2 \cdot \mathbb{E}[\|\nabla f_t\|^2 \mid \mathcal{F}_{t-1}] \nonumber\\
& = (C_K + 0.5)^2 \cdot 1/\gamma_H^2 \cdot \|\nabla f_t\|^2 \stackrel{a.s.}{\rightarrow} 0.
\end{align}
Here, the third inequality follows from the uniform bounds
$\|\tilde K_t\|\le C_K$ in~\eqref{equ:uniform_bound_tK},
$\|K_t\|<0.5$ by Lemma~\ref{lem:prop_sketch_solver} together with the condition on $\tau$ that $\tau(1-\sqrt{\mu_t/\nu_t})^{\tau-2}\le \gamma_H/(4\Upsilon_H)$, and $\|B_t^{-1}\|\le 1/\gamma_H$ in~\eqref{ass:2:c2}. For the first term in \eqref{equ:72}, we need the following technical lemma to proceed.

\begin{lemma} \label{lem:2}
Let $\Omega_t \coloneqq B_t^{-1} \mathbb{E}[(g_t - \nabla f_t) (g_t - \nabla f_t)^{\top} \mid \mathcal{F}_{t-1}] B_t^{-1}$. Under the conditions of Theorem~\ref{thm:asymptotic_normality}, we have the following convergence as $t \rightarrow \infty$:
\begin{equation*}
\mathbb{E}[(\tK_t - K_t) \Omega_t (\tK_t - K_t)^{\top} \mid \mathcal{F}_{t-1}] \stackrel{a.s.}{\rightarrow} \mathbb{E}[(\tK^{\star} - K^{\star}) \Omega^{\star} (\tK^{\star} - K^{\star})^{\top}].	
\end{equation*}
\end{lemma}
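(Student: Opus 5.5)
The plan is to exploit the conditional independence of the sketches $\{S_{t,j}\}_j$ from $\xi_t$ and the fact that $\Omega_t$ is $\mathcal F_{t-1}$-measurable. We show separately that $\Omega_t\to\Omega^\star$ a.s. and that the sketch-only operator $\tK_t$ converges in a suitable $L^2$ sense to $\tK^\star$, after coupling both to the \emph{same} draws $\{S_{t,j}\}_j$. Concretely, conditionally on $\mathcal F_{t-1}$, write $\mathbb E[(\tK_t-K_t)\Omega_t(\tK_t-K_t)^\top\mid\mathcal F_{t-1}]=\Psi_t(\Omega_t)$, where $\Psi_t(X):=\mathbb E_S[(\tK_t-K_t)X(\tK_t-K_t)^\top]$ and the expectation is over the sketches only. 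Define $\tK^\star_t$ by the same formula as $\tK^\star$ but with the sketches $\{S_{t,j}\}_j$, so that $\Psi^\star(X):=\mathbb E_S[(\tK^\star_t-K^\star)X(\tK^\star_t-K^\star)^\top]$ is deterministic in law and equals the target map evaluated at $X=\Omega^\star$.

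First, $\Omega_t\to\Omega^\star$ a.s. This follows directly from \eqref{equ:70} together with $B_t\to B^\star$ a.s. (Lemma \ref{lem:converge_rate_and_Bt_converge}) and the uniform bound $\|B_t^{-1}\|\le 1/\gamma_H$ from \eqref{ass:2:c2}. Then decompose
\[
\Psi_t(\Omega_t)-\Psi^\star(\Omega^\star)=\Psi_t(\Omega_t-\Omega^\star)+\bigl(\Psi_t(\Omega^\star)-\Psi^\star(\Omega^\star)\bigr).
\]
The first term is bounded by $(C_K+0.5)^2\|\Omega_t-\Omega^\star\|\to0$, using the uniform bound \eqref{equ:uniform_bound_tK} and $\|K_t\|<0.5$.

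For the second term, write $\tK_t-K_t=(\tK^\star_t-K^\star)+E_t$ with $E_t:=(\tK_t-\tK^\star_t)-(K_t-K^\star)$. Expanding the quadratic form gives
\[
\|\Psi_t(\Omega^\star)-\Psi^\star(\Omega^\star)\|\lesssim \mathbb E_S\|E_t\|,
\]
since both factors are uniformly bounded (by $C_K+0.5$). From \eqref{equ:K_t-Kstar}, $\|K_t-K^\star\|\lesssim\|B_t-B^\star\|$, so it remains to control $\mathbb E_S\|\tK_t-\tK_t^\star\|$. Write $\tC_t-\tC^\star_t$ as a telescoping sum over $j$ of products of the per-step operators $\tC_{t,j}$ and $\tC^\star_{t,j}$. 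Each factor has norm at most $2+1/\sqrt{\gamma_S}$, as in \eqref{equ:uniform_bound_tK}, so
\[
\|\tK_t-\tK^\star_t\|\le\sqrt2\,(2+1/\sqrt{\gamma_S})^{\tau-1}\sum_{j}\|\tC_{t,j}-\tC^\star_{t,j}\|.
\]
Each difference satisfies
\[
\|\tC_{t,j}-\tC^\star_{t,j}\|\lesssim |\alpha_t-\alpha^\star|+|\beta_t-\beta^\star|+|\gamma_t-\gamma^\star|+\|\tZ_{t,j}-\tZ^\star_{t,j}\|,
\]
mirroring \eqref{equ:M_bound} but without the expectation. Taking $\mathbb E_S$ and applying Lemma \ref{lem:continuity_projection_matrix} with $q_S=1$ gives $\mathbb E_S\|\tZ_{t,j}-\tZ^\star_{t,j}\|\lesssim\|B_t-B^\star\|$, exactly as in \eqref{equ:B_bound_projection}. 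Combined with \eqref{equ:alpha_beta_gamma_bounds}, this yields $\mathbb E_S\|E_t\|\lesssim\|B_t-B^\star\|\to0$ a.s.

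The main obstacle is the coupling step. The target expectation is over fresh sketches at $B^\star$, while the quantity at time $t$ uses sketches at $B_t$. The argument requires that the laws match, so that $\Psi^\star$ does not depend on $t$; this holds because the $S_{t,j}$ are i.i.d.\ copies of $S$ independent of $\mathcal F_{t-1}$ (Assumption \ref{ass:3}). It also requires that the per-step bound be applied pathwise in $S$ rather than only in expectation. The Lipschitz estimate of Lemma \ref{lem:continuity_projection_matrix} is pathwise, with the factor $\|S\|\|S^\dagger\|$, and since products of $\tau$ factors are handled via uniform norm bounds, only one factor of $\|S\|\|S^\dagger\|$ appears per term. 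Hence $q_S=1$ suffices.
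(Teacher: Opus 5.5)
Your proposal is correct and follows essentially the same route as the paper. The paper also obtains $\Omega_t\to\Omega^\star$ from \eqref{equ:70}, bounds the $\Omega_t-\Omega^\star$ piece by $(C_K+0.5)^2\|\Omega_t-\Omega^\star\|$, and controls the $\Omega^\star$ piece by $\mathbb{E}[\|\tK_t-\tK^\star\|\mid\mathcal F_{t-1}]+\|K_t-K^\star\|\lesssim\|B_t-B^\star\|$ via the telescoping bound \eqref{equ:89'} and Lemma \ref{lem:continuity_projection_matrix} (first moment only, so $q_S=1$ suffices). Your explicit coupling of $\tK^\star$ to the same sketches $\{S_{t,j}\}_j$ is exactly what the paper uses implicitly through \eqref{def:tilde_H_star_k}, and stating it makes that step cleaner.
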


By Lemma~\ref{lem:2}, we have
\begin{align} \label{equ:76}
& \mathbb{E}[(\tK_t - K_t) B_t^{-1} (g_t - \nabla f_t) (g_t - \nabla f_t)^{\top} B_t^{-1} (\tK_t - K_t)^{\top} \mid \mathcal{F}_{t-1}] \nonumber\\
&= \mathbb{E}[(\tK_t - K_t) B_t^{-1} \mathbb{E}[(g_t - \nabla f_t) (g_t - \nabla f_t)^{\top} \mid \mathcal{F}_{t-1} \cup \sigma(\{S_{t, j}\}_{j})] B_t^{-1} (\tK_t - K_t)^{\top} \mid \mathcal{F}_{t-1}] \nonumber\\
&= \mathbb{E}[(\tK_t - K_t) B_t^{-1} \mathbb{E}[(g_t - \nabla f_t) (g_t - \nabla f_t)^{\top} \mid \mathcal{F}_{t-1}] B_t^{-1} (\tK_t - K_t)^{\top}) \mid \mathcal{F}_{t-1}] \nonumber\\
&\stackrel{a.s.}{\rightarrow} \mathbb{E}[(\tK^{\star} - K^{\star}) \Omega^{\star} (\tK^{\star} - K^{\star})^{\top}] = \mathbb{E}[\tK^{\star} \Omega^{\star} (\tK^{\star})^{\top}] - K^{\star} \Omega^{\star} K^{\star}.	
\end{align}
Here, $\tK^{\star}$ is defined analogously to $\tK_t$ in~\eqref{def:tilde_K_t}, but evaluated at $(\alpha^{\star}, \beta^{\star}, \gamma^{\star})$ and $B^{\star}$; see Section~\ref{sec:normality} for more details. As a result, substituting \eqref{equ:73}, \eqref{equ:74}, \eqref{equ:75} and \eqref{equ:76} into \eqref{equ:72} yields
\begin{equation} \label{equ:79}  
\mathcal{J}_{2, t} \stackrel{a.s.}{\rightarrow} \mathbb{E}[\tK^{\star} \Omega^{\star} (\tK^{\star})^{\top}] - K^{\star} \Omega^{\star} K^{\star}.
\end{equation}
Combining~\eqref{equ:71} and~\eqref{equ:79} with~\eqref{equ:66}, we obtain
\begin{equation} \label{equ:80}
\mathbb{E}[\boldsymbol{\theta}_t \boldsymbol{\theta}_t^{\top} \mid \mathcal{F}_{t-1}] \stackrel{a.s.}{\rightarrow} \mathbb{E}[(I - \tK^{\star}) \Omega^{\star} (I - \tK^{\star})^{\top}],
\end{equation}
where we use the definition $K^{\star} \coloneqq \mE[\tK^{\star}]$. This completes the proof.

\subsection{Proof of Lemma \ref{lem:2}}

By \eqref{equ:70} and Lemma \ref{lem:converge_rate_and_Bt_converge}, we know $\Omega_t\to\Omega^\star$ almost surely. Therefore, we have
\begin{align} \label{equ:85}
& \left\| \mathbb{E}[(\tK_t - K_t) \Omega_t (\tK_t - K_t)^{\top} \mid \mathcal{F}_{t-1}] - \mathbb{E}[(\tK^{\star} - K^{\star}) \Omega^{\star} (\tK^{\star} - K^{\star})^{\top}] \right\| \nonumber\\
&\leq \left\| \mathbb{E}[(\tK^{\star} - K^{\star}) \Omega_t (\tK^{\star} - K^{\star})^{\top} \mid \mathcal{F}_{t-1}] - \mathbb{E}[(\tK_t - K_t) \Omega^{\star} (\tK_t - K_t)^{\top} \mid \mathcal{F}_{t-1}]\right\| \nonumber\\
& \quad + \left\| \mathbb{E}[(\tK_t - K_t) \Omega^{\star} (\tK_t - K_t)^{\top} \mid \mathcal{F}_{t-1}]- \mathbb{E}[(\tK^{\star} - K^{\star}) \Omega^{\star} (\tK^{\star} - K^{\star})^{\top}] \right\|.
\end{align}
For the first term on the right-hand side of~\eqref{equ:85}, we have
\begin{align} \label{equ:87}
& \left\| \mathbb{E}[(\tK_t - K_t) \Omega_t (\tK_t - K_t)^{\top} \mid \mathcal{F}_{t-1}] - \mathbb{E}[(\tK_t - K_t) \Omega^{\star} (\tK_t - K_t)^{\top} \mid \mathcal{F}_{t-1}]\right\| \nonumber\\
&= \left\|\mathbb{E}[(\tK_t - K_t) (\Omega_t - \Omega^{\star}) (\tK_t - K_t)^{\top} \mid \mathcal{F}_{t-1}]\right\| \nonumber\\
&\leq \mathbb{E}\left[\|(\tK_t - K_t) (\Omega_t - \Omega^{\star}) (\tK_t - K_t)^{\top} \| \mid \mathcal{F}_{t-1}\right] \nonumber\\
&\leq (C_K + 0.5)^2 \cdot \mathbb{E}\left[\|\Omega_t - \Omega^{\star} \| \mid \mathcal{F}_{t-1}\right] = (C_K + 0.5)^2  \| \Omega_t - \Omega^{\star} \| \stackrel{a.s.}{\rightarrow} 0.	
\end{align}
Here, the third inequality follows from $\|\tK_t\|\leq C_K$ and $\|K_t\|<0.5$ (under $\tau(1-\sqrt{\mu_t/\nu_t})^{\tau-2}\leq \gamma_H/(4\Upsilon_H)$). For the second term on the right-hand side of~\eqref{equ:85}, we have
\begin{align*}
& \left\|\mathbb{E}[(\tK_t - K_t) \Omega^{\star} (\tK_t - K_t)^{\top} \mid \mathcal{F}_{t-1}] - \mathbb{E}[(\tK^{\star} - K^{\star}) \Omega^{\star}(\tK^{\star} - K^{\star})^{\top}]\right\| \nonumber\\
& \leq \left\|\mathbb{E}[(\tK_t - K_t) \Omega^{\star} (\tK_t - K_t)^{\top} \mid \mathcal{F}_{t-1}] - \mathbb{E}[(\tK^{\star} - K^{\star}) \Omega^{\star}(\tK_t - K_t)^{\top} \mid \mathcal{F}_{t-1}]  \right\| \nonumber\\
& \quad + \left\| \mathbb{E}[(\tK^{\star} - K^{\star}) \Omega^{\star}(\tK_t - K_t)^{\top} \mid \mathcal{F}_{t-1}] - \mathbb{E}[(\tK^{\star} - K^{\star}) \Omega^{\star}(\tK^{\star} - K^{\star})^{\top}] \right\| \nonumber\\
& \leq \|\Omega^{\star}\| \cdot (C_K + 0.5) \cdot (\mathbb{E}[ \|\tK_t - \tK^{\star} \| \mid \mathcal{F}_{t-1}] + \|K_t - K^{\star}\|) \nonumber\\
& \quad + \|\Omega^{\star}\| \cdot (C_K + 0.5) \cdot (\mathbb{E}[ \|\tK_t - \tK^{\star} \| \mid \mathcal{F}_{t-1}] + \|K_t - K^{\star}\|).	
\end{align*}
By Lemma \ref{lem:converge_rate_and_Bt_converge}, we know $K_t \stackrel{a.s.}{\rightarrow} K^{\star}$ as $t \rightarrow \infty$. Also, in the same spirit as the proof of Lemma \ref{lem:converge_rate_and_Bt_converge} (see also \eqref{equ:89'}), we obtain $\mathbb{E}[\|\tK_t - \tK^{\star} \| \mid \mathcal{F}_{t-1}] \lesssim \|B_t - B^{\star}\|\stackrel{a.s.}{\rightarrow}0$. Thus, the above display leads to 
\begin{equation} \label{equ:91}
\left\|\mathbb{E}[(\tK_t - K_t) \Omega^{\star} (\tK_t - K_t)^{\top} \mid \mathcal{F}_{t-1}] - \mathbb{E}[(\tK^{\star} - K^{\star}) \Omega^{\star}(\tK^{\star} - K^{\star})^{\top}]\right\| \stackrel{a.s.}{\rightarrow} 0.
\end{equation}
Substituting \eqref{equ:87} and \eqref{equ:91} into \eqref{equ:85} yields
\begin{equation*}
\mathbb{E}[(\tK_t - K_t) \Omega_t (\tK_t - K_t)^{\top} \mid \mathcal{F}_{t-1}] \stackrel{a.s.}{\rightarrow} \mathbb{E}[(\tK^{\star} - K^{\star}) \Omega^{\star} (\tK^{\star} - K^{\star})^{\top}] \quad \text{as } t \rightarrow \infty.
\end{equation*}
This completes the proof.

\subsection{Proof of Proposition \ref{prop:covariance_comparison}}

\noindent$\bullet$ \textbf{Proof of (a).} We plug $\tilde{K}^{\star} = K^{\star} = \boldsymbol{0}$ into \eqref{equ:lyp} and have $[(1-\zeta)I] \Sigma^{\star} + \Sigma^{\star}[(1 - \zeta)I] = \Omega^{\star}$. Thus, for $\varphi\in(0.5,1)$, $\zeta = 0$, then $\Sigma^\star=0.5\tOmega$. For $\varphi=1$, $\zeta = 1/(2C_{\varphi})$, then $\Sigma^\star = \frac{C_\varphi\tOmega}{2C_\varphi-1}$. Furthermore, when $\varphi = C_{\varphi} = 1$, we have $\Sigma^{\star} = \Omega^{\star}$. Then, we apply the result into Theorem \ref{thm:asymptotic_normality}, and get that
\begin{equation*}
\sqrt{t}\cdot(\bx_t-\tx)\xrightarrow{d} \mathcal{N}(\boldsymbol{0}, \tOmega).
\end{equation*}
We complete the proof of (a).

\noindent$\bullet$ \textbf{Proof of (b).} When $\gamma^{\star} = 1$, we follow the similar steps in the proof of Lemma \ref{lem:inner_transition}, and get that $\tilde{K}^{\star} = \prod_{j = 0}^{\tau - 1} (I - B^\star S_j (S_j^{\top}(B^\star)^2 S_j)^{\dagger}S_j^{\top} B^\star) = \Tilde{\mathcal{C}}^{\star}$, and $K^{\star} \coloneqq \mE[\tK^{\star}] = \mE[\Tilde{\mathcal{C}}^{\star}]$. We apply these results to \eqref{equ:lyp}, and finally get
\begin{equation*}
\left[(I - \mathcal{C}^{\star}) -\zeta I\right] \Sigma^{\star} + \Sigma^{\star}  \left[(I - \mathcal{C}^{\star}) -\zeta I\right] = \mathcal{G}^{\star}.
\end{equation*}
We complete the proof of (b).

\subsection{Proof of Lemma \ref{sec4:lem1}} \label{pf:xtbound}

The proof is composed of two parts.

\noindent $\bullet$ {\textbf{Part 1: Bound of $\mathbb{E}[\|\bx_t-\bx^\star\|^4]$.}} For \eqref{equ:48}, taking square on both sides and then the conditional expectation given $\mathcal{F}_{t-1}$ yields
\begin{align} \label{appen:A3:equ1}
\mathbb{E}& [(f_{t+1} - f^{\star})^2 \mid \mathcal{F}_{t-1}] \leq (f_{t} - f^{\star})^2 + 2\varphi_t \mathbb{E}[(f_t - f^{\star}) \nabla f_t^{\top} \bz_{t, \tau} \mid \mathcal{F}_{t-1}] + \varphi_t^2 \Upsilon_H \mathbb{E}[(f_t - f^{\star}) \|\bz_{t, \tau}\|^2 \mid \mathcal{F}_{t-1}] \nonumber\\
& \quad + \varphi_t^2 \mathbb{E}[(\nabla f_t^{\top} \bz_{t, \tau})^2 \mid \mathcal{F}_{t-1}] + \varphi_t^3 \Upsilon_H \mathbb{E}[\nabla f_t^{\top} \bz_{t, \tau} \|\bz_{t, \tau}\|^2 \mid \mathcal{F}_{t-1}] + \frac{\Upsilon_H^2}{4} \varphi_t^4 \mathbb{E}[\|\bz_{t, \tau}\|^4 \mid \mathcal{F}_{t-1}].
\end{align}
We rearrange the terms on the right-hand side by the order of $\varphi_t$ and then analyze each of them.

\noindent $\bullet\bullet$ {\textbf{Term 1:} $2\varphi_t \mathbb{E}[(f_t - f^{\star}) \nabla f_t^{\top} \bz_{t, \tau} \mid \mathcal{F}_{t-1}]$.}
For Term 1, we have
\begin{align} \label{equ:57}
2\varphi_t \mathbb{E}[(f_t - f^{\star}) \nabla f_t^{\top}\bz_{t, \tau} \mid \mathcal{F}_{t-1}] \stackrel{\eqref{equ:45}}{\leq} -\frac{1}{\Upsilon_H}\varphi_t (f_t - f^{\star}) \|\nabla f_t\|^2 \stackrel{\eqref{equ:46}}{\leq} -\frac{2\gamma_H}{\Upsilon_H} \varphi_t (f_t - f^{\star})^2.
\end{align}
\noindent$\bullet\bullet$ {\textbf{Term 2: $\varphi_t^2 \Upsilon_H \mathbb{E}[(f_t - f^{\star}) \|\bz_{t, \tau}\|^2 \mid \mathcal{F}_{t-1}] + \varphi_t^2 \mathbb{E}[(\nabla f_t^{\top}\bz_{t, \tau})^2 \mid \mathcal{F}_{t-1}]$}}. For Term 2, we have
\begin{align} \label{equ:58}
& \varphi_t^2 \Upsilon_H \mathbb{E}[(f_t - f^{\star}) \|\bz_{t, \tau}\|^2 \mid \mathcal{F}_{t-1}] + \varphi_t^2 \mathbb{E}[  (\nabla f_t^{\top}\bz_{t, \tau})^2 \mid \mathcal{F}_{t-1}] \leq \varphi_t^2 \mathbb{E}[(\Upsilon_H (f_t - f^{\star}) + \|\nabla f_t\|^2) \|\bz_{t, \tau}\|^2 \mid \mathcal{F}_{t-1}] \nonumber\\
& \stackrel{\mathclap{\eqref{equ:59}}}{\leq} \; 3 \Upsilon_H \varphi_t^2 (f_t - f^{\star})\mathbb{E}[\|\bz_{t, \tau}\|^2 \mid \mathcal{F}_{t-1}] \leq \frac{\gamma_H}{\Upsilon_H} \varphi_t (f_t - f^{\star})^2 + \frac{9 \Upsilon_H^3}{4\gamma_H} \varphi_t^3 \mathbb{E}[\|\bz_{t, \tau}\|^4 \mid \mathcal{F}_{t-1}],
\end{align}
where the last inequality is due to Young's inequality.

\noindent$\bullet\bullet$ {\textbf{Term 3: $\varphi_t^3 \Upsilon_H \mathbb{E}[\nabla f_t^{\top} \bz_{t, \tau} \|\bz_{t, \tau}\|^2 \mid \mathcal{F}_{t-1}]$}}. By Young's inequality and~\eqref{equ:59}, we obtain
\begin{align} \label{equ:60}
& \varphi_t^3 \Upsilon_H \mathbb{E}[\nabla f_t^{\top} \bz_{t, \tau} \|\bz_{t, \tau}\|^2 \mid \mathcal{F}_{t-1}] \leq \varphi_t^3 \Upsilon_H \mathbb{E}[\|\nabla f_t\| \|\bz_{t, \tau}\|^3 \mid \mathcal{F}_{t-1}] \nonumber\\
&\leq \frac{\varphi_t^3 \Upsilon_H}{2} \left(\mathbb{E}[\|\nabla f_t\|^2 \|\bz_{t, \tau}\|^2 \mid \mathcal{F}_{t-1}] +  \mathbb{E}[\|\bz_{t, \tau}\|^4 \mid \mathcal{F}_{t-1}]\right) \nonumber\\
&\leq \frac{\varphi_t^3 \Upsilon_H}{2} \left(\frac{1}{2} \mathbb{E}[\|\nabla f_t\|^4 \mid \mathcal{F}_{t-1}] + \frac{1}{2} \mathbb{E}[\|\bz_{t, \tau}\|^4 \mid \mathcal{F}_{t-1}] + \mathbb{E}[\|\bz_{t, \tau}\|^4 \mid \mathcal{F}_{t-1}]\right) \nonumber\\
&= \frac{\varphi_t^3 \Upsilon_H}{4} \mathbb{E}[\|\nabla f_t\|^4  \mid \mathcal{F}_{t-1}] + \frac{3\varphi_t^3 \Upsilon_H}{4} \mathbb{E}[\|\bz_{t, \tau}\|^4 \mid \mathcal{F}_{t-1}] \nonumber\\
&\stackrel{\mathclap{\eqref{equ:59}}}{\leq} \; \varphi_t^3 \Upsilon_H^3 (f_t - f^{\star})^2 + \frac{3 \Upsilon_H}{4} \varphi_t^3 \mathbb{E}[\|\bz_{t, \tau}\|^4 \mid \mathcal{F}_{t-1}].
\end{align}
\noindent $\bullet\bullet$ \textbf{Term 4:} $\mathbb{E}[\|\bz_{t, \tau}\|^4 \mid \mathcal{F}_{t-1}]$.
Following the analysis in~\eqref{equ:47}, we apply Assumption~\ref{ass:2} with $q_g = 4$ and obtain
\begin{align} \label{equ:61}
\mathbb{E}[\|\bz_{t, \tau}\|^4 \mid \mathcal{F}_{t-1}] & \leq \frac{(1+C_K)^4}{\gamma_H^4}\mathbb{E}[\|g_t\|^4 \mid \mathcal{F}_{t-1}] \leq \frac{8(1+C_K)^4}{\gamma_H^4} (\|\nabla f_t\|^4 + \mathbb{E}[\|g_t - \nabla f_t\|^4 \mid \mathcal{F}_{t-1}]) \nonumber\\
& \stackrel{\mathclap{\eqref{equ:59}}}{\leq} \; \frac{8(1+C_K)^4}{\gamma_H^4} (4\Upsilon_H^2 (f_t - f^{\star})^2 + C_{g, 1}\|\bx_t - \bx^{\star}\|^4 + C_{g, 2}) \nonumber\\
& \stackrel{\mathclap{\eqref{equ:46}}}{\leq} \; \frac{8(1+C_K)^4}{\gamma_H^4} (4\Upsilon_H^2 (f_t - f^{\star})^2 + \frac{4  C_{g, 1}}{\gamma^2_H}(f_t - f^{\star})^2 + C_{g, 2}) \nonumber\\
& \leq \frac{32 (1+C_K)^4 \left[\Upsilon_H^2 + C_{g, 1}/\gamma_H^2\right]}{\gamma_H^4} (f_t - f^{\star})^2 + \frac{8(1+C_K)^4 C_{g, 2}}{\gamma_H^4} \nonumber\\
&\eqqcolon C_1 (f_t - f^{\star})^2 + C_2,
\end{align}
where
\begin{equation*}
C_1 \coloneqq \frac{32 (1+C_K)^4 \left[\Upsilon_H^2 + C_{g, 1}/\gamma_H^2\right]}{\gamma_H^4}, \quad\quad C_2 \coloneqq \frac{8(1+C_K)^4 C_{g, 2}}{\gamma_H^4}.
\end{equation*}
Substituting~\eqref{equ:57},~\eqref{equ:58},~\eqref{equ:60} and~\eqref{equ:61} into~\eqref{appen:A3:equ1}, we obtain
\begin{align} \label{equ:62}
\mathbb{E}[(f_{t+1} - f^{\star})^2 \mid \mathcal{F}_{t-1}] & \leq (f_t - f^{\star})^2 - \left(\frac{\gamma_H}{\Upsilon_H} \varphi_t  - \frac{9\Upsilon_H^{3}C_1 + 4\Upsilon_H^3\gamma_H + 3C_1\Upsilon_H\gamma_H}{4\gamma_H} \varphi_t^3 - \frac{\Upsilon_H^2C_1}{4}\varphi_t^4  \right) (f_t - f^{\star})^2 \nonumber\\
&+ \frac{9\Upsilon_H^3 C_2 + 3C_2\Upsilon_H\gamma_H}{4\gamma_H}\varphi_t^3 + \frac{\Upsilon_H^2C_2}{4}\varphi_t^4.	   
\end{align}
Since $\varphi_t = C_{\varphi} / (t+1)^{\varphi}$ for $\varphi \in (0.5, 1)$, for $t$ large enough we have
\begin{equation}\label{equ:63}
\frac{9\Upsilon_H^{3}C_1 + 4\Upsilon_H^3\gamma_H + 3C_1\Upsilon_H\gamma_H}{4\gamma_H} \varphi_t^2 + \frac{\Upsilon_H^2C_1}{4}\varphi_t^3 < \frac{\gamma_H}{2\Upsilon_H}.
\end{equation}
Substituting~\eqref{equ:63} into~\eqref{equ:62} and then taking the expectation on both sides gives us
\begin{equation} \label{equ:64}
\mathbb{E}\left[(f_{t+1} - f^{\star})^2\right] \leq \left(1 - \frac{\gamma_H}{2\Upsilon_H} \varphi_t \right) \mathbb{E}[(f_t - f^{\star})^2] + \frac{9\Upsilon_H^3 C_2 + 3\gamma_H\Upsilon_HC_2 + \gamma_H\Upsilon_H^2C_2C_{\varphi}}{4\gamma_H}\varphi_t^3.
\end{equation}
Again by~\citet[Lemma~3]{Leluc2023Asymptotic}, we obtain
\begin{equation*}
\limsup_{t \rightarrow \infty} \frac{\mathbb{E}[(f_t - f^{\star})^2]}{\varphi_t^2} \lesssim 1 \implies \mathbb{E}[(f_t - f^{\star})^2] \lesssim \varphi_t^2.
\end{equation*}
By~\eqref{equ:46}, we further have
\begin{equation} \label{equ:cov_lemma_p1}
\mathbb{E}\left[\|\bx_t - \bx^{\star}\|^4\right] \lesssim \mathbb{E}[(f_{t} - f^{\star})^2] \lesssim \varphi_t^2.
\end{equation}
\noindent $\bullet$ {\textbf{Part 2: Bound of $\mathbb{E}[\|B_t-B^\star\|^4]$.}} By the construction of $B_t$ in~\eqref{def:B_t}, we have
\begin{equation} \label{appen:A3:equ12}
\mathbb{E}\left[\|B_t-B^\star\|^4\right] \lesssim 
\mathbb{E}\left[\left\|\frac{1}{t} \sum_{i=0}^{t-1}(H_i-\nabla^2 F_i)\right\|^4\right] + \mathbb{E}\left[\left\|\frac{1}{t} \sum_{i=0}^{t-1}(\nabla^2 F_i-\nabla^2 F^\star)\right\|^4\right].
\end{equation}
\noindent $\bullet\bullet$ {\textbf{Term 1:} $\mathbb{E} [\|\frac{1}{t} \sum_{i=0}^{t-1}(H_i-\nabla^2 F_i)\|^4]$.}
For Term~1, note that $H_i-\nabla^2 F_i$ is a martingale difference sequence and Assumption~\ref{ass:2_Hessian} with $q_H = 4$ implies
\begin{equation*}
\mathbb{E}\left[\left\|H_i-\nabla^2 F_i\right\|_F^4\right]
\lesssim
\mathbb{E}\left[\left\|H_i-\nabla^2 F_i\right\|^4\right]
\leq C_{H,1}\mathbb{E}\left[\|\bx_i-\bx^\star\|^4\right] + C_{H,2}.
\end{equation*}
Therefore, same as in~\citet[(63)]{Chen2020Statistical}, we apply~\citet[Theorem~2.1]{Rio2009Moment} and obtain
\begin{align} \label{appen:A3:equ13}
\mathbb{E}\left[\left\|\frac{1}{t} \sum_{i=0}^{t-1}(H_i-\nabla^2 F_i)\right\|^4\right] & \leq \mathbb{E}\left[\left\|\frac{1}{t} \sum_{i=0}^{t-1}(H_i-\nabla^2 F_i)\right\|_F^4\right] \lesssim \frac{1}{t^4}\left[\sum_{i=0}^{t-1}\left(\mathbb{E}\left[\left\|H_i-\nabla^2 F_i\right\|_F^4\right]\right)^{1/2}\right]^2 \nonumber\\
&\lesssim \frac{1}{t^4}\left(\sum_{i=0}^{t-1} C_{H,1}^{1/2} \left(\mathbb{E}\left[\|\bx_i-\bx^\star\|^4\right]\right)^{1/2}\right)^2 + \frac{1}{t^4}\left(\sum_{i=0}^{t-1} C_{H,2}^{1/2}\right)^2 \nonumber\\
&\stackrel{\mathclap{\eqref{equ:cov_lemma_p1}}}{\lesssim} \frac{1}{t^4}\left(\sum_{i=0}^{t-1} \varphi_i\right)^2 + \frac{1}{t^2} \lesssim \frac{1}{t^4}\cdot t^{2(1-\varphi)} + \frac{1}{t^2} \lesssim \frac{1}{t^2},
\end{align}
where the second last inequality follows from the fact that $\sum_{i=0}^{t-1}\varphi_i\lesssim \int_0^t 1/i^{\varphi} di\lesssim t^{1-\varphi}$.

\noindent $\bullet\bullet$ {\textbf{Term 2:} $\mathbb{E}[\|\frac{1}{t} \sum_{i=0}^{t-1}(\nabla^2 F_i-\nabla^2 F^\star)\|^4]$.} For Term~2, by the $\Upsilon_L$-Lipschitz continuity of $\nabla^2 F(\bx)$, we have
\begin{align} \label{appen:A3:equ14}
&\mathbb{E}\left[\left\|\frac{1}{t} \sum_{i=0}^{t-1}(\nabla^2 F_i-\nabla^2 F^\star)\right\|^4\right] \leq \mathbb{E}\left[\left(\frac{1}{t} \sum_{i=0}^{t-1} \left\|\nabla^2 F_i-\nabla^2 F^\star \right\|\right)^4\right] \leq \frac{\Upsilon_{L}^4}{t^4} \mathbb{E}\left[\left(\sum_{i=0}^{t-1}\|\bx_i-\bx^\star\|\right)^4\right] \nonumber\\
&\leq \frac{\Upsilon_{L}^4}{t^4} \left(\sum_{i=0}^{t-1}\left(\mathbb{E}\left[\|\bx_i-\bx^\star\|^4\right]\right)^{1/4}\right)^4 \quad (\text{by H\"older's inequality}) \nonumber\\
&\stackrel{\mathclap{\eqref{equ:cov_lemma_p1}}}{\lesssim}\;\; \frac{1}{t^4} \left(\sum_{i=0}^{t-1} \sqrt{\varphi_i}\right)^4 \lesssim \frac{1}{t^4} t^{4(1-\varphi/2)} \lesssim \varphi_t^2. 	
\end{align}
Plugging~\eqref{appen:A3:equ13} and~\eqref{appen:A3:equ14} into~\eqref{appen:A3:equ12} gives us
\begin{equation} \label{appen:A3:equ15}
\mathbb{E}\left[\|B_t-B^\star\|^4\right] \lesssim \frac{1}{t^2} + \varphi_t^2 \lesssim \varphi_t^2.
\end{equation}
This, together with \eqref{equ:cov_lemma_p1}, completes the proof for both parts.

\subsection{Proof of Theorem \ref{sec4:thm1}} \label{appen:A5}

Without loss of generality, we suppose $\varphi_t \leq 1$, $\forall \, t \geq 0$. The weighted sample covariance matrix $\hat{\Sigma}_t$ can be decomposed as
\begin{multline} \label{appen:A5:equ1}
\widehat{\Sigma}_t = \frac{1}{t} \sum_{i=1}^{t} \frac{1}{\varphi_{i-1}} (\bx_i-\bx^{\star})(\bx_i-\bx^{\star})^{\top} + \frac{1}{t} \sum_{i=1}^{t} \frac{1}{\varphi_{i-1}} (\bar{\bx}_t-\bx^{\star})(\bar{\bx}_t-\bx^{\star})^{\top} \\
- \frac{1}{t} \sum_{i=1}^{t} \frac{1}{\varphi_{i-1}} (\bx_i-\bx^{\star})(\bar{\bx}_t-\bx^{\star})^{\top} - \frac{1}{t} \sum_{i=1}^{t} \frac{1}{\varphi_{i-1}} (\bar{\bx}_t-\bx^{\star})(\bx_i-\bx^{\star})^{\top}.
\end{multline}
By~\eqref{equ:error_decomposition}, we have
\begin{equation*}
\frac{1}{\sqrt{\varphi_{t-1}}}(\bx_t - \bx^{\star}) = \mathcal{L}_{1,t} + \mathcal{L}_{2,t},
\end{equation*}
where
\begin{equation} \label{equ:cov_decomposition}
\mathcal{L}_{1, t} = \mathcal{I}_{2, t} + \mathcal{I}_{4, t} = \frac{1}{\sqrt{\varphi_{t-1}}} \sum_{i=0}^{t-1} \varphi_i \Pi_{t, i+1} \btheta_i, \quad\quad \mathcal{L}_{2,t} = \mathcal{I}_{1,t} + \mathcal{I}_{3,t} = \frac{1}{\sqrt{\varphi_{t-1}}} \left(\Pi_{t, 0}\Delta_0 + \sum_{i=0}^{t-1} \varphi_i \Pi_{t, i+1} \boldsymbol{\delta}_i \right).
\end{equation}
Therefore, we can further decompose the first term on the right-hand side of~\eqref{appen:A5:equ1} as follows:
\begin{equation} \label{appen:A4:equ20}
\frac{1}{t}\sum_{i=1}^t \frac{1}{\varphi_{i-1}}(\bx_i-\bx^\star)(\bx_i-\bx^\star)^{\top} = \sum_{k=1}^{2} \sum_{l=1}^{2} \frac{1}{t}\sum_{i=1}^{t} \mathcal{L}_{k,i}\mathcal{L}_{l,i}^{\top}.
\end{equation}

The following lemma shows that the dominant term $\frac{1}{t}\sum_{i=1}^{t} \mathcal{L}_{1,i}\mathcal{L}_{1,i}^{\top}$ converges to the limiting covariance matrix $\Sigma^{\star}$ and establishes the convergence rate. 

\begin{lemma} \label{appen:A4:lem2}
Under the conditions of Theorem~\ref{sec4:thm1}, we have
\begin{equation*}
\mathbb{E}\left[\left\|\frac{1}{t}\sum_{i=1}^{t} \mathcal{L}_{1,i}\mathcal{L}_{1,i}^\top - \Sigma^\star\right\|\right] \lesssim \frac{1}{\sqrt{t\varphi_t}}.	
\end{equation*}
\end{lemma}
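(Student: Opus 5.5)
Our plan is to follow the strategy of \citet{Chen2020Statistical, Kuang2025Online} for weighted sample covariance estimators, adapted to the martingale $\mathcal{L}_{1,t}$. First we write $\mathcal{L}_{1,t}$ in recursive form. Since $\mathcal{L}_{1,t}=\varphi_{t-1}^{-1/2}\sum_{i=0}^{t-1}\varphi_i\Pi_{t,i+1}\btheta_i$, we have $\sqrt{\varphi_t}\mathcal{L}_{1,t+1}=(I-\varphi_t\mathcal{R})\sqrt{\varphi_{t-1}}\mathcal{L}_{1,t}+\varphi_t\btheta_t$. Taking outer products and conditional expectations with respect to $\mathcal{F}_{t-1}$ gives
\begin{equation*}
\mathcal{L}_{1,t+1}\mathcal{L}_{1,t+1}^\top=\mathcal{L}_{1,t}\mathcal{L}_{1,t}^\top-\varphi_t\bigl(\mathcal{R}_\zeta\mathcal{L}_{1,t}\mathcal{L}_{1,t}^\top+\mathcal{L}_{1,t}\mathcal{L}_{1,t}^\top\mathcal{R}_\zeta-\Gamma_{t-1}\bigr)+\varphi_t\,\mathcal{M}_t+\varphi_t\,\mathcal{E}_t .
\end{equation*}
Here $\mathcal{R}_\zeta=\mathcal{R}-\zeta I$. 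The martingale difference $\mathcal{M}_t$ collects $(\btheta_t\btheta_t^\top-\Gamma_{t-1})\varphi_t/\varphi_{t-1}\cdot$(scaling) and the cross terms $\sqrt{\varphi_t}(\cdots)\mathcal{L}_{1,t}\btheta_t^\top+\text{transpose}$. The remainder $\mathcal{E}_t$ collects $O(\varphi_t)\|\mathcal{L}_{1,t}\|^2$ terms and the ratio error $(\varphi_{t-1}/\varphi_t-1)/\varphi_t-\zeta$ (which is $0$ up to $o(1)$ for $\varphi<1$, consistent with $\zeta=0$).

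Next we rearrange, divide by $\varphi_t$ and average over $i=1,\dots,t$. Writing $\bar S_t=\frac1t\sum_{i=1}^t\mathcal{L}_{1,i}\mathcal{L}_{1,i}^\top$, we obtain
\begin{equation*}
\mathcal{R}_\zeta\bar S_t+\bar S_t\mathcal{R}_\zeta-\Gamma^\star=\frac1t\sum_i\frac{\mathcal{L}_{1,i}\mathcal{L}_{1,i}^\top-\mathcal{L}_{1,i+1}\mathcal{L}_{1,i+1}^\top}{\varphi_i}+\frac1t\sum_i(\Gamma_{i-1}-\Gamma^\star)+\frac1t\sum_i(\mathcal{M}_i+\mathcal{E}_i).
\end{equation*}
Because the Lyapunov operator $X\mapsto\mathcal{R}_\zeta X+X\mathcal{R}_\zeta$ is invertible with bounded inverse ($Q\succ0$, as shown in the proof of Theorem \ref{thm:asymptotic_normality}), it suffices to bound the expected norm of each term on the right by $O(1/\sqrt{t\varphi_t})$. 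Throughout, the key moment input is $\mE\|\mathcal{L}_{1,t}\|^4\lesssim1$. This follows from Lemma \ref{sec4:lem1}, Lemma \ref{lem:B1} with $q=4$, and the Burkholder inequality applied to the martingale sum. We do not need to assume this bound; it can be obtained directly from the recursion for $\mE\|\sqrt{\varphi_{t-1}}\mathcal{L}_{1,t}\|^4$ together with \citet[Lemma 3]{Leluc2023Asymptotic}.

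Now we bound each piece. (i) For the telescoping term, summation by parts gives $\frac1t[\|\mathcal{L}_{1,1}\|^2/\varphi_1+\|\mathcal{L}_{1,t+1}\|^2/\varphi_t+\sum_i|1/\varphi_i-1/\varphi_{i-1}|\|\mathcal{L}_{1,i}\|^2]$. Its expectation is $\lesssim 1/(t\varphi_t)\le1/\sqrt{t\varphi_t}$. (ii) For the martingale term, orthogonality of increments gives $\mE\|\frac1t\sum\mathcal{M}_i\|\le\frac1t(\sum\mE\|\mathcal{M}_i\|^2)^{1/2}$. Fourth moments of $\btheta_i$ (from $q_g=4$ and Lemma \ref{lem:B1}) and of $\mathcal{L}_{1,i}$ then give $\lesssim 1/\sqrt t$. (iii) For $\mathcal{E}_i$, the expectation is $\lesssim\varphi_i+1/(i\varphi_i)$, and the average is $\lesssim\varphi_t+1/(t\varphi_t)$. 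This is $\lesssim1/\sqrt{t\varphi_t}$ once we check that $\varphi_t\le(t\varphi_t)^{-1/2}$ fails in general. Since it may fail, we will instead keep the $\varphi_t$ term tied to $\mathcal{R}_\zeta$ exactly and only put higher-order pieces, of size $\varphi_t^2/\varphi_t$ times ratio corrections, into $\mathcal{E}_t$; we expect to need $\varphi\in(0.5,1)$ here.

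The main obstacle is (iv), the average $\frac1t\sum_i\mE\|\Gamma_{i-1}-\Gamma^\star\|$, because Lemma \ref{lem:3} gives only almost sure convergence with no rate. We will make it quantitative by redoing the bounds in the proofs of Lemma \ref{lem:3} and Lemma \ref{lem:2} in expectation. Using \eqref{equ:67}--\eqref{equ:69}, \eqref{equ:75}, \eqref{equ:K_t-Kstar}, and the bound $\mE[\|\tK_t-\tK^\star\|\mid\mF_{t-1}]\lesssim\|B_t-B^\star\|$ (which holds with $q_S=2$, which controls second moments where needed), we get $\|\Gamma_{t}-\Gamma^\star\|\lesssim\|\bx_t-\tx\|(1+\|\bx_t-\tx\|)+\|B_t-B^\star\|$. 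Lemma \ref{sec4:lem1} then yields $\mE\|\Gamma_t-\Gamma^\star\|\lesssim\sqrt{\varphi_t}$, so the average is $\lesssim\sqrt{\varphi_t}$.

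Reconciling $\sqrt{\varphi_t}$ and $\varphi_t$ with the target rate $1/\sqrt{t\varphi_t}$ is the delicate bookkeeping point. If the stated rate requires it, we will absorb these terms by noting that the final bound in Theorem \ref{sec4:thm1} is dominated by $\max\{\sqrt{\varphi_t},1/\sqrt{t\varphi_t}\}$. In that case we will recheck whether the exponent range forces $\sqrt{\varphi_t}\lesssim1/\sqrt{t\varphi_t}$, and otherwise refine term (iv) via a martingale decomposition of $\Gamma_i-\Gamma^\star$.
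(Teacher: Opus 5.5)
Your route is correct, and it differs from the paper's.

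\textbf{The paper's route.} The paper never forms a recursion for $\mathcal{L}_{1,t}\mathcal{L}_{1,t}^\top$. Instead it splits $\btheta_k=\tilde{\btheta}_k+\hat{\btheta}_k$, where $\tilde{\btheta}_k=-(I-\tilde K^\star_k)(B^\star)^{-1}\nabla F(\bx^\star;\xi_k)$ uses the same sample and sketches but is frozen at $(\bx^\star,B^\star)$. Hence $\tilde{\btheta}_k$ is i.i.d.\ with mean zero. For this i.i.d.\ part it diagonalizes $\mathcal{R}$ and uses the form $\Sigma^\star=U(\Theta\circ\Xi^\star)U^\top$. It bounds the squared bias by $(t\varphi_t)^{-2}$ via Lemma~\ref{aux:lem3}, and the variance by $(t\varphi_t)^{-1}$ through the three index-pairing cases of a fourth-moment expansion. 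The remainder satisfies $\mathbb{E}\|\hat{\btheta}_k\|^2\lesssim\varphi_k$ (Lemma~\ref{appen:A4:lem5}), so $\hat{\mathcal{L}}_{1,i}$ contributes $\varphi_t$. The cross term contributes $\sqrt{\varphi_t}$ by Cauchy--Schwarz.

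\textbf{Your route.} You keep the true martingale differences, average the one-step recursion, and invert $X\mapsto\mathcal{R}X+X\mathcal{R}$. That map has a bounded inverse because $\mathcal{R}$ is symmetric with $\lambda_{\min}(\mathcal{R})>0.5$, and $\zeta=0$ for $\varphi<1$. The limit enters only through $\frac1t\sum_i(\Gamma_{i-1}-\Gamma^\star)$. So you need a first-moment rate for conditional covariances, not pathwise $L^2$ closeness of $\btheta_k$ to $\tilde{\btheta}_k$.

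\textbf{What each buys.} Both routes use the same coupling of $\tilde K_t$ with $\tilde K^\star$ through shared sketches. Your first-moment bound on $\Gamma_{i-1}-\Gamma^\star$ needs only $\mathbb{E}[\|S\|\|S^\dagger\|]<\infty$, whereas the paper's second-moment bound on $\hat{\btheta}_k$ invokes $q_S=2$. You also avoid the independence structure, the eigenbasis computation and the pairing case analysis. The price is that you need $\sup_i\mathbb{E}\|\mathcal{L}_{1,i}\|^4\lesssim 1$, which your Burkholder argument supplies (Lemma~\ref{lem:B1} with $q=4$ plus Lemma~\ref{sec4:lem1}). The paper's route, in exchange, shows the leading error explicitly as the bias and variance of an i.i.d.\ linear process.

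\textbf{Bookkeeping points (none fatal).}

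In (ii), the cross-term part of $\mathcal{M}_i$ carries the factor $\sqrt{\varphi_{i-1}}/\varphi_i\asymp\varphi_i^{-1/2}$. Its average therefore has expected norm $\lesssim t^{-1}(\sum_{i\le t}\varphi_i^{-1})^{1/2}\asymp(t\varphi_t)^{-1/2}$, not $t^{-1/2}$. Only the $\btheta_i\btheta_i^\top-\Gamma_{i-1}$ part is $O(t^{-1/2})$. The cross term is exactly the target rate and is the dominant term of your argument.

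In (iii), the inequality you worry may fail does hold: $\varphi_t\sqrt{t\varphi_t}\asymp t^{(1-3\varphi)/2}\to0$ for every $\varphi>1/3$. So the $O(\varphi_i)\|\mathcal{L}_{1,i}\|^2$ remainder needs no special treatment.

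In (iv), $\sqrt{\varphi_t}\cdot\sqrt{t\varphi_t}=\sqrt{t}\,\varphi_t\asymp t^{1/2-\varphi}\to0$ because $\varphi>1/2$. Hence $\sqrt{\varphi_t}\lesssim(t\varphi_t)^{-1/2}$, and no fallback to a $\max\{\cdot\}$ bound is needed; such a fallback would not prove the lemma as stated. The paper absorbs its own $\sqrt{\varphi_t}$ cross term in \eqref{appen:A4:equ25} in exactly this way.
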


With Lemma~\ref{appen:A4:lem2}, the next two lemmas show that $\frac{1}{t}\sum_{i=1}^t\frac{1}{\varphi_{i-1}}(\bx_i-\bx^{\star})(\bx_i-\bx^\star)^T$ converges to $\Sigma^{\star}$, and the remaining terms are negligible as $\bar{\bx}_t$ converges to $\bx^{\star}$ fast. 

\begin{lemma} \label{appen:A5:lem1}
Under the conditions of Theorem~\ref{sec4:thm1}, we have
\begin{equation} \label{appen:A5:equ10}
\mathbb{E}\left[\left\|\frac{1}{t}\sum_{i=1}^{t} \frac{1}{\varphi_{i-1}}(\bx_i-\bx^{\star})(\bx_i-\bx^{\star})^{\top} - \Sigma^{\star} \right\|\right] \lesssim \frac{1}{\sqrt{t \varphi_t}}
\end{equation}
and
\begin{equation} \label{appen:A5:equ19}
\frac{1}{t}\sum_{i=1}^{t} \frac{1}{\varphi_{i-1}} \mathbb{E}\left[\|\bx_i-\bx^{\star}\|^2\right] \lesssim 1.
\end{equation}
\end{lemma}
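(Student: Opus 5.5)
We reduce both claims of Lemma \ref{appen:A5:lem1} to Lemma \ref{appen:A4:lem2} using the decomposition \eqref{equ:cov_decomposition}. Note that $\frac{1}{\varphi_{i-1}}(\bx_i-\bx^\star)(\bx_i-\bx^\star)^\top=(\mathcal{L}_{1,i}+\mathcal{L}_{2,i})(\mathcal{L}_{1,i}+\mathcal{L}_{2,i})^\top$. Via \eqref{appen:A4:equ20}, the error in \eqref{appen:A5:equ10} is then bounded by
\[
\Bigl\|\frac1t\sum_{i=1}^t\mathcal{L}_{1,i}\mathcal{L}_{1,i}^\top-\Sigma^\star\Bigr\|+\frac2t\sum_{i=1}^t\|\mathcal{L}_{1,i}\|\|\mathcal{L}_{2,i}\|+\frac1t\sum_{i=1}^t\|\mathcal{L}_{2,i}\|^2 .
\]
Lemma \ref{appen:A4:lem2} controls the first term. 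The cross term is handled by Cauchy--Schwarz: it is at most $2\bigl(\frac1t\sum_i\mE\|\mathcal{L}_{1,i}\|^2\bigr)^{1/2}\bigl(\frac1t\sum_i\mE\|\mathcal{L}_{2,i}\|^2\bigr)^{1/2}$. It therefore suffices to prove two bounds: $\mE\|\mathcal{L}_{1,i}\|^2\lesssim1$ uniformly, and $\frac1t\sum_i\mE\|\mathcal{L}_{2,i}\|^2\lesssim 1/(t\varphi_t)$. Given these, the cross term is $O(1/\sqrt{t\varphi_t})$ and the quadratic term is even smaller.

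For $\mathcal{L}_{1,i}$, the increments $\btheta_j$ are martingale differences, so orthogonality gives
\[
\mE\|\mathcal{L}_{1,i}\|^2=\frac{1}{\varphi_{i-1}}\sum_j\varphi_j^2\,\mE\bigl[\|\Pi_{i,j+1}\btheta_j\|^2\bigr]\lesssim\frac{1}{\varphi_{i-1}}\sum_j\varphi_j^2\|\Pi_{i,j+1}\|^2 .
\]
Here we use Lemma \ref{lem:B1} with $\mE\|\bx_j-\bx^\star\|^2\lesssim1$ from \eqref{equ:53}. The last sum is exactly the sequence $a_t+b_t$ bounded in the proof of Theorem \ref{thm:asymptotic_normality}(b), so it is uniformly bounded.

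For $\mathcal{L}_{2,i}$, the term $\Pi_{i,0}\Delta_0/\sqrt{\varphi_{i-1}}$ decays like $\exp(d_i)$, which is summable. The remaining piece, $\varphi_{i-1}^{-1/2}\sum_j\varphi_j\Pi_{i,j+1}\bdelta_j$, is where the main work lies. Unlike the proof of (c), we want an unlocalized bound, so we use the fourth-moment Lemma \ref{sec4:lem1} directly. Without the stopping time, \eqref{equ:delta_bound} is insufficient because the factor multiplying $\|B_j-B^\star\|$ includes $\|\nabla f_j\|$; this term is still quadratic in the errors. Squaring, we get $\mE\|\bdelta_j\|^2\lesssim\mE\|\bx_j-\bx^\star\|^4+\mE\|B_j-B^\star\|^4\lesssim\varphi_j^2$, where the cross term is absorbed by Young's inequality. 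Minkowski then gives
\[
\bigl(\mE\|\textstyle\sum_j\varphi_j\Pi_{i,j+1}\bdelta_j\|^2\bigr)^{1/2}\lesssim\sum_j\varphi_j^2\|\Pi_{i,j+1}\|\lesssim\varphi_i ,
\]
using the recursion $e_i\le(1-\varphi_{i-1}\lambda_m)e_{i-1}+O(\varphi_{i-1}^2)$ and \citet[Lemma 3]{Leluc2023Asymptotic}. Hence $\mE\|\mathcal{L}_{2,i}\|^2\lesssim\varphi_i+\exp(2d_i)$. Averaging gives $\frac1t\sum_{i\le t}\varphi_i\lesssim\varphi_t$. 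This is not $1/(t\varphi_t)$ in general; it is smaller only when $\varphi\ge1/2$, and $\varphi_t\le 1/(t\varphi_t)$ iff $t\varphi_t^2\lesssim1$, which holds for $\varphi\ge1/2$. So the needed bound holds since $\varphi\in(0.5,1)$. Taking square roots in the cross term, $\sqrt{\varphi_t}\le1/\sqrt{t\varphi_t}$ for the same reason.

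Finally, \eqref{appen:A5:equ19} follows because $\frac{1}{\varphi_{i-1}}\mE\|\bx_i-\bx^\star\|^2\le2\mE\|\mathcal{L}_{1,i}\|^2+2\mE\|\mathcal{L}_{2,i}\|^2\lesssim1$; alternatively, Jensen with Lemma \ref{sec4:lem1} gives $\mE\|\bx_i-\bx^\star\|^2\lesssim\varphi_i$ and $\varphi_i/\varphi_{i-1}\to1$. The main obstacle is the unlocalized second-moment control of $\bdelta_j$. It has to rely on Lemma \ref{sec4:lem1}, so I would first verify that the $\|\nabla f_j\|\,\|B_j-B^\star\|$ term is controlled by the fourth moments via Cauchy--Schwarz.
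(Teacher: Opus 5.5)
Your proposal is correct and follows the paper's proof essentially step for step. It uses the same $\mathcal{L}_{1}/\mathcal{L}_{2}$ split with Cauchy--Schwarz on the cross term and Lemma~\ref{appen:A4:lem2} for the leading term. Martingale orthogonality gives $\frac1t\sum_i\mathbb{E}\|\mathcal{L}_{1,i}\|^2\lesssim 1$, and your bound $\mathbb{E}\|\bdelta_j\|^2\lesssim\varphi_j^2$ from Lemma~\ref{sec4:lem1}, combined with Minkowski, reproduces Lemma~\ref{appen:A5:lem4} inline to give $\frac1t\sum_i\mathbb{E}\|\mathcal{L}_{2,i}\|^2\lesssim\varphi_t$; the conclusion then follows since $\sqrt{\varphi_t}\lesssim 1/\sqrt{t\varphi_t}$ for $\varphi>0.5$.

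Two small remarks. Your worry about localization is unnecessary: \eqref{equ:delta_bound} already holds for every $t$ without the stopping time, because the $\|\nabla f_j\|$ factor was absorbed via $\|\nabla f_j\|\le\Upsilon_H\|\bx_j-\bx^\star\|$. Also, your direct Jensen argument for \eqref{appen:A5:equ19} from Lemma~\ref{sec4:lem1} is a valid shortcut compared with the paper's route through the decomposition.
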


\begin{lemma} \label{appen:A5:lem2}
Under the conditions of Theorem~\ref{sec4:thm1}, we have
\begin{equation}\label{equ:b.10}
\mathbb{E}\left[\left\| \frac{1}{t}\sum_{i=1}^{t} \frac{1}{\varphi_{i-1}}(\bar{\bx}_t-\bx^{\star})(\bar{\bx}_t-\bx^{\star})^{\top} \right\|\right]
\leq \frac{1}{t}\sum_{i=1}^{t}\frac{1}{\varphi_{i-1}} \mathbb{E}\left[\|\bar{\bx}_t-\bx^\star\|^2\right]
\lesssim \frac{1}{t\varphi_t}.	
\end{equation}
\end{lemma}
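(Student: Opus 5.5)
The first inequality in \eqref{equ:b.10} follows from $\|vv^\top\|=\|v\|^2$ and the triangle inequality, because $\bar{\bx}_t$ does not depend on $i$. It then suffices to prove two bounds:
\begin{equation*}
\frac{1}{t}\sum_{i=1}^t \frac{1}{\varphi_{i-1}} \lesssim \frac{1}{\varphi_t},
\qquad
\mathbb{E}\big[\|\bar{\bx}_t-\bx^\star\|^2\big]\lesssim \frac{1}{t}.
\end{equation*}
The first is elementary. Since $\varphi\in(0.5,1)$, we have $\sum_{i=1}^t i^{\varphi}/C_\varphi\lesssim t^{1+\varphi}$, so $\frac1t\sum_i \varphi_{i-1}^{-1}\lesssim t^{\varphi}\asymp 1/\varphi_t$. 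The product of the two bounds gives $1/(t\varphi_t)$. The real work is the second bound, which is the standard Polyak--Ruppert rate for the averaged iterate.

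For $\mathbb{E}[\|\bar{\bx}_t-\bx^\star\|^2]\lesssim 1/t$, I will use the recursion of Lemma \ref{lem:error_recursion}, $\Delta_{i+1}=(I-\varphi_i\mathcal{R})\Delta_i+\varphi_i\btheta_i+\varphi_i\bdelta_i$. Solving for $\Delta_i$ gives
\begin{equation*}
\mathcal{R}\Delta_i=\frac{\Delta_i-\Delta_{i+1}}{\varphi_i}+\btheta_i+\bdelta_i.
\end{equation*}
Averaging over $i=1,\dots,t$ and multiplying by $\mathcal{R}^{-1}$, where $\lambda_{\min}(\mathcal{R})>0.5$, yields three terms.

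The martingale term $\frac1t\sum_i\btheta_i$ has second moment $\frac{1}{t^2}\sum_i\mathbb{E}\|\btheta_i\|^2\lesssim 1/t$. This follows from orthogonality of martingale increments together with Lemma \ref{lem:B1} (with $q=4$) and $\mathbb{E}\|\Delta_i\|^2\lesssim1$.

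The telescoping term is handled by Abel summation:
\begin{equation*}
\sum_i\frac{\Delta_i-\Delta_{i+1}}{\varphi_i}=\frac{\Delta_1}{\varphi_1}-\frac{\Delta_{t+1}}{\varphi_t}+\sum_{i=2}^{t}\Delta_i\Big(\frac{1}{\varphi_i}-\frac{1}{\varphi_{i-1}}\Big).
\end{equation*}
Here $|1/\varphi_i-1/\varphi_{i-1}|\lesssim i^{\varphi-1}$. Using $(\mathbb{E}\|\Delta_i\|^2)^{1/2}\lesssim\sqrt{\varphi_i}$, which comes from Lemma \ref{sec4:lem1} by Jensen, Minkowski's inequality gives an $L^2$ norm of order $\frac1t\big(t^{\varphi/2}+\sum_i i^{\varphi/2-1}\big)\lesssim t^{\varphi/2-1}$. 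Its square is $t^{\varphi-2}=o(1/t)$.

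The remainder term $\frac1t\sum_i\bdelta_i$ is controlled by \eqref{equ:delta_bound}: $\|\bdelta_i\|\lesssim\|\Delta_i\|^2+\|B_i-B^\star\|\|\Delta_i\|$. Lemma \ref{sec4:lem1} and Cauchy--Schwarz then give $(\mathbb{E}\|\bdelta_i\|^2)^{1/2}\lesssim\varphi_i$. By Minkowski, the $L^2$ norm of $\frac1t\sum_i\bdelta_i$ is at most $\frac1t\sum_i\varphi_i\lesssim t^{-\varphi}$, whose square is $t^{-2\varphi}=o(1/t)$ because $\varphi>0.5$.

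Combining the three terms yields $\mathbb{E}\|\bar{\bx}_t-\bx^\star\|^2\lesssim1/t$. Note that $\bar{\bx}_t$ averages $\bx_1,\dots,\bx_t$, which matches the index range used above.

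The main obstacle is the $\bdelta_i$ term. It requires the fourth-moment bounds of Lemma \ref{sec4:lem1} globally, without stopping-time localization, which is exactly why $q_g=q_H=4$ is assumed. It also requires that the bound \eqref{equ:delta_bound} holds for all $t$, which it does. Everything else is routine summation.
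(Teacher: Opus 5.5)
Your proof is correct, and it takes a genuinely different route from the paper. The paper unrolls the recursion of Lemma~\ref{lem:error_recursion} into products $\prod_l(I-\varphi_l\mathcal{R})$ and splits $\bar{\bx}_t-\bx^\star=\mathcal{Z}_{1,t}+\mathcal{Z}_{2,t}$ into a martingale part and an initial-condition-plus-$\bdelta$ part. It then exchanges the order of summation and proves $\mathbb{E}\|\mathcal{Z}_{1,t}\|^2\lesssim 1/t$ and $\mathbb{E}\|\mathcal{Z}_{2,t}\|^2\lesssim\varphi_t^2$, using the contraction $\|\prod_l(I-\varphi_l\mathcal{R})\|\le\prod_l(1-\varphi_l\lambda_m)$ and the weighted-sum lemma of \citet[Lemma B.3]{Na2025Statistical}. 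You instead use the classical Polyak--Juditsky inversion $\mathcal{R}\Delta_i=(\Delta_i-\Delta_{i+1})/\varphi_i+\btheta_i+\bdelta_i$ together with Abel summation. This buys three things. You only need $\mathcal{R}$ to be invertible with bounded inverse, so no contraction estimates for matrix products (and no implicit requirement $\varphi_l\lambda_{\max}(\mathcal{R})\le 1$) and no summation lemma. The initial condition is absorbed into the boundary term $\Delta_1/\varphi_1$. And the leading term $\mathcal{R}^{-1}\frac{1}{t}\sum_i\btheta_i$ is isolated explicitly, which is the standard starting point for Polyak--Ruppert normality of $\bar{\bx}_t$. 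The price is that the telescoped terms need the rate $\mathbb{E}\|\Delta_i\|^2\lesssim\varphi_i$ rather than mere boundedness. This costs nothing extra here, because it follows from Lemma~\ref{sec4:lem1}, which both routes already need to obtain $\mathbb{E}\|\bdelta_i\|^2\lesssim\varphi_i^2$. One small wording point: to get $\mathbb{E}\|\btheta_i\|^2\lesssim 1$ from Lemma~\ref{lem:B1} with $q=4$ while using only $\mathbb{E}\|\Delta_i\|^2\lesssim 1$, apply conditional Jensen before taking expectations, as in \eqref{equ:2_2_bound}. Alternatively, just invoke the fourth-moment bound directly.
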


By the decomposition~\eqref{appen:A5:equ1} of $\hat{\Sigma}_t$, we apply the Cauchy--Schwarz inequality and obtain
\begin{align} \label{appen:A5:equCS}
\mathbb{E}[\|\hat{\Sigma}_t - \Sigma^{\star}\|]
& \leq \mathbb{E}\left[\left\|\frac{1}{t}\sum_{i=1}^t \frac{1}{\varphi_{i-1}} (\bx_i-\bx^{\star})(\bx_i-\bx^{\star})^{\top} - \Sigma^{\star}\right\|\right]
+ \mathbb{E}\left[\left\|\frac{1}{t}\sum_{i=1}^t \frac{1}{\varphi_{i-1}} (\bar{\bx}_t-\bx^{\star})(\bar{\bx}_t-\bx^{\star})^{\top}\right\|\right] \nonumber\\
&\quad + 2\sqrt{\frac{1}{t}\sum_{i=1}^t \frac{1}{\varphi_{i-1}} \mathbb{E}\left[\|\bx_i-\bx^{\star}\|^2\right]}
\sqrt{\frac{1}{t}\sum_{i=1}^t \frac{1}{\varphi_{i-1}} \mathbb{E}\left[\|\bar{\bx}_t-\bx^{\star}\|^2\right]}.
\end{align}
Plugging \eqref{appen:A5:equ10}, \eqref{appen:A5:equ19}, \eqref{equ:b.10} into the above display, we obtain 
\begin{equation*}
\mathbb{E}[\|\hat{\Sigma}_t-\Sigma^{\star}\|] \lesssim \frac{1}{\sqrt{t \varphi_t}}.
\end{equation*}
This completes the proof.

\subsection{Proof of Lemma \ref{appen:A4:lem2}} \label{pf:I1}

We define
\begin{equation} \label{def:tilde_C_star_k}
\tilde{C}_k^{\star} \coloneqq \prod_{j = 0}^{\tau - 1} \begin{pmatrix}
(1 - \alpha^{\star})(I - Z_{k, j}^{\star}) & \alpha^{\star} (I - Z_{k, j}^{\star}) \\
(1 - \alpha^{\star})(1 - \beta^{\star})I - (1 - \alpha^{\star})\gamma^{\star} Z_{k, j}^{\star} & (\alpha^{\star} + \beta^{\star} - \alpha^{\star}\beta^{\star})I - \alpha^{\star} \gamma^{\star} Z_{k, j}^{\star}
\end{pmatrix} \in \mathbb{R}^{2d \times 2d},			
\end{equation}
where $Z_{k, j}^{\star} = B^{\star} S_{k, j}(S_{k, j}^{\top}(B^{\star})^2S_{k, j})^{\dagger}S_{k, j}^{\top}B^{\star} \in \mathbb{R}^{d \times d}$ and $S_{k, j}$ is the same sketching matrix in $\tZ_{k, j}$ in~\eqref{equ:Cttilde}. Then, we let
\begin{equation} \label{def:tilde_H_star_k}
\tilde{K}_k^{\star} \coloneqq 
\begin{pmatrix}
I & \boldsymbol{0}	
\end{pmatrix} \tilde{C}_k^{\star} \begin{pmatrix}
I \\ I
\end{pmatrix} \in \mathbb{R}^{d \times d}.
\end{equation}
Due to the independence of $\{S_{k, j}\}_j$, it is obvious that $\mathbb{E}[\tilde{C}_k^{\star}] = C^{\star}$ and $\mathbb{E}[\tilde{K}_k^{\star}] = K^{\star}$, where $K^{\star}$ is defined in Section~\ref{sec:normality} and is constructed from $C^{\star}$ in the same manner as above. Then, we can define 
\begin{equation} \label{appen:A4:equ3}
\tilde{\btheta}_k \coloneqq -(I-\tilde{K}^\star_k)(B^{\star})^{-1}\nabla F(\bx^\star;\xi_k)\quad \text{ and }\quad \hat{\btheta}_k \coloneqq \btheta_k - \tilde{\btheta}_k.
\end{equation}
Note that $\tilde{\btheta}_k$ and $\btheta_k$ share the same randomness but $\tilde{\btheta}_k$ is constructed at $\bx^\star$ instead of $\bx_k$. 

In the following, we decompose $\mathcal{L}_{1,i}$ as
\begin{equation} \label{appen:A4:equ2}
\mathcal{L}_{1,i} = \frac{1}{\sqrt{\varphi_{i-1}}}\sum_{k=0}^{i-1} \varphi_k \Pi_{i, k+1} \tilde{\btheta}_k + \frac{1}{\sqrt{\varphi_{i-1}}}\sum_{k=0}^{i-1} \varphi_k \Pi_{i, k+1} \hat{\btheta}_k \eqqcolon \tilde{\mathcal{L}}_{1,i} + \hat{\mathcal{L}}_{1,i}.
\end{equation}
Intuitively, as $\bx_i \rightarrow \bx^\star$, $\tilde{\mathcal{L}}_{1,i}$ should serve as a good approximation to $\mathcal{L}_{1,i}$ and $\hat{\mathcal{L}}_{1,i}$ could be~negligible. The next two lemmas provide bounds for $\tilde{\mathcal{L}}_{1,i}$ and $\hat{\mathcal{L}}_{1,i}$, respectively.

\begin{lemma} \label{appen:A4:lem3}
Under the conditions of Theorem~\ref{sec4:thm1}, we have
\begin{equation*}
\mathbb{E}\left[\left\|\frac{1}{t}\sum_{i=1}^{t}\tilde{\mathcal{L}}_{1,i}\tilde{\mathcal{L}}_{1,i}^{\top}-\Sigma^\star\right\|\right] \lesssim \frac{1}{\sqrt{t\varphi_t}}.	
\end{equation*}
\end{lemma}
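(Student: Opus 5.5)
The key observation is that $\tilde{\btheta}_k = -(I-\tK^\star_k)(B^\star)^{-1}\nabla F(\tx;\xi_k)$ is an i.i.d. sequence across $k$, since $\xi_k$ and $\{S_{k,j}\}_j$ are fresh and independent of the past. Its mean is $\boldsymbol{0}$ because $\mE[\nabla F(\tx;\xi)]=\nabla f(\tx)=\boldsymbol{0}$ and the sketches are independent of $\xi_k$. Its covariance is exactly $\Gamma^\star=\mE[(I-\tK^\star)\Omega^\star(I-\tK^\star)^\top]$. Moreover $q_g=4$ gives $\mE\|\tilde\btheta_k\|^4\lesssim 1$, because $\|\tK^\star_k\|\le C_K$ by \eqref{equ:uniform_bound_tK}.

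Thus $\tilde{\mathcal L}_{1,i}$ is a linear process driven by i.i.d. noise with deterministic weights $\varphi_k\Pi_{i,k+1}/\sqrt{\varphi_{i-1}}$, where $\mathcal R=I-K^\star$ is deterministic and symmetric positive definite. I will follow the strategy of \citet{Chen2020Statistical} and \citet{Kuang2025Online} for such weighted sample covariances, splitting the error as
\begin{equation*}
\frac1t\sum_{i=1}^t\tilde{\mathcal L}_{1,i}\tilde{\mathcal L}_{1,i}^\top-\Sigma^\star
=\underbrace{\frac1t\sum_{i=1}^t\bigl(\tilde{\mathcal L}_{1,i}\tilde{\mathcal L}_{1,i}^\top-\mE[\tilde{\mathcal L}_{1,i}\tilde{\mathcal L}_{1,i}^\top]\bigr)}_{\text{fluctuation}}
+\underbrace{\frac1t\sum_{i=1}^t\bigl(\mE[\tilde{\mathcal L}_{1,i}\tilde{\mathcal L}_{1,i}^\top]-\Sigma^\star\bigr)}_{\text{bias}}.
\end{equation*}

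\textbf{Bias term.} Here $\Sigma_i':=\mE[\tilde{\mathcal L}_{1,i}\tilde{\mathcal L}_{1,i}^\top]=\varphi_{i-1}^{-1}\sum_{k<i}\varphi_k^2\Pi_{i,k+1}\Gamma^\star\Pi_{i,k+1}$ satisfies the same deterministic recursion as $\Sigma_t$ in the proof of Theorem \ref{thm:asymptotic_normality}(b), but with $\Gamma_{t}$ replaced by the constant $\Gamma^\star$. Writing $\Sigma_i'-\Sigma^\star$ with the Lyapunov equation \eqref{equ:lyp} gives
\begin{equation*}
\Sigma'_i-\Sigma^\star=(I-\varphi_{i-1}\mathcal R_\zeta)(\Sigma'_{i-1}-\Sigma^\star)(I-\varphi_{i-1}\mathcal R_\zeta)+O(\varphi_{i-1}^2+|\varphi_{i-2}-\varphi_{i-1}|/\varphi_{i-1}\cdot\varphi_{i-1}).
\end{equation*}
Since $\varphi\in(0.5,1)$, the perturbation is $O(\varphi_{i-1}^2+1/i)$, and \citet[Lemma 3]{Leluc2023Asymptotic} yields $\|\Sigma_i'-\Sigma^\star\|\lesssim \varphi_i+1/(i\varphi_i)$. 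Averaging over $i$ gives $O(\varphi_t+1/(t\varphi_t))$, which is $\lesssim 1/\sqrt{t\varphi_t}$. (We have $\varphi_t\le 1/\sqrt{t\varphi_t}$ iff $t^{1-3\varphi}\lesssim1$, which holds because $\varphi>1/3$.)

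\textbf{Fluctuation term.} I bound the expected norm by the Frobenius $L^2$ norm, i.e. by the square root of $t^{-2}\sum_{i,j}\mathrm{tr}\,\mathrm{Cov}(\mathrm{vec}\,\tilde{\mathcal L}_{1,i}\tilde{\mathcal L}_{1,i}^\top,\mathrm{vec}\,\tilde{\mathcal L}_{1,j}\tilde{\mathcal L}_{1,j}^\top)$. Expanding $\tilde{\mathcal L}_{1,i}$ in the $\tilde\btheta_k$ and using independence, the only nonzero covariances come from index pairings, which requires the fourth moment. For $i\le j$ the covariance is then controlled by the product of the propagator norms $\|\Pi_{j,i}\|^2\le \exp(-2\lambda_m\sum_{l=i}^{j-1}\varphi_l)$, so $|\mathrm{Cov}_{ij}|\lesssim \exp(-2\lambda_m\sum_{l=i}^{j-1}\varphi_l)$. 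For fixed $i$, summing over $j\ge i$ gives $\sum_j\exp(-c(j-i)\varphi_j)\lesssim 1/\varphi_i$, hence the double sum is $\lesssim \sum_i 1/\varphi_i\lesssim t/\varphi_t$. This gives variance $\lesssim 1/(t\varphi_t)$ and expected norm $\lesssim1/\sqrt{t\varphi_t}$.

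\textbf{Main obstacle.} I expect the fluctuation step to be the hardest: carefully organizing the fourth-order index sums (the terms where both pairings straddle $i$ and $j$, and the terms with four equal indices) to obtain the uniform decay $\|\Pi_{j,i}\|^2$, while also handling the nonuniform stepsizes when summing $\exp(-2\lambda_m(\Phi_j-\Phi_i))$. This requires comparing $\Phi_j-\Phi_i\ge (j-i)\varphi_j$ and bounding the resulting geometric sums by $1/\varphi_j$ with $\varphi_j/\varphi_i$ bounded within relevant windows. I would first try the argument of \citet[Lemma B.3]{Chen2020Statistical}, which handles exactly this structure with a deterministic symmetric $\mathcal R$.
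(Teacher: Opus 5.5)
Your proposal is correct and has the same skeleton as the paper's proof. Both bound $\mathbb{E}\|\cdot\|$ by $\sqrt{\mathbb{E}\|\cdot\|_F^2}$ and split it into variance plus squared bias. Both use that the $\tilde{\btheta}_k$ are i.i.d., mean zero, with covariance $\Gamma^\star$ and bounded fourth moment, and both note that only paired index configurations survive. The two routes differ in how each piece is estimated.

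\textbf{Bias.} The paper diagonalizes $\mathcal{R}=U\Lambda U^\top$, uses the closed form $\Sigma^\star=U(\Theta\circ\Xi^\star)U^\top$, and applies the scalar Lemma~\ref{aux:lem3} entrywise, giving a bias of order $1/(t\varphi_t)$. You instead rerun the matrix recursion from the proof of Theorem~\ref{thm:asymptotic_normality}(b), with $\Gamma^\star$ in place of $\Gamma_t$, and subtract the Lyapunov equation. This avoids the Hadamard representation and yields $\varphi_t+1/(t\varphi_t)$, which is the same order because $\varphi_t\lesssim 1/(t\varphi_t)$ for $\varphi\ge 1/2$. Your displayed remainder is garbled: it should read $O(\varphi_{i-1}^2)+|\varphi_{i-2}/\varphi_{i-1}-1|\,\|\Sigma'_{i-1}\|=O(\varphi_{i-1}^2+1/i)$, which is what you then actually use.

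\textbf{Variance.} The paper enumerates the three pairing cases and sums each with \citet[Lemma B.3]{Na2025Statistical}. Your factorization $\tilde{\mathcal{L}}_{1,j}=\sqrt{\varphi_{i-1}/\varphi_{j-1}}\,\Pi_{j,i}\tilde{\mathcal{L}}_{1,i}+(\text{terms independent of }\tilde{\btheta}_{0},\dots,\tilde{\btheta}_{i-1})$ handles all cases at once. The price is that you need a uniform bound $\mathbb{E}\|\tilde{\mathcal{L}}_{1,i}\|^4\lesssim 1$, which follows from Rosenthal's inequality and $\mathbb{E}\|\tilde{\btheta}_k\|^4\lesssim 1$.

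Note that this factorization gives $|\mathrm{Cov}_{ij}|\lesssim(\varphi_{i-1}/\varphi_{j-1})\|\Pi_{j,i}\|^2$, not $\|\Pi_{j,i}\|^2$. The ratio $\varphi_{i-1}/\varphi_{j-1}\ge 1$ that you dropped is exactly the weight $\varphi_{i_2}/\varphi_{i_1}$ in the paper's Case~2. It is harmless, for either of two reasons:
\begin{itemize}
\item You can sum over the smaller index first, as the paper does, so that $\sum_{i\le j}\varphi_i\prod_{l=i+1}^{j}(1-\varphi_l\lambda_m)^2\lesssim 1$ for each $j$.
\item Alternatively, $\varphi_{i}/\varphi_{j}\le e^{\lambda_m(\Phi_j-\Phi_i)}$ once $i$ is large, so the ratio is absorbed into half of the exponential decay of $\|\Pi_{j,i}\|^2$.
\end{itemize}
With this correction, your conclusion $t^{-2}\sum_{i,j}|\mathrm{Cov}_{ij}|\lesssim 1/(t\varphi_t)$ stands.
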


\begin{lemma} \label{appen:A4:lem4}
Under the conditions of Theorem~\ref{sec4:thm1}, we have
\begin{equation*}
\mathbb{E}\left[\left\|\frac{1}{t}\sum_{i=1}^{t} \hat{\mathcal{L}}_{1,i}\hat{\mathcal{L}}_{1,i}^{\top}\right\|\right]\leq\frac{1}{t}\sum_{i=1}^{t} \mathbb{E}\left[\|\hat{\mathcal{L}}_{1,i}\|^2\right] \lesssim \varphi_t.
\end{equation*}	
\end{lemma}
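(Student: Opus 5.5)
The plan is to bound each $\mathbb{E}[\|\hat{\mathcal{L}}_{1,i}\|^2]$ by $O(\varphi_i)$ and then average over $i$. The first inequality in the statement is immediate, since $\|\hat{\mathcal{L}}_{1,i}\hat{\mathcal{L}}_{1,i}^\top\|=\|\hat{\mathcal{L}}_{1,i}\|^2$.

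\textbf{Martingale reduction.} Both $\btheta_k$ and $\tilde{\btheta}_k$ are martingale differences with respect to $\{\mathcal{F}_k\}$. For $\btheta_k$ this is Lemma \ref{lem:error_recursion}. For $\tilde{\btheta}_k$, the sketches $\{S_{k,j}\}_j$ are independent of $\xi_k$ and of $\mathcal{F}_{k-1}$, and $\mathbb{E}[\nabla F(\bx^\star;\xi)]=\nabla f(\bx^\star)=\mathbf{0}$. Hence $\hat{\btheta}_k$ is a martingale difference, and orthogonality of increments gives
\begin{equation*}
\mathbb{E}\|\hat{\mathcal{L}}_{1,i}\|^2=\frac{1}{\varphi_{i-1}}\sum_{k=0}^{i-1}\varphi_k^2\,\mathbb{E}\|\Pi_{i,k+1}\hat{\btheta}_k\|^2\le \frac{1}{\varphi_{i-1}}\sum_{k=0}^{i-1}\varphi_k^2\|\Pi_{i,k+1}\|^2\,\mathbb{E}\|\hat{\btheta}_k\|^2 .
\end{equation*}
The key intermediate goal is therefore $\mathbb{E}\|\hat{\btheta}_k\|^2\lesssim\varphi_k$.

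\textbf{Bounding $\hat{\btheta}_k$.} By Lemma \ref{lem:inner_transition}, $\btheta_k=-(I-\tK_k)B_k^{-1}g_k+(I-K_k)B_k^{-1}\nabla f_k$. Subtracting $\tilde{\btheta}_k=-(I-\tK_k^\star)(B^\star)^{-1}\nabla F(\bx^\star;\xi_k)$ splits $\hat{\btheta}_k$ into four pieces:
\begin{enumerate}
\item[(i)] $(\tK_k-\tK^\star_k)B_k^{-1}g_k$;
\item[(ii)] $(I-\tK^\star_k)\big((B^\star)^{-1}-B_k^{-1}\big)g_k$;
\item[(iii)] $(I-\tK^\star_k)(B^\star)^{-1}\big(\nabla F(\bx^\star;\xi_k)-g_k\big)$;
\item[(iv)] $(I-K_k)B_k^{-1}\nabla f_k$.
\end{enumerate}
These are handled as follows, using $\|\tK^\star_k\|,\|\tK_k\|\le C_K$ from \eqref{equ:uniform_bound_tK} and $\|B_k^{-1}\|\le 1/\gamma_H$.
\begin{itemize}
\item Piece (iv) is $\lesssim\|\Delta_k\|$.
\item Piece (iii) satisfies $\mathbb{E}[\|\cdot\|^2\mid\mathcal{F}_{k-1}]\lesssim\Upsilon_H^2\|\Delta_k\|^2$ by the $\Upsilon_H$-Lipschitz gradient of $F(\cdot;\xi)$.
\item Piece (ii) is $\lesssim\|B_k-B^\star\|\,\|g_k\|$.
\end{itemize}
For (iv) and (iii), Lemma \ref{sec4:lem1} gives $\mathbb{E}\|\Delta_k\|^2\le(\mathbb{E}\|\Delta_k\|^4)^{1/2}\lesssim\varphi_k$. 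For (ii), Cauchy--Schwarz gives
\begin{equation*}
\mathbb{E}\big[\|B_k-B^\star\|^2\|g_k\|^2\big]\le\big(\mathbb{E}\|B_k-B^\star\|^4\big)^{1/2}\big(\mathbb{E}\|g_k\|^4\big)^{1/2}\lesssim\varphi_k .
\end{equation*}
Here Lemma \ref{sec4:lem1} supplies $\mathbb{E}\|B_k-B^\star\|^4\lesssim\varphi_k^2$, and $\mathbb{E}\|g_k\|^4\lesssim 1$ follows from Assumption \ref{ass:2} with $q_g=4$.

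\textbf{Main obstacle: piece (i).} I would telescope the $\tau$-fold product, writing $\tC_k-\tC_k^\star=\sum_j(\cdots)(\tC_{k,j}-\tC^\star_{k,j})(\cdots)$, and bound each factor by $2+1/\sqrt{\gamma_S}$ as in \eqref{equ:uniform_bound_tK}. Each difference $\tC_{k,j}-\tC^\star_{k,j}$ is controlled by $|\alpha_k-\alpha^\star|+|\beta_k-\beta^\star|+|\gamma_k-\gamma^\star|+\|\tZ_{k,j}-Z^\star_{k,j}\|$. The parameter differences are $\lesssim\|B_k-B^\star\|$ by \eqref{equ:alpha_beta_gamma_bounds}, and Lemma \ref{lem:continuity_projection_matrix} gives $\|\tZ_{k,j}-Z^\star_{k,j}\|\lesssim\|B_k-B^\star\|\,\|S_{k,j}\|\|S_{k,j}^\dagger\|$. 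This yields
\begin{equation*}
\|\tK_k-\tK_k^\star\|\lesssim\|B_k-B^\star\|\Big(1+\sum_j\|S_{k,j}\|\|S_{k,j}^\dagger\|\Big).
\end{equation*}
Squaring and conditioning on $\mathcal{F}_{k-0.5}$, the sketches are independent of $(B_k,g_k)$, so $q_S=2$ turns the sketch factor into the constant $\Upsilon_S$. Cauchy--Schwarz with the fourth moments then gives $\mathbb{E}\|\text{(i)}\|^2\lesssim\varphi_k$. Combining the four pieces, $\mathbb{E}\|\hat{\btheta}_k\|^2\lesssim\varphi_k$.

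\textbf{Summation.} It remains to show $b''_i:=\varphi_{i-1}^{-1}\sum_{k<i}\varphi_k^3\|\Pi_{i,k+1}\|^2\lesssim\varphi_i$. I would split off the first $j$ terms (where $1-\varphi_k\lambda_M\le 0$ may occur); that part decays like $\exp(2d_t)$ exactly as for $a_t$ in the proof of Theorem \ref{thm:asymptotic_normality}. For the remaining part, $\varphi_{i-1}b''_i$ satisfies the recursion $\varphi_{i-1}b''_i=(1-\varphi_{i-1}\lambda_m)^2\varphi_{i-2}b''_{i-1}+\varphi_{i-1}^3$, so \citet[Lemma 3]{Leluc2023Asymptotic} gives $\varphi_{i-1}b''_i\lesssim\varphi_i^2$, i.e.\ $b''_i\lesssim\varphi_i$. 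Finally, $\frac1t\sum_{i\le t}\varphi_i\lesssim t^{-\varphi}\asymp\varphi_t$ because $\varphi<1$, which completes the bound.
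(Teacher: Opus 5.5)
Your proposal is correct and follows the paper's route. Like the paper, you use martingale orthogonality to reduce to $\mathbb{E}\|\hat{\btheta}_k\|^2\lesssim\varphi_k$ (the paper's Lemma \ref{appen:A4:lem5} combined with Lemma \ref{sec4:lem1}), control $\tK_k-\tK_k^\star$ by the telescoping and projection-continuity bound \eqref{equ:89'}, and sum via a stepsize-recursion lemma. The only real difference is in the decomposition of $\hat{\btheta}_k$: the paper multiplies the operator and $B^{-1}$ differences by $\nabla F(\bx^\star;\xi_k)$, so conditional independence of $\xi_k$ and the sketches already gives a bound in second moments, $\mathbb{E}\|\Delta_k\|^2+\mathbb{E}\|B_k-B^\star\|^2$. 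You multiply them by $g_k$ instead, which requires Cauchy--Schwarz with the fourth-moment bounds; these are available under the stated conditions.
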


By the decomposition~\eqref{appen:A4:equ2}, we have
\begin{equation} \label{appen:A4:equ23}
\mathbb{E}\left[\left\|\frac{1}{t}\sum_{i=1}^{t} \mathcal{L}_{1,i}\mathcal{L}_{1,i}^{\top} - \Sigma^{\star}\right\|\right] \leq \mathbb{E}\left[\left\|\frac{1}{t}\sum_{i=1}^{t} \tilde{\mathcal{L}}_{1,i}\tilde{\mathcal{L}}_{1,i}^{\top} - \Sigma^{\star}\right\|\right] + \mathbb{E}\left[\left\|\frac{1}{t}\sum_{i=1}^{t} \hat{\mathcal{L}}_{1,i}\hat{\mathcal{L}}_{1,i}^{\top}\right\|\right] + 2\mathbb{E}\left[\left\|\frac{1}{t}\sum_{i=1}^{t}\ \tilde{\mathcal{L}}_{1,i}\hat{\mathcal{L}}_{1,i}^{\top}\right\|\right].
\end{equation}
Applying the Cauchy-Schwarz inequality to the last term, we obtain
\begin{align} \label{appen:A4:equ19}
\mathbb{E}\left[\left\|\frac{1}{t}\sum_{i=1}^{t} \tilde{\mathcal{L}}_{1,i} \hat{\mathcal{L}}_{1,i}^{\top}\right\|\right] & \leq \mathbb{E}\left[\frac{1}{t}\sum_{i=1}^{t} \|\tilde{\mathcal{L}}_{1,i}\|\|\hat{\mathcal{L}}_{1,i}\|\right] \leq \mathbb{E}\left[\sqrt{\frac{1}{t}\sum_{i=1}^{t} \|\tilde{\mathcal{L}}_{1,i}\|^2} \sqrt{\frac{1}{t}\sum_{i=1}^{t}\|\hat{\mathcal{L}}_{1,i}\|^2}\right] \nonumber\\
& \leq \sqrt{\frac{1}{t}\sum_{i=1}^{t} \mathbb{E}\left[\|\tilde{\mathcal{L}}_{1,i}\|^2\right]} \sqrt{\frac{1}{t}\sum_{i=1}^{t}\mathbb{E}\left[\|\hat{\mathcal{L}}_{1,i}\|^2\right]}.
\end{align}
Given Lemmas~\ref{appen:A4:lem3} and~\ref{appen:A4:lem4}, we only need to bound $\frac{1}{t}\sum_{i=1}^{t} \mathbb{E}[\|\tilde{\mathcal{L}}_{1,i}\|^2]$. We have
\begin{align} \label{appen:A4:equ4}
\mathbb{E}[\|\tilde{\mathcal{L}}_{1,i}\|^2] \;\; & \stackrel{\mathclap{\eqref{appen:A4:equ2}}}{=}\;\; \frac{1}{\varphi_{i-1}} \sum_{k_1,k_2=0}^{i-1} \varphi_{k_1} \varphi_{k_2} \mathbb{E}\left[\tilde{\btheta}_{k_1}^{\top} \Pi_{i, k_1+1}^{\top} \Pi_{i, k_2+1} \tilde{\btheta}_{k_2}\right] \nonumber\\
& = \frac{1}{\varphi_{i-1}} \sum_{k=0}^{i-1} \varphi_k^2 \, \mathbb{E}\left[\left\|\Pi_{i, k+1} \tilde{\btheta}_k\right\|^2\right] \leq \frac{1}{\varphi_{i-1}} \sum_{k=0}^{i-1} \varphi_k^2 \left\|\Pi_{i, k+1}\right\|^2 \mathbb{E}[\|\tilde{\btheta}_k\|^2] \nonumber\\
& \leq \frac{1}{\varphi_{i-1}} \sum_{k=0}^{i-1} \varphi_k^2 \prod_{l=k+1}^{i-1}(1-\varphi_l \lambda_m)^2 \sqrt{\mathbb{E}[\|\tilde{\btheta}_k\|^4]},
\end{align}
where the second equality uses the fact that $\{\tilde{\btheta}_k\}$ are mean zero and independent, and the last inequality is the direct application of Jensen's inequality. Next, we will bound the fourth-order moment of $\|\tilde{\btheta}_k\|$. We have
\begin{align} \label{appen:A4:equ22}
& \mathbb{E}\left[\|\nabla F(\bx^{\star}; \xi_k)\|^4\right] \lesssim \mathbb{E}\left[\|\nabla F(\bx^{\star};\xi_k)-\nabla F(\bx_k;\xi_k)\|^4\right] + \mathbb{E}\left[\|\nabla F(\bx_k;\xi_k)-\nabla F_k\|^4\right] + \mathbb{E}\left[\|\nabla F_k - \nabla F^\star\|^4\right] \nonumber\\
& \lesssim \Upsilon_H^4 \mathbb{E}\left[\|\bx_k-\bx^{\star}\|^4\right] + \mathbb{E}\left[\|\bx_k-\bx^{\star}\|^4\right] + 1 + \mathbb{E}\left[\|\bx_k-\bx^\star\|^4\right] \quad (\text{by Assumptions~\ref{ass:1} and~\ref{ass:2} with $q_g = 4$}) \nonumber\\
&\lesssim 1 + \varphi_k^2 \quad (\text{by Lemma~\ref{sec4:lem1}})\nonumber \\
&\lesssim 1.    
\end{align}
Then, by~\eqref{ass:2} and~\eqref{equ:uniform_bound_tK}, we have
\begin{equation} \label{appen:A4:equ5}
\mathbb{E}[\|\tilde{\btheta}_k\|^4 ] \stackrel{\eqref{appen:A4:equ3}}{\leq} \mathbb{E} \left[\|I- \tilde{K}_k^{\star}\|^4 \|(B^{\star})^{-1}\|^4 \|\nabla F(\bx^{\star};\xi_k)\|^4\right] \leq\frac{(1+C_K)^4}{\gamma_H^4} \mathbb{E}\left[\|\nabla F(\bx^\star;\xi_k)\|^4\right] \stackrel{\eqref{appen:A4:equ22}}{\lesssim} 1.
\end{equation}
Plugging~\eqref{appen:A4:equ5} into~\eqref{appen:A4:equ4} gives us
\begin{equation} \label{appen:A4:equ17}
\frac{1}{t}\sum_{i=1}^{t} \mathbb{E}[\|\tilde{\mathcal{L}}_{1,i}\|^2] \lesssim \frac{1}{t}\sum_{i=0}^{t-1} \frac{1}{\varphi_{i}} \sum_{k=0}^{i} \varphi_k^2 \prod_{l=k+1}^{i} (1-\varphi_l \lambda_m)^2 \lesssim \frac{1}{t} \sum_{i=0}^{t-1} \frac{1}{\varphi_i} \cdot \varphi_i = 1,
\end{equation}
where the second inequality is the direct application of \citet[Lemma B.3]{Na2025Statistical}. Combining~\eqref{appen:A4:equ19},~\eqref{appen:A4:equ17} and Lemma~\ref{appen:A4:lem4}, we obtain
\begin{equation} \label{appen:A4:equ25}
\mathbb{E}\left[\left\|\frac{1}{t}\sum_{i=1}^{t}\frac{1}{\varphi_i}\tilde{\mathcal{L}}_{1,i}\hat{\mathcal{L}}_{1,i}^{\top}\right\|\right] \lesssim \sqrt{\varphi_t} .
\end{equation}
Combining the above display with Lemmas \ref{appen:A4:lem3}, \ref{appen:A4:lem4}, and~\eqref{appen:A4:equ23}, we obtain
\begin{equation} \label{appen:A4:equ27}
\mathbb{E}\left[\left\|\frac{1}{t}\sum_{i=1}^{t} \mathcal{L}_{1,i}\mathcal{L}_{1,i}^\top - \Sigma^\star\right\|\right] \lesssim \frac{1}{\sqrt{t\varphi_t}}.
\end{equation}
This completes the proof.

\subsection{Proof of Lemma \ref{appen:A4:lem3}}

Consider the eigenvalue decomposition of $\mathcal{R} = I - K^{\star} = U \Lambda U^{\top}$ with $\Lambda = \text{diag}(\lambda_1,\dots,\lambda_d)$. We have
\begin{equation} \label{appen:A4:equ6}
\tilde{\mathcal{L}}_{1,i} = \frac{1}{\sqrt{\varphi_{i-1}}} \sum_{k=0}^{i-1} \varphi_k \prod_{l=k+1}^{i-1} (I-\varphi_l \mathcal{R}) \tilde{\btheta}_k = \frac{1}{\sqrt{\varphi_{i-1}}} U \sum_{k=0}^{i-1} \varphi_k \prod_{l=k+1}^{i-1} (I-\varphi_l\Lambda) U^{\top} \tilde{\btheta}_k.
\end{equation}
Let $\tilde{\mathcal{Q}}_i = U^{\top} \tilde{\mathcal{L}}_{1,i}$ and $\Xi^{\star} = U^{\top} \Gamma^{\star} U$ with $\Gamma^{\star} = \mathbb{E} [(I- \tilde{K}^\star)\Omega^\star(I- \tilde{K}^\star)^\top]$ defined in~\eqref{equ:Gamma_star}. It is known by \citet[(5.10)]{Na2025Statistical} that the limiting covariance $\Sigma^{\star}$ in \eqref{equ:lyp} can be explicitly expressed as
\begin{equation*}
\Sigma^{\star} = U \left(\Theta \circ \Xi^\star \right) U^{\top} \quad \text{ with } \quad \Theta_{k,l} = \frac{1}{\lambda_k + \lambda_l - \mathbf{1}_{\{\varphi=1\}}/C_{\varphi}},
\end{equation*}
where $\circ$ denotes the matrix Hadamard product. Therefore, we obtain
\begin{multline}
\mathbb{E}\left[\left\|\frac{1}{t} \sum_{i=1}^{t} \tilde{\mathcal{L}}_{1,i} \tilde{\mathcal{L}}_{1,i}^{\top} -\Sigma^{\star}\right\|\right] = 
\mathbb{E}\left[\left\|\frac{1}{t}\sum_{i=1}^{t} \tilde{\mathcal{Q}}_i \tilde{\mathcal{Q}}_i^{\top} - \Theta \circ \Xi^{\star} \right\|\right] \\ \leq \mathbb{E}\left[\left\|\frac{1}{t} \sum_{i=1}^{t} \tilde{\mathcal{Q}}_i \tilde{\mathcal{Q}}_i^{\top} -\Theta \circ \Xi^{\star} \right\|_F\right] \leq \sqrt{\mathbb{E}\left[\left\|\frac{1}{t}\sum_{i=1}^{t} \tilde{\mathcal{Q}}_i \tilde{\mathcal{Q}}_i^{\top} -\Theta \circ \Xi^{\star} \right\|_F^2\right]}.
\end{multline}
For the term above, we perform bias-variance decomposition as follows:
\begin{equation} \label{appen:A4:equ12}
\mathbb{E}\left[ \left\|\frac{1}{t} \sum_{i=1}^{t} \tilde{\mathcal{Q}}_i\tilde{\mathcal{Q}}_i^{\top} - \Theta\circ\Xi^{\star}\right\|_F^2 \right] = \sum_{p,q=1}^d \mathbb{E}\left[\left(\frac{1}{t} \sum_{i=1}^{t} \tilde{\mathcal{Q}}_{i, p} \tilde{\mathcal{Q}}_{i, q} - \Theta_{p,q} \Xi^{\star}_{p,q}\right)^2\right] = I + II,
\end{equation}
where
\begin{subequations}
\begin{align}
I &\coloneqq \sum_{p,q=1}^d \left\{\mathbb{E}\left[\left(\frac{1}{t} \sum_{i=1}^{t} \tilde{\mathcal{Q}}_{i, p} \tilde{\mathcal{Q}}_{i, q}\right)^2\right] - \left(\mathbb{E}\left[\frac{1}{t} \sum_{i=1}^{t} \tilde{\mathcal{Q}}_{i, p} \tilde{\mathcal{Q}}_{i, q}\right]\right)^2\right\} \quad (\text{variance}), \label{equ:def:I}\\
II &\coloneqq \sum_{p,q=1}^d \left(\mathbb{E}\left[\frac{1}{t} \sum_{i=1}^{t} \tilde{\mathcal{Q}}_{i, p}\tilde{\mathcal{Q}}_{i, q}\right] - \Theta_{p,q} \Xi^{\star}_{p,q}\right)^2\quad (\text{bias}^2), \label{equ:def:II}
\end{align}
\end{subequations}
where $\tilde{\mathcal{Q}}_{i, p}$ represents the $p$-th element in $\tilde{\mathcal{Q}}_i$.

We start with the analysis for the Term $II$. By~\eqref{appen:A4:equ6}, we have
\begin{align}\label{nsequ:1}
\mathbb{E}\left[\frac{1}{t} \sum_{i=1}^{t} \tilde{\mathcal{Q}}_{i, p}\tilde{\mathcal{Q}}_{i, q}\right] &= \frac{1}{t} \sum_{i=1}^{t} \frac{1}{\varphi_{i-1}}\sum_{k_1=0}^{i-1} \sum_{k_2=0}^{i-1} \prod_{l_1=k_1+1}^{i-1} (1-\varphi_{l_1}\lambda_p) \prod_{l_2=k_2+1}^{i-1} (1-\varphi_{l_2}\lambda_q) \varphi_{k_1}\varphi_{k_2} \mathbb{E}\left[\left(U^{\top} \tilde{\btheta}_{k_1}\tilde{\btheta}_{k_2}^{\top} U\right)_{p,q}\right] \nonumber\\
&= \frac{1}{t} \sum_{i=1}^{t}\frac{1}{\varphi_{i-1}}\sum_{k=0}^{i-1} \prod_{l=k+1}^{i-1} (1-\varphi_l\lambda_p)(1-\varphi_l \lambda_q) \varphi_k^2 \Xi^{\star}_{p,q},
\end{align}
where the second equality is due to the definition of $\tilde{\btheta}_k$ in~\eqref{appen:A4:equ3} and the independence between the random sample $\xi_k$ and the sketching matrices $\{S_{k, j}\}_j$. In particular, we have
\begin{equation*}
\mathbb{E} [U^{\top}\tilde{\btheta}_{k_1} \tilde{\btheta}_{k_2}^{\top} U ] = 0\; \text{ for }\; k_1\neq k_2 \quad\quad \text{ and } \quad\quad \mathbb{E} [U^{\top}\tilde{\btheta}_{k}\tilde{\btheta}_{k}^{\top} U] = \Xi^{\star} \;\text{ for }\; k_1 = k_2 = k.
\end{equation*}
We give the following technical lemma which will be used for upper bounding the Term $II$.

\begin{lemma}[\citet{Kuang2025Online}, Lemma B.3] \label{aux:lem3}
Suppose $\{\phi_i\}_i$ and $\{\sigma_i\}_i$ are two positive sequences, and $\{\phi_i\}_{i}$ satisfies $\lim_{i\rightarrow \infty} i(1 - \phi_{i-1}/\phi_i) = \phi<0$ for a constant $\phi$. Let $\varphi_i = c_{\varphi}/(i+1)^\varphi + o(1/(i+1)^{\varphi})$ for constants $c_{\varphi}>0$ and $\varphi\in (0,1)$. For any $l \geq 1$, we have
\begin{equation*}
\left|\frac{1}{\phi_t}\sum_{i=0}^{t}\prod_{j = i+1}^t\prod_{k=1}^{l}(1-\varphi_j\sigma_k)\varphi_i\phi_i - \frac{1}{\sum_{k=1}^{l}\sigma_k}\right| \lesssim
\begin{dcases}
\varphi_t, & \varphi \in (0, 0.5), \\
\left(0.5-\frac{\phi/c_{\varphi}^2}{(\sum_{k=1}^l\sigma_k)^2}\right)\varphi_t, & \varphi = 0.5, \\
-\frac{\phi}{(\sum_{k=1}^l\sigma_k)^2} \cdot \frac{1}{t\varphi_t},  & \varphi \in (0.5, 1).
\end{dcases}	
\end{equation*}
\end{lemma}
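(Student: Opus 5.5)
We plan to reduce the claim to a scalar one-step recursion and then apply the standard recursion lemma \citet[Lemma 3]{Leluc2023Asymptotic}, which is also used throughout the proof of Lemma~\ref{lem:local_rate}. Write $\sigma\coloneqq\sum_{k=1}^l\sigma_k$ and $p_j\coloneqq\prod_{k=1}^l(1-\varphi_j\sigma_k)$. Define
\begin{equation*}
a_t\coloneqq\sum_{i=0}^t\Big(\prod_{j=i+1}^t p_j\Big)\varphi_i\phi_i,\qquad s_t\coloneqq a_t/\phi_t .
\end{equation*}
Splitting off the $i=t$ term gives $a_t=p_ta_{t-1}+\varphi_t\phi_t$. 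Dividing by $\phi_t$ yields
\begin{equation*}
s_t=p_t r_t\, s_{t-1}+\varphi_t,\qquad r_t\coloneqq \phi_{t-1}/\phi_t .
\end{equation*}
The hypothesis $\lim_i i(1-\phi_{i-1}/\phi_i)=\phi$ means $r_t=1-\phi/t+o(1/t)$. Expanding the product gives $p_t=1-\sigma\varphi_t+O(\varphi_t^2)$.

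Next, center at $1/\sigma$ by setting $e_t\coloneqq s_t-1/\sigma$. Then
\begin{equation*}
e_t=p_tr_t\,e_{t-1}+\Big[\tfrac{1}{\sigma}(p_tr_t-1)+\varphi_t\Big].
\end{equation*}
In the bracket the $\pm\varphi_t$ terms cancel, leaving the forcing term
\begin{equation*}
-\frac{\phi}{\sigma t}+O(\varphi_t^2)+o(1/t)+o(\varphi_t^2).
\end{equation*}
The $o(1/(i+1)^\varphi)$ perturbation of $\varphi_i$ only enters the $O(\varphi_t^2)$ and $o(\cdot)$ parts. The contraction factor is $p_tr_t=1-\sigma\varphi_t+O(\varphi_t^2)+O(1/t)$. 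Since $\varphi<1$, we have $1/t=o(\varphi_t)$, so for any $\eta\in(0,1)$ there is $t_0$ such that for all $t\ge t_0$ we get $0\le p_tr_t\le 1-(1-\eta)\sigma\varphi_t$. (Choosing $t_0$ also ensures $\varphi_j\sigma_k<1$, so no factor is negative.) The finitely many terms with $t<t_0$ contribute only a factor $\prod_{j=t_0}^t p_jr_j$, which decays like $\exp(-c\sum_j\varphi_j)$. This is faster than any polynomial rate, exactly as $d_t\to-\infty$ in the proof of Theorem~\ref{thm:asymptotic_normality}(a).

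We then apply \citet[Lemma 3]{Leluc2023Asymptotic} to $|e_t|\le(1-(1-\eta)\sigma\varphi_t)|e_{t-1}|+|\text{forcing}_t|$. This gives $|e_t|\lesssim|\text{forcing}_t|/\varphi_t$ asymptotically, that is,
\begin{equation*}
|e_t|\lesssim \varphi_t+\frac{|\phi|}{\sigma^2}\cdot\frac{1}{t\varphi_t}\cdot(1+O(\eta)).
\end{equation*}
The $1/\sigma^2$ arises because the recursion lemma divides the $1/\sigma$-weighted forcing by the contraction rate $\sigma\varphi_t$; letting $\eta\downarrow0$ recovers the stated constant. It remains to compare the two terms. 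Since $1/(t\varphi_t)\asymp t^{\varphi-1}$ and $\varphi_t\asymp t^{-\varphi}$: for $\varphi<0.5$ the $\varphi_t$ term dominates; for $\varphi>0.5$ the $1/(t\varphi_t)$ term dominates with coefficient $-\phi/\sigma^2$; and at $\varphi=0.5$ both are of order $t^{-1/2}$. In that case $1/(t\varphi_t)=\varphi_t/c_\varphi^2$, which produces the combined constant $0.5-\phi/(c_\varphi^2\sigma^2)$. The $0.5$ part comes from the $O(\varphi_t^2)/\varphi_t$ contribution, and one has to track it as $\tfrac12\varphi_t$ rather than a bare $\lesssim$ if the sharp constant is wanted; since the statement is only up to $\lesssim$, a crude bound suffices there.

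The main obstacle is the bookkeeping of the forcing term: the leading $\varphi_t$ terms must cancel exactly after centering at $1/\sigma$, and the $O(1/t)$ perturbations of the contraction factor must be shown to be absorbable into $\eta$ without losing the rate. The sign information $\phi<0$ only enters by making the forcing term $-\phi/(\sigma t)$ positive. If the sign-sensitive constant caused trouble, I would fall back on bounding $|\phi|$ and note that the lemma is stated only up to $\lesssim$.
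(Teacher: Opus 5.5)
The paper does not prove this lemma; it quotes it from Kuang et al., Lemma B.3. Your argument is correct and is the standard way such statements are proved: the one-step recursion $s_t=p_t(\phi_{t-1}/\phi_t)s_{t-1}+\varphi_t$, centering at $1/\sigma$ so the $\pm\varphi_t$ terms cancel exactly, then a Chung-type recursion lemma applied to $|e_t|\le(1-(1-\eta)\sigma\varphi_t)|e_{t-1}|+|F_t|$.

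Two small refinements. First, the constant $0.5$ at $\varphi=0.5$ is exactly what you get by bounding the second-order coefficient of $\prod_k(1-\varphi_t\sigma_k)$ via $\sum_{k<k'}\sigma_k\sigma_{k'}\le\sigma^2/2$; for $l=1$ this term vanishes. Second, for $\varphi\le 0.5$ you should cite Chung's lemma in its general form, which holds for any stepsize exponent in $(0,1)$. The paper only ever applies the Leluc--Portier lemma with exponents in $(0.5,1]$.
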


With this lemma, we plug \eqref{nsequ:1} into the definition of term $II$ in \eqref{equ:def:II}, and have
\begin{align} \label{appen:A4:equ8}
II \leq &\sum_{p,q=1}^d \left(\frac{1}{t} \sum_{i=1}^{t} \left|\frac{1}{\varphi_{i-1}} \sum_{k=0}^{i-1} \prod_{l=k+1}^{i-1} \left(1-\varphi_l\lambda_p\right)\left(1-\varphi_l\lambda_q\right) \varphi_k^2-\Theta_{p, q}\right|\right)^2 \left(\Xi_{p,q}^{\star}\right)^2 \nonumber\\
= &\sum_{p,q=1}^d \left(\frac{1}{t} \sum_{i=0}^{t-1} \left|\frac{1}{\varphi_{i}} \sum_{k=0}^{i} \prod_{l=k+1}^{i} \left(1-\varphi_l\lambda_p\right)\left(1-\varphi_l\lambda_q\right) \varphi_k^2-\Theta_{p, q}\right|\right)^2 \left(\Xi_{p,q}^{\star}\right)^2 \nonumber\\
\lesssim & \sum_{p,q=1}^d \left(\frac{\varphi}{(\lambda_p+\lambda_q)^2} \cdot \frac{1}{t} \sum_{i=0}^{t-1}\frac{1}{i \varphi_{i}}\right)^2 \left(\Xi_{p,q}^{\star}\right)^2 \quad (\text{by Lemma~\ref{aux:lem3}}) \nonumber\\
\lesssim & \left(\frac{1}{t} \sum_{i=0}^{t-1}\frac{1}{i \varphi_{i}}\right)^2 \lesssim \frac{1}{t^2\varphi_t^2},
\end{align}
where the third inequality also uses the fact that $\lim_{i \rightarrow \infty} i (1-\varphi_{i-1}/\varphi_i) = -\varphi$, which can be easily verified; and the last inequality is due to the fact that $\sum_{i=0}^{t-1} 1/(i\varphi_i)\lesssim \int_0^t 1/i^{1-\varphi} di\lesssim 1/\varphi_t$.

Now, we deal with the Term $I$ in \eqref{equ:def:I}. By~\eqref{appen:A4:equ6}, we expand $I$ as
\begin{align}
I = & \sum_{p,q=1}^d\frac{1}{t^2} \sum_{i_1, i_2=1}^{t} \frac{1}{\varphi_{i_1-1}} \frac{1}{\varphi_{i_2-1}} \sum_{k_1,k_1^{\prime}=0}^{i_1-1} \sum_{k_2,k_2^{\prime}=0}^{i_2-1} \prod_{l_1=k_1+1}^{i_1-1}(1- \varphi_{l_1}\lambda_p) \prod_{l_1^{\prime}=k_1^{\prime}+1}^{i_1-1} (1-\varphi_{l_1^{\prime}}\lambda_q) \prod_{l_2=k_2+1}^{i_2-1}(1-\varphi_{l_2}\lambda_p) \prod_{l_2^{\prime}=k_2^{\prime}+1}^{i_2-1}(1- \varphi_{l_2^{\prime}}\lambda_q) \nonumber\\
& \quad \quad  \quad \varphi_{k_1}\varphi_{k_1^{\prime}}
\varphi_{k_2}\varphi_{k_2^{\prime}} \left\{\mathbb{E}\left[\left(U^{\top} \tilde{\btheta}_{k_1}\tilde{\btheta}_{k_1^{\prime}}^{\top}U\right)_{p,q} \left(U^{\top} \tilde{\btheta}_{k_2} \tilde{\btheta}_{k_2^{\prime}}^{\top}U\right)_{p,q}\right]-\mathbb{E}\left[\left(U^{\top} \tilde{\btheta}_{k_1} \tilde{\btheta}_{k_1^{\prime}}^{\top} U\right)_{p,q}\right] \mathbb{E}\left[\left(U^{\top} \tilde{\btheta}_{k_2} \tilde{\btheta}_{k_2^{\prime}}^{\top} U\right)_{p,q}\right]\right\} \nonumber\\
= & \sum_{p,q=1}^d\frac{1}{t^2} \sum_{i_1, i_2=0}^{t-1} \frac{1}{\varphi_{i_1}} \frac{1}{\varphi_{i_2}} \sum_{k_1,k_1^{\prime}=0}^{i_1} \sum_{k_2,k_2^{\prime}=0}^{i_2} \prod_{l_1=k_1+1}^{i_1}(1- \varphi_{l_1}\lambda_p) \prod_{l_1^{\prime}=k_1^{\prime}+1}^{i_1} (1-\varphi_{l_1^{\prime}}\lambda_q) \prod_{l_2=k_2+1}^{i_2}(1-\varphi_{l_2}\lambda_p) \prod_{l_2^{\prime}=k_2^{\prime}+1}^{i_2}(1- \varphi_{l_2^{\prime}}\lambda_q) \nonumber\\
& \quad  \quad \quad \varphi_{k_1}\varphi_{k_1^{\prime}}
\varphi_{k_2}\varphi_{k_2^{\prime}} \left\{\mathbb{E}\left[\left(U^{\top} \tilde{\btheta}_{k_1}\tilde{\btheta}_{k_1^{\prime}}^{\top}U\right)_{p,q} \left(U^{\top} \tilde{\btheta}_{k_2} \tilde{\btheta}_{k_2^{\prime}}^{\top}U\right)_{p,q}\right]-\mathbb{E}\left[\left(U^{\top} \tilde{\btheta}_{k_1} \tilde{\btheta}_{k_1^{\prime}}^{\top} U\right)_{p,q}\right] \mathbb{E}\left[\left(U^{\top} \tilde{\btheta}_{k_2} \tilde{\btheta}_{k_2^{\prime}}^{\top} U\right)_{p,q}\right]\right\}.	
\end{align}
Since $\{\tilde{\btheta}_k\}_k$ are independent across $k$ and $\mathbb{E}[\tilde{\btheta}_k]=\mathbf{0}$, any configuration for which $\{k_1,k_1^{\prime},k_2,k_2^{\prime}\}$ contains a singleton index (appearing only once) yields zero, and disjoint pairs factorize to zero. Therefore, the term in braces can be nonzero only when there are no singletons and the two pairs are not disjoint. In particular, we will consider the following three cases: (1) $k_1=k_1^{\prime}=k_2=k_2^{\prime}$; (2) $k_1=k_2, \, k_1^{\prime}=k_2^{\prime}, \, k_1 \neq k_1^{\prime}$; (3) $k_1=k_2^{\prime}, \, k_1^{\prime}=k_2, \, k_1 \neq k_1^{\prime}$.

\noindent $\bullet$ {\textbf{Case 1:}} $k_1=k_1^{\prime}=k_2=k_2^{\prime} = k$.
In this case, since $k_1 = k_1^{\prime}$, we have
\begin{equation*}
\left(\mathbb{E}\left[\left(U^{\top} \tilde{\btheta}_{k} \tilde{\btheta}_{k}^{\top} U\right)_{p,q}\right]\right)^2 \geq 0 \quad\quad \text{ and } \quad\quad \mathbb{E}\left[\sum_{p,q=1}^{d} \left(U^{\top} \tilde{\btheta}_{k} \tilde{\btheta}_{k}^{\top} U\right)_{p, q}^2\right] = \mathbb{E}[ \|\tilde{\btheta}_{k}\|^4 ] \stackrel{\eqref{appen:A4:equ5}}{\lesssim} 1.
\end{equation*}
Summing over the indices in \textbf{Case 1} and applying~the above display, we obtain
\begin{align} \label{equ:case1_result}
I_1 & \leq \sum_{p,q=1}^{d} \frac{1}{t^2} \sum_{i_1=0}^{t-1} \sum_{i_2=0}^{t-1} \frac{1}{\varphi_{i_1}} \frac{1}{\varphi_{i_2}} \sum_{k=0}^{i_1 \wedge i_2} \prod_{l_1=k+1}^{i_1}(1-\varphi_{l_1}\lambda_p)(1- \varphi_{l_1} \lambda_q) \prod_{l_2=k+1}^{i_2}(1-\varphi_{l_2}\lambda_p)(1-\varphi_{l_2}\lambda_q) \varphi_k^4 \mathbb{E}\left[\left(U^{\top} \tilde{\btheta}_{k} \tilde{\btheta}_{k}^{\top} U\right)_{p,q}^2\right] \nonumber\\
& \lesssim \frac{1}{t^2} \sum_{i_1=0}^{t-1} \sum_{i_2=0}^{t-1} \frac{1}{\varphi_{i_1} \varphi_{i_2}} \sum_{k=0}^{i_1 \wedge i_2} \prod_{l_1=k+1}^{i_1}(1-\varphi_{l_1}\lambda_m)^2 \prod_{l_2=k+1}^{i_2}(1-\varphi_{l_2}\lambda_m)^2 \varphi_k^4 \nonumber\\
& \lesssim \frac{1}{t^2} \sum_{i_1=0}^{t-1} \frac{1}{\varphi_{i_1}} \sum_{i_2=0}^{i_1} \frac{1}{\varphi_{i_2}} \prod_{l=i_2+1}^{i_1}(1-\varphi_{l}\lambda_m)^2 \sum_{k=0}^{i_2} \prod_{l=k+1}^{i_2}(1-\varphi_{l}\lambda_m)^4 \varphi_k^4 \nonumber\\
& \lesssim \frac{1}{t^2} \sum_{i_1=0}^{t-1} \frac{1}{\varphi_{i_1}} \sum_{i_2=0}^{i_1} \frac{1}{\varphi_{i_2}} \prod_{l=i_2+1}^{i_1}(1-\varphi_{l}\lambda_m)^2 \varphi_{i_2}^3 \lesssim \frac{1}{t^2} \sum_{i_1=0}^{t-1} \frac{1}{\varphi_{i_1}} \varphi_{i_1} = \frac{1}{t},	
\end{align}
where the two inequalities in the last line are both direct applications of \citet[Lemma B.3]{Na2025Statistical}.

\noindent$\bullet$ \textbf{Case 2:} $k_1=k_2, \, k_1^{\prime}=k_2^{\prime}, \, k_1 \neq k_1^{\prime}$.
In this case, since $k_1 \neq k_1^{\prime}$, we have
\begin{equation*}
\mathbb{E}\left[\left(U^{\top} \tilde{\btheta}_{k_1} \tilde{\btheta}_{k_1^{\prime}}^{\top} U\right)_{p,q}\right] \mathbb{E}\left[\left(U^{\top} \tilde{\btheta}_{k_2} \tilde{\btheta}_{k_2^{\prime}}^{\top} U\right)_{p,q}\right] = 0 \; \text{ and } \; \mathbb{E}\left[\sum_{p,q=1}^{d} \left(U^{\top} \tilde{\btheta}_{k_1}\right)_p^2 \left(U^{\top} \tilde{\btheta}_{k_1^{\prime}}\right)_q^2\right] = \mathbb{E}[ \|\tilde{\btheta}_{k_1}\|^2 ] \mathbb{E}[ \|\tilde{\btheta}_{k_1^\prime}\|^2 ] \stackrel{\eqref{appen:A4:equ5}}{\lesssim} 1.
\end{equation*}
Summing over the indices in \textbf{Case 2} and applying the above display yield
\begin{align} \label{equ:case2_result}
& I_2 = \sum_{p,q=1}^{d} \frac{1}{t^2} \sum_{i_1=0}^{t-1} \sum_{i_2=0}^{t-1} \frac{1}{\varphi_{i_1}} \frac{1}{\varphi_{i_2}} \sum_{k_1=0}^{i_1\wedge i_2} \prod_{l_1=k_1+1}^{i_1}(1-\varphi_{l_1}\lambda_p) \prod_{l_2=k_1+1}^{i_2}(1-\varphi_{l_2}\lambda_p) \varphi_{k_1}^2 \nonumber\\
& \quad \times \sum_{k_1^{\prime}=0, k_1\neq k_1^{\prime}}^{i_1 \wedge i_2} \prod_{l_1^{\prime}=k_1^{\prime}+1}^{i_1}(1-\varphi_{l_1^{\prime}}\lambda_q) \prod_{l_2^{\prime}=k_1^{\prime}+1}^{i_2}(1-\varphi_{l_1^{\prime}}\lambda_q) \varphi_{k_1^{\prime}}^2 \mathbb{E}\left[\left(U^{\top} \tilde{\btheta}_{k_1}\right)_p^2 \left(U^{\top} \tilde{\btheta}_{k_1^{\prime}}\right)_q^2\right] \nonumber\\
& \lesssim \frac{1}{t^2} \sum_{i_1, i_2=0}^{t-1} \frac{1}{\varphi_{i_1}\varphi_{i_2}} \sum_{k_1=0}^{i_1\wedge i_2} \prod_{l_1=k_1+1}^{i_1}(1-\varphi_{l_1}\lambda_m) \prod_{l_2=k_1+1}^{i_2}(1-\varphi_{l_2}\lambda_m) \varphi_{k_1}^2 \sum_{k_1^{\prime}=0, k_1\neq k_1^{\prime}}^{i_1 \wedge i_2} \prod_{l_1^{\prime}=k_1^{\prime}+1}^{i_1}(1-\varphi_{l_1^{\prime}}\lambda_m) \prod_{l_2^{\prime}=k_1^{\prime}+1}^{i_2}(1-\varphi_{l_2^{\prime}}\lambda_m) \varphi_{k_1^{\prime}}^2 \nonumber\\
& \lesssim \frac{1}{t^2} \sum_{i_1=0}^{t-1} \frac{1}{\varphi_{i_1}} \sum_{i_2=0}^{i_1} \frac{1}{\varphi_{i_2}} \prod_{l=i_2+1}^{i_1} (1-\varphi_l \lambda_m)^2 \left\{\sum_{k=0}^{i_2} \prod_{l=k+1}^{i_2} (1-\varphi_l \lambda_m)^2 \varphi_k^2\right\}^2 \lesssim \frac{1}{t^2} \sum_{i_1=0}^{t-1} \frac{1}{\varphi_{i_1}} \sum_{i_2=0}^{i_1} \frac{1}{\varphi_{i_2}} \prod_{l=i_2+1}^{i_1} (1-\varphi_l \lambda_m)^2 \varphi_{i_2}^2 \nonumber\\
& \lesssim \frac{1}{t^2} \sum_{i_1=0}^{t-1} \frac{1}{\varphi_{i_1}} \lesssim \frac{1}{t^2} \cdot t^{1+\varphi} \lesssim \frac{1}{t \varphi_t},
\end{align}
where the fourth and fifth inequalities are both due to \citet[Lemma B.3]{Na2025Statistical}.

\noindent$\bullet$ \textbf{Case 3:} $k_1=k_2^{\prime}, \, k_1^{\prime}=k_2, \, k_1 \neq k_1^{\prime}$.
In this case, we have
\begin{equation*}
\mathbb{E}\left[\left(U^{\top} \tilde{\btheta}_{k_1} \tilde{\btheta}_{k_1^{\prime}}^{\top} U\right)_{p,q}\right] \mathbb{E}\left[\left(U^{\top} \tilde{\btheta}_{k_2} \tilde{\btheta}_{k_2^{\prime}}^{\top} U\right)_{p,q}\right] = 0 \quad \text{ and } \quad \mathbb{E}\left[\sum_{p,q=1}^{d} \left(U^{\top} \tilde{\btheta}_{k_1}\right)_p^2 \left(U^{\top} \tilde{\btheta}_{k_1^{\prime}}\right)_q^2\right] = \Vert \Xi^{\star} \Vert_F^2 = \Vert \Gamma^{\star} \Vert_F^2 \lesssim 1.
\end{equation*}
Therefore, the analysis of $I_3$ is identical to that of $I_2$ and we can conclude that 
\begin{equation} \label{equ:case3_result}     
I_3 \lesssim \frac{1}{t \varphi_t}.
\end{equation}
Combining~\eqref{equ:case1_result},~\eqref{equ:case2_result} and~\eqref{equ:case3_result}, we obtain
\begin{equation} \label{appen:A4:equ8b}
I \lesssim \frac{1}{t} + \frac{1}{t \varphi_t} + \frac{1}{t \varphi_t} \lesssim \frac{1}{t \varphi_t}.
\end{equation}
Plugging~\eqref{appen:A4:equ8} and~\eqref{appen:A4:equ8b} into~\eqref{appen:A4:equ12}, we complete the proof.

\subsection{Proof of Lemma \ref{appen:A4:lem4}} \label{pf:hatI1}

We present the following lemma to bound $\hat{\btheta}^k$ defined in~\eqref{appen:A4:equ3}.

\begin{lemma} \label{appen:A4:lem5}
Under the conditions of Theorem~\ref{sec4:thm1}, we have
\begin{equation*}       
\mathbb{E} [\|\hat{\btheta}_k \|^2] \lesssim \mathbb{E}[\|\bx_k-\bx^\star\|^2] + \mathbb{E}[\|B_k-B^\star\|^2].
\end{equation*}
\end{lemma}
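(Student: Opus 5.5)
We bound $\mathbb{E}[\|\hat{\btheta}_k\|^2]$ by splitting $\hat{\btheta}_k=\btheta_k-\tilde{\btheta}_k$ into three pieces, each of which differs from its starred counterpart through only one of $\bx_k$, $B_k$, or the sketching operator.

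Using \eqref{rec:def:b} together with $\bz_{k,\tau}=-(I-\tK_k)B_k^{-1}g_k$ from Lemma \ref{lem:inner_transition}, the residual $\bz_{k,\tau}-(I-K_k)\Delta\bx_k$ equals $(\tK_k-K_k)B_k^{-1}g_k$. Hence
\begin{equation*}
\btheta_k=-(I-\tK_k)B_k^{-1}g_k+(I-K_k)B_k^{-1}\nabla f_k,
\end{equation*}
and, since $\nabla f(\bx^\star)=\0$, also $\tilde{\btheta}_k=-(I-\tK^\star_k)(B^\star)^{-1}\nabla F(\bx^\star;\xi_k)$. Adding and subtracting intermediate terms gives
\begin{align*}
\hat{\btheta}_k&=-(I-\tK_k)B_k^{-1}\bigl(g_k-\nabla F(\bx^\star;\xi_k)\bigr)-(I-\tK_k)\bigl(B_k^{-1}-(B^\star)^{-1}\bigr)\nabla F(\bx^\star;\xi_k)\\
&\quad+(\tK_k-\tK^\star_k)(B^\star)^{-1}\nabla F(\bx^\star;\xi_k)+(I-K_k)B_k^{-1}\nabla f_k.
\end{align*}
We then bound each term in second moment.

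\textbf{Step 1 (data terms).} The first and last terms use $\|I-\tK_k\|\le 1+C_K$ from \eqref{equ:uniform_bound_tK}, $\|K_k\|<0.5$, and $\|B_k^{-1}\|\le1/\gamma_H$ from \eqref{ass:2:c2}. The conditional Lipschitz bound \eqref{equ:68} gives $\mathbb{E}[\|g_k-\nabla F(\bx^\star;\xi_k)\|^2\mid\mF_{k-1}]\le\Upsilon_H^2\|\bx_k-\bx^\star\|^2$, and $\|\nabla f_k\|\le\Upsilon_H\|\bx_k-\bx^\star\|$. Both terms are therefore $\lesssim\mathbb{E}\|\bx_k-\bx^\star\|^2$.

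\textbf{Step 2 ($B$-perturbation term).} Write $\|B_k^{-1}-(B^\star)^{-1}\|\le\gamma_H^{-2}\|B_k-B^\star\|$. Since $B_k$ is $\mF_{k-1}$-measurable and $\xi_k$ is independent of $\mF_{k-1}$, condition on $\mF_{k-1}$. Then $\mathbb{E}[\|\nabla F(\bx^\star;\xi_k)\|^2]\lesssim1$ by \eqref{appen:A4:equ22}, so the second term is $\lesssim\mathbb{E}\|B_k-B^\star\|^2$.

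\textbf{Step 3 (sketching-operator term, the main obstacle).} We need
\begin{equation*}
\mathbb{E}\bigl[\|\tK_k-\tK^\star_k\|^2\mid\mF_{k-1}\bigr]\lesssim\|B_k-B^\star\|^2,
\end{equation*}
which is the squared analogue of \eqref{equ:89'}. Write $\tC_k-\tC^\star_k$ as a telescoping sum $\sum_j\bigl(\prod_{i>j}\tC_{k,i}\bigr)(\tC_{k,j}-\tC^\star_{k,j})\bigl(\prod_{i<j}\tC^\star_{k,i}\bigr)$. Each factor $\|\tC_{k,i}\|$ is bounded by $2+1/\sqrt{\gamma_S}$ as in \eqref{equ:uniform_bound_tK}, and likewise for the starred factors.

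For the difference factor, bound $\|\tC_{k,j}-\tC^\star_{k,j}\|$ by a constant times
\begin{equation*}
|\alpha_k-\alpha^\star|+|\beta_k-\beta^\star|+|\gamma_k-\gamma^\star|+\|\tZ_{k,j}-Z^\star_{k,j}\|.
\end{equation*}
The parameter differences are $\lesssim\|B_k-B^\star\|$ by \eqref{equ:alpha_beta_gamma_bounds}. The projection difference is $\le 2\gamma_H^{-1}\|B_k-B^\star\|\,\|S_{k,j}\|\|S_{k,j}^\dagger\|$ by Lemma \ref{lem:continuity_projection_matrix}.

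Squaring and taking expectation over the sketches, which are independent of $\mF_{k-1}$ and of $\xi_k$, requires $\mathbb{E}[(\|S\|\|S^\dagger\|)^2]\le\Upsilon_S$. This is exactly where $q_S=2$ is needed.

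The cross term mixes the sketch-dependent factor $\tK_k-\tK^\star_k$ with $\nabla F(\bx^\star;\xi_k)$. We handle it by conditioning on $\mF_{k-1}\cup\sigma(\xi_k)$ and using the independence of the sketches from $\xi_k$. Together with \eqref{appen:A4:equ22}, this gives a contribution $\lesssim\mathbb{E}\|B_k-B^\star\|^2$.

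Combining Steps 1 through 3 with $(a_1+\dots+a_4)^2\le4\sum_i a_i^2$ yields the claimed bound.
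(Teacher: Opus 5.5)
Your proposal is correct and follows essentially the same route as the paper. You use the same decomposition of $\hat{\boldsymbol{\theta}}_k$; the paper lumps your second and third terms into a single term $III'$ and then splits it exactly as you do. You also use the same deterministic bounds $\|I-\tilde{K}_k\|\le 1+C_K$, $\|K_k\|<0.5$, $\|B_k^{-1}\|\le 1/\gamma_H$ for the data terms, and the same telescoping plus Lemma~\ref{lem:continuity_projection_matrix} argument with $q_S=2$ to get $\mathbb{E}[\|\tilde{K}_k-\tilde{K}_k^\star\|^2\mid\mathcal{F}_{k-1}]\lesssim\|B_k-B^\star\|^2$. The only cosmetic difference is that you factor the sketch/data cross term by conditioning on $\mathcal{F}_{k-1}\vee\sigma(\xi_k)$, whereas the paper factors the conditional expectation given $\mathcal{F}_{k-1}$ directly via conditional independence of the sketches and $\xi_k$; the two are equivalent.
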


This lemma indicates that the difference between $\btheta_k$ and its approximation $\tilde{\btheta}_k$ vanishes. Combining Lemma~\ref{appen:A4:lem5} with Lemma \ref{sec4:lem1}, we get $\mathbb{E}[\|\hat{\btheta}_k\|^2] \lesssim \varphi_k$. Furthermore, we recall the expression of $\hat{\mathcal{L}}_{1,i}$ in~\eqref{appen:A4:equ2}. Since $(\hat{\btheta}_k)$ is a martingale difference sequence, we follow~the analysis in~\eqref{appen:A4:equ4} and~\eqref{appen:A4:equ17} and obtain
\begin{align} \label{appen:A4:equ21}
\mathbb{E}\left[\left\|\frac{1}{t}\sum_{i=1}^{t} \hat{\mathcal{L}}_{1,i}\hat{\mathcal{L}}_{1,i}^{\top}\right\|\right] & \leq \frac{1}{t}\sum_{i=1}^{t} \mathbb{E}[\|\hat{\mathcal{L}}_{1,i}\|^2] \leq \frac{1}{t}\sum_{i=0}^{t-1} \frac{1}{\varphi_i} \sum_{k=0}^i \varphi_k^2 \prod_{l=k+1}^i (1- \varphi_l \lambda_m)^2 \mathbb{E}[\|\hat{\btheta}_k\|^2]  \nonumber\\
& \lesssim \frac{1}{t}\sum_{i=0}^{t-1} \frac{1}{\varphi_i} \sum_{k=0}^i \varphi_k^3 \prod_{l=k+1}^i (1- \varphi_l \lambda_m)^2 \lesssim \frac{1}{t} \sum_{i=0}^{t-1} \frac{1}{\varphi_i} \varphi_i^2 \lesssim \frac{1}{t} t^{1-\varphi} \lesssim \varphi_t.
\end{align}
This completes the proof.

\subsection{Proof of Lemma \ref{appen:A4:lem5}} \label{pf:hattheta}

We expand $\hat{\btheta}_k$ based on its definition in~\eqref{appen:A4:equ3} as
\begin{align*}
\hat{\btheta}_k &= \btheta_k - \tilde{\btheta}_k = (I-K_k) B_k^{-1} \nabla f_k - (I-K_k) B_k^{-1} \nabla F(\bx_k; \xi_k) + (I-\tilde{K}_k^\star) (B^{\star})^{-1} \nabla F(\bx^\star; \xi_k) + \left[\bz_{k,\tau} - (I-K_k) \Delta \bx_k\right]\\
&= (I-K_k) B_k^{-1} \nabla f_k - (I - \tilde{K}_k) B_k^{-1} \left[\nabla F(\bx_k; \xi_k) - \nabla F(\bx^{\star}; \xi_k)\right] - \left[(I-\tilde{K}_k) B_k^{-1} - (I-\tilde{K}_k^{\star})(B^{\star})^{-1}\right] \nabla F(\bx^{\star}; \xi_k).	
\end{align*}
Then, we can bound $\|\hat{\btheta}^k\|^2$ as
\begin{multline*}
\|\hat{\btheta}_k\|^2 \lesssim \|I-K_k\|^2 \|B_k^{-1}\|^2 \|\nabla f_k\|^2 + \|I-\tilde{K}_k\|^2 \|B_k^{-1}\|^2 \|\nabla F(\bx_k; \xi_k) - \nabla F(\bx^\star; \xi_k)\|^2 \\
+ \left\|(I-\tilde{K}_k) B_k^{-1} - (I-\tilde{K}_k^\star) (B^{\star})^{-1}\right\|^2 \left\|\nabla F(\bx^{\star}; \xi_k)\right\|^2 =: I^{\prime} + II^{\prime} + III^{\prime}.
\end{multline*}
For the first two terms, by Assumption~\ref{ass:1}, Lemma~\ref{lem:prop_sketch_solver},~\eqref{equ:59},~\eqref{ass:2:c2},~\eqref{equ:uniform_bound_tK} and the condition on $\tau$, we obtain
\begin{equation} \label{appen:A4:equ14}
\mathbb{E}\left[I^{\prime}\right] \lesssim \mathbb{E}\left[\|\bx_k-\bx^\star\|^2\right] \quad \text{ and } \quad \mathbb{E}\left[II^{\prime}\right] \lesssim \mathbb{E}\left[\|\bx_k-\bx^\star\|^2\right].
\end{equation}
Regarding the term $III^{\prime}$, by~\eqref{ass:2:c2} and~\eqref{equ:uniform_bound_tK}, we have
\begin{align*}
& \left\|(I-\tilde{K}_k) B_k^{-1} - (I-\tilde{K}_k^\star) (B^{\star})^{-1}\right\|^2 \leq \left(\Vert I-\tilde{K}_k \Vert \left\Vert B_k^{-1}-(B^{\star})^{-1}\right\Vert + \left\Vert (B^{\star})^{-1} \right\Vert \left\|\tilde{K}_k-\tilde{K}_k^\star \right\|\right)^2 \\
& \lesssim \Vert I-\tilde{K}_k \Vert^2 \left\Vert B_k^{-1}-(B^{\star})^{-1}\right\Vert^2 + \left\Vert (B^{\star})^{-1} \right\Vert^2 \left\|\tilde{K}_k-\tilde{K}_k^\star \right\|^2 \quad \text{(by Young’s inequality)} \\
&\leq \|I-\tilde{K}_k\|^2 \left\|B_k^{-1}\right\|^2 \|(B^{\star})^{-1}\|^2\|B_k-B^\star\|^2 + \left\Vert (B^{\star})^{-1} \right\Vert^2 \left\|\tilde{K}_k-\tilde{K}_k^\star \right\|^2 \lesssim \|B_k-B^\star\|^2 + \|\tilde{K}_k-\tilde{K}_k^\star \|^2.	
\end{align*}
Furthermore, applying the tower property gives us
\begin{align} \label{appen:A4:equ13}
& \mathbb{E}\left[III^{\prime}\right] \lesssim \mathbb{E}\left[\mathbb{E}\left[\left(\|B_k-B^\star\|^2+ \|\tilde{K}_k-\tilde{K}_k^\star\|^2\right) \|\nabla F(\bx^\star;\xi_k)\|^2 \middle|\ \mathcal{F}_{k-1}\right]\right] \nonumber\\
& = \mathbb{E}\left[\|B_k-B^\star\|^2 \mathbb{E}\left[\|\nabla F(\bx^\star;\xi_k)\|^2 \middle|\ \mathcal{F}_{k-1}\right]\right] + \mathbb{E}\left[\mathbb{E}\left[\|\tilde{K}_k-\tilde{K}_k^\star\|^2 \middle|\ \mathcal{F}_{k-1}\right] \mathbb{E}\left[\|\nabla F(\bx^\star;\xi_k)\|^2 \middle|\ \mathcal{F}_{k-1}\right]\right] \nonumber\\
&\stackrel{\mathclap{\eqref{appen:A4:equ22}}}{\lesssim}\;\; \mathbb{E}\left[\|B_k-B^\star\|^2\right] + \mathbb{E}[\|\tilde{K}_k-\tilde{K}_k^{\star}\|^2].
\end{align}
Here, the second equality is due to $\|B_k-B^{\star}\|$ being $\mathcal{F}_{k-1}$-measurable as well as the independence between $\xi_k$ and $\{S_{k,j}\}_{j=0}^\tau$. Similar to the proof of Lemma~\ref{lem:converge_rate_and_Bt_converge}, by the definitions of $\tilde{C}_k$ in Section~\ref{sec:3.2} and $\tilde{C}_k^{\star}$ in~\eqref{def:tilde_C_star_k}, we obtain
\begin{align} \label{equ:89'}
\|\tilde{C}_k - \tilde{C}^{\star}_k\| & \lesssim \sum_{j = 0}^{\tau - 1} (2+1/\sqrt{\gamma_S})^{j} \cdot ( |\alpha_k - \alpha^{\star}| + |\beta_k - \beta^{\star}| + |\gamma_k - \gamma^{\star}| + \|Z_{k, j} - \tilde{Z}^{\star}_{k,j}\| )  \cdot (2+1/\sqrt{\gamma_S})^{\tau - j - 1} \nonumber\\
& \lesssim \sum_{j = 0}^{\tau - 1} (2+1/\sqrt{\gamma_S})^{\tau - 1} \cdot \left(\mathbb{E}[\|\tilde{Z}_k - \tilde{Z}^{\star}\| \mid \mathcal{F}_{k-1}] + \|Z_{k, j} - \tilde{Z}^{\star}_{k,j}\|\right) \nonumber\\
& \lesssim \sum_{j = 0}^{\tau - 1} \left(\frac{2\|B_k - B^{\star}\|}{\gamma_H} \cdot \mathbb{E}\left[\|S\|\|S^{\dagger}\|\right] + \|Z_{k, j} - \tilde{Z}^{\star}_{k,j}\|\right) \quad (\text{by Lemma \ref{lem:continuity_projection_matrix}}) \nonumber\\
& \lesssim \|B_k-\tB\| +  \sum_{j = 0}^{\tau - 1} \|B_k S_{k,j} (S_{k,j}^{\top} B_k^2 S_{k,j})^{\dagger} S_{k,j}^{\top} B_k - B^{\star} S_{k,j} (S_{k,j}^{\top} (B^{\star})^2 S_{k,j})^{\dagger} S_{k,j}^{\top} B^{\star}\| \nonumber\\
&\lesssim \|B_k - B^{\star}\| + \|B_k - B^{\star}\| \sum_{j = 0}^{\tau - 1} \|S_{k,j}\|\|S_{k,j}^{\dagger}\| \quad (\text{by Lemma \ref{lem:continuity_projection_matrix}}),
\end{align}
where the first inequality also uses the fact that 
\begin{equation*}
\prod_{i=1}^{n} A_i \;-\; \prod_{i=1}^{n} B_i
\;=\;
\sum_{k=1}^{n}
\left( \prod_{i=1}^{k-1} A_i \right)
\,(A_k - B_k)\,
\left( \prod_{j=k+1}^{n} B_j \right).
\end{equation*}
Given the definitions of $\tilde{K}_k$ in~\eqref{def:tilde_K_t} and $\tilde{K}_k^{\star}$ in~\eqref{def:tilde_H_star_k}, squaring both sides of~\eqref{equ:89'} and taking conditional expectations give us
\begin{multline} \label{equ:cov_H}
\mathbb{E}\left[\|\tilde{K}_k - \tilde{K}_k^{\star} \|^2 \mid \mathcal{F}_{k-1}\right]  = \mathbb{E}\left[\left\|\begin{pmatrix}
I & \boldsymbol{0}
\end{pmatrix} (\tilde{C}_k - \tilde{C}_k^{\star}) \begin{pmatrix}
I \\ I
\end{pmatrix} \right\|^2 \  \middle| \mathcal{F}_{k-1}\right] \\
\stackrel{\mathclap{\eqref{equ:89'}}}{\lesssim} \|B_k - B^{\star}\|^2 + \|B_k - B^{\star}\|^2 \cdot \mathbb{E}\left[\left(\sum_{j=0}^{\tau-1}\|S_{k, j}\|\|S_{k,j}^\dagger\|\right)^2\right] \lesssim \|B_k-B^{\star}\|^2, 	   
\end{multline}
where the last inequality is due to Assumption~\ref{ass:3} with $q_S = 2$. Plugging~\eqref{equ:cov_H} into~\eqref{appen:A4:equ13} yields
\begin{equation} \label{appen:A4:equ16}
\mathbb{E}\left[III^{\prime}\right] \lesssim \mathbb{E}\left[\|B_k-B^{\star}\|^2\right].
\end{equation}
Combining~\eqref{appen:A4:equ14} and~\eqref{appen:A4:equ16}, we complete the proof.

\subsection{Proof of Lemma \ref{appen:A5:lem1}} \label{pf:samplecov}

Recalling the decomposition~\eqref{appen:A4:equ20}, we have shown the consistency of the dominant term $\frac{1}{t}\sum_{i=1}^{t} \mathcal{L}_{1,i} \mathcal{L}_{1,i}^{\top}$ in Lemma \ref{appen:A4:lem2}. The next lemma suggests that the terms involving $\{\mathcal{L}_{2,i}\}_i$ are higher order errors.

\begin{lemma} \label{appen:A5:lem4}
Under the conditions of Theorem~\ref{sec4:thm1}, we have
\begin{equation*}
\mathbb{E}\left[\left\|\frac{1}{t} \sum_{i=1}^{t} \mathcal{L}_{2,i} \mathcal{L}_{2,i}^{\top} \right\|\right] \leq \frac{1}{t} \sum_{i=1}^{t} \mathbb{E}\left[\|\mathcal{L}_{2, i}\|^2\right] \lesssim \varphi_t.	
\end{equation*}
\end{lemma}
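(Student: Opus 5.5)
We need to bound $\frac1t\sum_{i=1}^t\mathbb{E}\|\mathcal{L}_{2,i}\|^2\lesssim\varphi_t$. The first inequality is immediate: $\|\mathcal{L}_{2,i}\mathcal{L}_{2,i}^\top\|=\|\mathcal{L}_{2,i}\|^2$, followed by the triangle inequality. For the second, we use the definition in \eqref{equ:cov_decomposition}. By Young's inequality,
\[
\|\mathcal{L}_{2,i}\|^2\le \frac{2}{\varphi_{i-1}}\|\Pi_{i,0}\Delta_0\|^2+\frac{2}{\varphi_{i-1}}\Big\|\sum_{k=0}^{i-1}\varphi_k\Pi_{i,k+1}\bdelta_k\Big\|^2 .
\]
We treat the two pieces separately.

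\textbf{Deterministic piece.} As in the proof of \textbf{(a)} in Theorem~\ref{thm:asymptotic_normality}, $\|\Pi_{i,0}\|\lesssim\exp(-\lambda_m\Phi_i)$. For $\varphi\in(0.5,1)$ this decays like $\exp(-c\,i^{1-\varphi})$. Hence $\frac1{\varphi_{i-1}}\|\Pi_{i,0}\|^2$ is summable, and its average over $i\le t$ is $O(1/t)=o(\varphi_t)$.

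\textbf{Higher-order piece.} We use Minkowski in $L^2$ together with $\|\Pi_{i,k+1}\|\le\prod_{l=k+1}^{i-1}(1-\varphi_l\lambda_m)$ for large $k$ (early indices are absorbed as above):
\[
\Big(\mathbb{E}\Big\|\sum_k\varphi_k\Pi_{i,k+1}\bdelta_k\Big\|^2\Big)^{1/2}\le\sum_{k=0}^{i-1}\varphi_k\prod_{l=k+1}^{i-1}(1-\varphi_l\lambda_m)\,(\mathbb{E}\|\bdelta_k\|^2)^{1/2}.
\]
By \eqref{equ:delta_bound}, $\|\bdelta_k\|^2\lesssim\|\Delta_k\|^4+\|B_k-B^\star\|^2\|\Delta_k\|^2$. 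Cauchy–Schwarz and Lemma~\ref{sec4:lem1} then give $\mathbb{E}\|\bdelta_k\|^2\lesssim\varphi_k^2$, with no stopping time needed since fourth moments are now available. Hence $(\mathbb{E}\|\bdelta_k\|^2)^{1/2}\lesssim\varphi_k$. By \citet[Lemma B.3]{Na2025Statistical} (the same device as in \eqref{appen:A4:equ17}), $\sum_k\varphi_k^2\prod_{l}(1-\varphi_l\lambda_m)\lesssim\varphi_{i-1}$. Therefore the second piece is $\lesssim\frac{1}{\varphi_{i-1}}\varphi_{i-1}^2=\varphi_{i-1}$.

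\textbf{Averaging.} Averaging over $i$ gives $\frac1t\sum_{i\le t}\varphi_{i-1}\lesssim t^{-\varphi}\asymp\varphi_t$, which concludes the proof. The main point needing care is obtaining $\mathbb{E}\|\bdelta_k\|^2\lesssim\varphi_k^2$ globally, i.e.\ without localization. This relies on the fourth-moment bounds of Lemma~\ref{sec4:lem1} for both $\Delta_k$ and $B_k-B^\star$. It also relies on \eqref{equ:delta_bound} holding for all $k$, which it does since its constant $C_\delta$ is uniform. The rest is routine stepsize-sum calculus.
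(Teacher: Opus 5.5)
Your proposal is correct and follows essentially the same route as the paper: you split $\mathcal{L}_{2,i}$ into the initial-condition term and the $\boldsymbol{\delta}$-sum, and you bound the latter in $L^2$ by Minkowski using $\mathbb{E}\|\boldsymbol{\delta}_k\|^2\lesssim\varphi_k^2$, which holds without localization by the fourth-moment bounds of Lemma~\ref{sec4:lem1} and Cauchy--Schwarz. You then conclude with \citet[Lemma B.3]{Na2025Statistical} and averaging, exactly as the paper does; your summability argument giving $O(1/t)$ for the initial-condition term is only a slightly sharper form of the paper's $o(\varphi_i^2)$ bound.
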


With the lemma given above, we are now ready to prove the lemma.

\noindent $\bullet$ \textbf{Proof of~\eqref{appen:A5:equ10}.}
By the decomposition~\eqref{appen:A4:equ20}, we follow similar procedures given in~\eqref{appen:A4:equ23} and~\eqref{appen:A4:equ19} and have
\begin{multline} \label{appen:A5:equ9}
\mathbb{E}\left[\left\|\frac{1}{t}\sum_{i=1}^t\frac{1}{\varphi_{i-1}}(\bx_i-\bx^{\star})(\bx_i-\bx^{\star})^{\top} - \Sigma^{\star}\right\|\right] \\
\leq  \mathbb{E}\left[\left\|\frac{1}{t}\sum_{i=1}^{t} \mathcal{L}_{1,i}\mathcal{L}_{1,i}^{\top} - \Sigma^{\star}\right\|\right] + \mathbb{E}\left[\left\|\frac{1}{t}\sum_{i=1}^{t} \mathcal{L}_{2,i} \mathcal{L}_{2,i}^{\top}\right\|\right] + 2 \sqrt{\frac{1}{t}\sum_{i=1}^{t} \mathbb{E}\left[\|\mathcal{L}_{1,i}\|^2\right]} \sqrt{\frac{1}{t}\sum_{i=1}^{t} \mathbb{E}\left[\|\mathcal{L}_{2,i}\|^2\right]}.
\end{multline}
Given Lemmas \ref{appen:A4:lem2} and \ref{appen:A5:lem4}, the remaining is to establish the bound for $\frac{1}{t}\sum_{i=1}^{t} \mathbb{E}\left[\|\mathcal{L}_{1,i}\|^2\right]$. We first bound the moment for $\|\btheta_k\|$. Based on its definition~\eqref{rec:def:b} and Lemma~\ref{lem:inner_transition}, we have
\begin{equation*}
\btheta_k = -(I-\tilde{K}_k)B_k^{-1}(g_k-\nabla f_k) + (\tilde{K}_k-K_k)B_k^{-1}\nabla f_k.
\end{equation*}
Furthermore, by Assumption~\ref{ass:2} with $q_g = 2$, Lemma~\ref{lem:prop_sketch_solver},~\eqref{equ:59},~\eqref{equ:uniform_bound_tK} and the condition on $\tau$, we get
\begin{align} \label{appen:A5:equ2}
\mathbb{E}\left[\|\btheta_k\|^2\right] & \lesssim \frac{1}{\gamma_H^2} \mathbb{E}\left[\|g_k-\nabla f_k\|^2\right] + \frac{1}{\gamma_H^2} \mathbb{E}\left[\|\nabla f_k\|^2\right] \leq \mathbb{E}\left[\|\bx_k-\bx^{\star}\|^2\right] + 1 + \mathbb{E}\left[\|\bx_k-\bx^\star\|^2\right] \nonumber\\
&\lesssim 1 \quad (\text{ by Lemma~\ref{sec4:lem1}}).
\end{align}
Since $(\btheta_k)$ is a martingale difference sequence, we follow~\eqref{appen:A4:equ21} and get
\begin{equation} \label{appen:A5:equ20}
\frac{1}{t}\sum_{i=1}^{t} \mathbb{E}\left[\|\mathcal{L}_{1,i}\|^2\right] \leq \frac{1}{t} \sum_{i=0}^{t-1} \frac{1}{\varphi_i} \sum_{k=0}^i \varphi_k^2 \prod_{l=k+1}^i (1-\varphi_l \lambda_m)^2 \mathbb{E}\left[\|\btheta_k\|^2\right] \stackrel{\eqref{appen:A5:equ2}}{\lesssim} \frac{1}{t}\sum_{i=0}^{t-1} \frac{1}{\varphi_i} \cdot \varphi_i \lesssim 1,
\end{equation}
where the second inequality also uses \citet[Lemma B.3]{Na2025Statistical}. Combining the above display, Lemmas \ref{appen:A4:lem2} and \ref{appen:A5:lem4}, and plugging them into~\eqref{appen:A5:equ9}, we obtain
\begin{equation} \label{appen:A5:equ16}
\mathbb{E}\left[\left\|\frac{1}{t} \sum_{i=1}^t\frac{1}{\varphi_{i-1}}(\bx_i-\bx^{\star})(\bx_i-\bx^{\star})^{\top} - \Sigma^{\star}\right\|\right]
\lesssim \frac{1}{\sqrt{t \varphi_t}} +\sqrt{\varphi_t}\lesssim \frac{1}{\sqrt{t \varphi_t}}.
\end{equation}
\noindent $\bullet$ \textbf{Proof of~\eqref{appen:A5:equ19}.} By the decomposition~\eqref{appen:A4:equ20}, we have
\begin{equation}\label{appen:A5:equ21}
	\frac{1}{t} \sum_{i=1}^{t} \frac{1}{\varphi_{i-1}} \mathbb{E}\left[\|\bx_i-\bx^{\star}\|^2\right] \lesssim \sum_{k=1}^{2} \frac{1}{t}\sum_{i=0}^{t-1} \mathbb{E}\left[\|\mathcal{L}_{k,i}\|^2\right] \lesssim 1,
\end{equation}
where the last inequality follows from~\eqref{appen:A5:equ20} and Lemma~\ref{appen:A5:lem4}. 

This completes the proof.

\subsection{Proof of Lemma \ref{appen:A5:lem4}} \label{pf:I3}

For the term $\mathcal{L}_{2,i}$ defined in~\eqref{equ:cov_decomposition}, by the Cauchy--Schwarz inequality, we have
\begin{equation} \label{appen:A5:equ14.5}
\frac{1}{t} \sum_{i=1}^{t} \mathbb{E}\left[\Vert \mathcal{L}_{2,i} \Vert^2\right] \lesssim \frac{1}{t} \sum_{i=0}^{t-1} \frac{1}{\varphi_i} \prod_{k=0}^{i} (1-\varphi_k \lambda_m)^2 \Vert \bx_0 - \bx^{\star} \Vert^2 + \frac{1}{t} \sum_{i=0}^{t-1} \frac{1}{\varphi_i} \left(\sum_{k=0}^i\prod_{l=k+1}^i(1-\varphi_l \lambda_m) \varphi_k \sqrt{\mathbb{E}\left[\|\bdelta_k\|^2\right]}\right)^2.
\end{equation}
Then, we focus on the rate of $\mathbb{E}\left[\|\bdelta_k\|^2\right]$. By the definition of $\bdelta_k$ in~\eqref{rec:def:c}, we have
\begin{align*}
\|\bdelta_k\|^2 & \lesssim \|K_k-K^{\star}\|^2 \|\bx_k-\bx^{\star}\|^2 + \|(B^{\star})^{-1}\|^2 \|\nabla f_k - B^{\star}(\bx_k - \bx^{\star})\|^2 + \|B_k^{-1}\|^2\|(B^{\star})^{-1}\|^2\|B_k-B^\star\|^2\|\nabla f_k\|^2 \\
& \lesssim \|B_k-B^{\star}\|^2 \|\bx_k-\bx^{\star}\|^2 + \|\bx_k-\bx^{\star}\|^4 + \|B_k-B^{\star}\|^2 \|\bx_k-\bx^{\star}\|^2,
\end{align*}
where the second inequality is due to Assumption~\ref{ass:1},~\eqref{equ:59},~\eqref{ass:2:c2} and~\eqref{equ:K_t-Kstar}. Taking the expectation on both sides yields
\begin{multline} \label{appen:A5:equ6}
\mathbb{E}\left[\|\bdelta_k\|^2\right] \lesssim \mathbb{E}\left[\|B_k-B^{\star}\|^2 \|\bx_k-\bx_k^{\star}\|^2\right] + \mathbb{E}\left[\|\bx_k-\bx^\star\|^4\right] \\ 
\lesssim \sqrt{\mathbb{E}\left[\|B_k-B^{\star}\|^4\right]} \sqrt{\mathbb{E}\left[\|\bx_k-\bx^{\star}\|^4\right]} + \mathbb{E}\left[\|\bx_k-\bx^{\star}\|^4\right] \lesssim \varphi_k^2,
\end{multline}
where the last inequality follows from Lemma~\ref{sec4:lem1}. Plugging~\eqref{appen:A5:equ6} into~\eqref{appen:A5:equ14.5} gives us
\begin{align} \label{appen:A5:equ15}
\frac{1}{t} \sum_{i=1}^{t} \mathbb{E}\left[\Vert \mathcal{L}_{2,i} \Vert^2\right] & \lesssim \frac{1}{t} \sum_{i=0}^{t-1} \frac{1}{\varphi_i} \prod_{k=0}^{i} (1-\varphi_k \lambda_m)^2 \Vert \bx_0 - \bx^{\star} \Vert^2 + \frac{1}{t} \sum_{i=0}^{t-1} \frac{1}{\varphi_i} \left(\sum_{k=0}^i\prod_{l=k+1}^i(1-\varphi_l \lambda_m) \varphi_k^2\right)^2 \nonumber\\
& \lesssim \frac{1}{t} \sum_{i=0}^{t-1} \frac{1}{\varphi_i} o(\varphi_i^2) + \frac{1}{t} \sum_{i=0}^{t-1} \frac{1}{\varphi_i} \varphi_i^2 \lesssim \varphi_t,
\end{align}
where the last inequality is the direct application of \citet[Lemma B.3]{Na2025Statistical}. We complete the proof.

\subsection{Proof of Lemma \ref{appen:A5:lem2}} \label{pf:barx}

The first inequality is trivial. We only show the second inequality based on the decomposition~\eqref{equ:error_decomposition}. In particular, we decompose $\bar{\bx}_t-\bx^{\star}$ as
\begin{align} \label{appen:A5:equ4}
\bar{\bx}_t-\bx^{\star} &= \frac{1}{t} \sum_{i=0}^{t-1} \sum_{k=0}^i \prod_{l=k+1}^i (I-\varphi_l \mathcal{R}) \varphi_k \btheta_k \nonumber\\
& \quad + \left[\frac{1}{t} \sum_{i=0}^{t-1} \prod_{k=0}^i (I-\varphi_k \mathcal{R}) (\bx_0-\bx^{\star}) +\frac{1}{t} \sum_{i=0}^{t-1} \sum_{k=0}^i \prod_{l=k+1}^i (I-\varphi_l \mathcal{R}) \varphi_k \bdelta_k\right] \eqqcolon \mathcal{Z}_{1, t} + \mathcal{Z}_{2,t}.
\end{align}
For the term $\mathcal{Z}_{1,t}$, exchanging the indices gives us
\begin{equation*}
\mathcal{Z}_{1, t} = \frac{1}{t} \sum_{i=0}^{t-1} \sum_{k=0}^i \prod_{l=k+1}^i (I-\varphi_l \mathcal{R}) \varphi_k \btheta_k =\frac{1}{t} \sum_{k=0}^{t-1}\sum_{i=k}^{t-1} \prod_{l=k+1}^i (I-\varphi_l \mathcal{R}) \varphi_k \btheta_k.    
\end{equation*}
Since $(\btheta_k)$ is a martingale difference sequence, the interaction terms in $\mathbb{E}\left[\|\mathcal{Z}_{1,t}\|^2\right]$ vanish. Therefore, we have
\begin{align*}
\mathbb{E}\left[\|\mathcal{Z}_{1,t}\|^2\right] &= \frac{1}{t^2} \sum_{k=0}^{t-1} \varphi_k^2 \mathbb{E} \left[\left\|\sum_{i=k}^{t-1} \prod_{l=k+1}^i (I-\varphi_l \mathcal{R}) \btheta_k\right\|^2\right] \leq \frac{1}{t^2} \sum_{k=0}^{t-1} \left[\sum_{i=k}^{t-1} \prod_{l=k+1}^i (1-\varphi_l \lambda_m)\right]^2 \varphi_k^2 \mathbb{E}\left[\|\btheta_k\|^2\right] \eqqcolon (\#),
\end{align*}
which can be rewritten by exchanging the indices as
\begin{align*}
(\#) &= \frac{1}{t^2} \sum_{k=0}^{t-1}\sum_{i_1=k}^{t-1}\sum_{i_2=k}^{t-1} \prod_{l_1=k+1}^{i_1} (1-\varphi_{l_1}\lambda_m) \prod_{l_2=k+1}^{i_2} (1-\varphi_{l_2}\lambda_m)\varphi_k^2 \mathbb{E}\left[\|\btheta_k\|^2\right] \\
&= \frac{1}{t^2} \sum_{i_1=0}^{t-1} \sum_{i_2=0}^{t-1} \sum_{k=0}^{i_1\wedge i_2} \prod_{l_1=k+1}^{i_1} (1-\varphi_{l_1}\lambda_m) \prod_{l_2=k+1}^{i_2}(1-\varphi_{l_2}\lambda_m) \varphi_k^2 \mathbb{E}\left[\|\btheta_k\|^2\right] \\
&\leq \frac{2}{t^2} \sum_{i_1=0}^{t-1} \sum_{i_2=0}^{i_1}\prod_{l_1=i_2+1}^{i_1}(1-\varphi_{l_1}\lambda_m) \sum_{k=0}^{i_2} \prod_{l_2=k+1}^{i_2}(1-\varphi_{l_2}\lambda_m)^2 \varphi_k^2 \mathbb{E}\left[\|\btheta_k\|^2\right],
\end{align*}
where the last inequality is due to the symmetry between the indices $i_1$ and $i_2$. We plug in~\eqref{appen:A5:equ2} and get
\begin{align} \label{appen:A5:equ12}
\mathbb{E}\left[\|\mathcal{Z}_{1,t}\|^2\right] & \lesssim \frac{1}{t^2}\sum_{i_1=0}^{t-1} \sum_{i_2=0}^{i_1}\prod_{l_1=i_2+1}^{i_1}(1-\varphi_{l_1}\lambda_m) \sum_{k=0}^{i_2} \prod_{l_2=k+1}^{i_2}(1-\varphi_{l_2}\lambda_m)^2 \varphi_k^2 \nonumber\\
& \lesssim \frac{1}{t^2}\sum_{i_1=0}^{t-1} \sum_{i_2=0}^{i_1}\prod_{l_1=i_2+1}^{i_1}(1-\varphi_{l_1}\lambda_m) \varphi_{i_2} \lesssim \frac{1}{t^2} \sum_{i_1=0}^{t-1} 1 = \frac{1}{t},
\end{align}
where the last two inequalities are due to \citet[Lemma B.3]{Na2025Statistical}. For the term $\mathcal{Z}_{2,t}$, similarly to the analysis in~\eqref{appen:A5:equ14.5}, we have
\begin{align} \label{appen:A5:equ18}
\mathbb{E}\left[\|\mathcal{Z}_{2,t}\|^2\right] & \lesssim \left(\frac{1}{t}\sum_{i=0}^{t-1}\prod_{k=0}^i(1-\varphi_k \lambda_m)\right)^2 \|\bx_0-\bx^{\star}\|^2 + \left(\frac{1}{t}\sum_{i=0}^{t-1}\sum_{k=0}^i\prod_{l=k+1}^i (1-\varphi_l \lambda_m)\varphi_k \sqrt{\mathbb{E}\left[\|\bdelta_k\|^2\right]}\right)^2 \nonumber\\
&\stackrel{\mathclap{\eqref{appen:A5:equ6}}}{\lesssim}\; \left(\frac{1}{t}\sum_{i=0}^{t-1} \prod_{k=0}^i(1-\varphi_k \lambda_m)\right)^2 \|\bx_0-\bx^{\star}\|^2 + \left(\frac{1}{t}\sum_{i=0}^{t-1} \sum_{k=0}^i\prod_{l=k+1}^i(1-\varphi_l \lambda_m)\varphi_k^2\right)^2 \nonumber\\
& \lesssim \left(\frac{1}{t} \sum_{i=0}^{t-1} o(\varphi_i)\right) + \left(\frac{1}{t} \sum_{i=0}^{t-1} \varphi_i\right)^2 \lesssim \varphi_t^2,
\end{align}
where the third inequality is again due to \citet[Lemma B.3]{Na2025Statistical}. Combining~\eqref{appen:A5:equ4},~\eqref{appen:A5:equ12} and~\eqref{appen:A5:equ18} together yields
\begin{equation} \label{appen:A5:equ22}
\frac{1}{t}\sum_{i=0}^{t-1}\frac{1}{\varphi_i} \mathbb{E}\left[\|\bar{\bx}_t-\bx^{\star}\|^2\right]
\lesssim \frac{1}{t^2}\sum_{i=0}^{t-1}\frac{1}{\varphi_i}\lesssim \frac{1}{t \varphi_t},
\end{equation}
where the last inequality is due to the fact that $\sum_{i=0}^{t-1}1/\varphi_i \lesssim \int_0^t i^\varphi di\lesssim t/\varphi_t$. This completes the proof.

\section{Additional Experimental Results}\label{app:exp}

In this section, we follow the experiments in Section \ref{sec:experiments} and introduce detailed experimental setups along with additional~experimental results.

\subsection{Detailed experimental setup}\label{app:exp:1}

For the linear regression problem, we consider the model $\xi_{b} = \xi_{a}^\top \boldsymbol{x}^\star + \varepsilon$, where $\varepsilon \sim \mathcal{N}(0, \sigma^2)$ is Gaussian noise. For this model, we use the squared loss $F(\bx; \xi) = 0.5(\xi_b - \xi_a^\top \boldsymbol{x})^2$. 

For the logistic regression problem, we consider the model $P(\xi_{b} | \xi_a) = \frac{\text{exp}(\xi_b \cdot \xi_a^\top \boldsymbol{x}^\star)}{1 + \text{exp}(\xi_b \cdot \xi_a^\top \boldsymbol{x}^\star)}$ with $\xi_b\in\{-1,1\}$.
 For this model, we use the log loss $F(\bx; \xi) = \log( 1 + \exp(-\xi_b \cdot \xi_a^\top \boldsymbol{x}))$.

\noindent $\bullet$ \textbf{Model parameters setup.} 
For both regression tasks, we consider dimensions $d \in \{20, 40\}$ and set the ground-truth model parameter  $\bx^{\star} \in \mR^d$ to be linearly spaced between $0$ and $1$. For the linear model, we fix the noise variance at $\sigma^2 = 1$. For~each dimension $d$, we generate the covariate $\xi_a\sim \mN(\0, \Sigma_a)$ with three different types of $\Sigma_a$. (i) Identity matrix: $\Sigma_a = I$; (ii) Equi-correlation matrix: $[\Sigma_a]_{i, i} = 1$ and $[\Sigma_a]_{i, j} = r$ for $i \neq j$; and (iii) Toeplitz matrix: $[\Sigma_a]_{i, j} =r^{|i - j|}$. For the latter two covariance matrices, we set $r = 0.4$.

\noindent$\bullet$ \textbf{Algorithm parameters setup.} 
We vary the number of sketching steps $\tau \in \{5, 10, \infty\}$, where $\tau = \infty$ corresponds~to~solving \eqref{equ:newton_system} exactly. For the sketching solver, we apply both randomized Kaczmarz sampling $S \sim \text{Uniform}(\{\be_i\}_{i=1}^{d})$ \citep{Strohmer2008Randomized} and Gaussian sampling $S \sim \mathcal{N}(0, I_d)$. We set $(\alpha_t,\beta_t,\gamma_t)$ according to the formulas in \eqref{def:alpha_beta_gamma_t}, with $(\mu_t,\nu_t)$ approximated by empirical averages based on $B_t$ (see \eqref{def:mu_t_nu_t}). To examine cheaper parameter updates, we recompute these approximations every $N\in\{1,500,1000\}$ iterations and keep them fixed between refreshes; $N=1$ recovers the standard non-periodic setting. In the QQ-plot experiments comparing accelerated and unaccelerated sketching, we also include the degenerate specification $\mu_t\nu_t=\gamma_t=1$. For the online Newton method, we~use a diminishing stepsize $\varphi_t = 1/(t+1)^{0.501}$. We set the nominal confidence level to $95\%$, and perform inference on the coordinate-wise mean of $\bx^{\star}$, i.e. $\sum_{i=1}^d \bx_i^\star / d$. For each parameter choice, we conduct 200 independent runs.

\subsection{Additional results and discussions}

In this section, we present and discuss additional experimental results. Table \ref{tab:2} reports numerical experiments for $d = 20$ on both linear and logistic regression problems, including the standard SGD baseline and periodic refreshes $N\in\{1,500,1000\}$ for the accelerated parameters. For the Newton-type methods, the empirical coverage rates consistently remain close to the nominal level of $95\%$ across most parameter configurations and refresh periods. The similar coverage behavior across different values of $N$ further suggests that recomputing $(\mu_t,\nu_t)$ at every iteration is not necessary for maintaining valid inference in these examples.

One exception is the linear regression case under Gaussian sketching. When $\Sigma_a$ is the identity and the number of sketching steps $\tau = 5$, the empirical coverage rate drops to $90.5\%$, which is slightly below the nominal level.
This mild undercoverage may be attributed to the estimation error in the weighted sample covariance estimator $\hat{\Sigma}_t$ and the approximation error introduced by the sketching solver when $\tau$ is small. These numerical effects may lead to narrower confidence intervals in finite samples. Nevertheless, the observed coverage rates of Newton-type methods remain within an acceptable range for practical inference. In contrast, the SGD baseline exhibits more pronounced undercoverage in several settings, consistent with prior findings on first-order online inference. Overall, the close alignment of the Newton-type methods with the target coverage level provides numerical validations for the asymptotic normality established in Theorem \ref{thm:asymptotic_normality} and confirms the consistency of the proposed weighted sample covariance estimator in Theorem \ref{sec4:thm1}. Notably, even with a limited number of sketching steps ($\tau = 5$), the coverage rates remain largely comparable to those achieved by the exact Newton method ($\tau = \infty$) in almost all cases, demonstrating the robustness of the accelerated sketching approach for statistical inference.

In terms of the mean absolute error, our proposed Newton method achieves comparable iteration error with the exact Newton method even with a small sketching steps $\tau$. Similar to the case with $d = 40$, in some particular settings, e.g., $\Sigma_a$ is Toeplitz or Equi-correlation in linear regression problems, the iteration error of the inexact Newton method is even smaller than that of the exact Newton method. Furthermore, the average confidence interval lengths exhibit only modest increases when the sketching solver is applied. Thus, these results indicate that our proposed inexact Newton method preserves statistical~efficiency.

\begin{table}[t] 
\centering
\setlength{\tabcolsep}{6pt}         
\renewcommand{\arraystretch}{1.25}   
\resizebox{1.00\linewidth}{!}{\begin{tabular}{|c|cc|c|ccc|ccc|ccc|ccc|}
\hline
&&&&&&&&&&&&&&&\\[-2.4ex]
\multirow{3}{*}{$d$}&\multicolumn{2}{c|}{\multirow{3}{*}{$\quad \ \Sigma_a \quad \;$}}& \multirow{3}{*}{$\tau$} & \multicolumn{3}{c|}{Linear Regression (Kaczmarz)}& \multicolumn{3}{c|}{Linear Regression (Gaussian)} & \multicolumn{3}{c|}{Logistic Regression (Kaczmarz)}& \multicolumn{3}{c|}{Logistic Regression (Gaussian)}  \\
\cline{5-16}		
& & & & & & & & &  &&&&&&\\[-3.25ex]
& & & &  MAE  &  \multicolumn{1}{|c|}{Ave Cov } & Ave Len  & MAE &  \multicolumn{1}{|c|}{Ave Cov } & Ave Len &  MAE  &  \multicolumn{1}{|c|}{Ave Cov } & Ave Len  & MAE &  \multicolumn{1}{|c|}{Ave Cov } & Ave Len  \\[-4pt]
& & & &  {\scriptsize $(10^{-2})$} &  \multicolumn{1}{|c|}{{\scriptsize $(\%)$}} & {\scriptsize $(10^{-2})$} & {\scriptsize $(10^{-2})$} &  \multicolumn{1}{|c|}{{\scriptsize $(\%)$}} & {\scriptsize $(10^{-2})$} &  {\scriptsize $(10^{-2})$} &  \multicolumn{1}{|c|}{{\scriptsize $(\%)$}} & {\scriptsize $(10^{-2})$} & {\scriptsize $(10^{-2})$} &  \multicolumn{1}{|c|}{{\scriptsize $(\%)$}} & {\scriptsize $(10^{-2})$}  \\
\hline
\multirow{24}{*}{20}
&\parbox[t]{2mm}{\multirow{8}{*}{\;Identity}}&  & \text{SGD} &  17.52 & \multicolumn{1}{|c|}{64.50} & 0.64 & --- & \multicolumn{1}{|c|}{---} & --- &  6.11 & \multicolumn{1}{|c|}{81.50} & 0.63 & --- & \multicolumn{1}{|c|}{---} & ---  \\
\cline{4-16}
&&& {$\infty$} &  18.07 & \multicolumn{1}{|c|}{96.50} & 3.53 & 17.81 & \multicolumn{1}{|c|}{97.50} & 3.53 &  2.92 & \multicolumn{1}{|c|}{94.50} & 0.42 & 2.93 & \multicolumn{1}{|c|}{95.00} & 0.42  \\
\cline{4-16}
&&& \multirow{3}{*}{$10$} & 17.60 & \multicolumn{1}{|c|}{95.00} & 3.52 & 18.60 &\multicolumn{1}{|c|}{97.00} & 3.74 & 2.95 & \multicolumn{1}{|c|}{94.00} & 0.52 & 3.11 &\multicolumn{1}{|c|}{93.50} & 0.54  \\
\cline{5-16}
&&& & 17.81 & \multicolumn{1}{|c|}{92.00} & 3.53 & 18.64 &\multicolumn{1}{|c|}{95.50} & 3.72 & 2.98 & \multicolumn{1}{|c|}{93.50} & 0.52 & 3.16 &\multicolumn{1}{|c|}{97.00} & 0.55  \\
\cline{5-16}
&&& & 17.77 & \multicolumn{1}{|c|}{93.50} & 3.53 & 19.09 &\multicolumn{1}{|c|}{94.50} & 3.74 & 2.98 & \multicolumn{1}{|c|}{93.50} & 0.52 & 3.10 &\multicolumn{1}{|c|}{97.50} & 0.55  \\
\cline{4-16}
&&& \multirow{3}{*}{$5$} & 17.87 &\multicolumn{1}{|c|}{96.50}& 3.48 & 18.40 &\multicolumn{1}{|c|}{90.50}& 3.59 & 3.01 &\multicolumn{1}{|c|}{94.50}& 0.54 & 3.10 &\multicolumn{1}{|c|}{95.50}& 0.54  \\
\cline{5-16}
&&& & 17.74 &\multicolumn{1}{|c|}{94.00}& 3.51 & 18.28 &\multicolumn{1}{|c|}{95.00}& 3.52 & 2.99 &\multicolumn{1}{|c|}{96.00}& 0.55 & 3.10 &\multicolumn{1}{|c|}{91.50}& 0.55  \\
\cline{5-16}
&&& & 18.00 &\multicolumn{1}{|c|}{94.50}& 3.50 & 18.33 &\multicolumn{1}{|c|}{98.00}& 3.55 & 2.99 &\multicolumn{1}{|c|}{95.00}& 0.54 & 3.13 &\multicolumn{1}{|c|}{94.00}& 0.55  \\
\cline{2-16}
& \multicolumn{15}{c|}{} \\[-2ex]
\cline{2-16}
&\parbox[t]{2mm}{\multirow{8}{*}{\;\shortstack{Toeplitz\\$r=0.4$}}}&  & \text{SGD} & 17.85 & \multicolumn{1}{|c|}{70.50} & 0.65 & --- & \multicolumn{1}{|c|}{---} & --- & 5.19 & \multicolumn{1}{|c|}{77.00} & 0.63 & --- & \multicolumn{1}{|c|}{---} & --- \\
\cline{4-16}
&&& {$\infty$} & 20.62 & \multicolumn{1}{|c|}{94.50} & 2.38 & 20.84 & \multicolumn{1}{|c|}{96.50} & 2.38 & 2.48 & \multicolumn{1}{|c|}{92.50} & 0.47 & 4.54 & \multicolumn{1}{|c|}{94.50} & 0.56  \\
\cline{4-16}
&&& \multirow{3}{*}{$10$} & 16.25 & \multicolumn{1}{|c|}{94.50} & 3.04 & 17.47 & \multicolumn{1}{|c|}{93.50} & 3.05 & 2.53 & \multicolumn{1}{|c|}{92.50} & 0.48 & 4.81 & \multicolumn{1}{|c|}{94.00} & 0.80  \\
\cline{5-16}
&&& & 16.15 & \multicolumn{1}{|c|}{94.00} & 3.03 & 17.67 & \multicolumn{1}{|c|}{96.00} & 3.04 & 2.48 & \multicolumn{1}{|c|}{93.00} & 0.48 & 4.75 & \multicolumn{1}{|c|}{93.00} & 0.80 \\
\cline{5-16}
&&& & 16.56 & \multicolumn{1}{|c|}{96.50} & 3.02 & 17.14 & \multicolumn{1}{|c|}{93.50} & 3.03 & 2.53 & \multicolumn{1}{|c|}{94.00} & 0.49 & 4.76 & \multicolumn{1}{|c|}{97.50} & 0.80 \\
\cline{4-16}
&&& \multirow{3}{*}{$5$} & 15.69 & \multicolumn{1}{|c|}{96.00} & 3.04 & 16.97 & \multicolumn{1}{|c|}{95.00} & 3.02 & 2.52 & \multicolumn{1}{|c|}{94.00} & 0.48 & 4.57 & \multicolumn{1}{|c|}{95.50} & 0.81  \\
\cline{5-16}
&&& & 15.83 & \multicolumn{1}{|c|}{94.50} & 3.02 & 16.46 & \multicolumn{1}{|c|}{93.50} & 3.03 & 2.53 & \multicolumn{1}{|c|}{95.50} & 0.49 & 4.63 & \multicolumn{1}{|c|}{95.50} & 0.80 \\
\cline{5-16}
&&& & 15.79 & \multicolumn{1}{|c|}{95.00} & 3.04 & 16.79 & \multicolumn{1}{|c|}{96.00} & 3.05 & 2.53 & \multicolumn{1}{|c|}{92.00} & 0.48 & 4.56 & \multicolumn{1}{|c|}{92.50} & 0.80 \\
\cline{2-16}
& \multicolumn{15}{c|}{}\\[-2ex]
\cline{2-16}
&\parbox[t]{2mm}{\multirow{8}{*}{\shortstack{Equi-Corr\\$r=0.4$}}}&  & \text{SGD} & 18.05 & \multicolumn{1}{|c|}{79.00} & 0.62 & --- & \multicolumn{1}{|c|}{---} & --- & 5.10 & \multicolumn{1}{|c|}{80.50} & 0.63 & --- & \multicolumn{1}{|c|}{---} & --- \\
\cline{4-16}
&&& {$\infty$} & 13.56 & \multicolumn{1}{|c|}{94.50} & 1.14 & 13.77 & \multicolumn{1}{|c|}{92.50} & 1.14 & 2.53 & \multicolumn{1}{|c|}{95.50} & 0.41 & 2.48 & \multicolumn{1}{|c|}{97.00} & 0.41  \\
\cline{4-16}
&&& \multirow{3}{*}{$10$} &  10.82 & \multicolumn{1}{|c|}{95.50} & 1.69 & 13.05 & \multicolumn{1}{|c|}{97.00} & 1.68 &  2.44 & \multicolumn{1}{|c|}{93.50} & 0.46 & 2.60 & \multicolumn{1}{|c|}{94.50} & 0.48  \\
\cline{5-16}
&&& & 10.75 & \multicolumn{1}{|c|}{95.50} & 1.67 & 13.77 & \multicolumn{1}{|c|}{92.00} & 1.68 &  2.51 & \multicolumn{1}{|c|}{96.00} & 0.46 & 2.62 & \multicolumn{1}{|c|}{94.00} & 0.49  \\
\cline{5-16}
&&& & 10.79 & \multicolumn{1}{|c|}{98.00} & 1.68 & 13.36 & \multicolumn{1}{|c|}{92.00} & 1.69 & 2.48 & \multicolumn{1}{|c|}{95.00} & 0.46 & 2.65 & \multicolumn{1}{|c|}{98.00} & 0.48 \\
\cline{4-16}
&&& \multirow{3}{*}{$5$} &  10.33 & \multicolumn{1}{|c|}{95.00} & 1.74 & 12.89 & \multicolumn{1}{|c|}{95.00} & 1.77 &  2.49 & \multicolumn{1}{|c|}{96.50} & 0.47 & 2.52 & \multicolumn{1}{|c|}{94.50} & 0.48  \\
\cline{5-16}
&&& & 10.23 & \multicolumn{1}{|c|}{97.00} & 1.75 & 12.76 & \multicolumn{1}{|c|}{96.00} & 1.78 &  2.48 & \multicolumn{1}{|c|}{94.00} & 0.47 & 2.62 & \multicolumn{1}{|c|}{97.50} & 0.48  \\
\cline{5-16}
&&& & 10.44 & \multicolumn{1}{|c|}{94.50} & 1.75 & 12.81 & \multicolumn{1}{|c|}{96.00} & 1.78 & 2.49 & \multicolumn{1}{|c|}{94.00} & 0.47 & 2.59 & \multicolumn{1}{|c|}{96.00} & 0.48 \\
\hline
\end{tabular}}
\caption{\textit{A subset of evaluation results for the inference procedure under different parameter choices with $d=20$. ``MAE'' denotes the mean absolute iterate error over 200 independent runs, ``Ave Cov'' denotes the empirical coverage rate, and ``Ave Len'' denotes the average confidence-interval length. The row labeled ``SGD'' reports the standard SGD baseline. Since SGD does not involve sketching, there is no distinction between the Kaczmarz and Gaussian variants; accordingly, the SGD results are reported only once for each regression model, and redundant entries are marked by ``---''. For the sketching-based methods with $\tau \in \{5,10\}$, each three-row block, from top to bottom, corresponds to a different refresh period $N\in\{1,500,1000\}$ for the acceleration parameters $(\mu_t,\nu_t)$. Here, $N=1$ recovers the standard non-periodic setting.}}\label{tab:2}	
\vskip-0.5cm
\end{table}

\begin{figure}[t!] 
\centering
\begin{subfigure}[H]{0.245\textwidth}
\centering
\includegraphics[width=\linewidth]{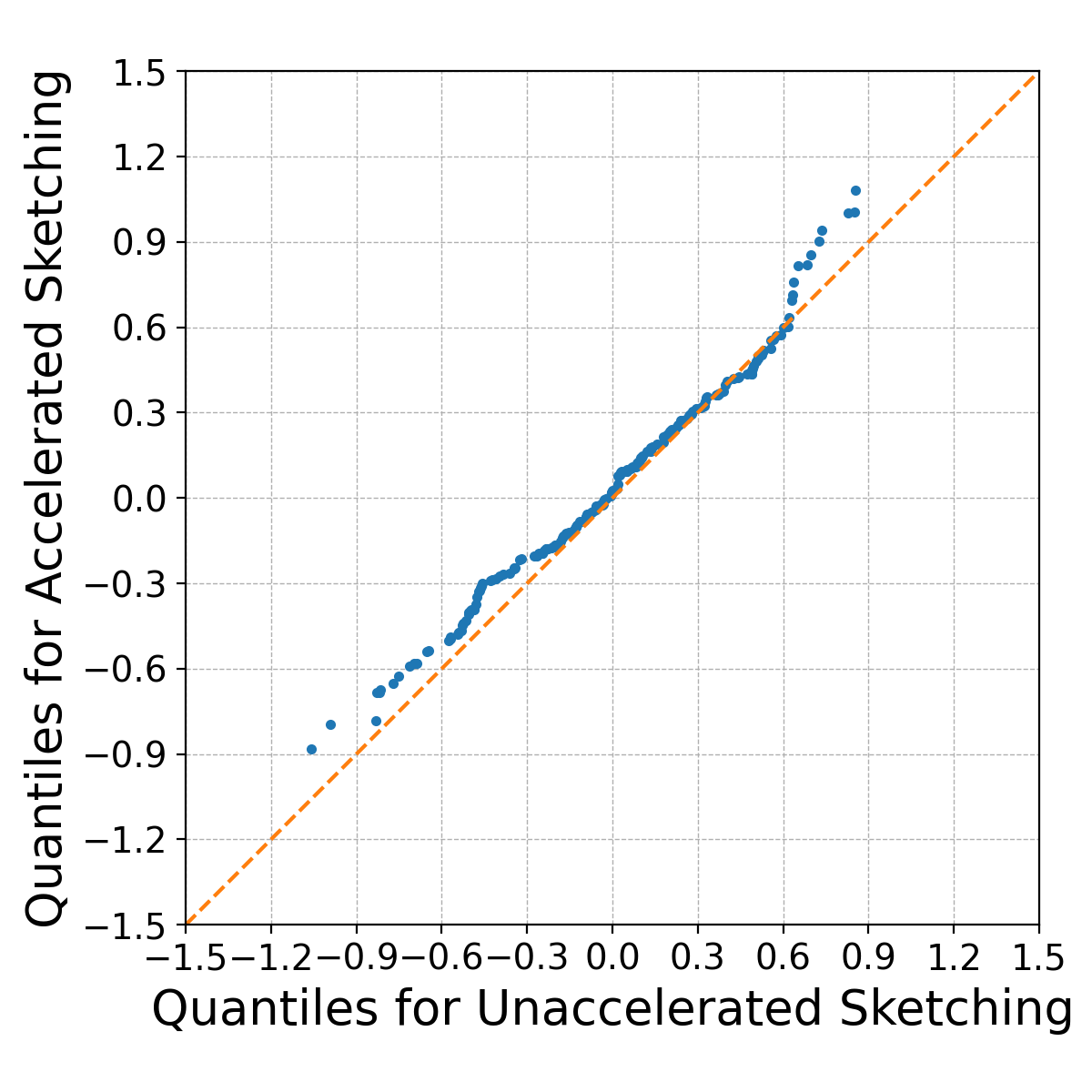}
\caption{$d=10, \tau=5$}
\end{subfigure}\hfill
\begin{subfigure}[H]{0.245\textwidth}
\centering
\includegraphics[width=\linewidth]{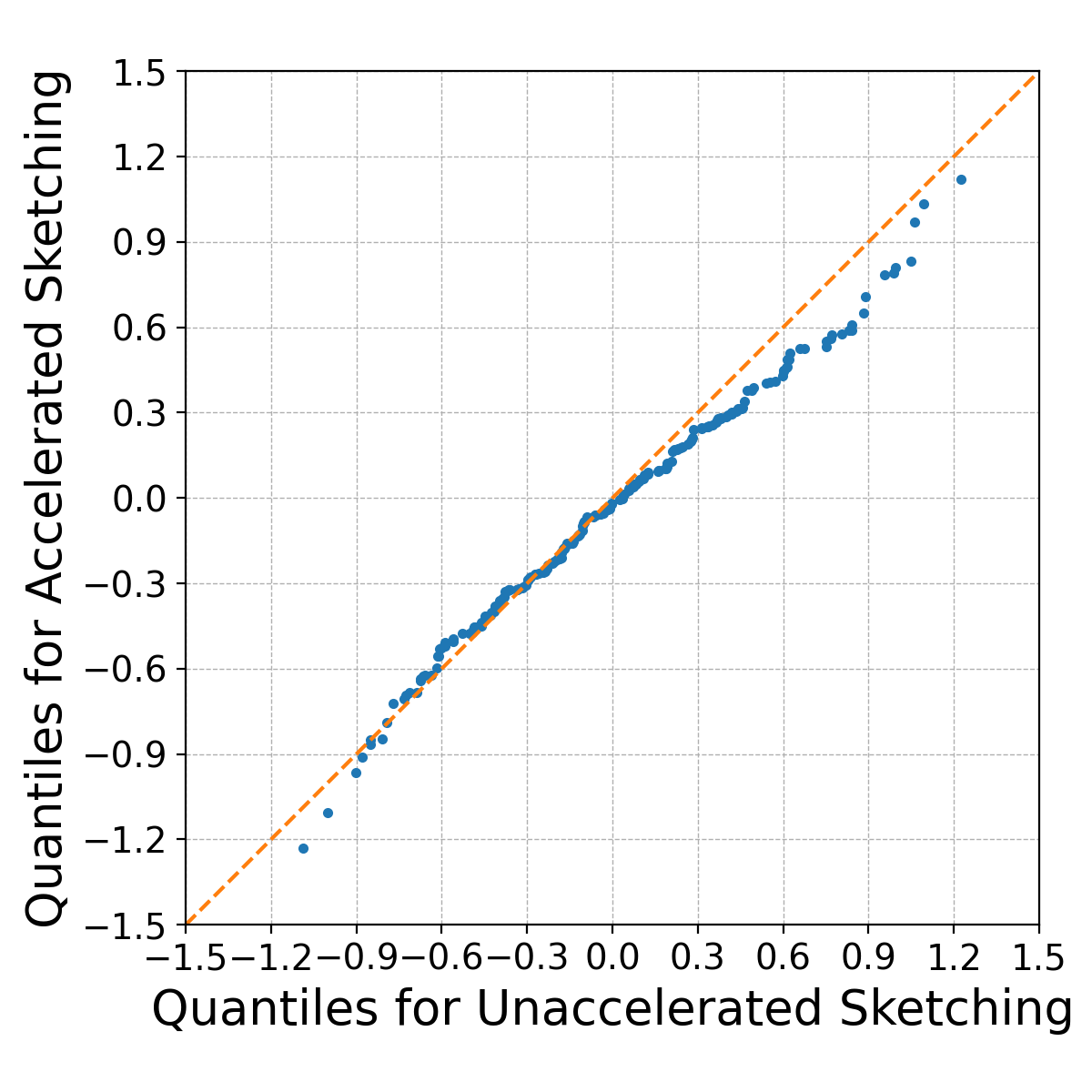}
\caption{$d=20, \tau=5$ }
\end{subfigure}\hfill
\begin{subfigure}[H]{0.245\textwidth}
\centering
\includegraphics[width=\linewidth]{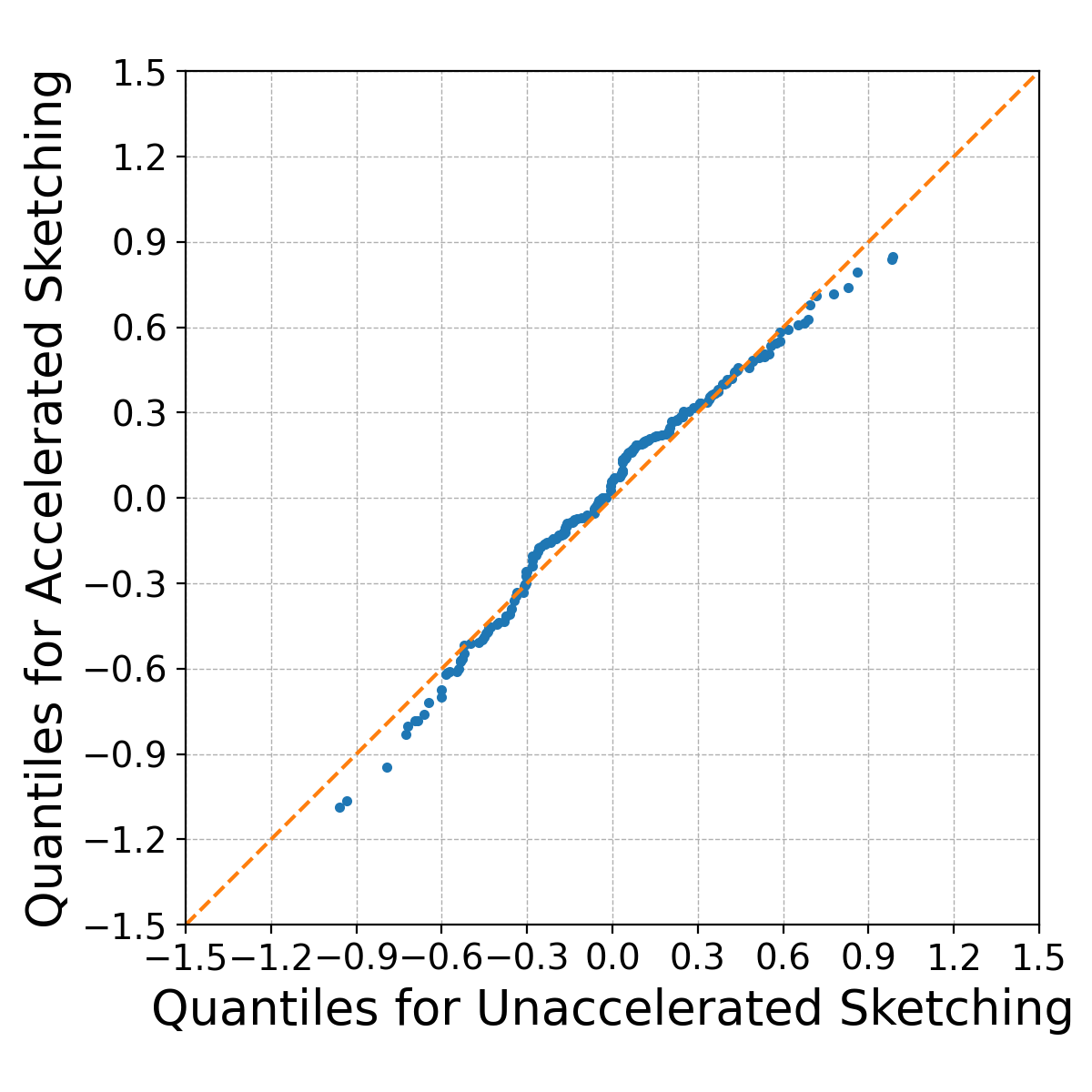}
\caption{$d=10, \tau=3$}
\end{subfigure}\hfill
\begin{subfigure}[H]{0.245\textwidth}
\centering
\includegraphics[width=\linewidth]{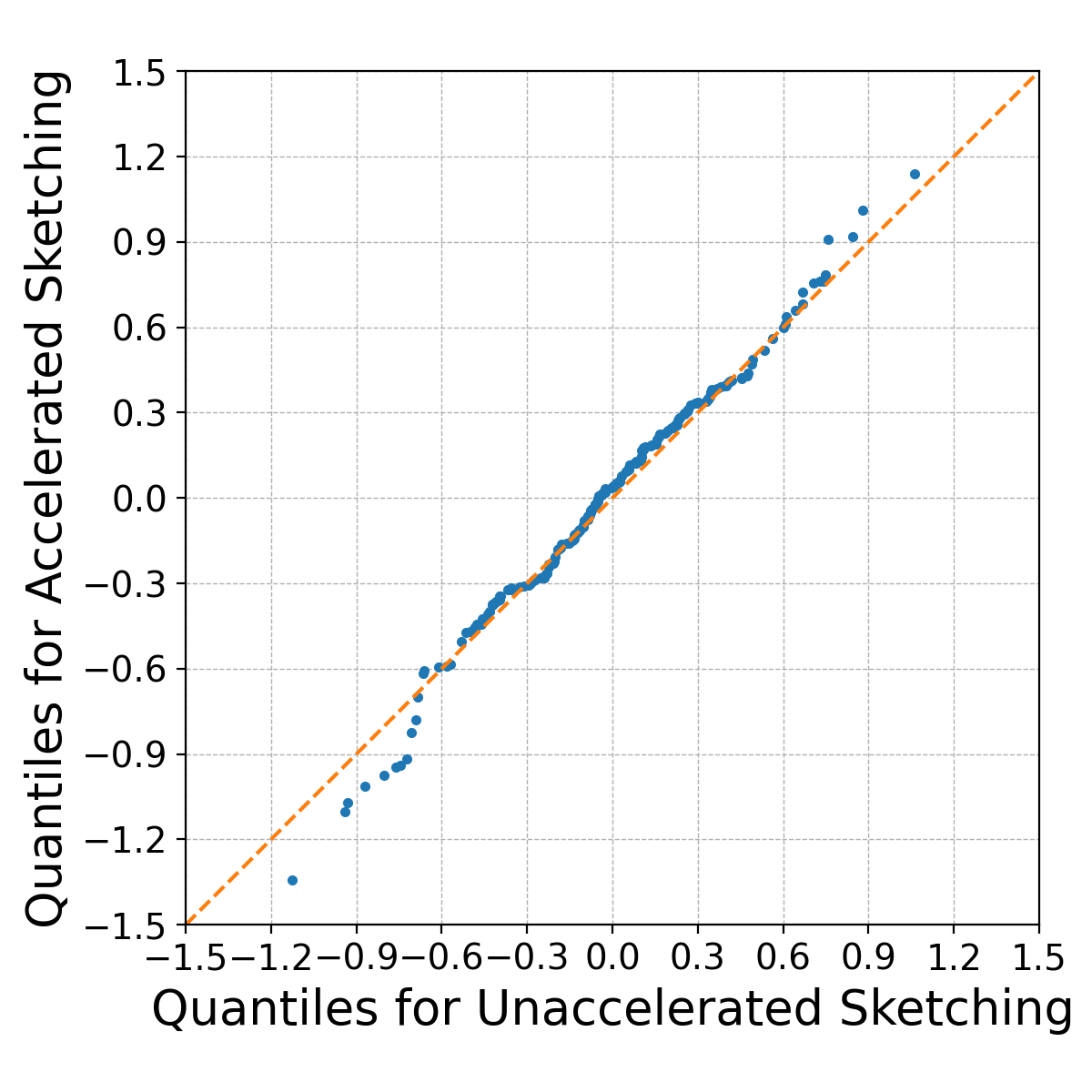}
\caption{$d=10, \tau=5, \gamma = 1$}
\end{subfigure}
\caption{\textit{QQ plots for logistic regression with Gaussian accelerated v.s. unaccelerated sketching. The yellow line refers to $y=x$.}}
\label{fig:qq_logistic1}
\vskip-0.3cm
\end{figure}

\begin{figure}[!tbp] 
\centering
\begin{subfigure}{0.245\textwidth}
\centering
\includegraphics[width=\linewidth]{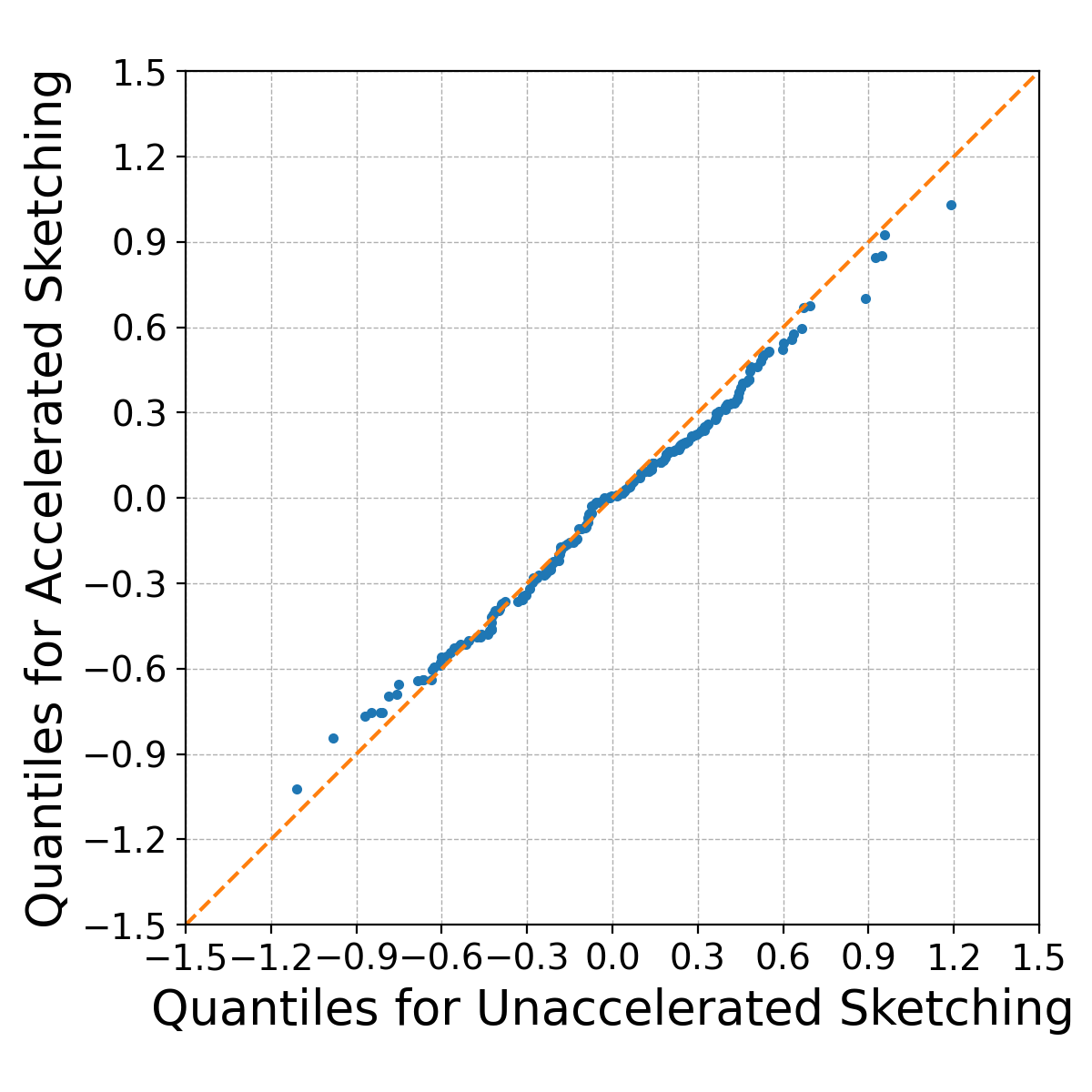}
\caption{$d=10, \tau=5$}
\end{subfigure}\hfill
\begin{subfigure}{0.245\textwidth}
\centering
\includegraphics[width=\linewidth]{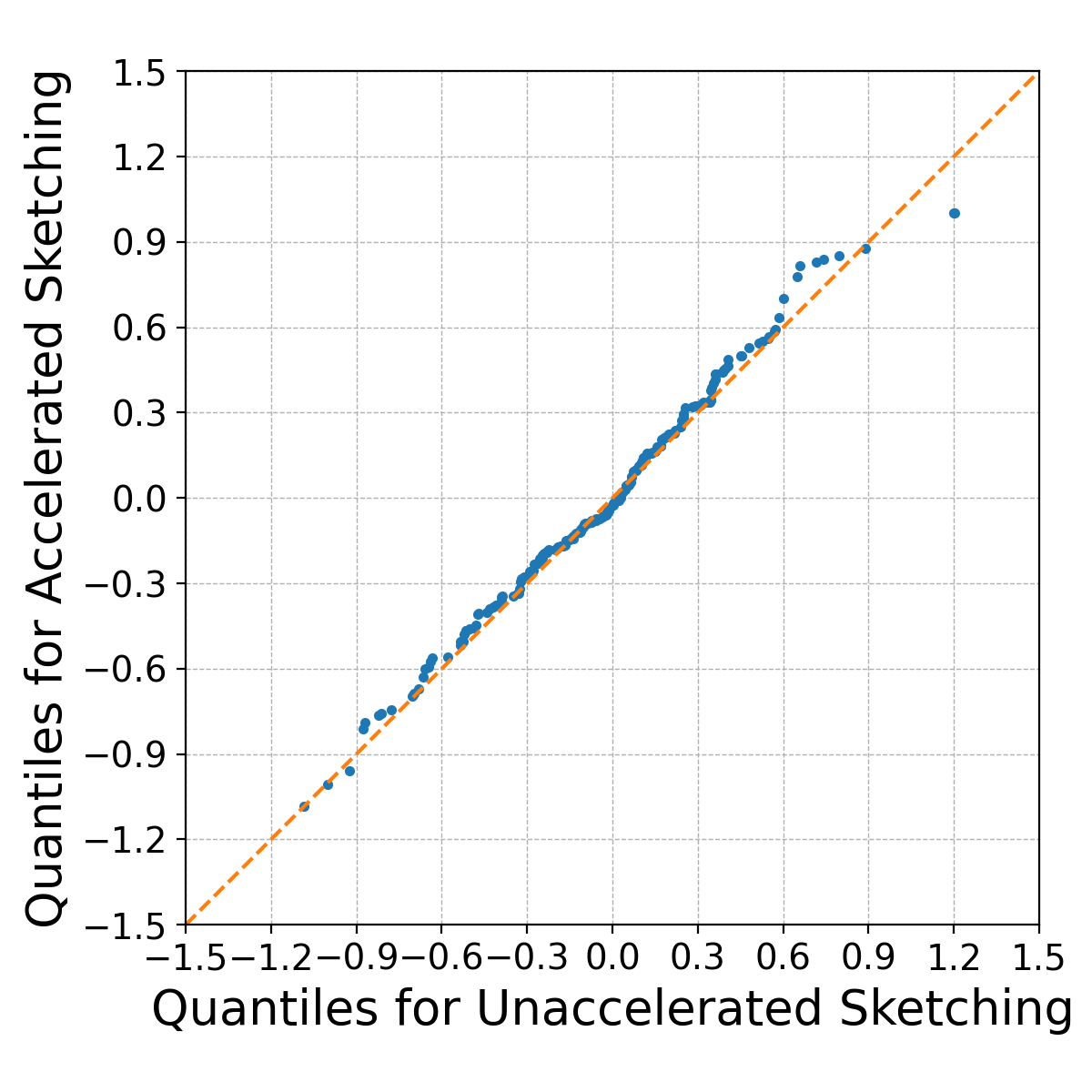}
\caption{$d=20, \tau=5$ }
\end{subfigure}\hfill
\begin{subfigure}{0.245\textwidth}
\centering
\includegraphics[width=\linewidth]{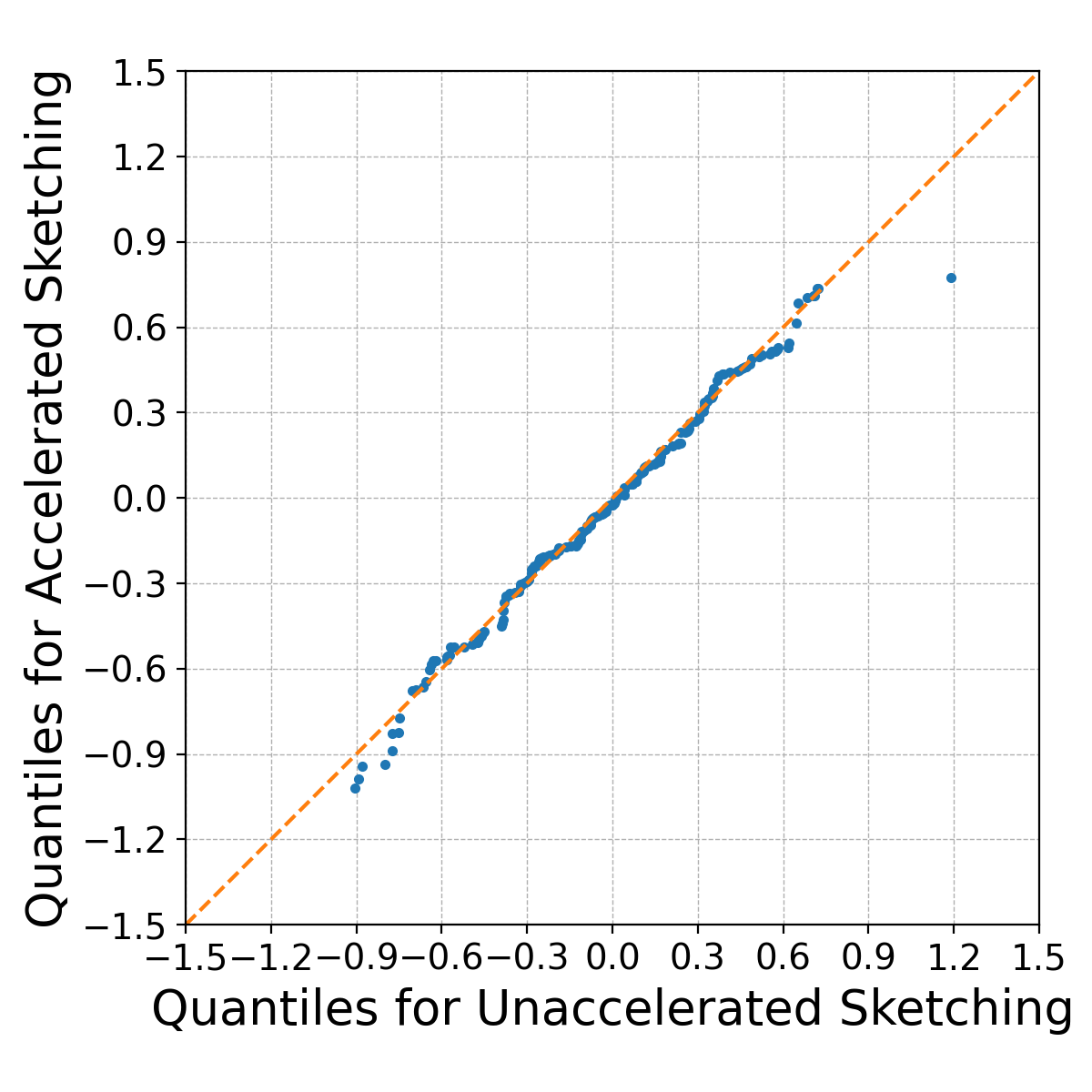}
\caption{$d=10, \tau=3$}
\end{subfigure}\hfill
\begin{subfigure}{0.245\textwidth}
\centering
\includegraphics[width=\linewidth]{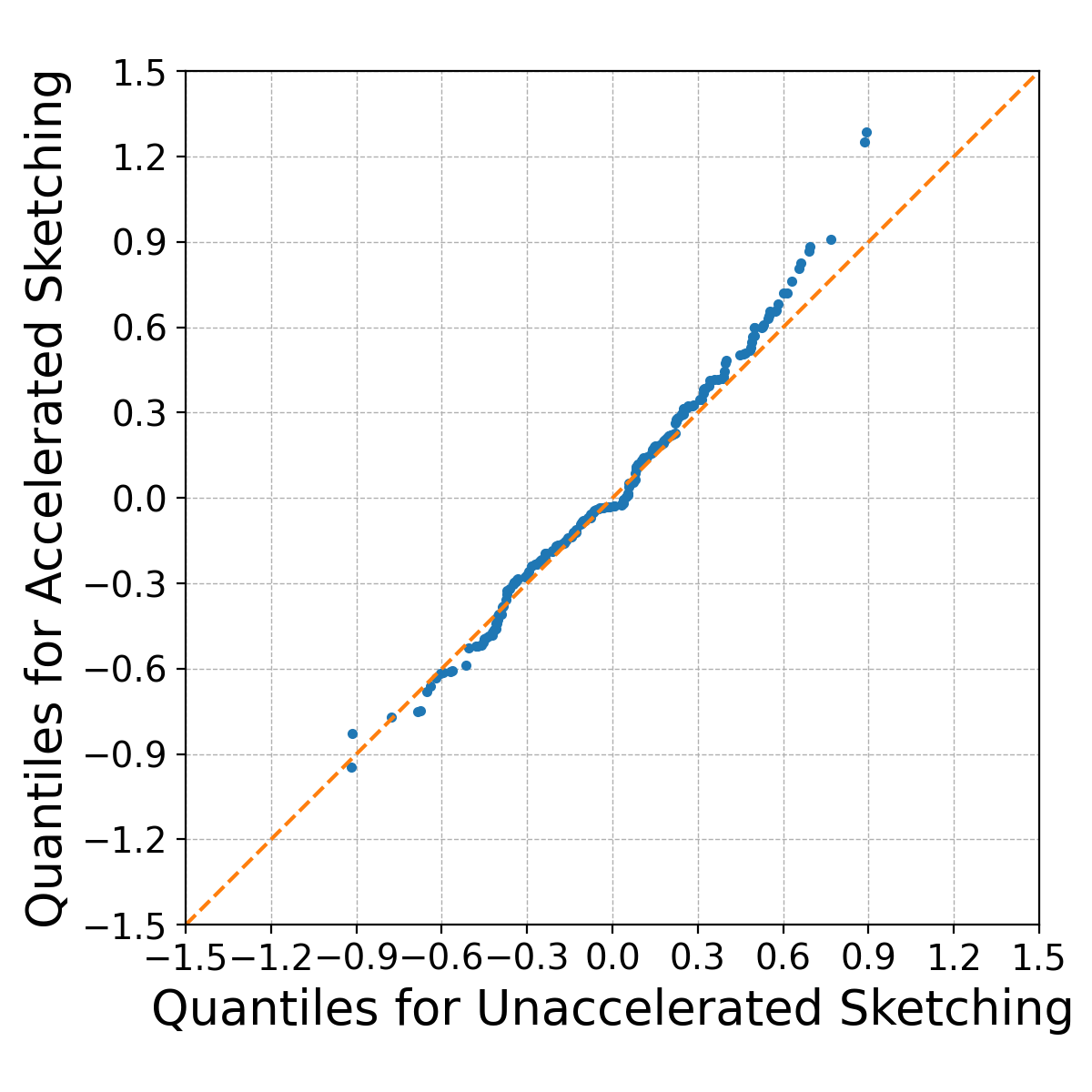}
\caption{$d=10, \tau=5, \gamma = 1$}
\end{subfigure}
\caption{\textit{QQ plots for logistic regression with Kaczmarz accelerated v.s. unaccelerated sketching. The yellow line refers to $y=x$.}}
\label{fig:qq_logistic2}	
\vskip-0.5cm
\end{figure}

Finally, we present additional experiments on QQ plots. Figures \ref{fig:qq_logistic1} and \ref{fig:qq_logistic2} report quantile–quantile comparisons between the Newton method with accelerated sketching and the Newton method with unaccelerated sketching for logistic regression, under two sketching distributions: randomized Gaussian sketching (Figure \ref{fig:qq_logistic1}) and Kaczmarz sketching (Figure \ref{fig:qq_logistic2}). 
Across all configurations, the scatter points closely align with the 45-degree reference line, indicating that the quantiles of the~accelerated sketching scheme closely match those of the unaccelerated sketching scheme. This close agreement is observed consistently for both sketching distributions (Kaczmarz and Gaussian), across different problem dimensions ($d = 10, 20$), and for varying numbers of sketching steps ($\tau = 3, 5$). The alignment persists even in the degenerate case $\gamma = \mu\nu = 1$, which is consistent with Proposition \ref{prop:covariance_comparison}(b). These results lead to two conclusions for logistic regression problems. First, the proposed Newton method with accelerated sketching satisfies the asymptotic normality established in Theorem \ref{thm:asymptotic_normality}. Second, Nesterov's acceleration does not degrade statistical efficiency, as the limiting covariance of the accelerated sketching scheme remains comparable to that of the unaccelerated sketching scheme.

\end{document}